\newtheorem{myTheo}{Theorem}
\newtheorem{myLemma}{Lemma}
\newtheorem{myDef}{Definition}
\newtheorem{myCor}{Corollary}
\newtheorem{proof}{Proof}
\newtheorem{lem}{Lemma}
\journalname{Procedia Computer Science}
\begin{document}

\begin{frontmatter}



\dochead{}

\title{Robust Manifold Nonnegative Tucker Factorization for Tensor Data Representation}


\author[1]{Jianyu Wang}
\author[1]{Linruize Tang}
\author[1]{Jie Chen}
\author[1]{Jingdong Chen}

\address[1]{Center for Intelligent Acoustics and Immersive Communications, School of Marine Science and Technology, Northwestern Polytechnical University, Xi'an, China.}

\begin{abstract}
Nonnegative Tucker Factorization (NTF) minimizes the euclidean distance or Kullback-Leibler divergence between the original data and its low-rank approximation which often suffers from grossly corruptions or outliers and the neglect of manifold structures of data.
In particular, NTF suffers from rotational ambiguity, whose solutions with and without rotation transformations are equally in the sense of yielding the maximum likelihood.
In this paper, we propose three Robust Manifold NTF algorithms to handle outliers by incorporating structural knowledge about the outliers.
They first applies a half-quadratic optimization algorithm to transform the problem into a general weighted NTF where the weights are influenced by the outliers.
Then, we introduce the correntropy induced metric, Huber function and Cauchy function for weights respectively, to handle the outliers.
Finally, we introduce a manifold regularization to overcome the rotational ambiguity of NTF.
We have compared the proposed method with a number of representative references covering major branches of NTF on a variety of real-world image databases.
Experimental results illustrate the effectiveness of the proposed method under two evaluation metrics (accuracy and nmi).
\end{abstract}

\begin{keyword}
Nonnegative Tucker Factorization \sep Manifold learning \sep Low-rank Representation.


\end{keyword}

\end{frontmatter}


\section{Introduction}
\label{sec:introduction}

{N}{on-negative} tucker factorization (NTF) also known as nonnegative multilinear singular value decomposition (SVD), is a multiway extension of nonnegative matrix factorization \cite{kim2007nonnegative}. It explores the nonnegative property of data which enhances the ability of part-based representation and has received considerable attention in many fields, {e.g.} text mining \cite{schein2016bayesian}, hyperspectral imaging \cite{karami2012compression}, blind source separation \cite{cichocki2009nonnegative} and data clustering \cite{sun2015heterogeneous}.

Finding and exploiting the low-rank approximation and low-dimensional manifold from high-dimensional data is a fundamental problem in machine learning.
In particular, the extracted components obtained by principle component analysis (PCA) \cite{wold1987principal}, vector quantization (VQ) \cite{gray1984vector} may lose their physical meaning if the nonnegativity is not preserved for high-dimensional real-world data.
Hence, nonnegative matrix factorization (NMF) \cite{lee1999learning,wang2021deep} has been used to explore low-rank representation of given data.
It is intractable for NMF to deal with high-order tensors.
The order of a tensor is the number of dimensions of the array, and a mode is one of its dimensions \cite{kolda2009tensor}.
For example, an RGB image can be represented by a third-order tensor with the dimensions of height $\times$ width $\times$ channel.
When applying NMF to tensorial data, the first step is to reshape tensors into matrices, which often leads to a loss of the meaningful tensor structures, and large scale parameters leads to higher memory demands \cite{zhou2019probabilistic}.
Tensorial data can naturally characterize data from multiple aspects which preserve the structure information in each mode.
However, they are typically high dimensional and difficult to be handled in their original space.
To address this problem, nonnegative tensor decomposition (NTD) methods have been proposed to directly exploit multidimensional structures of tensors.
NTD can be viewed as a special case of NMF which not only inherits the advantages of NMF but also provides physically meaningful representation of multiway structure representation.

For nonnegative tensor data analysis, many nonnegative tensor decomposition methods are based on CANDECOMP/PARAFAC (CP) decompositions \cite{shi2017tensor}, Tucker \cite{shi2018feature} models and low-tubal-rank models \cite{zhou2019bayesian}, respectively.
In this paper, we focus on the tucker model, since it enables more flexible and interpretable decomposition by utilizing the interaction of latent factors.
Existing NTF decomposition methods usually have the following major problems.
First, many NTF methods decompose a high-dimensional tensor into a product of low-rank nonnegative projection matrices and a low-dimensional nonnegative core tensor by minimizing the Euclidean distance between their product and the original tensor data.
They are optimal when the data are condemned by additive Gaussian noise.
However, they may fail on grossly corrupted data, since the corruptions or outliers seriously violate the noise assumption of gaussian distribution.
Second, the tucker-based model suffers from the rotational ambiguity \cite{tipping1999probabilistic}, i.e, solutions with and without rotation transformations are equally good in the sense of yielding the maximum likelihood \cite{zhou2019probabilistic}. This implies that NTF can only find arbitrary bases of the latent subspace.

\subsection{Contributions}
\label{Contributions}
In this paper, we aim to explore robust manifold NTF methods to address the above challenges.
Our contributions are threefold.

\begin{itemize}
  \item Three robust manifold NTF algorithms are proposed. They first apply a half-quadratic optimization algorithm to transform the intractable problem into a weighted NTF model where the weights can handle the outliers. They can be implemented by incorporating different prior distribution for the measurement between the original data and reconstructed data. Unlike the Gaussian distribution of loss functions, other flexible distribution of metric can handle outliers more efficiently.
      The weights are adjusted adaptively with respect to the error.
      To our knowledge, this is the first work of NTF for robust weighted learning to handle outliers.
  \item Three RMNTF algorithms reach the state-of-the-art performance are proposed. The three algorithms fall into the one major subclasses of NTF technologies, denoted as weighted NTF.
      Different distribution of loss function between original data and reconstructed data may influence the ability to handle outliers.
      Specifically, RMNTF with correntropy induced metric (RMNTF-CIM), RMNTF with Huber function (RMNTF-Huber), and RMNTF with Cauchy function (RMNTF-Cauchy) have been studied in this paper.
  \item We demonstrate the convergence, robustness and invariance of RMNTF. 
\end{itemize}

In this work, we first introduce some preliminaries and related work in the following two subsections, then present three RMNTF algorithms in Section \ref{Algorithm}. Section \ref{Exp} presents the experimental results. Finally, Section \ref{Conclusion} concludes our findings.

\subsubsection{Notations}

We denote vectors, matrices and tensors by bold lowercase $\mathbf{x}$, bold uppercase $\mathbf{X}$ and calligraphic letters $\mathcal{X}$ respectively.
$\mathbb{R}_{\geq 0}$ denotes the fields of nonnegative real numbers.
$\langle \cdot \rangle$ denotes the expectation of a certain random variable.
$\mathrm{vec}(\cdot)$ is the vectorization operator that turns a tensor into a column vector.
The transpose of a vector or matrix is denoted by $(\cdot)^T$.
Symbols $\circ$, $\otimes$, $\circledast$ and $\odot$ denote the outer, Kronecker, Hadamard and Khatri-Rao products respectively.
$\mathrm{Diag}^N(\mathbf{x})$ denotes the $N$th order diagonal tensor formed by $\mathbf{x}$.
$\mathbf{x}_n$, $\mathbf{X}_n$ and $\mathcal{X}_n$ denote the $n$th vector, $n$th matrix and $n$th tensor respectively.
$\mathbf{X}_{n(m)}$ denotes the mode-$m$ unfolding of tensor $\mathcal{X}_n$.
$\mathbf{X}^{(m)}$ denotes the mode-$m$ factor matrix.
$\times_n$ denotes the mode-$n$ tensor product.

\begin{myDef}{(Mode-$n$ Product \cite{tucker1966some}):}
A mode-$n$ product of a tensor $\mathcal{S}\in \mathbb{R}^{r_1\times r_2 \times \dots \times r_N}$ with a matrix $\mathbf{A}^{(n)} \in \mathbb{R}^{I_n\times r_n}$ is denoted by $\mathcal{X} = \mathcal{S} \times_n \mathbf{A}^{(n)} \in \mathbb{R}^{r_1\times \dots \times r_{n-1} \times I_n \times r_{n+1} \times \dots \times r_N}$. Each elements can be represented as $\mathcal{X}_{r_1\dots r_{n-1}i_nr_{n+1}\dots r_N} = \sum_{j_n} \mathcal{S}_{r_1\dots r_{n-1}j_nr_{n+1}\dots r_N}\mathbf{A}^{(n)}_{j_n i_n}$.
\end{myDef}

\begin{myDef}{(Mode-$n$ Unfolding \cite{de2000multilinear}):}
It is also known as matricization or flattening, which is the process of reordering the elements of an $N$-way array into a matrix along each mode. A mode-$n$ unfolding matrix of a tensor $\mathcal{X}\in \mathbb{R}^{I_1\times\dots\times I_N}$ is denoted as $\mathbf{X}_{(n)}$ and arranges the mode-$n$ fibers to be the columns of the resulting matrix.
\end{myDef}

\begin{myDef}{(Folding Operator \cite{kolda2009tensor}):}
Given a tensor $\mathcal{X} \in \mathbb{R}^{I_1\times\dots\times I_N}$, the mode-n folding operator of a matrix $\mathbf{M} = \mathcal{X}_{(n)}$ is demoted as $\mathrm{fold}_n(\mathbf{M})$, which is the inverse operator of the unfolding operator.
\end{myDef}

\subsection{Related Work}
\label{Background}

\textbf{\textit{NMF}}:
Given a non-negative data matrix $\mathbf{X} = [x_1,\dots,x_N]\in \mathbb{R}_{\geq 0}^{N\times M}$, where $N$ refers to the number of data points and $M$ indicates the dimension of the feature.
The objective of NMF is to find two nonnegative and low rank factor matrices $\mathbf{U} \in \mathbb{R}_{\geq 0}^{N\times r}$ and $\mathbf{V} \in \mathbb{R}_{\geq 0}^{M\times r}$ and the product of the two matrices approximates the original data matrix $\mathbf{X}$ by $\mathbf{X} \approx \mathbf{U}\mathbf{V}^T$, generally $r \ll \min\{M,N\}$.
NMF incorporates the nonnegativity constraint and obtains the part-based representation as well as enhancing the interpretability \cite{lee1999learning} and these methods have a close relation with K-means \cite{ding2008convex}.
However, the factorization of matrices is generally nonunique and many regularizors \cite{cai2010graph} or constraints \cite{liu2011constrained} have been developed to alleviate nonuniqueness of decomposition.
In particularly, NMF methods are proposed to utilize the priors to obtain a better representation \cite{pan2019generalized} and achieve a robust clustering \cite{guan2017truncated,haeffele2019structured}.

However, the multiway structure information of high-dimensional data such as RGB images or videos cannot be represented by NMF. Vectorization of the high-dimensional data may lead to excess numbers of parameters and structure interruptions. Nonnegative tensor decomposition based methods have been proposed for the high-dimensional data.

\begin{figure}[htb]
\begin{minipage}[b]{1.0\linewidth}
  \centering
  \centerline{\includegraphics[width=9cm]{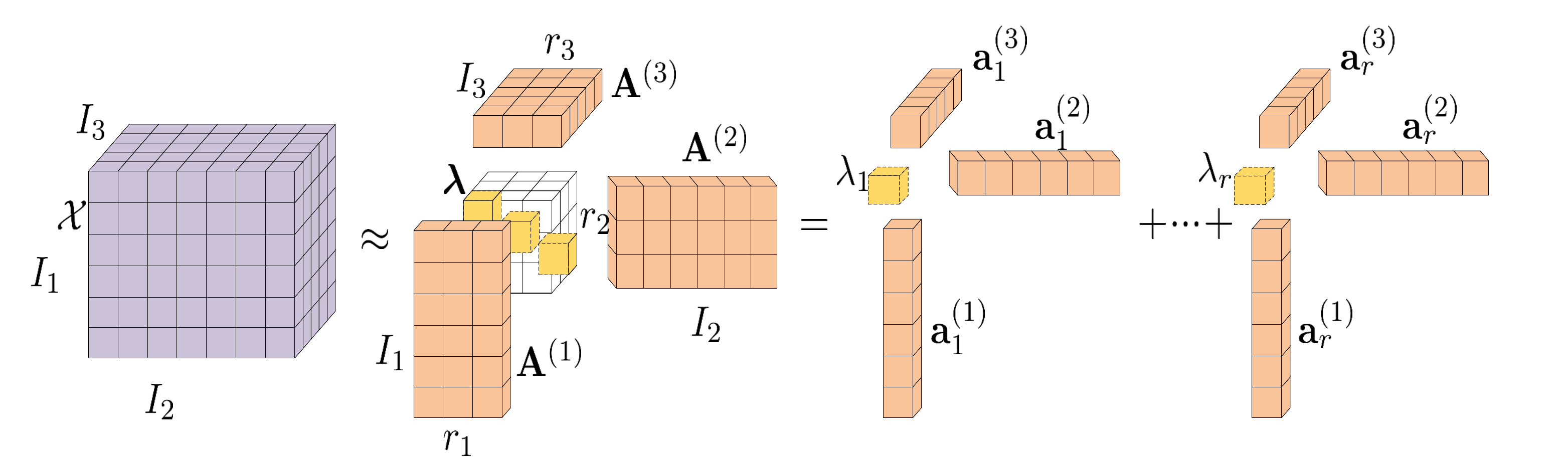}}
\end{minipage}
\caption{Canonical Polyadic Decomposition.}
\label{fig:CPD}
\end{figure}

\textbf{\textit{Nonnegative CANDECOMP/PARAFAC (NCP) methods}}:
The CP-based tensor model (Fig. \ref{fig:CPD}) \cite{harshman1970foundations} decomposes $\mathcal{X}$ into a linear combination of $R$ rank-one tensors as follows:
\begin{equation}\label{CPD}
\begin{split}
 \mathcal{X} & = \sum_{r=1}^R \lambda_r \mathbf{a}^{(1)}_r \circ \mathbf{a}^{(2)}_r \circ \dots \circ \mathbf{a}^{(N)}_r \\
 & = \mathrm{Diag}^N(\boldsymbol{\lambda})\times_{{n}={1}}^N\mathbf{A}^{(n)T},
\end{split}
\end{equation}
where $\mathrm{Diag}^N(\boldsymbol{\lambda})$ is the $N$th-order diagonal tensor.
The CP rank of $\mathcal{X}$ is given by $\mathrm{Rank}_{cp}(\mathcal{X}) = R$ denotes the smallest number of the rank-one tensor decomposition \cite{zhou2019probabilistic}.
The CP-based tensor model assumes each tensor element can be calculated by a summation of $R$ products and it is restrictive since it only considers $R$ possible interactions between latent factors.

Existing CP-based tensor models have a flexible subspace representation which may not consider the intrinsic manifold information of high-dimensional data and the performance of downstream tasks will be limited.
A few regularization and prior knowledge strategies have been studied in these models \cite{zhou2019probabilistic}.
For example, Zhao \emph{et al.} \cite{zhao2015bayesianpami} formulated CP factorization using a hierarchical probabilistic model and employed a fully Bayesian treatment by incorporating a sparsity-inducing prior over multiple latent factors and the appropriate hyperpriors over all hyperparameters to determine the rank of models. Zhao \emph{et al.} \cite{zhao2015bayesian} proposed a probabilistic model to recover the underlying low-rank tensor which modeled by multiplicative interactions among multiple groups of latent factors, and the additive sparse tensor modeling outliers. Zhou \emph{et al.} \cite{zhou2019probabilistic} introduced concurrent regularizations which regularized the entire subspace in a concurrent and coherent way to avoid the strong scale restrictions of $L_2$ regularization. Chen \emph{et al.} \cite{han2018generalized} proposed a generalized weighted low rank tensor factorization which represented the sparse component as a mixture of Gaussian, and unified the Tucker and CP factorization in a joint framework to handle complex noise and outliers.
However, these methods neglect nonnegative constrains and may not learn the part-based and physical meaning representations.

\begin{figure}[htb]
\begin{minipage}[b]{1.0\linewidth}
  \centering
  \centerline{\includegraphics[width=9cm]{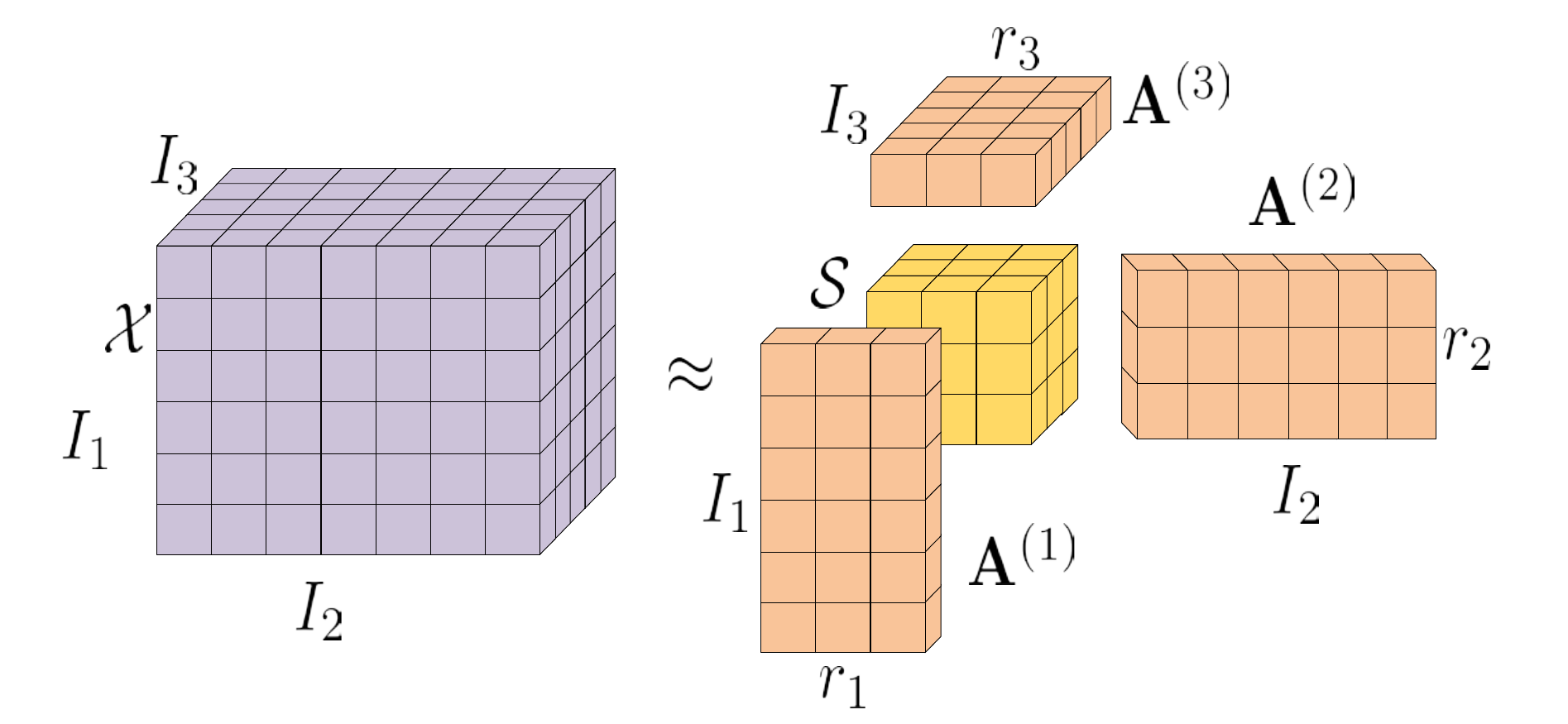}}
\end{minipage}
\caption{Tucker Decomposition.}
\label{fig:TF}
\end{figure}

\textbf{\textit{NTF methods}}:
The Tucker model (Fig. \ref{fig:TF}) assumes that an original tensor $\mathcal{X}$ can be well approximated as
\begin{equation}\label{NTF}
\begin{split}
 \mathcal{X} = \mathcal{S} \times_{{1}} \mathbf{A}^{({1})} \times_{{2}} \dots \times_{N} \mathbf{A}^{(N)},
\end{split}
\end{equation}
where the tucker rank of $\mathcal{X}$ is denoted as $\mathrm{Rank}_{\mbox{tc}}(\mathcal{X}) = (R_1,\dots,R_n,\dots,R_N)$ with $R_n = \mathrm{Rank}(\mathbf{X}_{(n)})$.

We note that the CP decomposition is a special case of the Tucker decomposition.
Although the tucker decomposition is invariant to rotations in the factor matrices, it shares parameters across latent factor matrices by a core tensor.
In contrast, CP decomposition methods force each factor vector to capture potentially redundant information \cite{fang2021bayesian}.
CP decomposition methods are more prone to overfitting than Tucker decompositions.
The main interest in Tucker model is to find subspaces for tensor approximation.

Li \emph{et al.} \cite{li2016mr} introduced a manifold regularization into the core tensors of NTF which preserved the tensor geometry information. But the representation space of the core tensor will increase exponentially as tensor order increases, which results in high computational complexity. Jiang \emph{et al.} \cite{jiang2018image} added a graph Laplacian regularization on a low-dimensional factor matrix to improve the robustness of tensor decomposition. Sun \emph{et al.} \cite{sun2015heterogeneous} proposed a heterogeneous tensor decomposition for clustering by performing dimensionality reduction on the first $N-1$ order of the tensor, and incorporating some useful constraints on the last-mode factor matrix for clustering.
However, these methods neglect nonnegative constraints and may loss physical meaning of low-rank representations. Yin \emph{et al.} \cite{yin2019lle} incorporated Laplacian Eigenmaps and Locally Linear Embedding as the manifold regularization terms into the least square form of NTF model. Pan \emph{et al.} \cite{pan2021orthogonal} introduced the orthogonal constraint into the group of factor matrices of NTF, which not only helps to keep the inherent tensor structure but also well performs in data compression. Yin \emph{et al.} \cite{yin2021hyperntf} proposed Hypergraph Regularized NTF which preserved nonnegativity in tensor factorization and uncover the higher-order relationship among the nearest neighborhoods.
However, these methods decompose tensor data by minimizing the Euclidean distance which fails on the not cleaned data.

\textbf{\textit{Manifold learning}}:
Manifold learning is a problem which encodes the geometric information of the data space.
Its goal is to find a representation in which two objects are close to each other after dimension reduction if they are close in the intrinsic geometry of data manifold.
Based on this idea, many types of manifold learning algorithms have been proposed, such as ISOMAP \cite{tenenbaum2000global}, LLE \cite{roweis2000nonlinear}, Laplacian eigenmaps \cite{belkin2001laplacian} and locality preserving projections \cite{he2004locality}.
The advantage of introducing manifold structures is that it can preserve the intrinsic geometric information of data points, and it has been shown to be useful in a wide-range of applications, such as face recognition \cite{he2005face}, text mining \cite{cai2010graph}, and multimedia interaction \cite{yang2008harmonizing}.

Recently, the idea of manifold learning has been employed to matrix and tensor analysis.
For instance, Cai \textit{et al.} \cite{cai2010graph} proposed graph regularized nonnegative matrix factorization (GNMF) which utilized the intrinsic geometric information.
However, GNMF may not have optimal solutions due to the noises or outliers of data, consequently other extension methods based on the GNMF have been proposed.
Moreover, NMF based on manifold learning methods may break the structure of mutliway data points.
Hence, tensor based manifold learning methods have been proposed such as Graph regularized Nonnegative Tucker Decomposition \cite{qiu2020generalized}, LLE based nonnegative tensor decomposition \cite{yin2019lle}.
These manifold nonnegative tensor decomposition methods assume that the distribution of noise is Gaussian and may fail on grossly corrupted datasets.

\begin{figure}[htb]
\begin{minipage}[b]{1.0\linewidth}
  \centering
  \centerline{\includegraphics[width=7cm]{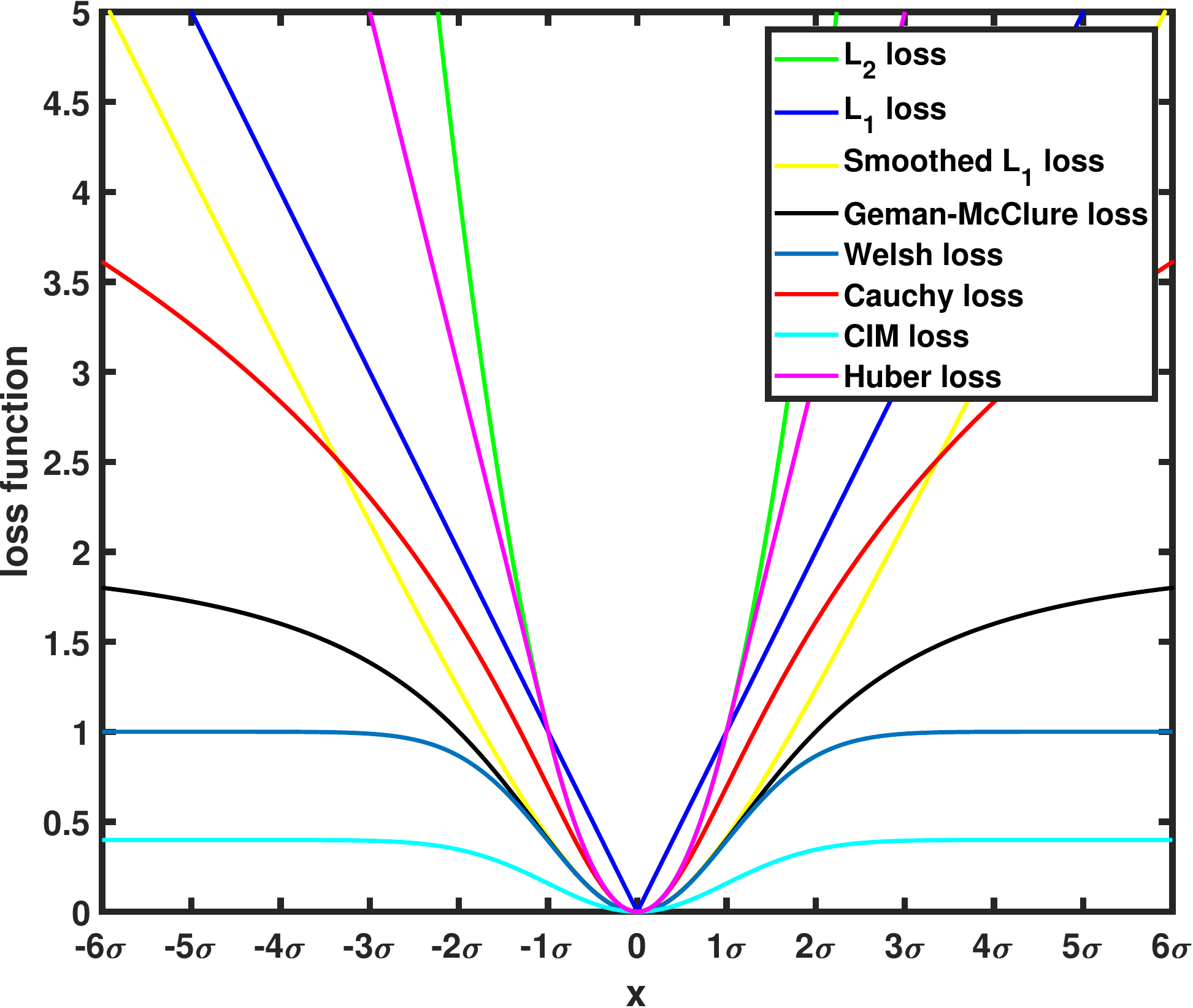}}
\end{minipage}
\caption{The comparison of loss function.}
\label{fig:Los}
\end{figure}

\section{Robust Manifold NTF methods}
\label{Algorithm}
Because our methods rely on the half-quadratic theory, we first introduce the Half-Quadratic \cite{boyd2004convex} minimization technique for the generic robust NTF framework.

\subsection{Half-Quadratic Programming for Nonnegative Tucker Factorization}
Half-Quadratic minimization was pioneered by Geman and Reynolds \cite{geman1992constrained}, and it was used to alleviate the computational task in the context of image reconstruction with nonconvex regularization.
Let $\mathcal{X}$ denote an original tensor and $\hat{\mathcal{X}}$ denote a reconstruction tensor of $\mathcal{X}$.
Replacing the squared residual of data-fidelity terms in \cite{nikolova2007equivalence} on each entry with a generic function:
\begin{equation}\label{Half_Q}
\begin{split}
 \mathcal{J}(\mathcal{X},\hat{\mathcal{X}}) = g(\mathcal{E}) + \lambda\Phi(\{\mathbf{A}^{(n)}\}_{n=1}^N) \\
 \mathrm{s.t.} \quad \hat{\mathcal{X}} = \mathcal{S} \times_{n=1}^{N} \mathbf{A}^{(n)}, \quad \mathcal{E} = \mathcal{X} - \hat{\mathcal{X}} \\
 \mathcal{S} \geq 0, \quad \mathbf{A}^{(n)} \geq 0,
\end{split}
\end{equation}
where $\mathcal{E}$ represents the residual error between original tensor $\mathcal{X}$ and reconstructed tensor $\hat{\mathcal{X}}$, $g(\cdot)$ is chosen to be robust to outliers or gross errors, and $\Phi(\cdot)$ denotes the regularization terms with respect to $\{\mathbf{A}^{(n)}\}_{n=1}^N$.
The minimizer $\hat{\mathcal{X}}$ of cost function $\mathcal{J}(\mathcal{X},\hat{\mathcal{X}})$ involving the reconstruction error which is nonlinear with respect to $\mathcal{X}$ and the regularization term $\Phi(\{\mathbf{A}^{(n)}\}_{n=1}^N)$.
When factor matrices $\mathbf{A}^{(n)}, n = 1,\dots,N-1$ and core tensor $\mathcal{S}$ have many nonzero entries or ill-conditioned, the computation of factorization is costly.
specifically, the loss function is possibly non-quadratic and non-convex, and it is difficult to optimize directly.
Fortunately, the half-quadratic minimization \cite{nikolova2005analysis} has been developed to solve the intractable optimization.
According to the conjugate function and half quadratic theory \cite{nikolova2005analysis}, the reconstruction error term $g(\mathcal{E})$ can be performed as
\begin{equation}\label{Half_Qg}
\begin{split}
 g(\mathcal{E}) = \min_{\mathcal{W}\in\mathbb{R}^{I_1\times \dots\times I_N}} Q(\mathcal{E},\mathcal{W}) + \phi(\mathcal{W}),
\end{split}
\end{equation}
where $\phi(\mathcal{W})$ is the conjugate function of $g(\mathcal{E})$, $\mathcal{W}$ is the corresponded additional auxiliary variable, and $Q(\cdot,\cdot)$ is a quadratic term for $\mathcal{E}$ and $\mathcal{W}$.
In this paper, we only consider the quadratic term of the multiplicative form \cite{geman1992constrained}:
\begin{equation}\label{Half_QQ}
\begin{split}
 Q(\mathcal{E},\mathcal{W}) = \frac{1}{2} \mathcal{W} \circledast \mathcal{E}^2.
\end{split}
\end{equation}
Substituting \eqref{Half_Qg} and \eqref{Half_QQ} into \eqref{Half_Q}, we have the augmented cost function $\hat{\mathcal{J}}(\mathcal{X},\hat{\mathcal{X}};\mathcal{W})$:
\begin{equation}\label{Half_Q1}
\begin{split}
 &  \hat{\mathcal{J}}(\mathcal{X},\hat{\mathcal{X}};\mathcal{W}) = \frac{1}{2}\mathcal{W} \circledast \mathcal{E}^2 + \phi(\mathcal{W}) + \lambda\Phi(\mathbf{A}^{(N)})  \\
 & \mathrm{s.t.} \quad \hat{\mathcal{X}} = \mathcal{S} \times_{n=1}^{N} \mathbf{A}^{(n)}, \quad \mathcal{E} = \mathcal{X} - \hat{\mathcal{X}} \\
 & \quad \quad \mathcal{S} \geq 0, \quad \mathbf{A}^{(n)} \geq 0, \quad \forall n \in \{1,\dots,N\}.
\end{split}
\end{equation}

The reconstruction error term involved in $\hat{\mathcal{J}}(\mathcal{X},\hat{\mathcal{X}};\mathcal{W})$ is half-quadratic.
Hence, the minimizer $(\hat{\mathcal{X}},\mathcal{W})$ of $\min_{\hat{\mathcal{X}},\mathcal{W}} \left\{ \hat{\mathcal{J}}(\mathcal{X},\hat{\mathcal{X}};\mathcal{W}) \right\}$ is calculated by alternate minimization.
At iteration $k$ we calculate
\begin{equation}\label{solveHQ}
\begin{split}
 & \mathcal{W}^{k}: \quad \hat{\mathcal{J}}(\mathcal{X},\hat{\mathcal{X}}^{k-1};\mathcal{W}^{k}) \leq \hat{\mathcal{J}}(\mathcal{X},\hat{\mathcal{X}}^{k-1};\mathcal{W}^{k-1}) \\
 & \hat{\mathcal{X}}^{k}: \quad \hat{\mathcal{J}}(\mathcal{X},\hat{\mathcal{X}}^{k};\mathcal{W}^{k}) \leq \hat{\mathcal{J}}(\mathcal{X},\hat{\mathcal{X}}^{k-1};\mathcal{W}^{k}).
\end{split}
\end{equation}

When $\hat{\mathcal{X}}$ is fixed, the minimization of the reconstruction error term $g(\cdot)$ is convex with respect to $\mathcal{W}$. The explicit optimum solution of $\mathcal{W}$ \cite{charbonnier1997deterministic} can be determined as
\begin{eqnarray}
\mathcal{W} =
\left\{\begin{array}{ll}
\frac{\partial^2 g(0)}{\partial \mathcal{E}^2} \quad \mbox{if} \quad \mathcal{E} = 0\\
\frac{\left[\frac{\partial g(\mathcal{E})}{\partial \mathcal{E}}\right]}{\left[\mathcal{E}\right]} \quad \mbox{if} \quad \mathcal{E} \neq 0,
 \end{array}
 \right.\label{solveWeightedtensor}
 \end{eqnarray}
where $\frac{[\cdot]}{[\cdot]}$ denotes element-wise division.

It is shown that the auxiliary variable $\mathcal{W}$ only depends on the loss function $g(\cdot)$.
Since the outliers often cause large fitting errors, $\mathcal{W}$ is important for the objective functions to constrain overfitting.
For the large outliers, the weights $\mathcal{W}$ should be small.
On contrary, for the small errors, the weights $\mathcal{W}$ should be large.
Therefore, the weights variable $\mathcal{W}$ can be seen as an outlier mask.
The frequency used loss functions are shown in Fig. \ref{fig:Los}.

\subsection{Robust NTF with Manifold Regularization}
By using the nonnegative constraints and robust loss function for outliers, robust NTF can learn a part-based representation. Many robust NTF methods perform well in euclidean space. They fail to discover the intrinsic geometrical and discriminating structure of the data space.
Here, we introduce a geometrically based regularization for our robust NTF framework.

First, supposing that the real data points $\mathcal{X}_i$ lies on a low-dimensional manifold $\mathcal{M}$ and $\mathbf{A}^{(N)}_i$ is the representation of $\mathcal{X}_i$ in the subspace. We make an assumption that if $\mathcal{X}_i$ and $\mathcal{X}_j$ are neighbors in data space, then their low-rank representations $\mathbf{A}^{(N)}_i$ and $\mathbf{A}^{(N)}_j$ are close enough to each other in $\mathcal{M}$. We build a regularization term as follows
\begin{equation}\label{RNTFMR1}
\begin{split}
 \mathcal{R}(\mathcal{M}) = \sum_p \| f_p \|_\mathcal{M}^2,
\end{split}
\end{equation}
where $f_p$ is the mapping function which project data point $\mathcal{X}_i$ to the low-rank representation $\mathbf{A}^{(N)}_i = f_p(\mathcal{X}_i)$. $\mathcal{R}(\mathcal{M})$ is a measurement of the smoothness of $f_p$ along the geodesics in the intrinsic geometry of the data.

Based on \cite{cai2010graph}, we use the similarity graph of $\mathcal{X}$. Suppose that $\mathbf{V}\in\mathbb{R}^{I_N\times I_N}$ defines the affinity of data points, then we use the Heat Kernel to describe the similarity between each pair of data points if nodes $i$ and $j$ are connected:
\begin{equation}\label{RNTFMR2}
\begin{split}
 \mathbf{V}_{ij} = \exp \left( - \frac{\| \mathcal{X}_i - \mathcal{X}_j \|^2_2}{\tau} \right),
\end{split}
\end{equation}
where $\tau$ is the width of the kernel used to control the similarity.
Then, we calculate the diagonal matrix $\mathbf{D}\in \mathbb{R}^{I_N\times I_N}$, where $[\mathbf{D}]_{ii} = \sum_j[\mathbf{V}]_{ij}$ and the Laplacian matrix is $\mathbf{L} = \mathbf{D} - \mathbf{V}$. The graph regularization can be estimated as follows:
\begin{equation}\label{RNTFMR3}
\begin{split}
 \mathcal{R}(\mathcal{M}) = & \frac{1}{2} \sum_{p=1}^P \sum_{i=1}^N \sum_{j=1}^N [\mathbf{V}]_{ij}\left[ f_p(\mathcal{X}_i) - f_p(\mathcal{X}_j) \right]^2 \\
 = & \mathrm{Tr}(\mathbf{A}^{(N)T}\mathbf{L}\mathbf{A}^{(N)}).
\end{split}
\end{equation}

\subsection{RMNTF-CIM}

Liu \textit{et al.} \cite{liu2007correntropy} proposed the concept of Cross correntropy which is a generalized similarity measure between two arbitrary scalar random variables $\mathbf{X}$ and $\mathbf{Y}$ defined by
\begin{equation}\label{Cross_cor}
\begin{split}
 V_\sigma(\mathbf{X},\mathbf{Y}) = \langle \kappa_\sigma(\mathbf{X} - \mathbf{Y}) \rangle,
\end{split}
\end{equation}
where $\kappa_\sigma(\cdot)$ is the kernel function.
In practice, the joint probabilistic density function is unknown and only a finite number of data points $\{(x_i,y_i)\}_{i=1}^N$ are available.
The sample estimator of correntropy can be represented as
\begin{equation}\label{Cross_corS}
\begin{split}
 \hat{V}_{N,\sigma}(\mathbf{X},\mathbf{Y}) = \frac{1}{N} \sum_{i=1}^N \kappa_\sigma(x_i - y_i).
\end{split}
\end{equation}
Based on the above definition of correntropy, Liu \textit{et al.} \cite{liu2007correntropy} proposed correntropy induced metric (CIM) in the sample space which denoted as
\begin{equation}\label{CIM}
\begin{split}
 \mathrm{CIM}(\mathbf{X},\mathbf{Y}) = \left[ \kappa(0) - \frac{1}{n} \sum_{i=1}^n \kappa_\sigma(e_i) \right]^{\frac{1}{2}},
\end{split}
\end{equation}
where we use the Gaussian kernel in this paper, i.e., $\kappa_\sigma(e) = \frac{1}{\sqrt{2\pi}\sigma}\exp\left( -e_i^2/2\sigma^2 \right)$, and $e_i$ is denoted as $e_i = x_i - y_i$.

Substituting the error on each entry in Tucker model with the CIM, we obtain the objective function of RMNTF-CIM:
\begin{equation}\label{RMNTF_CIM}
\begin{split}
 \mathcal{J}(\mathcal{X},\hat{\mathcal{X}}) = 1-\frac{1}{I_1 \dots I_N} \sum_{i_1 = 1}^{I_1} \dots \sum_{i_N = 1}^{I_N} \kappa_\sigma\left( \mathcal{X}_{i_1\dots i_N} - \hat{\mathcal{X}}_{i_1\dots i_N} \right),
\end{split}
\end{equation}
which is equivalent to solving the following optimization problem
\begin{equation}\label{objRMNTF_CIM}
\begin{split}
 \min_{\mathbf{A}^{(1)},\dots,\mathbf{A}^{(n)},\mathcal{S}} \mathcal{J}(\mathcal{X},\hat{\mathcal{X}}) + \frac{\lambda}{2} \sum_{i=1}^{I_N} \sum_{j=1}^{I_N} \| \mathbf{a}_i^{(N)} - \mathbf{a}_j^{(N)} \|_2^2 v_{ij} \\
 \mathrm{s.t.} \quad \hat{\mathcal{X}} = \mathcal{S} \times_{n=1}^N \mathbf{A}^{(n)}, \quad \mathcal{S} \geq 0, \quad \mathbf{A}^{(n)} \geq 0.
\end{split}
\end{equation}
Then, we introduce the half-quadratic minimization,
$\mathcal{J}(\mathcal{X},\hat{\mathcal{X}}) = \frac{1}{2} Q(\mathcal{W} \circledast \mathcal{E}^2) + \phi(\mathcal{W})$, where $\phi(\mathcal{W})$ is the conjugate function of $\mathcal{J}(\mathcal{X},\hat{\mathcal{X}})$.

\subsubsection{Optimization of weighted tensor $\mathcal{W}$}
When the factor matrices $\mathbf{A}^{(n)}$ and the core tensor $\mathcal{S}$ are fixed, the optimization problem with respect to $\mathcal{W}$ can be solved separately:
\begin{equation}\label{WeightedT}
\begin{split}
 & \mathcal{W}_{(n)}^\star =  \arg\min f_w(\mathcal{W}_{(n)}) \\
 & \mbox{with} \quad f_w(\mathcal{W}_{(n)}) =   \sum_{i=1}^{N} \sum_{j=1}^{M} \left[ \mathcal{W}_{(n)} \circledast \left( \mathcal{X}_{(n)} - \mathbf{A}^{(n)} \mathbf{B}^{(n)T} \right)^2 \right]_{ij} \\
 & + \phi(\mathcal{W}_{(n)}),
\end{split}
\end{equation}
where $\mathbf{B}^{(n)T} = \mathcal{S}_{(n)}\left( \otimes_{i\neq n}\mathbf{A}^{(i)T} \right)$, $[\mathbf{X}]_{ij}$ denotes the $i$th row and $j$th column element of matrix $\mathbf{X}$.

Let $\mathcal{E}_{(n)} = \mathcal{X}_{(n)}-\mathbf{A}^{(n)}\mathbf{B}^{(n)T}$, then
\begin{equation}\label{updateWeightedT}
\begin{split}
 \mathcal{W}_{(n)}^\star & = \frac{[\nabla_{\mathcal{E}_{(n)}} f_{w}(\mathcal{W}_{(n)})]}{[\mathcal{E}_{(n)}]} = \frac{[\nabla_{\mathcal{E}_{(n)}} \left( 1-\kappa_\sigma(\mathcal{E}_{(n)}) \right)]}{[\mathcal{E}_{(n)}]} \\
 & \varpropto \exp \left( - \frac{\left(\mathcal{X}_{(n)} - \mathbf{A}^{(n)} \mathbf{B}^{(n)T}\right)^2}{2\sigma^2} \right),
\end{split}
\end{equation}
where $\frac{[\cdot]}{[\cdot]}$ denotes the element-wise division operation.

\subsubsection{optimization of factor matrices $\mathbf{A}^{(n)}$}
We use the Lagrange multiplier method and consider the mode-$n$ unfolding form, then:
\begin{equation}\label{FactorMAn}
\begin{split}
 & \mathbf{A}^{(n)\star}  = \arg \min f_A(\mathbf{A}^{(n)}), \\
 & \mbox{with} \quad f_A(\mathbf{A}^{(n)})  = \sum_{i=1}^I \sum_{j=1}^J \left[ \mathcal{W}_{(n)} \circledast \left( \mathcal{X}_{(n)} - \mathbf{A}^{(n)} \mathbf{B}^{(n)T} \right)^2 \right]_{ij} \\
 & + \frac{\lambda}{2}\mathrm{Tr}\left[ \mathbf{A}^{(N)T}\mathbf{L}\mathbf{A}^{(N)} \right] + \mathrm{Tr}\left[\boldsymbol{\Omega}_n\mathbf{A}^{(n)}\right] \\
 & \rm{s.t.} \quad \boldsymbol{\Omega} \geq 0.
\end{split}
\end{equation}

If $n \neq N$, the objective function with respect to $\mathbf{A}^{(n)}$ can be transformed as:
\begin{equation}\label{FactorMAn1}
\begin{split}
 & f_A(\mathbf{A}^{(n)}) = \sum_{i=1}^I \left( [\mathcal{X}_{(n)}]_{i\cdot} - [\mathbf{A}^{(n)}]_{i\cdot} \mathbf{B}^{(n)T} \right) \mathbf{T}_i \\
 & \left( [\mathcal{X}_{(n)}]_{i\cdot} - [\mathbf{A}^{(n)}]_{i\cdot} \mathbf{B}^{(n)T} \right)^T + \mathrm{Tr}(\boldsymbol{\Omega}_n \circledast \mathbf{A}^{(n)}),
\end{split}
\end{equation}
where $\mathbf{T}_i = \mathrm{Diag}\left([\mathcal{W}_{(n)}]_{i\cdot}\right) \in \mathbb{R}^{M\times M}_{\geq 0}$, $M = I_1 \times \dots \times I_{n-1}  \times I_{n+1} \times \dots \times I_N$, $\boldsymbol{\Omega}_n$ is the nonneative Lagrange multiplier for the nonnegative constraint.
The partial derivative of $f_{A}(\mathbf{A}^{(n)})$ with respect to $\mathbf{A}^{(n)}$ is:
\begin{equation}\label{FactorMAn2}
\begin{split}
 \frac{\partial f_A([\mathbf{A}^{(n)}]_{i\cdot})}{\partial {[\mathbf{A}^{(n)}}]_{i\cdot}} = & -2 \left[ [\mathcal{X}_{(n)}]_{i\cdot} \mathbf{T}_i \mathbf{B}^{(n)} \right] \\
  & + 2\left[ [\mathbf{A}^{(n)}]_{i\cdot} \mathbf{B}^{(n)T} \mathbf{T}_i \mathbf{B}^{(n)} \right] + [ \boldsymbol{\Omega}_n ]_{i\cdot}.
\end{split}
\end{equation}

Using the KKT conditions $[\boldsymbol{\Omega}_n]_{ik}[\mathbf{A}^{(n)}]_{ik} = 0$, we obtain the following equations
\begin{equation}\label{FactorMAn3}
\begin{split}
 - \left( [\mathcal{X}_{(n)}]_{i\cdot} \mathbf{T}_i \mathbf{B}^{(n)} \right) + \left( [\mathbf{A}^{(n)}]_{i\cdot} \mathbf{B}^{(n)T} \mathbf{T}_i \mathbf{B}^{(n)} \right) = \mathbf{0}.
\end{split}
\end{equation}
Then, we obtain the update rules of $\mathbf{A}^{(n)}$:
\begin{equation}\label{updateAn}
\begin{split}
 [\mathbf{A}^{(n)}]_{i\cdot} & = [\mathbf{A}^{(n)}]_{i\cdot} \circledast \frac{\left[\left( [\mathcal{X}_{(n)}]_{i\cdot} \mathbf{T}_i \mathbf{B}^{(n)} \right)\right]}{\left[\left( [\mathbf{A}^{(n)}]_{i\cdot} \mathbf{B}^{(n)T} \mathbf{T}_i \mathbf{B}^{(n)} \right)\right]} \\
 \mathbf{A}^{(n)} & = \mathbf{A}^{(n)} \circledast \frac{\left[\left( \mathcal{W}_{(n)} \circledast \mathcal{X}_{(n)} \mathbf{B}^{(n)} \right)\right]}{\left[\left( \mathcal{W}_{(n)} \circledast \left( \mathbf{A}^{(n)} \mathbf{B}^{(n)T} \right) \mathbf{B}^{(n)} \right)\right]}.
\end{split}
\end{equation}

If $n = N$, the objective function with respect to $\mathbf{A}^{(N)}$ can be represented as:
\begin{equation}\label{FactorMAN}
\begin{split}
 & f_A(\mathbf{A}^{(N)}) = \sum_{i=1}^I \left( [\mathcal{X}_{(N)}]_{i\cdot} - [\mathbf{A}^{(N)}]_{i\cdot} \mathbf{B}^{(N)T} \right) \mathbf{T}_i \\
 & \left( [\mathcal{X}_{(N)}]_{i\cdot} - [\mathbf{A}^{(N)}]_{i\cdot} \mathbf{B}^{(N)T} \right)^T + \mathrm{Tr}(\boldsymbol{\Omega}_N \circledast \mathbf{A}^{(N)})\\
 & +  \frac{\lambda}{2}\mathrm{Tr}\left( \mathbf{A}^{(N)T} \mathbf{L}\mathbf{A}^{(N)} \right).
\end{split}
\end{equation}

The partial derivative of $f_{A}(\mathbf{A}^{(N)})$ is:
\begin{equation}\label{FactorMAN1}
\begin{split}
  & \frac{\partial f_A([\mathbf{A}^{(N)}]_{i\cdot})}{\partial [{\mathbf{A}^{(N)}}]_{i\cdot}} = -2 \left( [ \mathcal{X}_{(n)} ]_{i\cdot} \mathbf{T}_i \mathbf{B}^{(n)} \right) \\
  & + 2\left( [\mathbf{A}^{(n)}]_{i\cdot} \mathbf{B}^{(n)T} \mathbf{T}_i \mathbf{B}^{(n)} \right) + \lambda \mathbf{L} \mathbf{A}^{(N)} + \boldsymbol{\Omega}_N.
\end{split}
\end{equation}

Using the KKT conditions $[\boldsymbol{\Omega}_N]_{ik}[\mathbf{A}^{(N)}]_{ik} = 0$, we obtain the following equations:
\begin{equation}\label{FactorMAn3}
\begin{split}
& 2 \left( \mathcal{W}_{(N)} \circledast (\mathbf{A}^{(N)} \mathbf{B}^{(N)T}) \mathbf{B}^{(N)} \right) \\
& - 2 \left( \mathcal{W}_{(N)} \circledast \mathcal{X}_{(N)} \mathbf{B}^{(N)} \right)
  + \lambda \mathbf{L} \mathbf{A}^{(N)} = \mathbf{0}.
\end{split}
\end{equation}
Then, we obtain the update rules of $\mathbf{A}^{(N)}$:
\begin{equation}\label{updateAN}
\begin{split}
 \mathbf{A}^{(N)}
 & = \mathbf{A}^{(N)} \circledast \frac{[\left( \mathcal{W}_{(N)} \circledast \mathcal{X}_{(N)} \mathbf{B}^{(N)} \right)] + \lambda \mathbf{V} \mathbf{A}^{(N)}]}{[\left( \mathcal{W}_{(N)} \circledast \left( \mathbf{A}^{(N)} \mathbf{B}^{(N)T} \right) \mathbf{B}^{(N)} \right) + \lambda \mathbf{D} \mathbf{A}^{(N)}]}.
\end{split}
\end{equation}

\subsubsection{optimization of the core tensor $\mathcal{S}$}
For the subproblem of core tensor $\mathcal{S}$, we consider the vectorization form of \eqref{RMNTF_CIM}:
\begin{equation}\label{CoreTensor}
\begin{split}
 & \mathcal{S}^\star = \arg \min f_{\mathcal{S}}(\mathcal{S}), \\
 & \mbox{with} \quad f_{\mathcal{S}}(\mathcal{S}) = \mathrm{vec}(\mathbf{1})^T \left[ \mathrm{vec}(\mathcal{W}) \circledast \left( \mathrm{vec}(\mathcal{X}) - \mathbf{F}\mathrm{vec}(\mathcal{S}) \right)^2 \right] \\
 & + \mathrm{vec}(\mathcal{S})^T \mathrm{vec}(\boldsymbol{\Omega}_{\mathcal{S}}),
\end{split}
\end{equation}
where $\mathbf{1} \in \mathbb{R}^{I_1\times I_2 \times \dots \times I_N}$ denotes the all-one vector of length $\prod_{i=1}^N I_i$, $\mathbf{F} = \mathbf{A}^{(1)} \otimes \mathbf{A}^{(2)} \otimes \dots \otimes \mathbf{A}^{(N)} \mathbb{R}^{I_1I_2\dots I_N\times r_1r_2\times r_N}$, and $\mathrm{vec}(\boldsymbol{\Omega}_{\mathcal{S}})$ denotes the Lagrange multipliers of $\mathrm{vec}(\mathcal{S})$.

\begin{equation}\label{CoreTensor1}
\begin{split}
 f_\mathcal{S}(\mathcal{S}) = & \Bigg\{ \left[ \mathrm{vec}(\mathcal{X}) - \mathbf{F}\mathrm{vec}(\mathcal{S}) \right]^T \mathbf{T}_{\mathcal{S}} \left[ \mathrm{vec}(\mathcal{X}) - \mathbf{F}\mathrm{vec}(\mathcal{S}) \right] \Bigg\} \\
 & + \mathrm{vec}(\mathcal{S})^T \mathrm{vec}(\boldsymbol{\Omega}_{\mathcal{S}}),
\end{split}
\end{equation}
where $\mathbf{T}_\mathcal{S} = \mathrm{Diag}(\mathrm{vec}(\mathcal{W})) \in \mathbb{R}^{I_1I_2\dots I_N\times I_1I_2\dots I_N}$.

\begin{equation}\label{PartialCoreTensor}
\begin{split}
 \frac{\partial f_\mathcal{S}(\mathcal{S}) }{\partial \mathrm{vec}(\mathcal{S}) } = 2 \mathbf{F}^T \mathbf{T}_\mathcal{S} \mathbf{F} \mathrm{vec}(\mathcal{S}) - 2 \mathbf{F}^T \mathbf{T}_{\mathcal{S}} \mathrm{vec}(\mathcal{X}) + \mathrm{vec}(\boldsymbol{\Omega}_\mathcal{S}).
\end{split}
\end{equation}
Using the KKT conditions $\mathrm{vec}(\mathcal{S}) \circledast \mathrm{vec}(\boldsymbol{\Omega}_{\mathcal{S}}) = \mathbf{0}$, where $\mathbf{0}$ denote the all-zero vector, we obtain the following equations
\begin{equation}\label{PartialCoreTensor1}
\begin{split}
 \left[\mathbf{F}^T \mathbf{T}_\mathcal{S} \mathbf{F} \mathrm{vec}(\mathcal{S}) - \mathbf{F}^T \mathbf{T}_{\mathcal{S}} \mathrm{vec}(\mathcal{X})\right] \circledast \mathrm{vec}(\mathcal{S}) = \mathbf{0}.
\end{split}
\end{equation}
Then, we obtain the update rules of $\mathcal{S}$:
\begin{equation}\label{updateCoreTensor}
\begin{split}
 \mathrm{vec}(\mathcal{S}) & = \mathrm{vec}(\mathcal{S}) \circledast \frac{[\mathbf{F}^T \mathbf{T}_{\mathcal{S}} \mathrm{vec}(\mathcal{X})]}{[\mathbf{F}^T \mathbf{T}_\mathcal{S} \mathbf{F} \mathrm{vec}(\mathcal{S})]} \\
 \mathcal{S} & = \mathcal{S} \circledast \frac{[\mathbf{F}^T \mathbf{T}_{\mathcal{S}} (\mathcal{X})]}{[\mathbf{F}^T \mathbf{T}_\mathcal{S} \mathbf{F} (\mathcal{S})]}.
\end{split}
\end{equation}

\subsection{RMNTF-Huber}

Robust statistics work well on model reconstruction under some observation with noise or outliers. Some popular M-estimators \cite{zhang1997parameter} such as Huber loss function and Cauchy function have been proposed to solve noisy data mining.

In this section, we take the Huber function in reconstruction error term $g(\cdot)$ to measure the quality of approximation by considering the connection between $L_1$ norm and $L_2$ norm:
\begin{equation}
 g_{\rm{huber}}(\mathcal{E}) =
 \left\{\begin{array}{ll}
 \mathcal{E}^2   \quad\quad\quad\quad \mbox{    if} \quad |\mathcal{E}| \leq c\\
 2 c |\mathcal{E}| - c^2 \quad \mbox{if} \quad |\mathcal{E}| \geq c,
 \end{array}
 \right.\label{Huber}
\end{equation}
where $c$ is the cutoff parameter to tradeoff between the $L_1$-norm and $L_2$-norm.

Substituting the Huber function on each entry in \eqref{Half_Q}, we have the RMNTF-Huber by minimizing the following objective function:
\begin{equation}\label{RMNTF_Huber}
\begin{split}
 \min_{\mathbf{A}^{(1)},\dots, \mathbf{A}^{(n)},\mathcal{S}} g_{\rm{huber}}(\mathcal{E}) + \frac{\lambda}{2} \sum_{i=1}^{I_N} \sum_{j=1}^{I_N} \| \mathbf{a}_i^{(N)} - \mathbf{a}_j^{(N)} \|_2^2 v_{ij} \\
 \mathrm{s.t.} \hat{\mathcal{X}} = \mathcal{S} \times_{n=1}^N \mathbf{A}^{(n)},  \mathcal{S} \geq 0,  \mathbf{A}^{(n)} \geq 0, \forall n \in \{1,\dots,N\},
\end{split}
\end{equation}
where $\mathcal{E} = \mathcal{X} - \hat{\mathcal{X}} = \mathcal{X} - \mathcal{S}\times_{n=1}^N \mathbf{A}^{(n)}$.

Following the equation \eqref{solveWeightedtensor}, we obtain the optimization of weighted tensor $\mathcal{W}$:
\begin{equation}
 \mathcal{W}_{(n)}^\star =
 \left\{\begin{array}{ll}
 1   \quad\quad \mbox{if} \quad |\mathcal{E}| \leq c\\
 \frac{c}{|\mathcal{E}|} \quad \mbox{    if} \quad |\mathcal{E}| \geq c.
 \end{array}
 \right.\label{updateWeightedT1}
\end{equation}

The optimization of factor matrices $\mathbf{A}^{(n)}$ and the core tensor $\mathcal{S}$ are the same as RMNTF-CIM.
Here, we note that the cutoff parameter $c$ is set to the median of reconstruction errors, i.e., $c=\mathrm{median}(|\mathcal{E}|)$.

\subsection{RMNTF-Cauchy}
For any tensor $\mathcal{X}$, we define the reconstruction error of RMNTF-Cauchy as follows:
\begin{equation}\label{RMNTF_Cauchy}
\begin{split}
 \min_{\mathbf{A}^{(1)},\dots, \mathbf{A}^{(n)},\mathcal{S}} g_{\rm{Cauchy}}(\mathcal{E}) + \frac{\lambda}{2} \sum_{i=1}^{I_N} \sum_{j=1}^{I_N} \| \mathbf{a}_i^{(N)} - \mathbf{a}_j^{(N)} \|_2^2 v_{ij} \\
 \mathrm{s.t.} \quad \hat{\mathcal{X}} = \mathcal{S} \times_{n=1}^N \mathbf{A}^{(n)}, \quad \mathcal{S} \geq 0, \quad \mathbf{A}^{(n)} \geq 0,
\end{split}
\end{equation}
where $g_{\rm{Cauchy}}(x) = \ln(1+x), x\geq 0$.

As equation \eqref{solveWeightedtensor}, the Optimization of weighted tensor $\mathcal{W}$ can be represented as:
\begin{equation}\label{updateWeightedT2}
\begin{split}
 \mathcal{W}_{(n)}^\star = \frac{1}{1+\left(\frac{\mathcal{E}}{\gamma}\right)^2}.
\end{split}
\end{equation}

The optimization of factor matrices $\mathbf{A}^{(n)}$ and the core tensor $\mathcal{S}$ are the same as RMNTF-CIM.

\subsection{Discussion}
The convergence of the algorithms are guaranteed by the following theorem:
\begin{myTheo}\label{Theorem1}
Updating projection matrices $\{\mathbf{A}^{(n)}\}_{n=1}^N$, core tensor $\mathcal{S}$ and weight tensor $\mathcal{W}$ iteratively as \eqref{updateAn}, \eqref{updateAN}, \eqref{updateCoreTensor} and \eqref{updateWeightedT}. It leads to a nonincreasing of the objective function in \eqref{RMNTF_CIM}, and converges to a stationary point.
\end{myTheo}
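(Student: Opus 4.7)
The plan is to verify Theorem 1 by showing that each of the four block updates in the alternating scheme --- the weight tensor $\mathcal{W}$, the factor matrices $\mathbf{A}^{(n)}$ for $n<N$, the last factor matrix $\mathbf{A}^{(N)}$, and the core tensor $\mathcal{S}$ --- produces a nonincreasing value of the augmented cost $\hat{\mathcal{J}}(\mathcal{X},\hat{\mathcal{X}};\mathcal{W})$ in \eqref{Half_Q1}, and that the resulting sequence is bounded below. Monotonicity along each block then yields a convergent objective sequence; the KKT conditions built into the multiplicative rules identify the limit as a stationary point of \eqref{RMNTF_CIM}.

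For the weight update I would appeal directly to the half-quadratic decomposition \eqref{Half_Qg}: with $\hat{\mathcal{X}}$ (hence $\mathcal{E}$) fixed, the map $\mathcal{W}\mapsto Q(\mathcal{E},\mathcal{W})+\phi(\mathcal{W})$ is globally minimized by the closed-form \eqref{solveWeightedtensor}, so \eqref{updateWeightedT} (and the Huber/Cauchy analogues \eqref{updateWeightedT1}--\eqref{updateWeightedT2}) cannot increase $\hat{\mathcal{J}}$. For the factor matrices with $n<N$ and for $\mathcal{S}$, with $\mathcal{W}$ now fixed the subproblem \eqref{FactorMAn1} (respectively \eqref{CoreTensor1}) is a weighted nonnegative quadratic programme, for which I would build a Lee--Seung style auxiliary function
\begin{equation*}
G(\mathbf{A},\mathbf{A}^{(t)})=f_A(\mathbf{A}^{(t)})+(\mathbf{A}-\mathbf{A}^{(t)})^T\nabla f_A(\mathbf{A}^{(t)})+\tfrac{1}{2}(\mathbf{A}-\mathbf{A}^{(t)})^T\mathbf{K}(\mathbf{A}^{(t)})(\mathbf{A}-\mathbf{A}^{(t)}),
\end{equation*}
with $\mathbf{K}(\mathbf{A}^{(t)})$ the diagonal majorizer whose $(i,k)$ entry equals $[\mathbf{A}^{(t)}\mathbf{B}^{(n)T}\mathbf{T}_i\mathbf{B}^{(n)}]_{ik}/[\mathbf{A}^{(t)}]_{ik}$. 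The properties $G(\mathbf{A},\mathbf{A})=f_A(\mathbf{A})$ and $G(\mathbf{A},\mathbf{A}^{(t)})\geq f_A(\mathbf{A})$ follow from the entrywise nonnegativity of $\mathbf{T}_i$ together with the standard inequality $\mathbf{v}^T\mathbf{M}\mathbf{v}\leq\sum_{ik}M_{ik}[\mathbf{v}]_k^2[\mathbf{A}^{(t)}\mathbf{M}]_{ik}/[\mathbf{A}^{(t)}]_{ik}$; zeroing $\nabla_{\mathbf{A}} G$ recovers the multiplicative rule \eqref{updateAn}, which then gives $f_A(\mathbf{A}^{(t+1)})\leq G(\mathbf{A}^{(t+1)},\mathbf{A}^{(t)})\leq G(\mathbf{A}^{(t)},\mathbf{A}^{(t)})=f_A(\mathbf{A}^{(t)})$. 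The core-tensor argument is identical after vectorization via $\mathbf{F}$ and $\mathbf{T}_{\mathcal{S}}$.

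The delicate step is the $\mathbf{A}^{(N)}$ update, where the manifold term $\tfrac{\lambda}{2}\operatorname{Tr}(\mathbf{A}^{(N)T}\mathbf{L}\mathbf{A}^{(N)})$ enters with $\mathbf{L}=\mathbf{D}-\mathbf{V}$, so the Hessian is only positive semidefinite after cancellation and a single Lee--Seung majorizer does not apply. I would split the contribution of $\mathbf{L}$ into its nonnegative components: $\mathbf{D}\mathbf{A}^{(N)}$ is absorbed into the denominator of the majorizer, while $-\mathbf{V}\mathbf{A}^{(N)}$ is upper-bounded by the numerator term $\mathbf{V}\mathbf{A}^{(N)}$ in the graph-regularized NMF style, thereby reproducing \eqref{updateAN} and preserving $G(\mathbf{A},\mathbf{A}^{(t)})\geq f_A(\mathbf{A})$. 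Chaining the four block inequalities delivers a monotone, nonnegative sequence $\{\hat{\mathcal{J}}^{(t)}\}$, which therefore converges; passing to the limit in the multiplicative rules and invoking the KKT conditions already used to derive them, e.g.\ \eqref{FactorMAn3} and \eqref{PartialCoreTensor1}, shows that the limit is a stationary point of \eqref{RMNTF_CIM}. The main obstacle is precisely the $\mathbf{A}^{(N)}$ step, since the indefiniteness of $\mathbf{L}$ blocks a one-shot diagonal majorization; the $\mathbf{L}=\mathbf{D}-\mathbf{V}$ splitting, combined with the entrywise positivity needed to keep the multiplicative update well-defined, is what makes the descent inequality and hence the whole convergence argument go through.
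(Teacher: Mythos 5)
Your proposal matches the paper's own argument essentially step for step: the weight update is justified by the half-quadratic surrogate exactly as in \eqref{Half_Qg}--\eqref{solveWeightedtensor}, the $\mathbf{A}^{(n)}$ ($n<N$) and core-tensor updates are handled with the same Lee--Seung diagonal majorizer $[\mathbf{A}\mathbf{B}^{(n)T}\mathbf{T}_i\mathbf{B}^{(n)}]_{ij}/[\mathbf{A}]_{ij}$ used in the paper's auxiliary-function lemmas, and your $\mathbf{L}=\mathbf{D}-\mathbf{V}$ splitting (absorbing $\mathbf{D}\mathbf{A}^{(N)}$ into the denominator and bounding the $-\mathbf{V}\mathbf{A}^{(N)}$ part via $[\lambda\mathbf{D}\mathbf{A}^{(N)}]_{ij}\geq\lambda[\mathbf{L}]_{ii}[\mathbf{A}^{(N)}]_{ij}$) is precisely how the paper's Lemma for $\mathbf{A}^{(N)}$ reproduces \eqref{updateAN}. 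The concluding monotone-bounded-sequence plus KKT argument for stationarity is likewise the paper's route, so the proposal is correct and essentially identical in approach.
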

\begin{proof}\label{Proof1}
See Appendix A.1.1 for the proof of Theorem \ref{Theorem1}.
\end{proof}

The robustness of RMNTF is guaranteed by the following Theorem:
\begin{myTheo}\label{Theorem3}
Suppose there are the training images $(\mathcal{X}^1,\mathcal{X}^2,\dots,\mathcal{X}^t)$ and test image $\mathcal{X}^{t\prime}$.
If the optimal parameters, core tensor $\mathcal{S}$, factor matrices $\{\mathbf{A}^{(n)}\}_{n=1}^N$ and weight tensor $\mathcal{W}$ are learned from training images.
The low rank representation of test image $\mathbf{a}^\prime_t$ learned by RMNTF will have close form solution:
\begin{equation}\label{RobustRMNTFsolv}
\begin{split}
  \! \mathbf{a}^\prime_t \! = \! \left[ \! \mathbf{B}^\prime\mathbf{B}^{\prime T} \! + \! \left( \! \sum_{j=1}^tv_j \! \right) \! \mathbf{I} \right]^{-1} \! \left( \! {\mathcal{X}^{t\prime}}_{(N)}\mathbf{B}^{\prime T} \! + \! \lambda \! \sum_{j=1}^t[\mathbf{A}_0^{(N)}]_{j\cdot}v_j \! \right) \!,
\end{split}
\end{equation}
where $\mathbf{B}^\prime = \sqrt{\mathcal{W}_{(N)}} \circledast \mathbf{B}^{(N)T}$, $\mathbf{B}^{(N)T} = \mathcal{S}_{(N)} \left(  \otimes_{i=1}^{N-1} \mathbf{A}^{(i)T} \right) \in \mathbb{R}_{\geq0}^{r_N \times I_1 \times \dots \times I_{N-1}}$, $\mathbf{I} \in \mathbb{R}^{r_N \times r_N}$ denotes an identity matrix, and $\mathbf{V}$ the affinity of data points.
If the test image is outlier, then the weight tensor $\mathcal{W}$ is constrained to a small value.
The low rank representation of this outlier $\mathbf{a}_t^\prime$ will be repaired through the manifold structure of the training data.
\end{myTheo}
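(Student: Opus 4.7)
The plan is to freeze all quantities learned during training — the core tensor $\mathcal{S}$, the factor matrices $\mathbf{A}^{(1)},\dots,\mathbf{A}^{(N-1)}$, the training block $\mathbf{A}_0^{(N)}$ of the last factor matrix, and the corresponding weight tensor $\mathcal{W}$ — and to write down the residual optimization problem that governs the single new row $\mathbf{a}'_t$ appended to $\mathbf{A}^{(N)}$ when the test image $\mathcal{X}^{t\prime}$ is embedded. Because the first $N-1$ factors and the core are fixed, the reconstruction of the test image depends on $\mathbf{a}'_t$ only through $\mathbf{a}'_t\mathbf{B}^{(N)T}$, so the data-fidelity slice of \eqref{objRMNTF_CIM} becomes the weighted quadratic $\bigl(\mathcal{X}^{t\prime}_{(N)}-\mathbf{a}'_t\mathbf{B}^{(N)T}\bigr)\mathrm{Diag}(\mathbf{w}_t)\bigl(\mathcal{X}^{t\prime}_{(N)}-\mathbf{a}'_t\mathbf{B}^{(N)T}\bigr)^{T}$, where $\mathbf{w}_t$ is the row of $\mathcal{W}_{(N)}$ associated with the test sample. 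Using the shorthand $\mathbf{B}'=\sqrt{\mathcal{W}_{(N)}}\circledast\mathbf{B}^{(N)T}$ this slice rewrites cleanly as $\bigl\|\sqrt{\mathbf{w}_t}\circledast\mathcal{X}^{t\prime}_{(N)}-\mathbf{a}'_t\mathbf{B}'\bigr\|_2^{2}$.

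Next I would expand the manifold regularizer $\frac{\lambda}{2}\mathrm{Tr}(\mathbf{A}^{(N)T}\mathbf{L}\mathbf{A}^{(N)})$ restricted to the new row. The only new pairwise terms involving $\mathbf{a}'_t$ are $\lambda v_j\|\mathbf{a}'_t-[\mathbf{A}_0^{(N)}]_{j\cdot}\|_2^{2}$ for $j=1,\dots,t$, where $v_j$ is the graph affinity between the test image and the $j$-th training image as defined by \eqref{RNTFMR2}. Adding these to the data-fidelity slice and differentiating with respect to $\mathbf{a}'_t$ produces the linear system
\begin{equation*}
\mathbf{a}'_t\Bigl[\mathbf{B}'\mathbf{B}'^{T}+\Bigl(\sum_{j=1}^{t}v_j\Bigr)\mathbf{I}\Bigr]=\mathcal{X}^{t\prime}_{(N)}\mathbf{B}'^{T}+\lambda\sum_{j=1}^{t}[\mathbf{A}_0^{(N)}]_{j\cdot}v_j,
\end{equation*}
which, upon inverting the regularized Gram matrix, yields precisely \eqref{RobustRMNTFsolv}. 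I would work in the unconstrained setting at this step and then appeal to the multiplicative updates of Section~\ref{Algorithm} to argue that the induced KKT fixed point coincides with this normal equation whenever the right-hand side is componentwise nonnegative, so the $\mathbf{A}^{(N)}\geq 0$ constraint is inactive at the optimum.

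The robustness interpretation then follows by reading off the two terms of \eqref{RobustRMNTFsolv}. By the weight updates \eqref{updateWeightedT}, \eqref{updateWeightedT1} and \eqref{updateWeightedT2}, gross outlier pixels in $\mathcal{X}^{t\prime}$ drive the corresponding entries of $\mathcal{W}_{(N)}$, and hence of $\mathbf{B}'$, toward zero. Consequently both $\mathbf{B}'\mathbf{B}'^{T}$ and $\mathcal{X}^{t\prime}_{(N)}\mathbf{B}'^{T}$ become small, and the closed form collapses to $\mathbf{a}'_t\approx\lambda\bigl(\sum_{j}v_j\bigr)^{-1}\sum_{j}[\mathbf{A}_0^{(N)}]_{j\cdot}v_j$, i.e.\ a graph-Laplacian convex combination of the nearest training representations. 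This is the precise mechanism by which the manifold term ``repairs'' an outlying test sample.

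The main obstacle I anticipate is the interplay between the unconstrained linear solve and the nonnegativity constraint carried by the multiplicative scheme: one must argue that $\mathbf{B}'\mathbf{B}'^{T}+(\sum_{j}v_j)\mathbf{I}$ is strictly positive definite (immediate once the test sample has at least one graph neighbor, since the additive $(\sum_{j}v_j)\mathbf{I}$ term removes any rank deficiency of $\mathbf{B}'$ caused by zeroed weights) and that the right-hand side stays in the nonnegative orthant so the KKT multipliers vanish. A secondary subtlety is the square-root in $\mathbf{B}'=\sqrt{\mathcal{W}_{(N)}}\circledast\mathbf{B}^{(N)T}$: one must consistently write the weighted least-squares term as $\|\sqrt{\mathbf{w}_t}\circledast(\cdot)\|_2^{2}$ rather than as a bilinear form with $\mathrm{Diag}(\mathbf{w}_t)$, otherwise the factor of two in the gradient and the $\mathbf{B}'\mathbf{B}'^{T}$ structure in the final answer will not line up.
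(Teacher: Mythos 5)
Your proposal follows essentially the same route as the paper's Appendix A.1.2: freeze the training-learned parameters, restrict the weighted data-fidelity term and the graph regularizer to the single new row $\mathbf{a}'_t$, absorb $\sqrt{\mathcal{W}_{(N)}}$ into $\mathbf{B}'$, set the gradient to zero, and invert the regularized Gram matrix; your added remarks on positive definiteness and on the inactive nonnegativity constraint go slightly beyond what the paper records but do not change the argument. Note only that you have inherited the paper's own bookkeeping slip in which $\lambda$ multiplies the right-hand-side term $\sum_j[\mathbf{A}_0^{(N)}]_{j\cdot}v_j$ but not the $\bigl(\sum_j v_j\bigr)\mathbf{I}$ term, even though both arise from differentiating $\lambda\sum_j\|\mathbf{a}'_t-[\mathbf{A}_0^{(N)}]_{j\cdot}\|^2v_j$.
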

\begin{proof}\label{Proofsolv}
See Appendix A.1.2 for the proof of Theorem \ref{RobustRMNTFsolv}.
\end{proof}

The uniqueness of RMNTF is guaranteed by the following Theorem:
\begin{myTheo}\label{Theorem2}
Since the solution of $\mathbf{A}^{(N)}$ in equation (A.39) is unique and $\mathbf{A}^{(n)}, n = 1,\dots, N-1$, can be uniquely estimated from $\hat{\mathcal{Z}}$ due to the lemma 2, the mode-N unfolding of RMNTF model $\mathcal{X}_{N} = \mathbf{A}^{(N)}\mathcal{S}_{(N)}\mathbf{Z}^{(N)T}$ has an essentially unique solution.
Because of the lemma 1, the RMNTF model $\mathcal{X} = \mathcal{S}\times_{n=1}^{N} \mathbf{A}^{(n)}$ is uniqueness.
\end{myTheo}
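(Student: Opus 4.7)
The plan is to follow the roadmap already sketched in the theorem statement, unpacking the two-step reduction (mode-$N$ unfolding, then reassembly via Lemma 1) into explicit manipulations. I would begin by writing the RMNTF model in its mode-$N$ unfolded form
\[
\mathcal{X}_{(N)} = \mathbf{A}^{(N)} \mathcal{S}_{(N)} \mathbf{Z}^{(N)T}, \qquad \mathbf{Z}^{(N)} = \otimes_{i=1}^{N-1}\mathbf{A}^{(i)},
\]
and then packaging the right-hand factor into an auxiliary tensor $\hat{\mathcal{Z}}$ so that the problem in $\mathbf{A}^{(N)}$ alone becomes a constrained weighted least-squares problem regularized by the graph Laplacian. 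Appendix equation (A.39), which I would assume gives the closed-form minimizer of that subproblem, shows that $\mathbf{A}^{(N)}$ is determined up to the scale ambiguity inherent to any Tucker-type model; crucially the additive term $\lambda\mathbf{L}$ contributed by the manifold regularizer together with the weighted Gram term is strictly positive definite almost surely (since $\mathbf{D}-\mathbf{V}$ is positive semidefinite and the weighted Gram term is invertible under the mild rank assumption on $\mathbf{Z}^{(N)}$), which is what pins down $\mathbf{A}^{(N)}$ uniquely.

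Next I would invoke Lemma 2 to recover each factor $\mathbf{A}^{(n)}$, $n=1,\dots,N-1$, from $\hat{\mathcal{Z}}$. The typical content of such a lemma in the nonnegative Tucker literature is that, given a Kronecker product of nonnegative matrices, the individual factors are determined up to the customary column-permutation/scaling ambiguity provided their column ranks are full and the nonnegativity constraint is binding enough to eliminate sign flips. Applying this to $\mathbf{Z}^{(N)} = \otimes_{i=1}^{N-1}\mathbf{A}^{(i)}$ gives essential uniqueness of each $\mathbf{A}^{(n)}$ for $n<N$. Combined with the previous step, the mode-$N$ unfolding of the RMNTF model admits an essentially unique factorization into $\mathbf{A}^{(N)}$, $\mathcal{S}_{(N)}$ and the collection $\{\mathbf{A}^{(n)}\}_{n<N}$.

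Finally, I would use Lemma 1, which should assert that tensor Tucker-decomposition uniqueness is equivalent to uniqueness of any mode unfolding under the matched folding/unfolding operator. Since the mode-$N$ unfolding representation is essentially unique and the folding operator $\mathrm{fold}_N$ is a bijection, the original tensor model $\mathcal{X} = \mathcal{S}\times_{n=1}^{N}\mathbf{A}^{(n)}$ inherits this essential uniqueness, completing the argument.

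The main obstacle I anticipate is not the bookkeeping but ensuring that the hypotheses of Lemmas 1 and 2 actually apply in the robust weighted setting: Lemma 2's identifiability of Kronecker factors requires column-rank conditions on each $\mathbf{A}^{(n)}$, and equation (A.39)'s invertibility requires that the weighted Gram matrix $\sqrt{\mathcal{W}_{(N)}}\circledast\mathbf{B}^{(N)T}$ together with $\lambda\mathbf{L}$ be positive definite. I would therefore state these rank/positivity hypotheses explicitly at the start of the proof, so that the robustness mechanism (which shrinks weights on outlier entries) cannot, in adversarial scenarios, degenerate the effective row dimension below the required threshold. Once those conditions are laid out, the remainder of the argument reduces to assembling the ingredients already provided by Lemmas 1 and 2 and equation (A.39).
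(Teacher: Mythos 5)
Your proposal follows essentially the same route as the paper's Appendix A.1.3: uniqueness of $\mathbf{A}^{(N)}$ from the closed form (A.39), recovery of the remaining factors $\{\mathbf{A}^{(n)}\}_{n<N}$ from the Kronecker product $\mathbf{Z}^{(N)}=\otimes_{n\neq N}\mathbf{A}^{(n)}$, and lifting mode-$N$ unfolding uniqueness to uniqueness of the full tensor model. The one substantive difference is the mechanism you guess for the Kronecker-factor step: the paper does not invoke column-rank conditions but instead rearranges $\hat{\mathbf{Z}}^{(N)}$ into a rank-one tensor $\hat{\mathcal{Z}}=\mathrm{vec}(\hat{\mathbf{A}}^{(1)})\circ\dots\circ\mathrm{vec}(\hat{\mathbf{A}}^{(N-1)})$ (a Van Loan--Pitsianis-type rearrangement) whose outer-product factors are unique up to scaling, while your explicit rank and positive-definiteness hypotheses are a reasonable tightening of assumptions the paper leaves implicit.
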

\begin{proof}\label{Proof2}
See Appendix A.1.3 for the proof of Theorem \ref{Theorem2}.
\end{proof}

\begin{figure}[t]
\vspace{-0.5cm} 
\setlength{\abovecaptionskip}{0cm} 
\setlength{\belowcaptionskip}{-0cm} 
\centering
\subfloat[]{\includegraphics[width=1.25in]{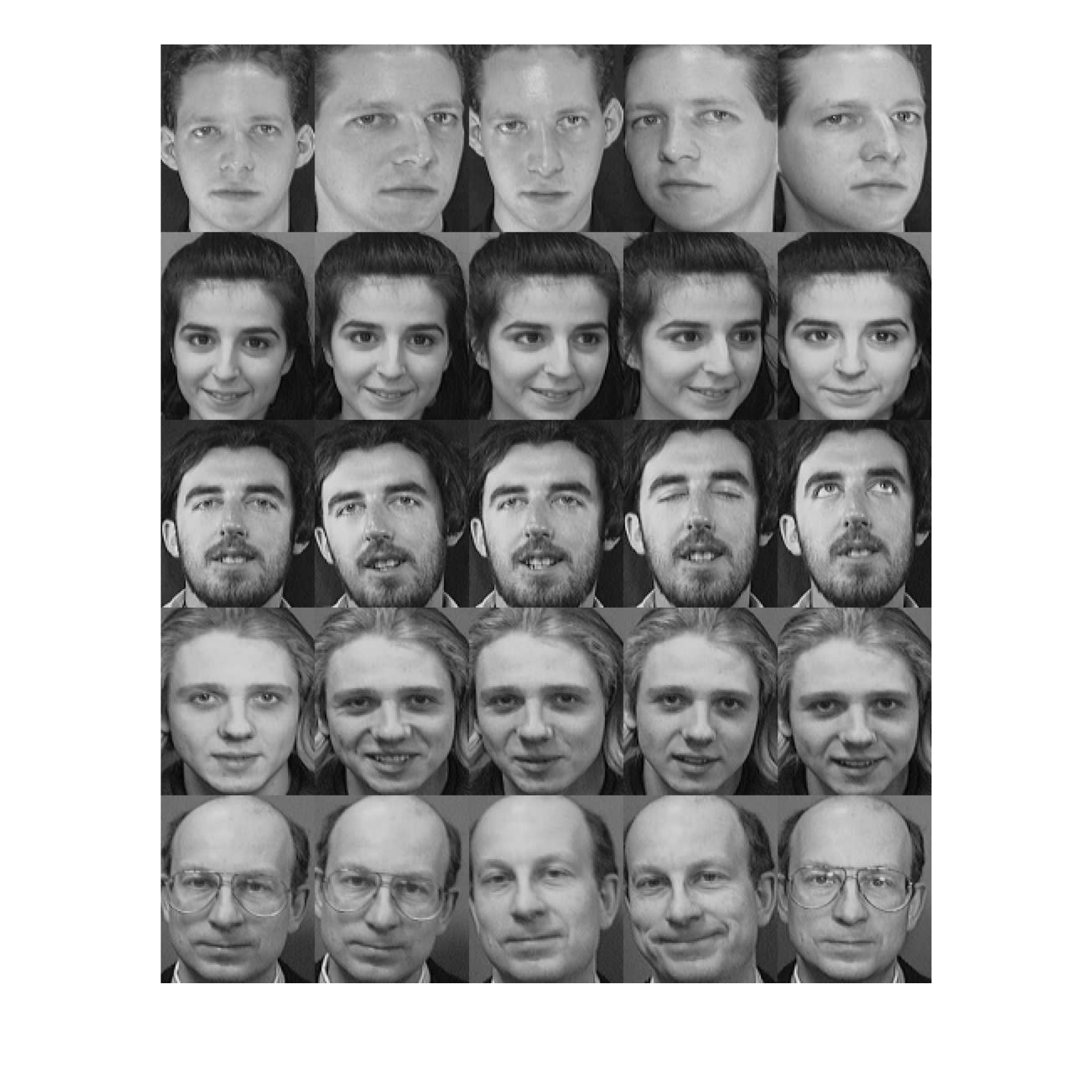}%
\label{Reuters_delta_Coh}}\hspace{-5mm}
\hfil
\subfloat[]{\includegraphics[width=1.25in]{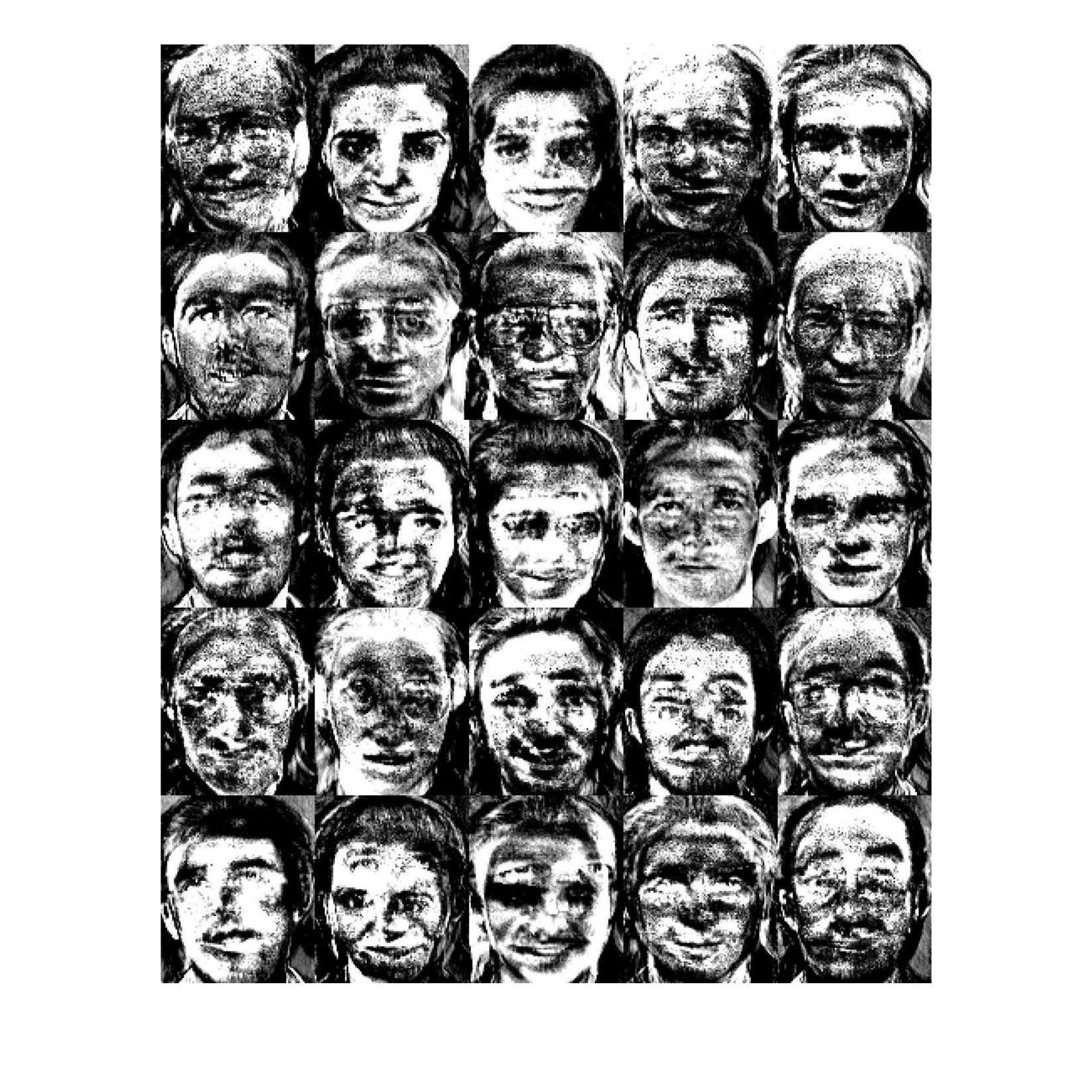}%
\label{Reuters_delta_SC}}\hspace{-5mm}
\hfil
\subfloat[]{\includegraphics[width=1.25in]{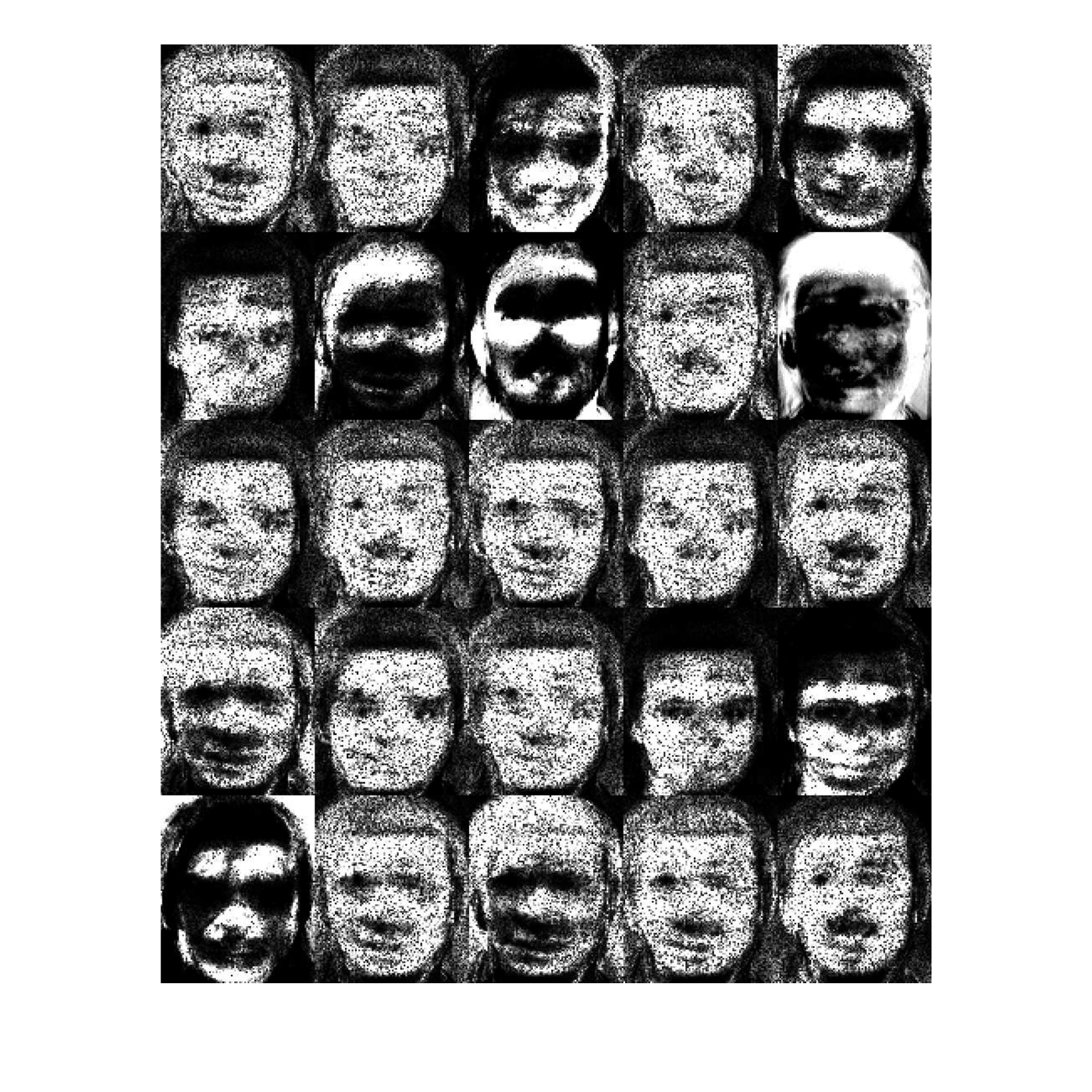}%
\label{Reuters_delta_SC}}\hspace{-5mm}
\hfil
\vspace{-0.3cm}

\subfloat[]{\includegraphics[width=1.25in]{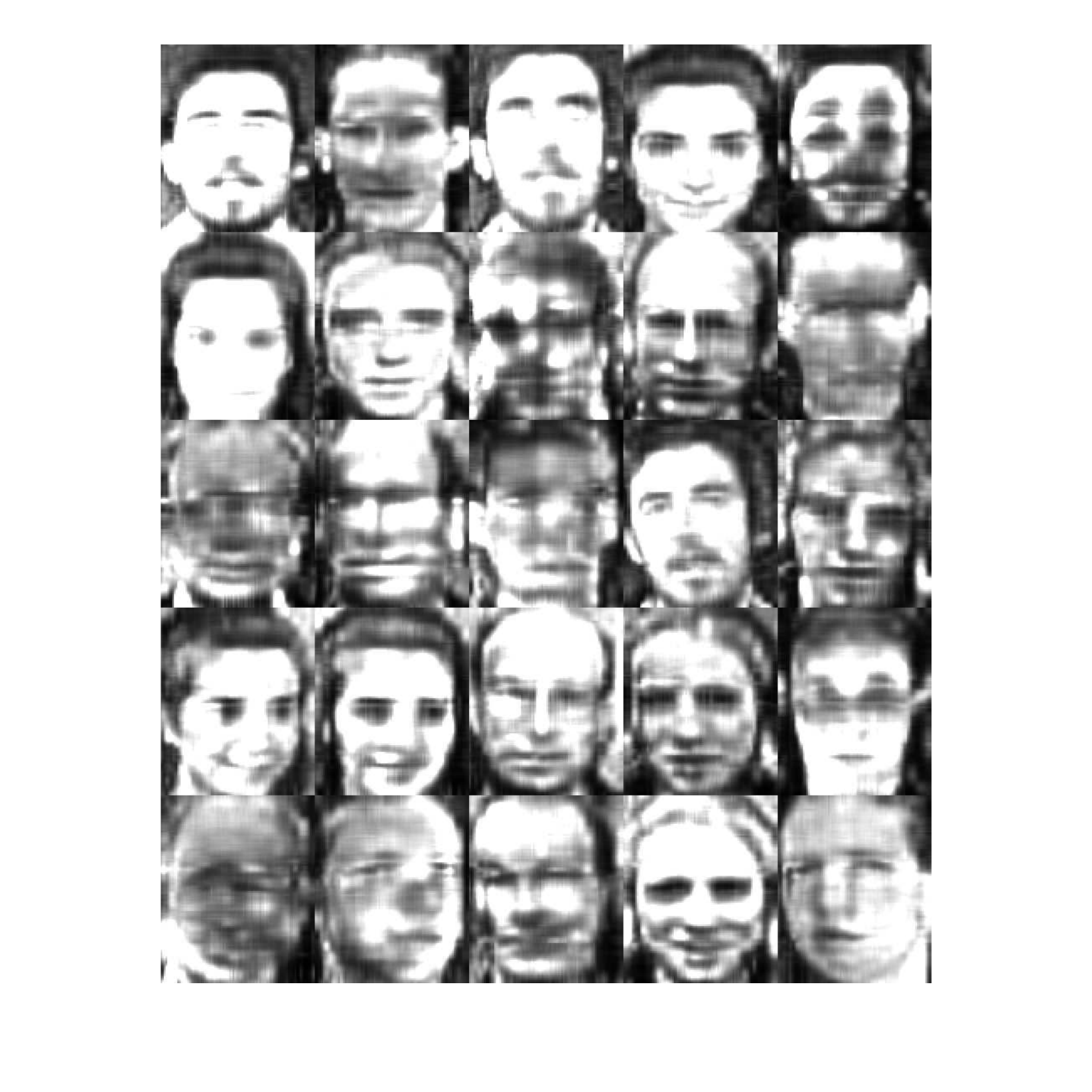}%
\label{Reuters_delta_SC}}\hspace{-5mm}
\hfil
\subfloat[]{\includegraphics[width=1.25in]{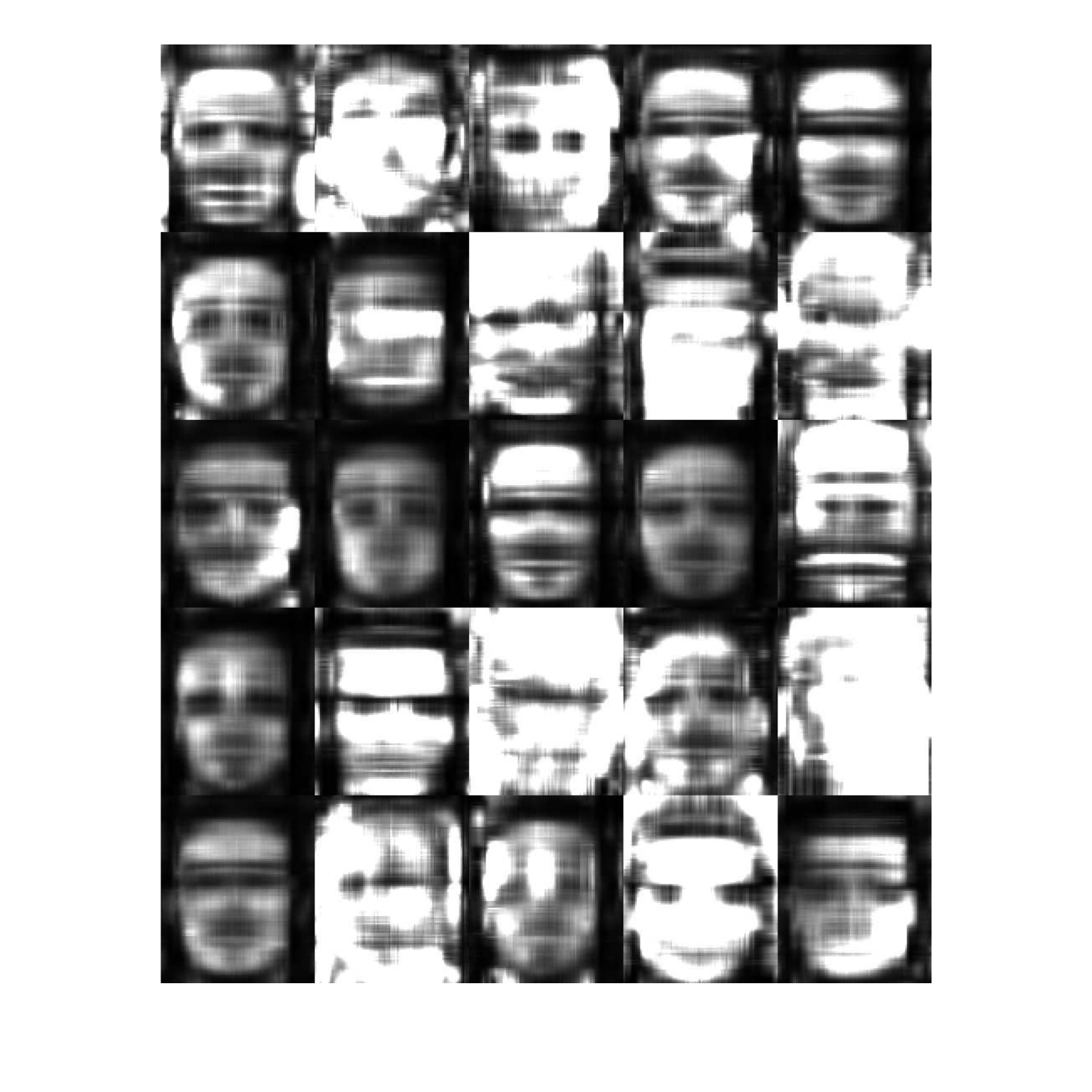}%
\label{Reuters_delta_SC}}\hspace{-5mm}
\hfil
\subfloat[]{\includegraphics[width=1.25in]{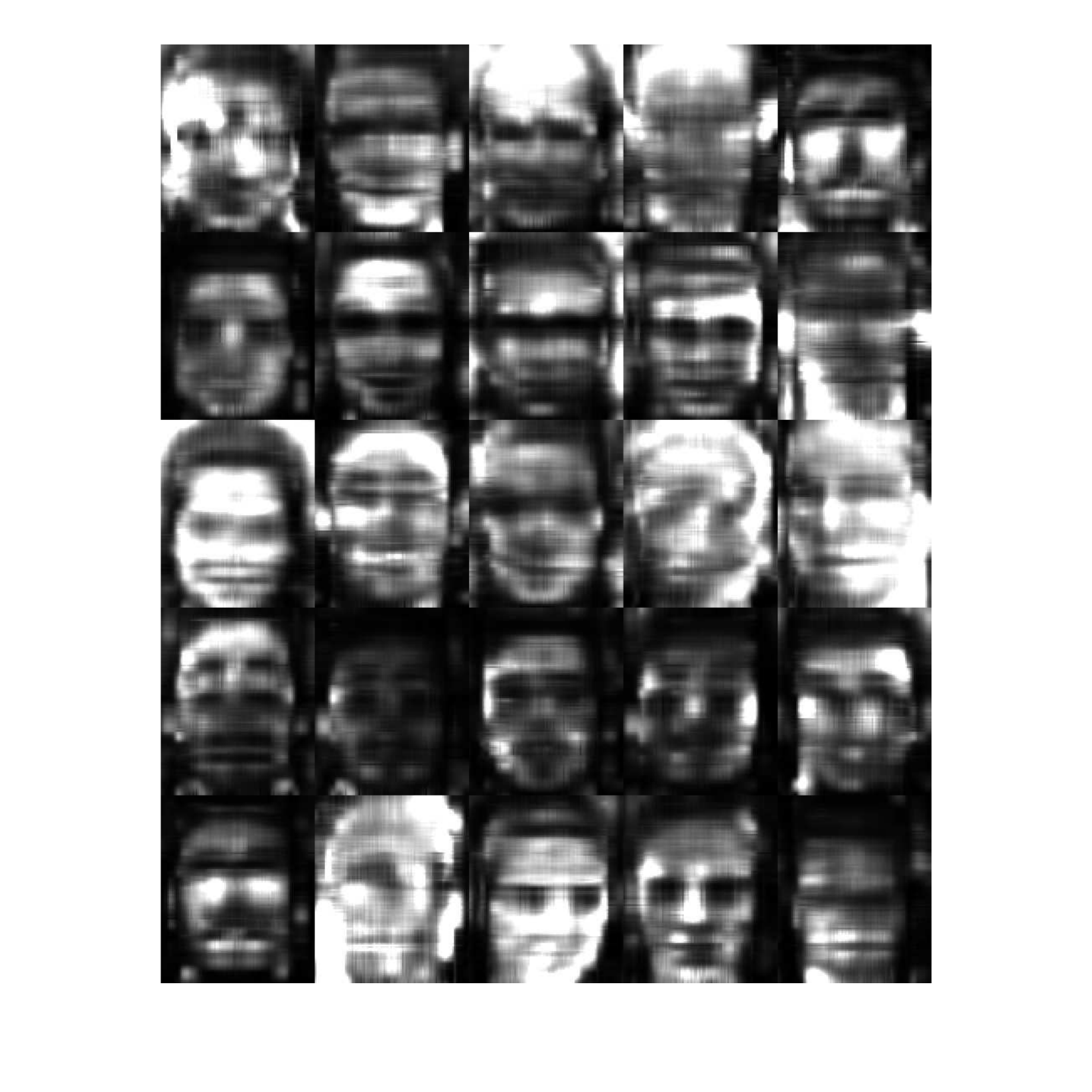}%
\label{Reuters_delta_SC}}\hspace{-5mm}
\hfil
\vspace{2mm}
\caption{Basic images learned by NMF, GNMF, NTD, GNTD, RMNTF from randomly chosen 25 subjects on the AT\&T ORL dataset. (a) Original images. (b) NMF. (c) GNMF. (d) NTD. (e) GNTD. (f) RMNTF$_\text{CIM}$. Each image denotes a basis vector learned by the above-mentioned methods.}
\label{fig:Reuters_delta1}
\end{figure}

\section{Experiments}
\label{Exp}

In this section, we compare the proposed RMNTF-CIM, RMNTF-Huber, RMNTF-Cauchy with nine nonnegative matrix and tensor factorization methods on five image data sets.

\subsection{Datasets}
We conducted experiments on the COIL100, USPS, FEI, ORL and FERET image datasets.

COIL100\footnote{\url{https://www.kaggle.com/jessicali9530/coil100}} is an object categorization image database including 100 classes of objective, each of which contains $72$ images with $128\times 128$ of difference observation angle. For reprocessing, we resize each image into $64 \times 64$ with the RGB representation by nearest neighbor interpolation algorithm. As a result, each object is represented as a $64 \times 64 \times 3$ tensor. In total, we have 7200 tensor objects.

USPS Dataset\footnote{\url{https://www.kaggle.com/bistaumanga/usps-dataset}} consists of $11000$ images of handwritten digits $0\sim 9$ with $16 \times 16$. Each of handwritten digits contains $1100$ images.

FEI Part 1 Dataset\footnote{\url{https://fei.edu.br/~cet/facedatabase.html}}
is the subset of FEI database, which consists of 700 color images of size $480 \times 640$ collected from 50 individuals. Each individual has 14 different images under different observed and facial expressions. In our experiment, the images are resized to $48 \times 64$. These image finally construct a fourth-order $48\times 64\times 3\times 700$ tensor.

AT \& T ORL Dataset\footnote{\url{https://www.cl.cam.ac.uk/research/dtg/attarchive/facedatabase.html}}: is the subset of FEI database, which consists of 700 color images of size $480 \times 640$ collected from 50 individuals. Each individual has 14 different images under different observed and facial expressions. In our experiment, the images are resized to $48 \times 64$ pixels. These image finally construct a fourth-order $48\times 64\times 3\times 700$ tensor.

FERET Dataset\footnote{\url{https://www.nist.gov/itl/products-and-services/color-feret-database}}: This dataset \cite{phillips2000feret} collects $14125$ grayscale face images acquired from $1199$ subjects. It is widely used for evaluating face recognition and clustering problem. In our experiment, we use a subset of $1400$ images with a size of $80 \times 80$ from $200$ subjects where each subject contains $7$ images with varying postures, genders, lighting condition, shooting directions and race. In total, we stack the images into a third-order tensor with size of $80\times 80\times 1400$.

For COIL100, we randomly selected $5,10,20,30,50$ categories from the whole COIL100 for evaluation.
For USPS, we randomly selected $3,5,7$ categories and the whole $10$ categories for the evaluation.
For FEI, ORL and FERET data sets, we randomly selected $5,10,15,20$ categories for the evaluation.
For each comparison, we reported the average results over $50$ Monte-Carlo runs.

\begin{figure*}[t]
\setlength{\abovecaptionskip}{0cm} 
\setlength{\belowcaptionskip}{-0cm} 
\centering
\subfloat[]{\includegraphics[width=1.25in]{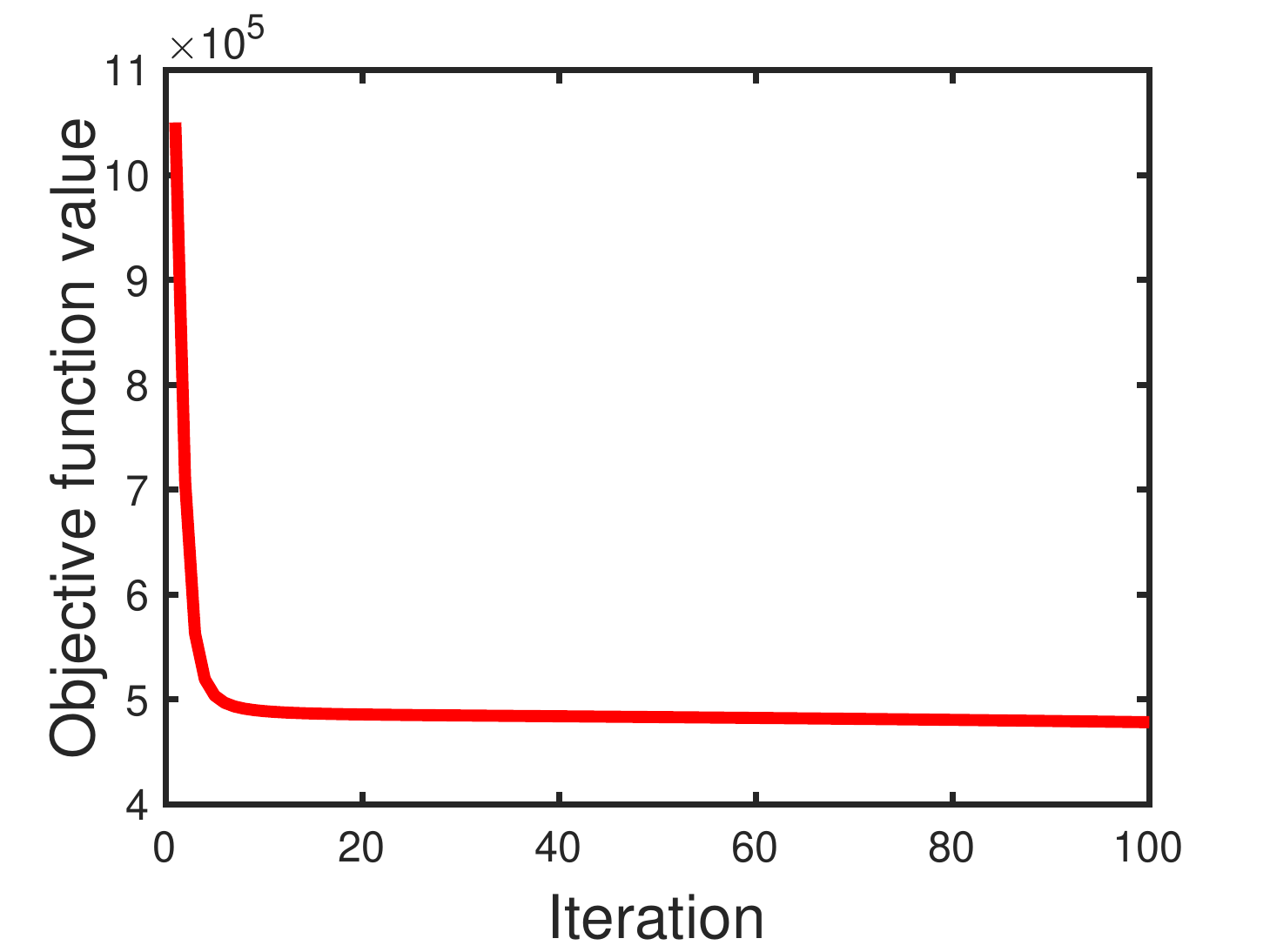}%
\label{Reuters_delta_Coh}}\hspace{-5mm}
\hfil
\subfloat[]{\includegraphics[width=1.25in]{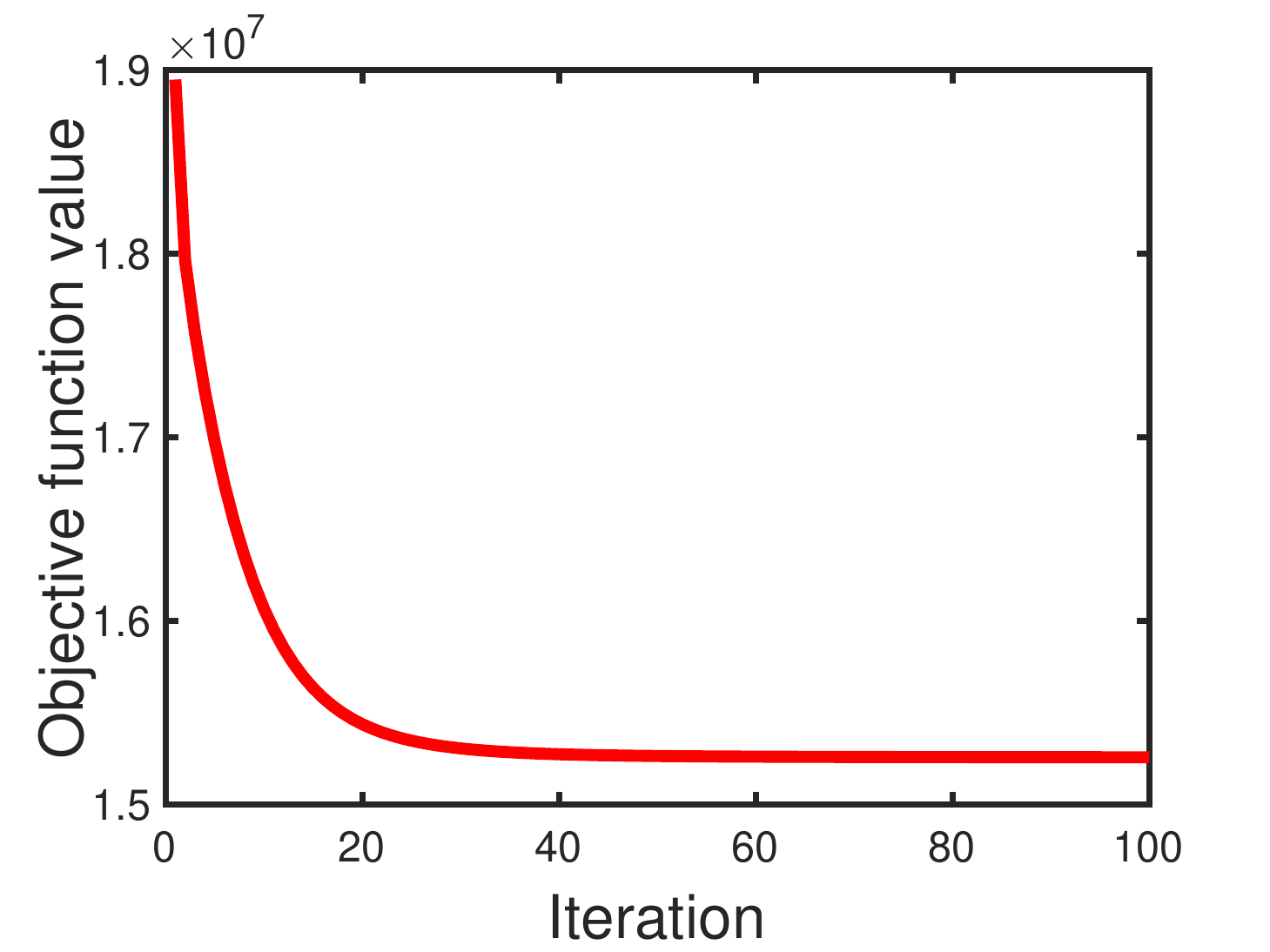}%
\label{Reuters_delta_SC}}\hspace{-5mm}
\hfil
\subfloat[]{\includegraphics[width=1.25in]{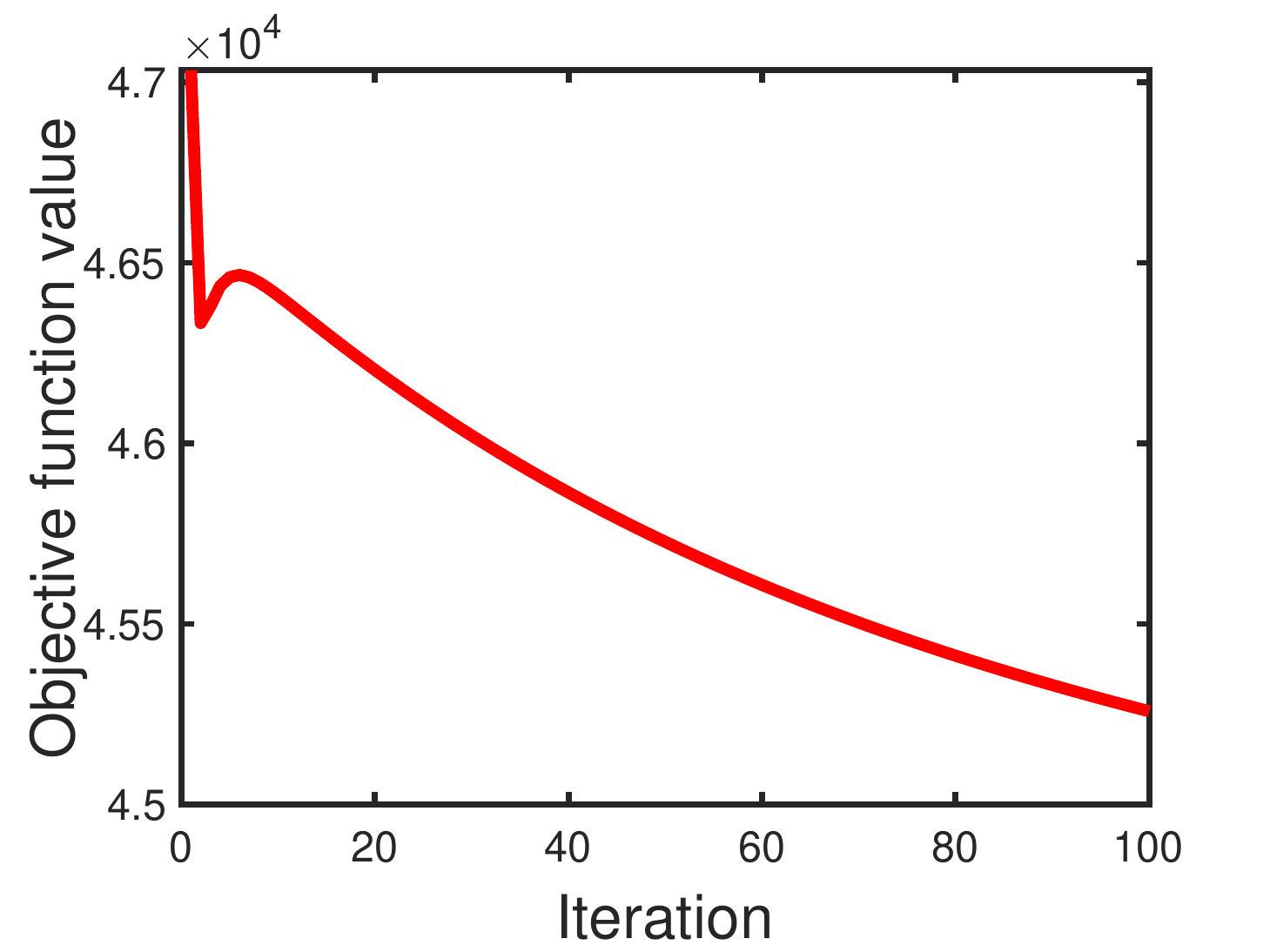}%
\label{Reuters_delta_SC}}\hspace{-5mm}
\hfil
\subfloat[]{\includegraphics[width=1.25in]{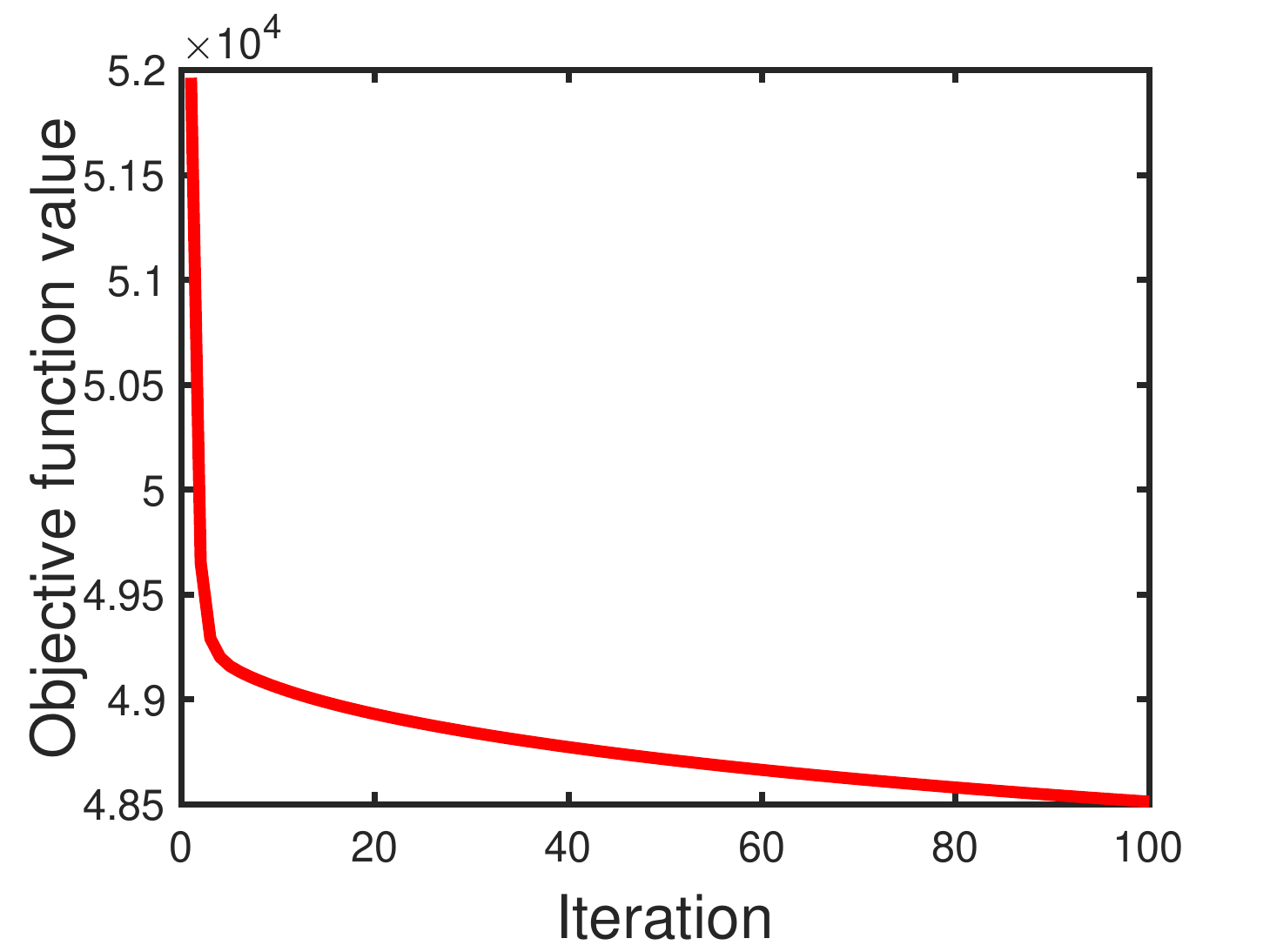}%
\label{Reuters_delta_SC}}\hspace{-5mm}
\hfil
\subfloat[]{\includegraphics[width=1.25in]{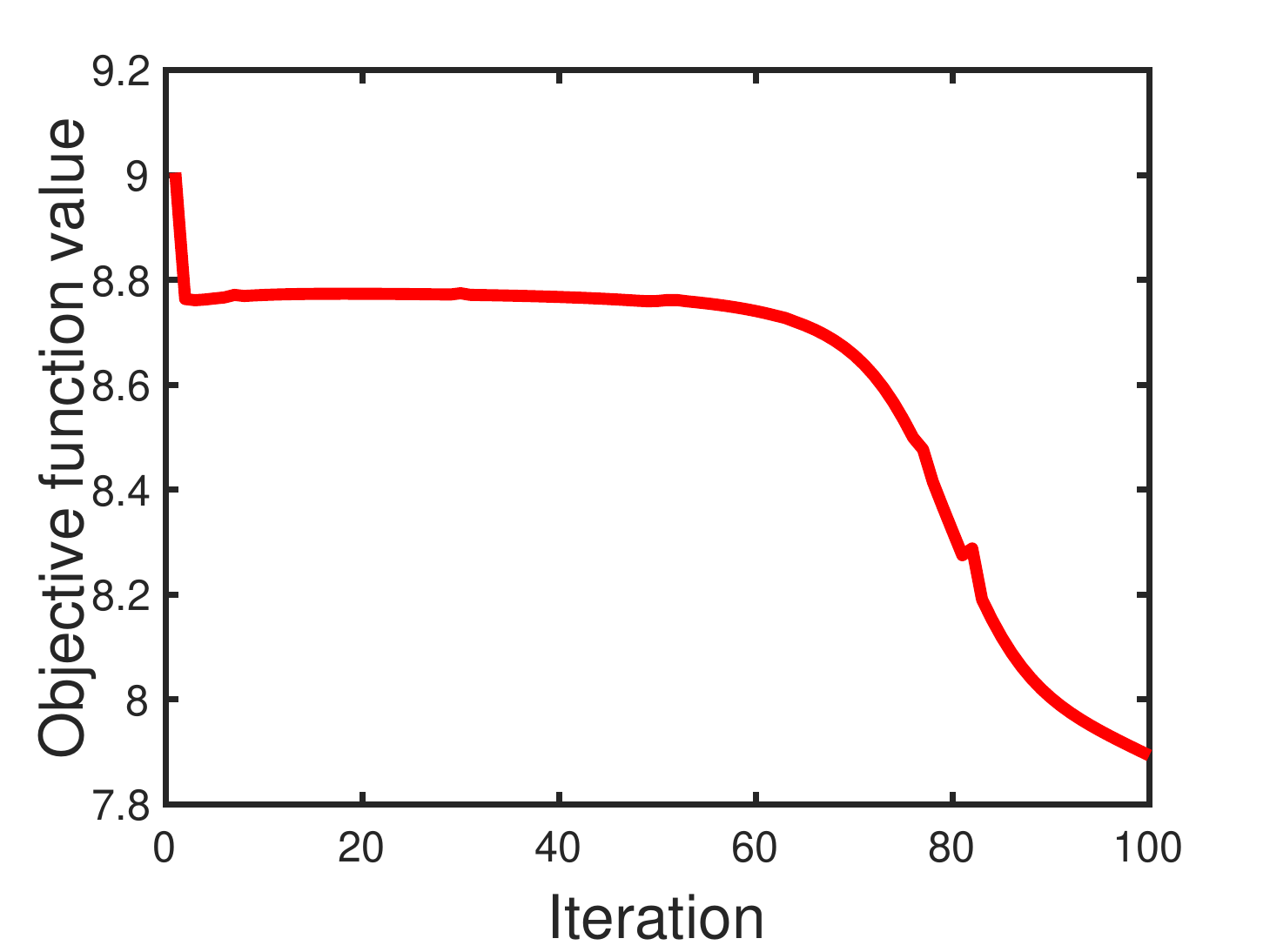}%
\label{Reuters_delta_SC}}\hspace{-5mm}
\hfil

\vspace{-3mm}
\subfloat[]{\includegraphics[width=1.25in]{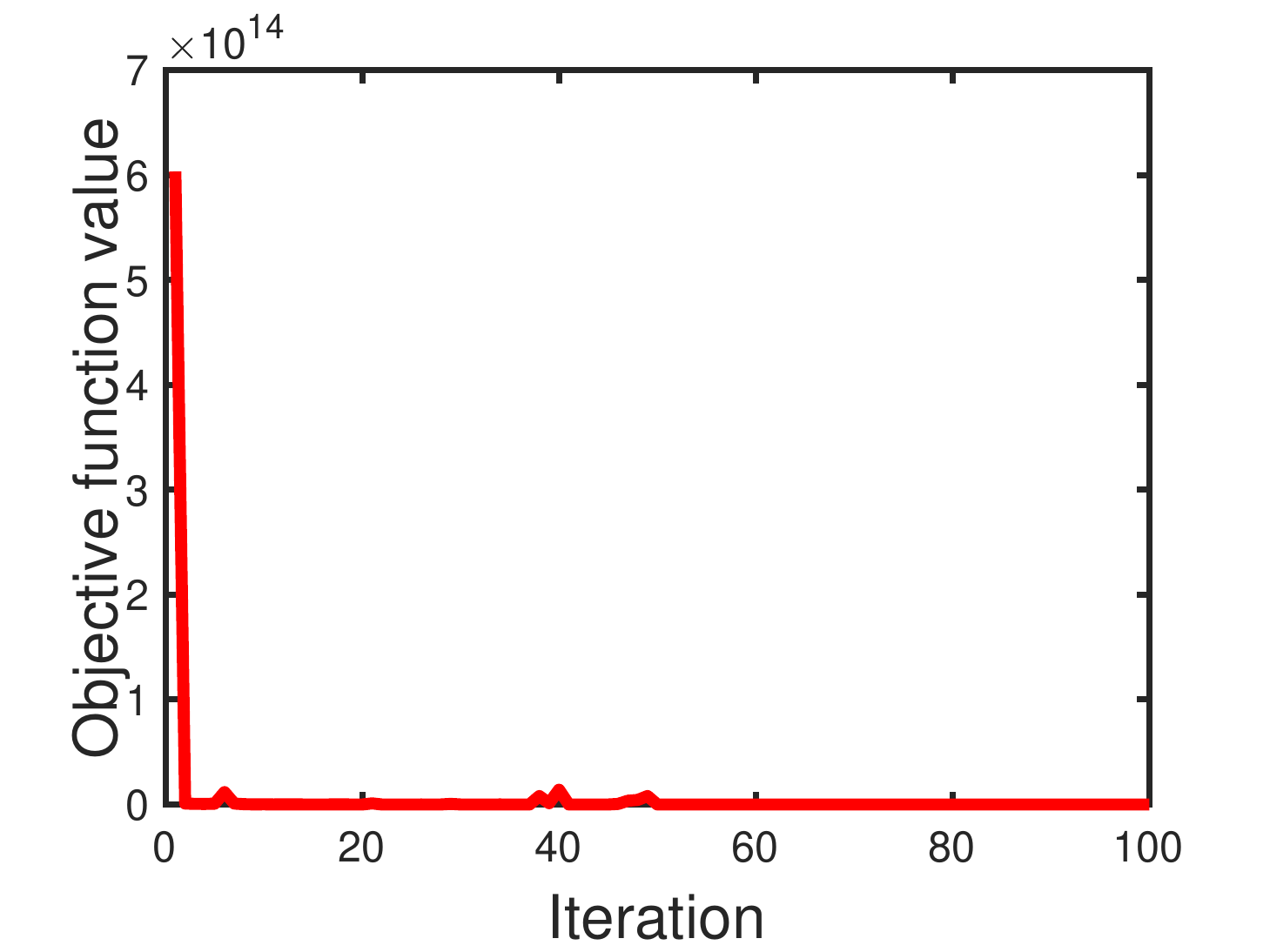}%
\label{Reuters_delta_Coh}}\hspace{-5mm}
\hfil
\subfloat[]{\includegraphics[width=1.25in]{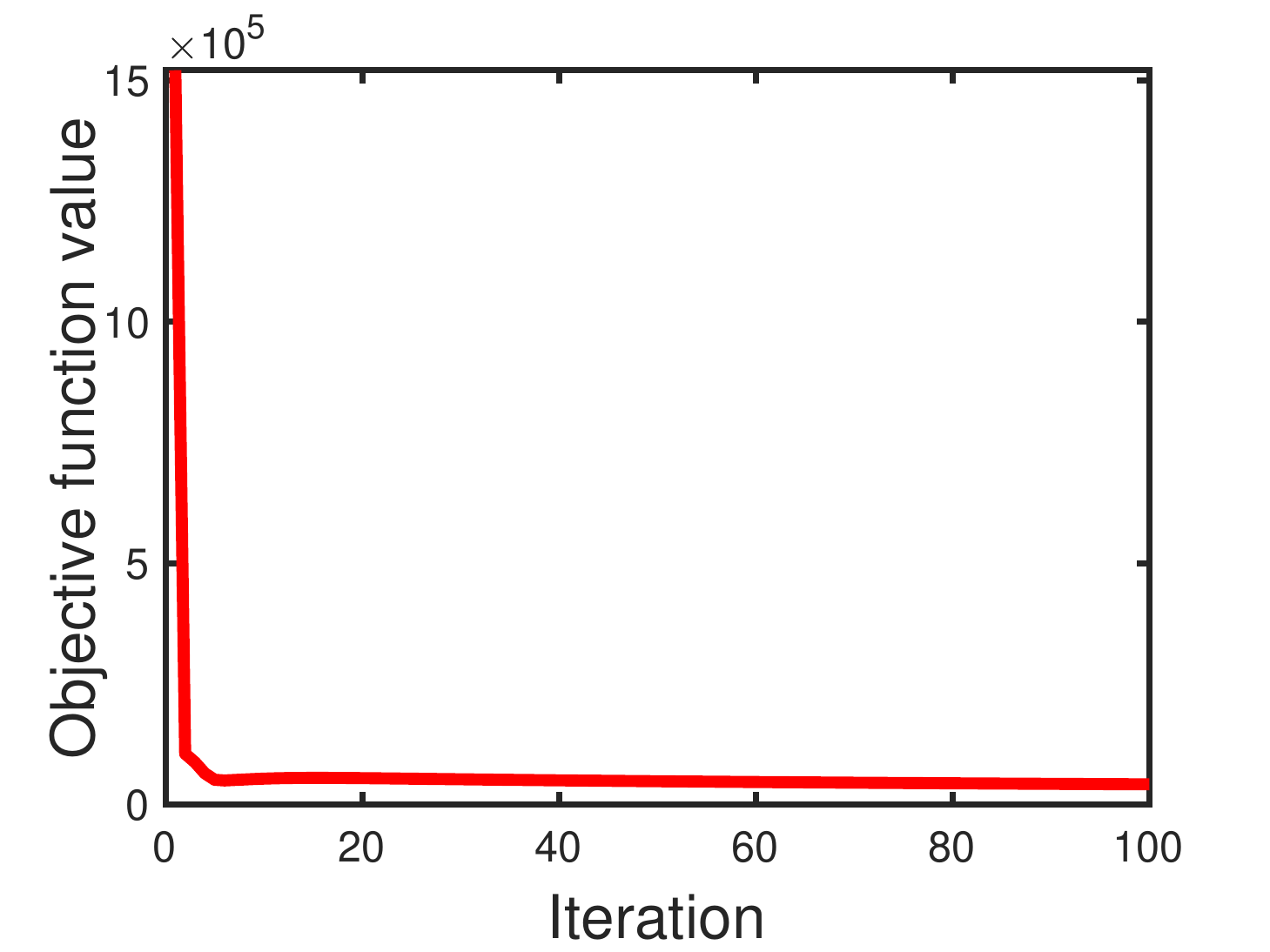}%
\label{Reuters_delta_SC}}\hspace{-5mm}
\hfil
\subfloat[]{\includegraphics[width=1.25in]{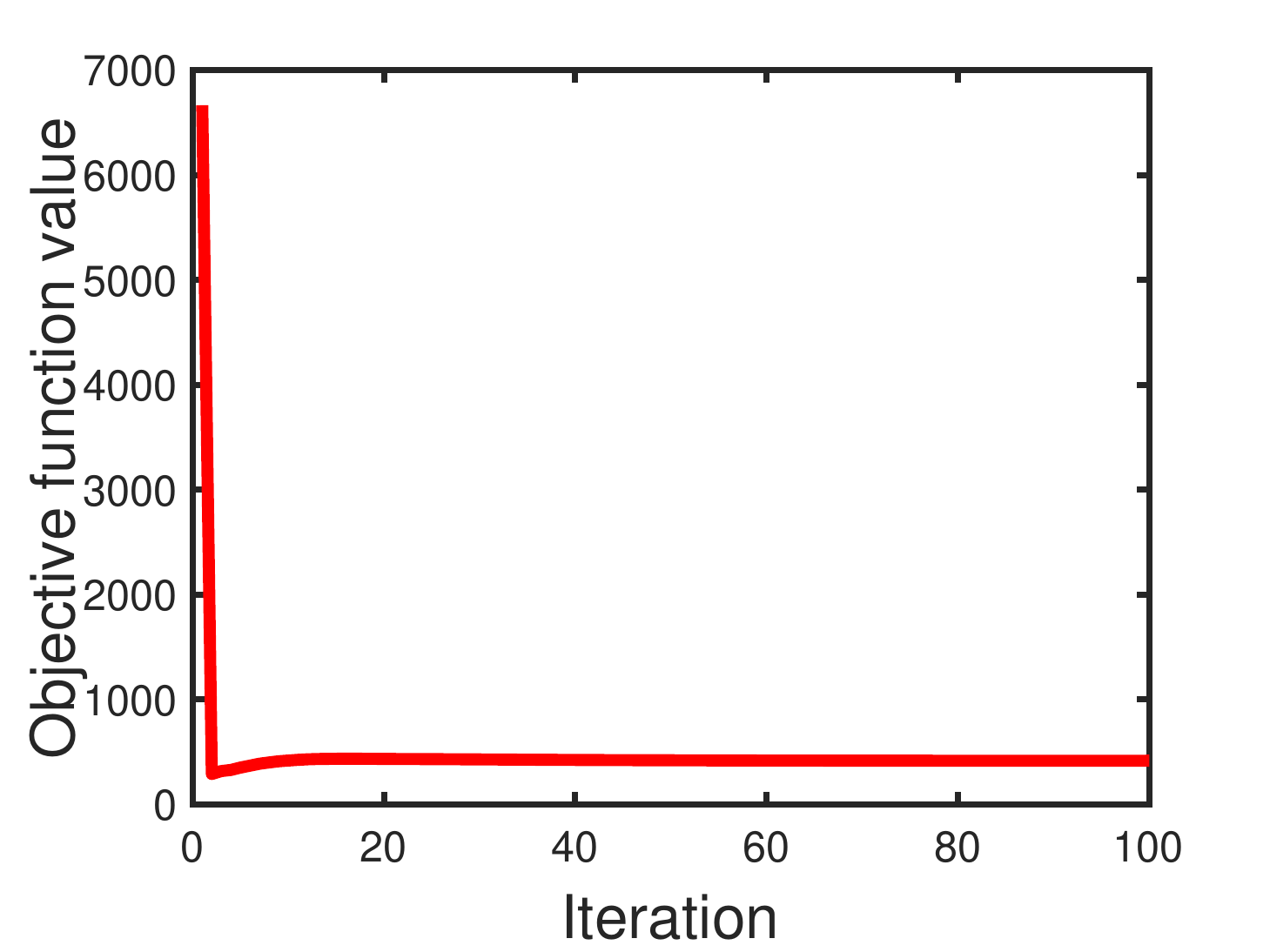}%
\label{Reuters_delta_SC}}\hspace{-5mm}
\hfil
\subfloat[]{\includegraphics[width=1.25in]{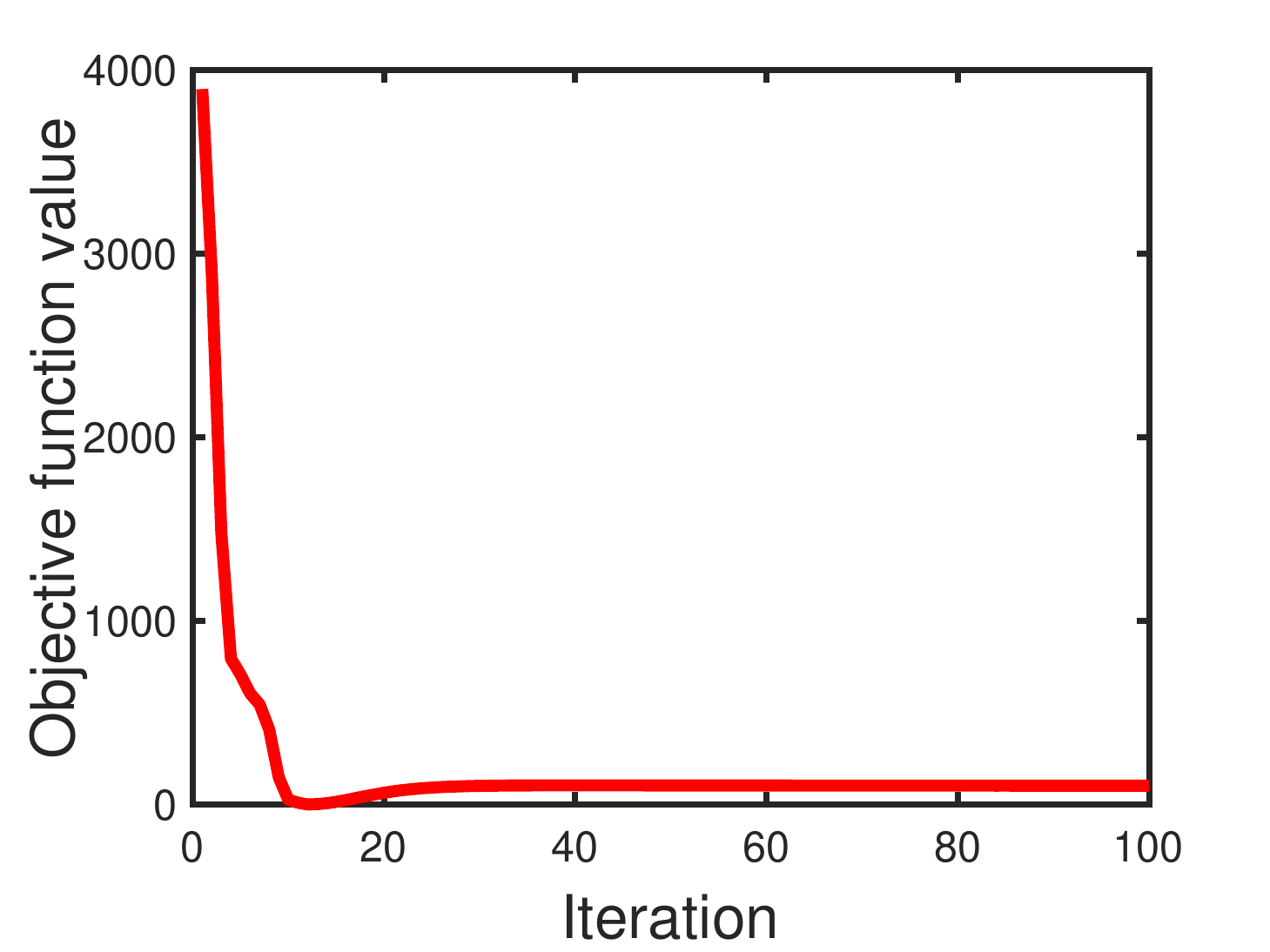}%
\label{Reuters_delta_SC}}\hspace{-5mm}
\hfil
\subfloat[]{\includegraphics[width=1.25in]{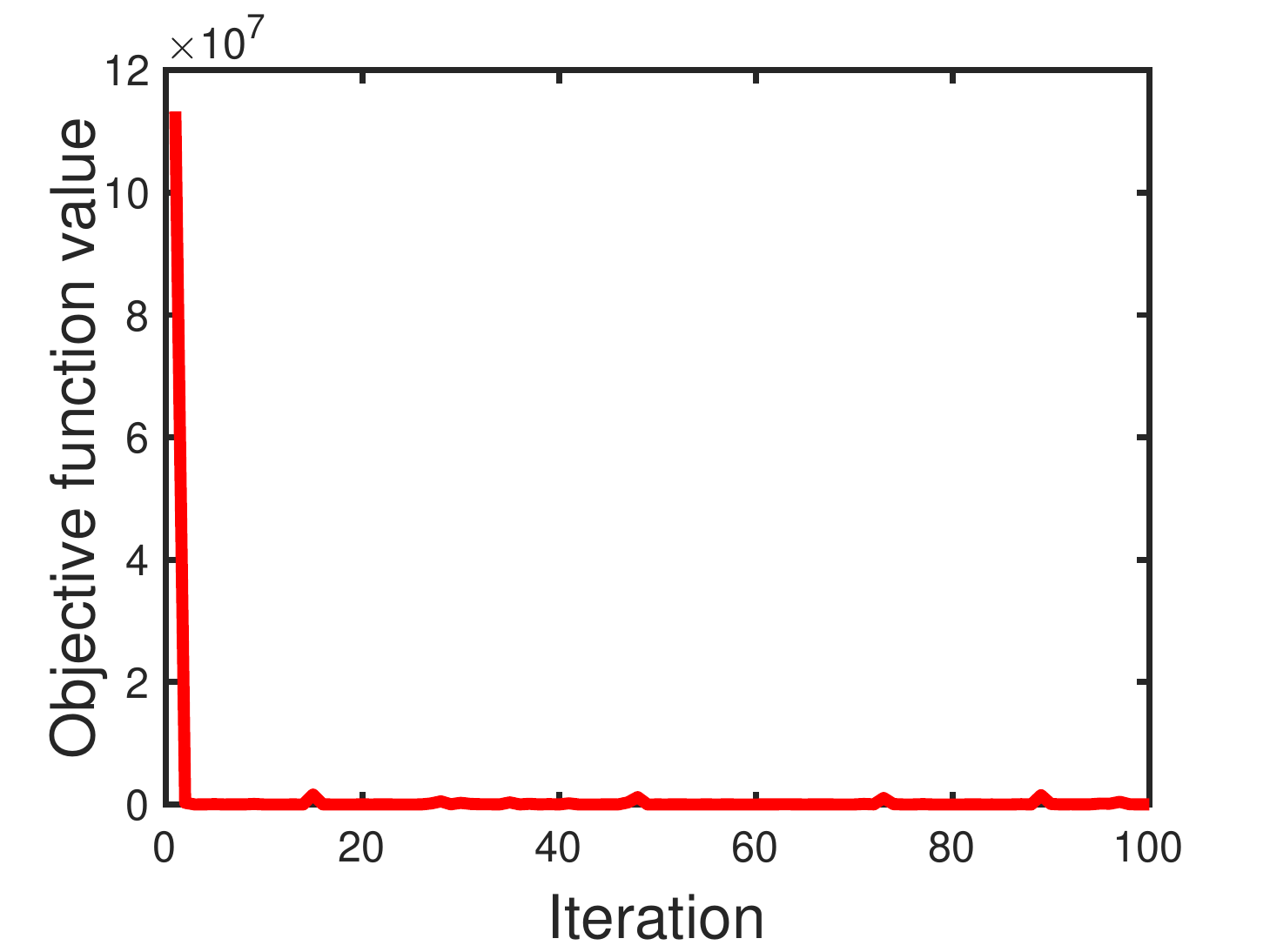}%
\label{Reuters_delta_SC}}\hspace{-5mm}
\hfil

\vspace{-3mm}
\subfloat[]{\includegraphics[width=1.25in]{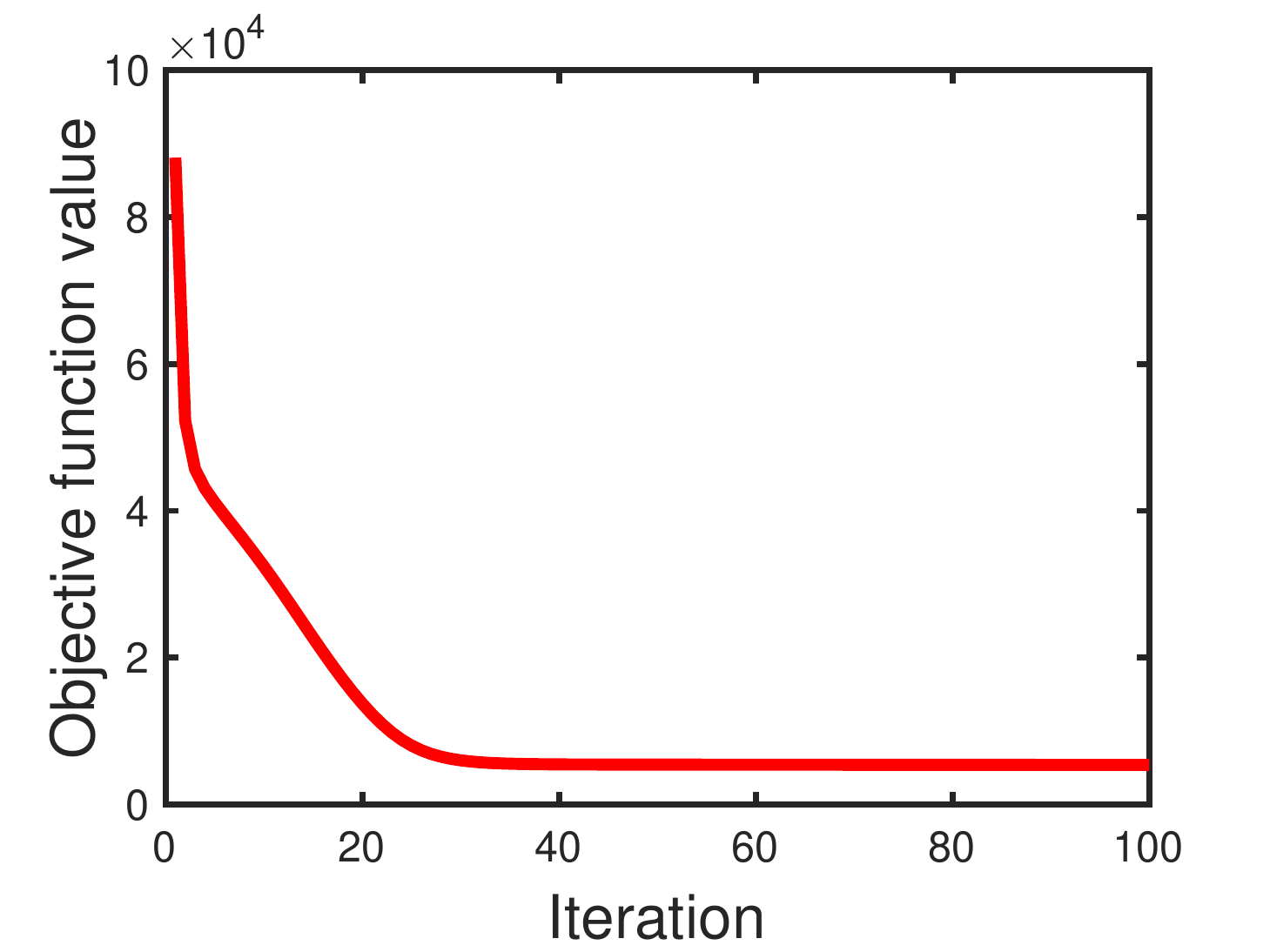}%
\label{Reuters_delta_Coh}}\hspace{-5mm}
\hfil
\subfloat[]{\includegraphics[width=1.25in]{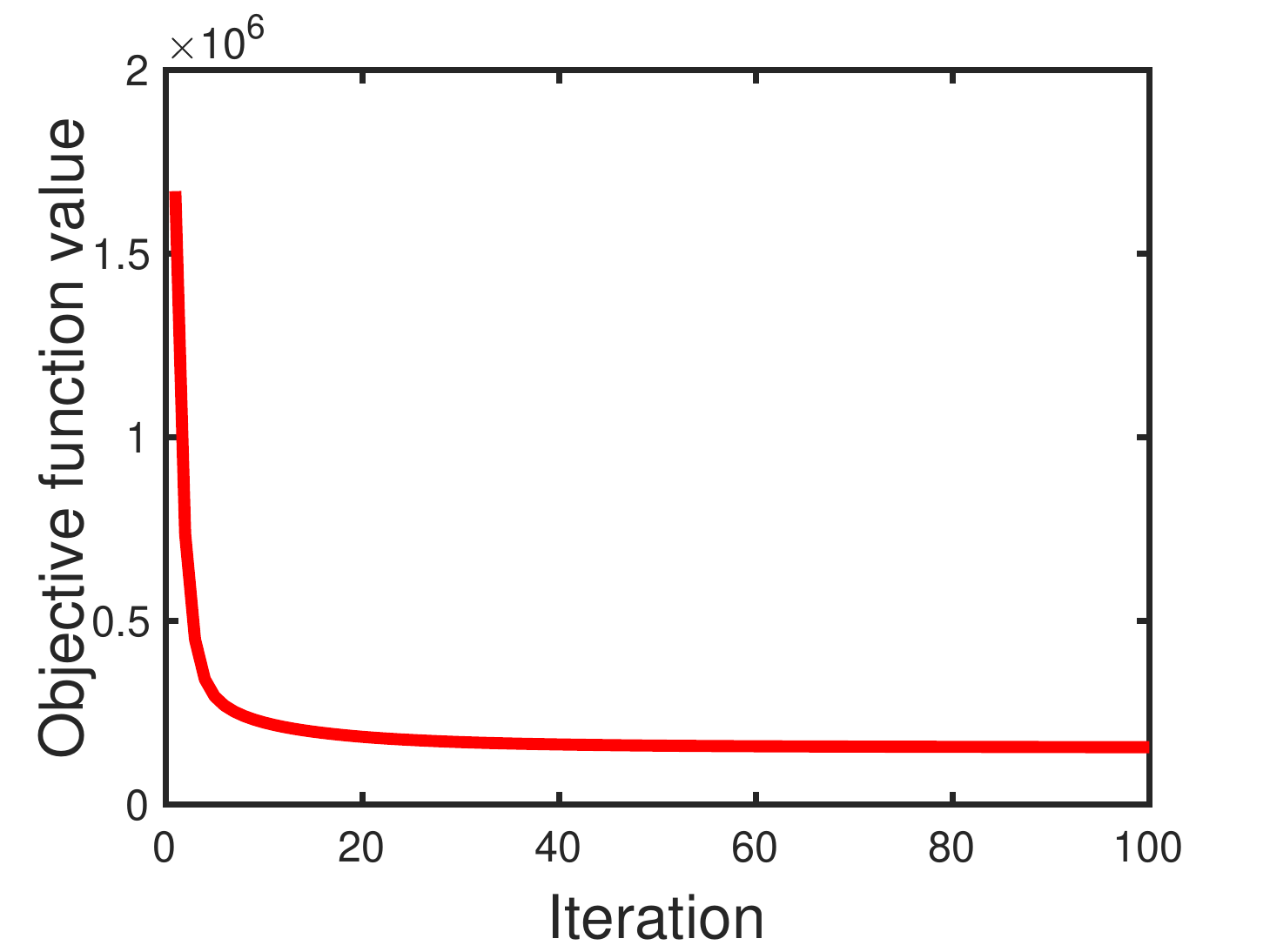}%
\label{Reuters_delta_SC}}\hspace{-5mm}
\hfil
\subfloat[]{\includegraphics[width=1.25in]{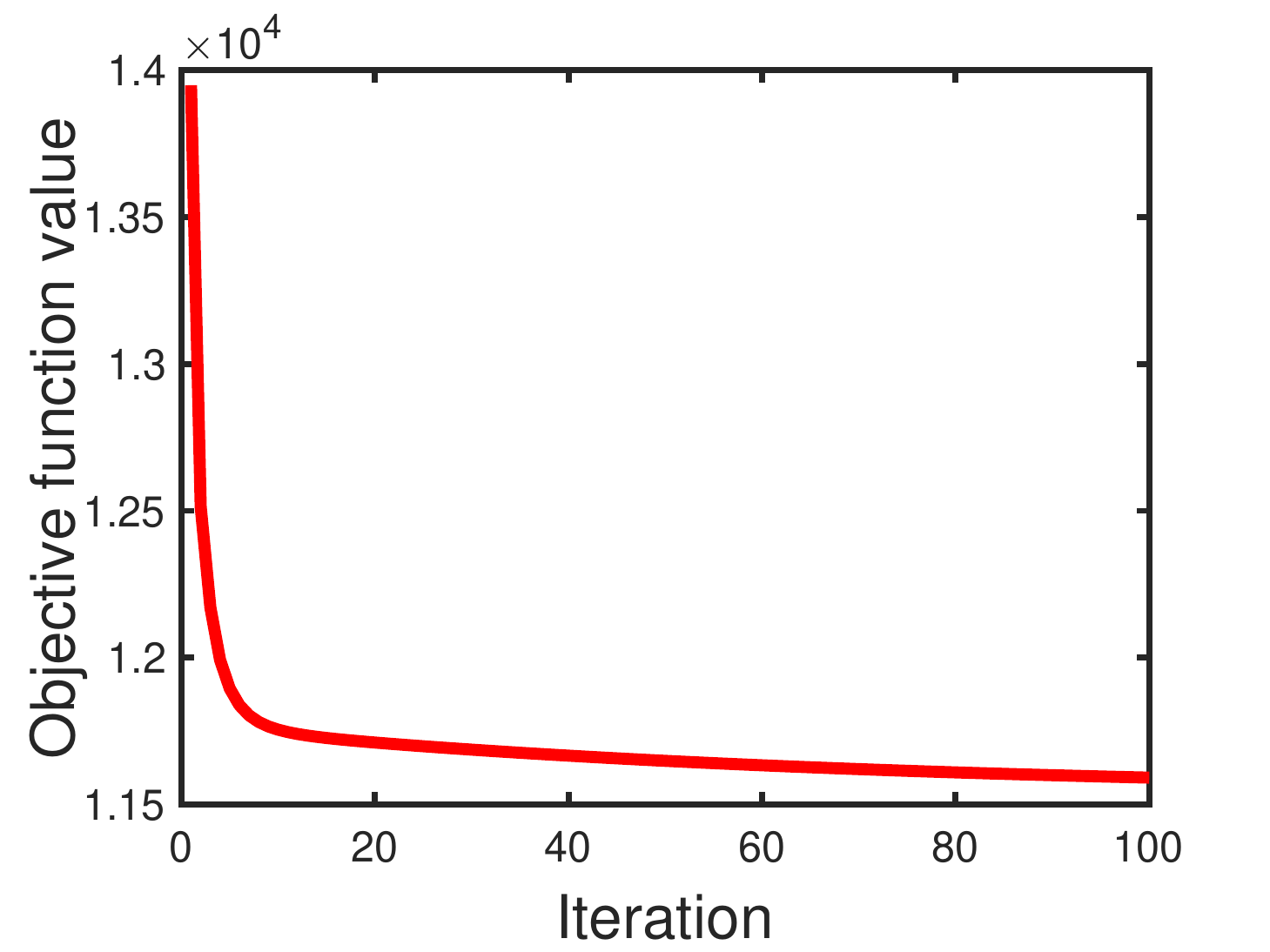}%
\label{Reuters_delta_SC}}\hspace{-5mm}
\hfil
\subfloat[]{\includegraphics[width=1.25in]{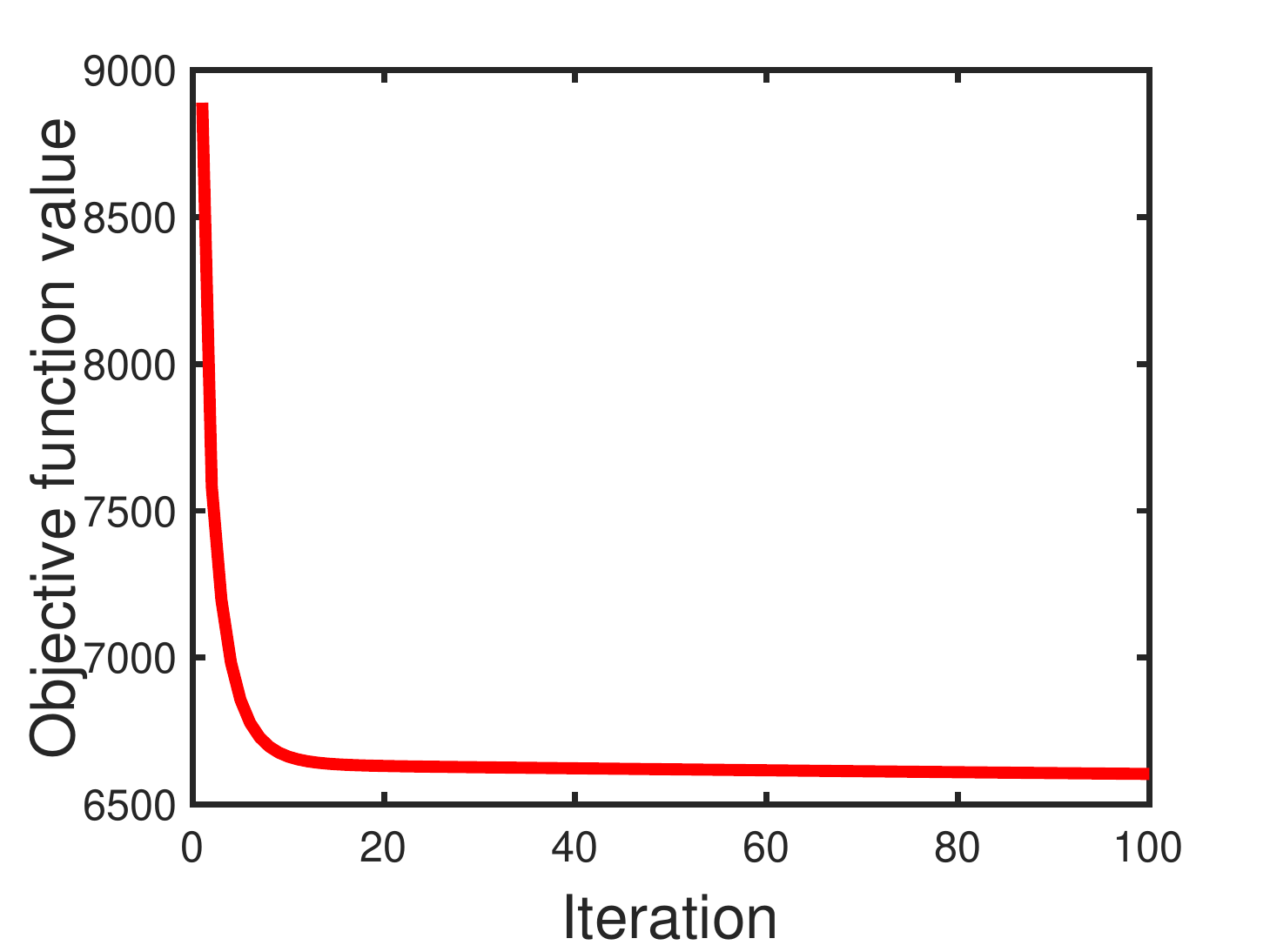}%
\label{Reuters_delta_SC}}\hspace{-5mm}
\hfil
\subfloat[]{\includegraphics[width=1.25in]{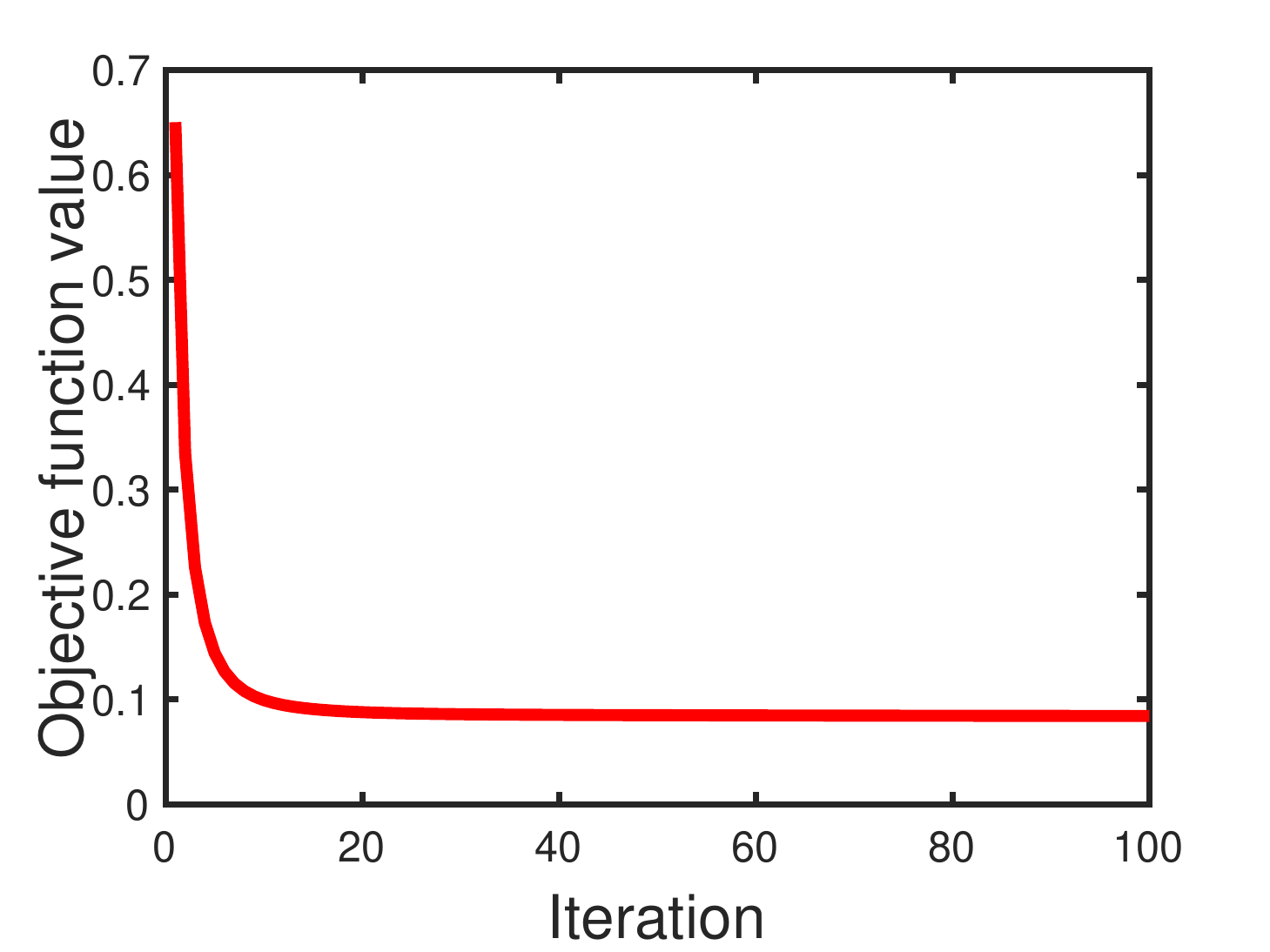}%
\label{Reuters_delta_SC}}\hspace{-5mm}
\hfil

\vspace{2mm}
\caption{The convergence of the proposed RMNTF. (a)$\sim$(e) are the convergence curves of RMNTF-CIM on five datasets: COIL100, FEI, FERET, ORL and USPS, respectively. (f)$\sim$(j) are the convergence curves of RMNTF-Huber on five datasets, respectively. (k)$\sim$(o) are the convergence curves of RMNTF-Cauchy on five datasets, respectively.}
\label{fig:Reuters_delta2}
\end{figure*}

\begin{figure*}[t]
\setlength{\abovecaptionskip}{0cm} 
\setlength{\belowcaptionskip}{-0cm} 
\centering
\subfloat[]{
\begin{minipage}[b]{0.22\textwidth}
\includegraphics[width=1.25in]{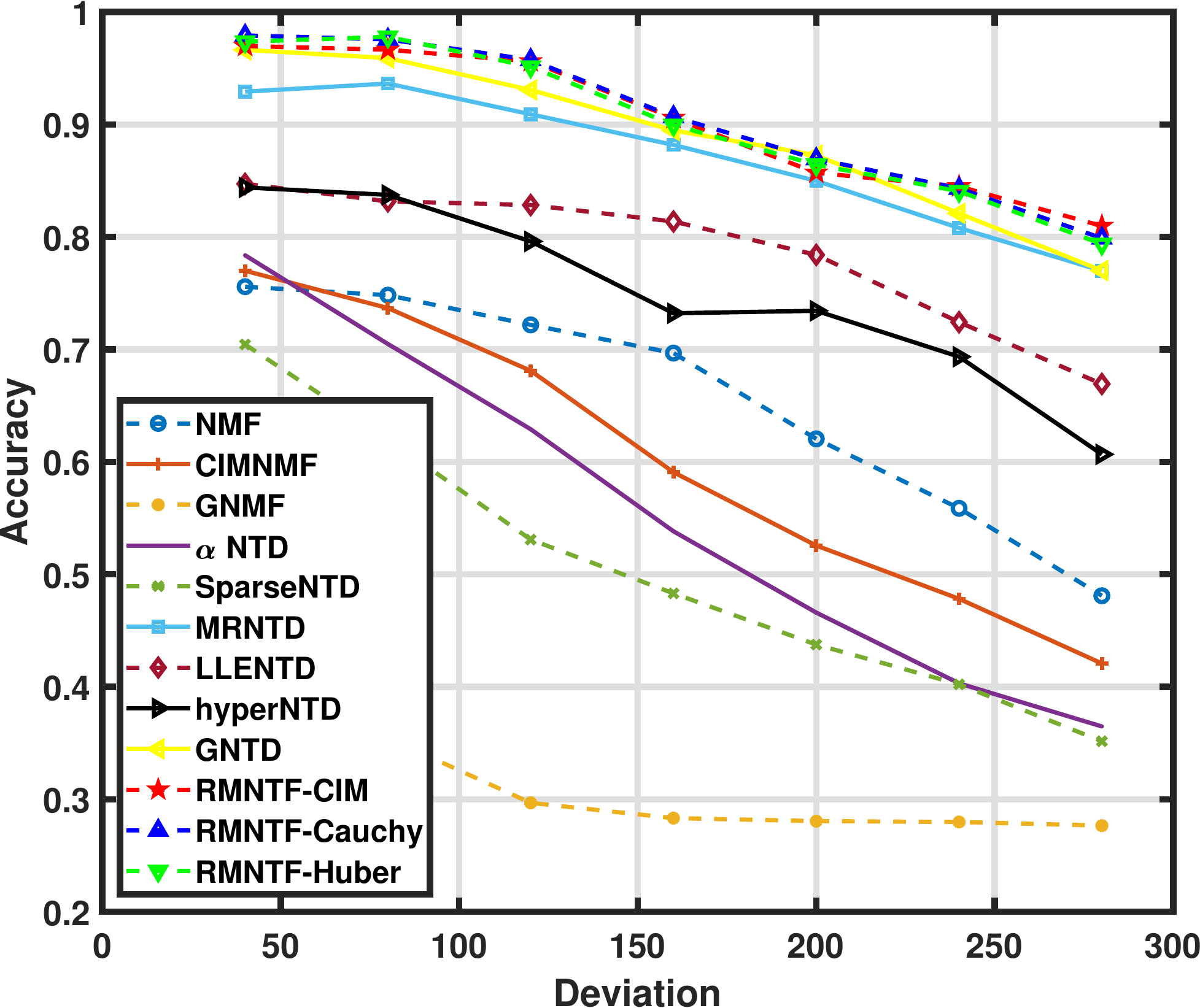}%
\label{Reuters_delta_Coh} \\
\includegraphics[width=1.25in]{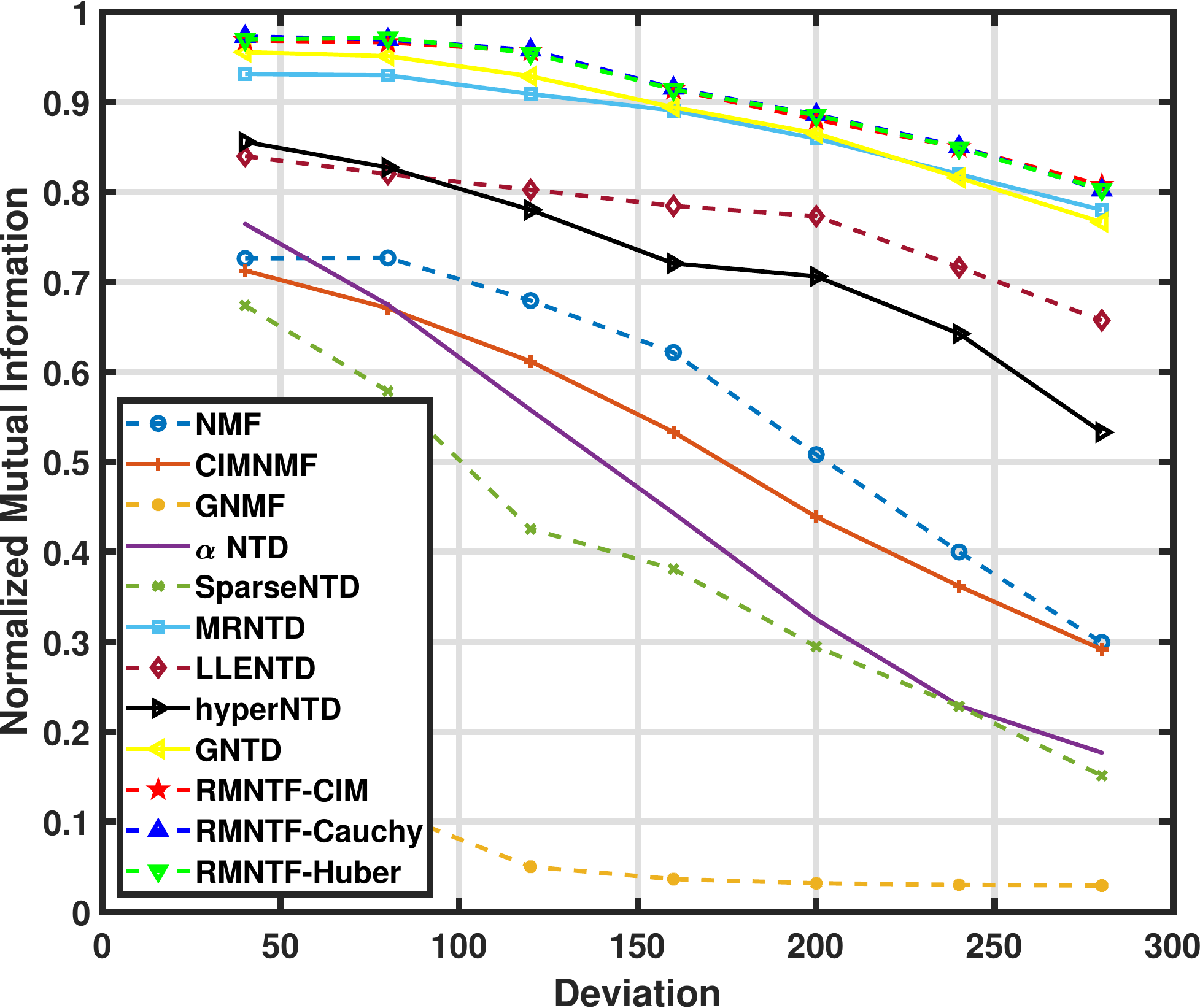}%
\label{Reuters_delta_Coh} \\
\includegraphics[width=1.25in]{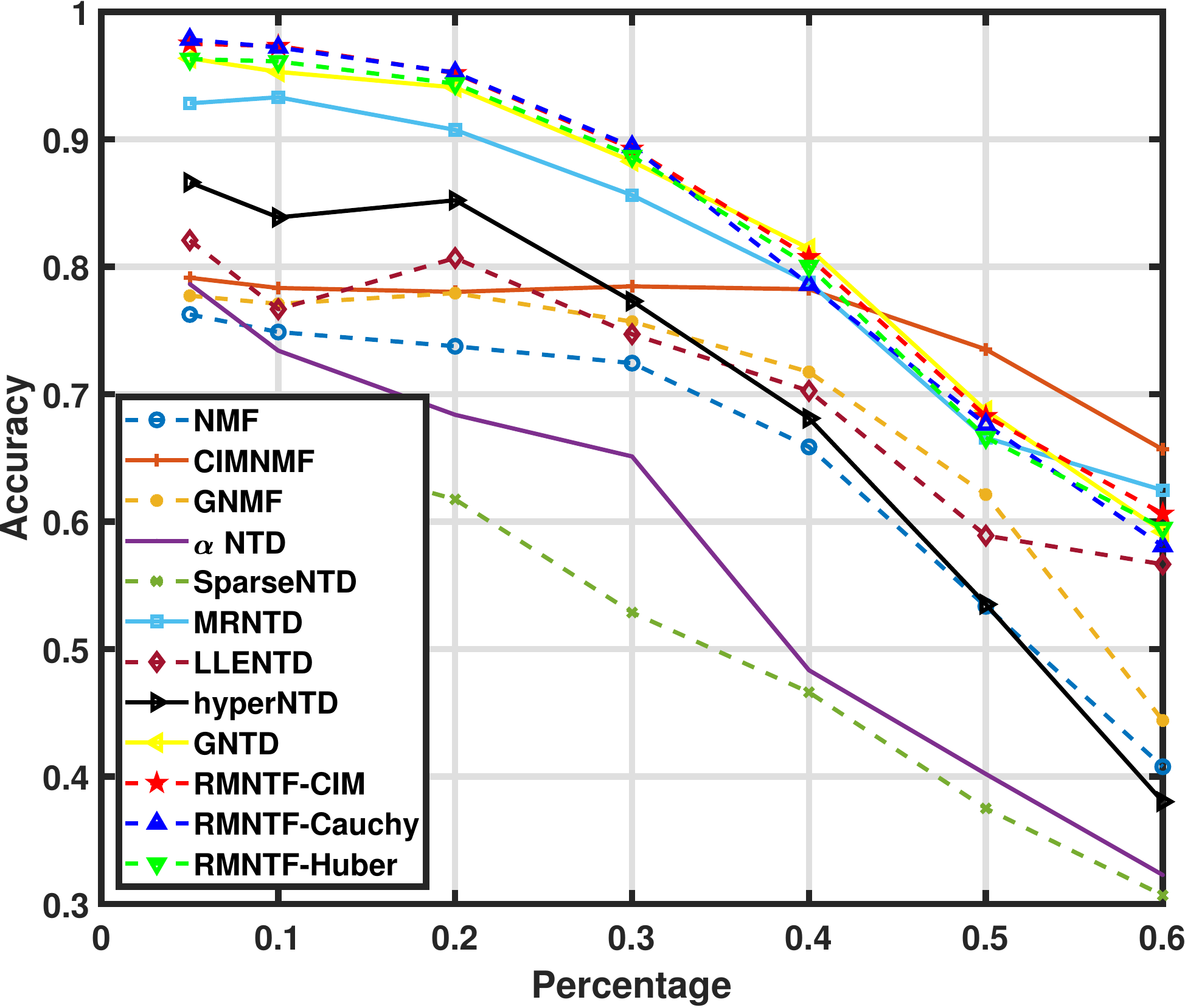}%
\label{Reuters_delta_Coh} \\
\includegraphics[width=1.25in]{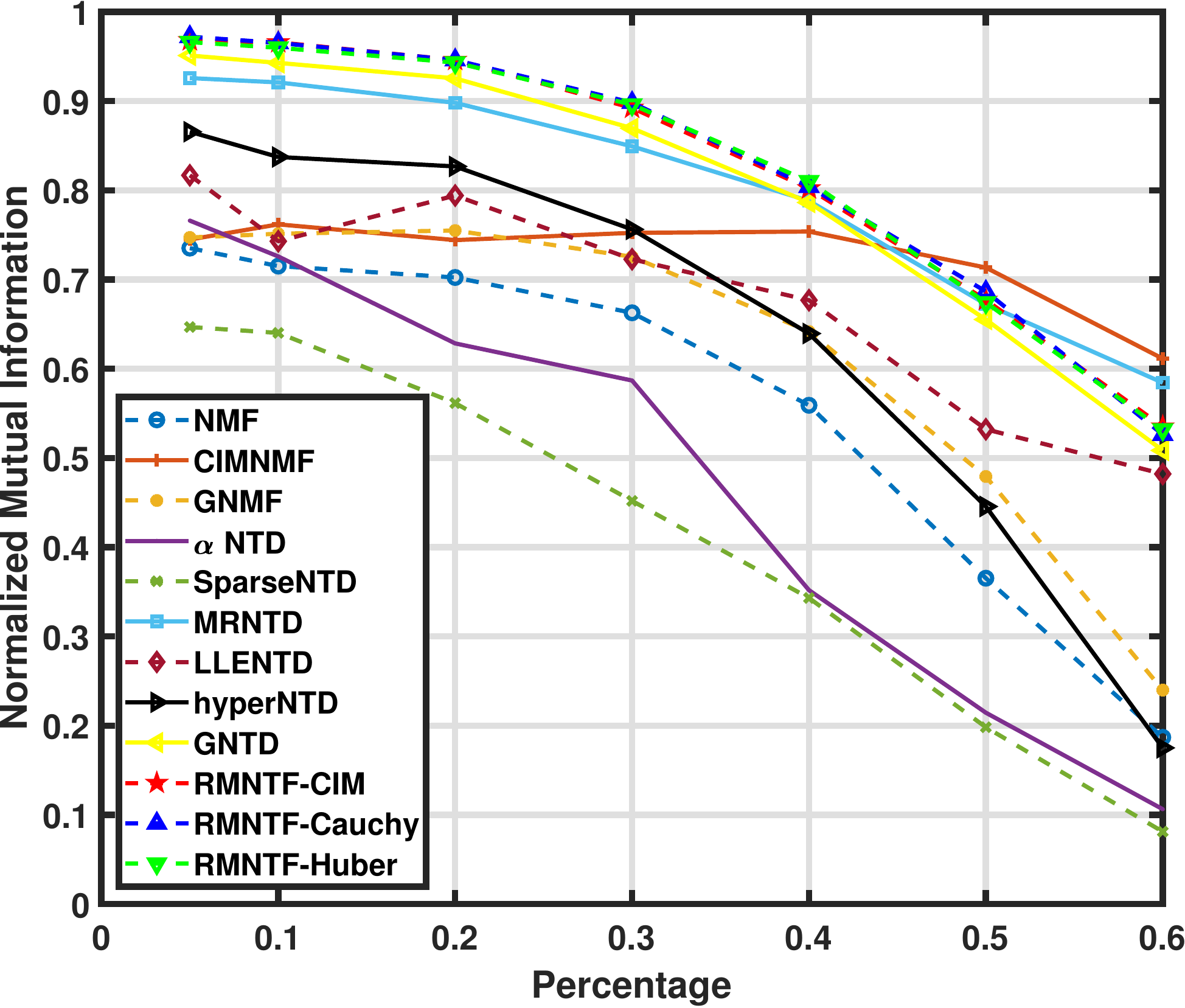}%
\end{minipage}
}\hspace{-8mm}
\hfil
\subfloat[]{
\begin{minipage}[b]{0.2\textwidth}
\includegraphics[width=1.25in]{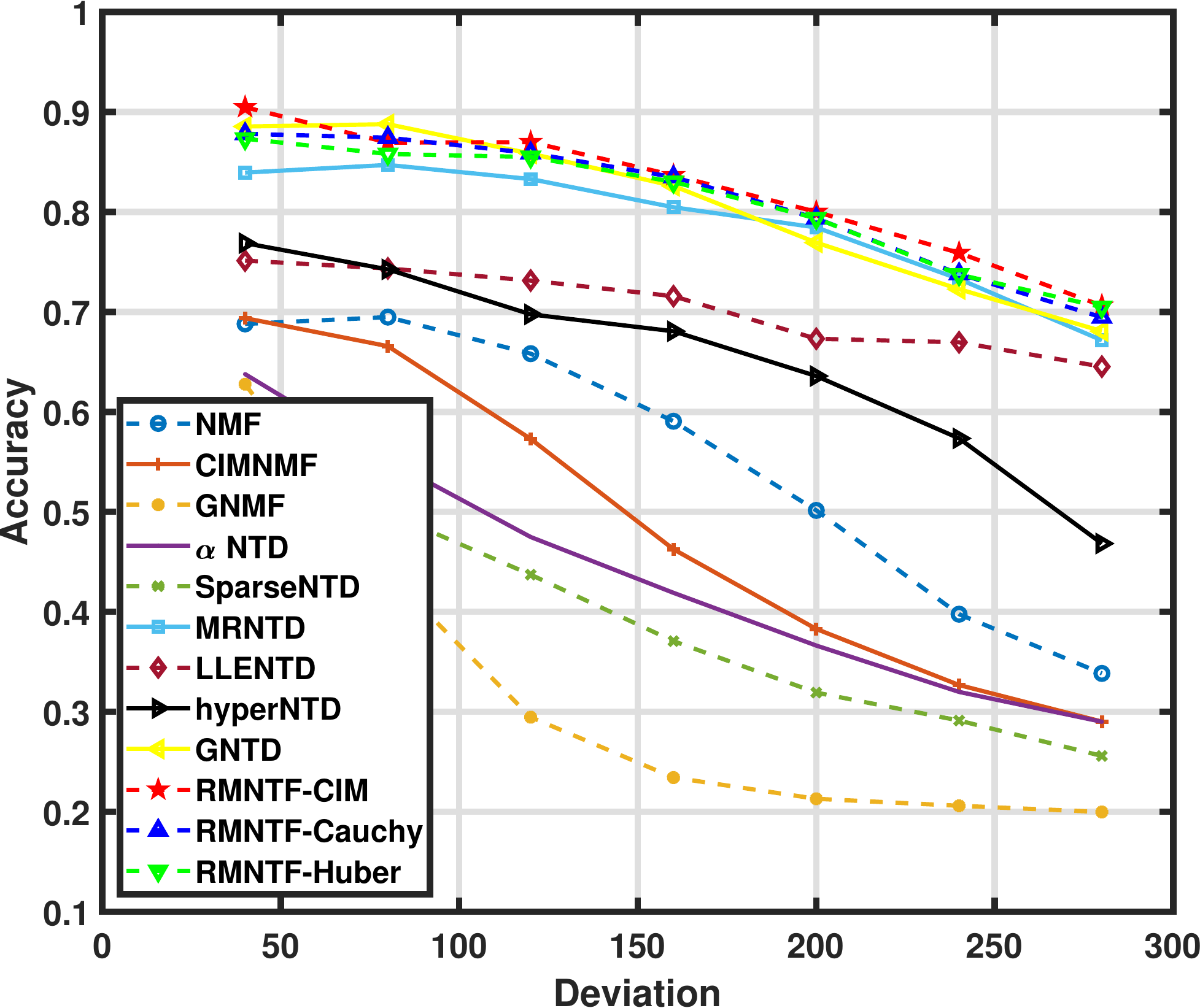}%
\label{Reuters_delta_Coh} \\
\includegraphics[width=1.25in]{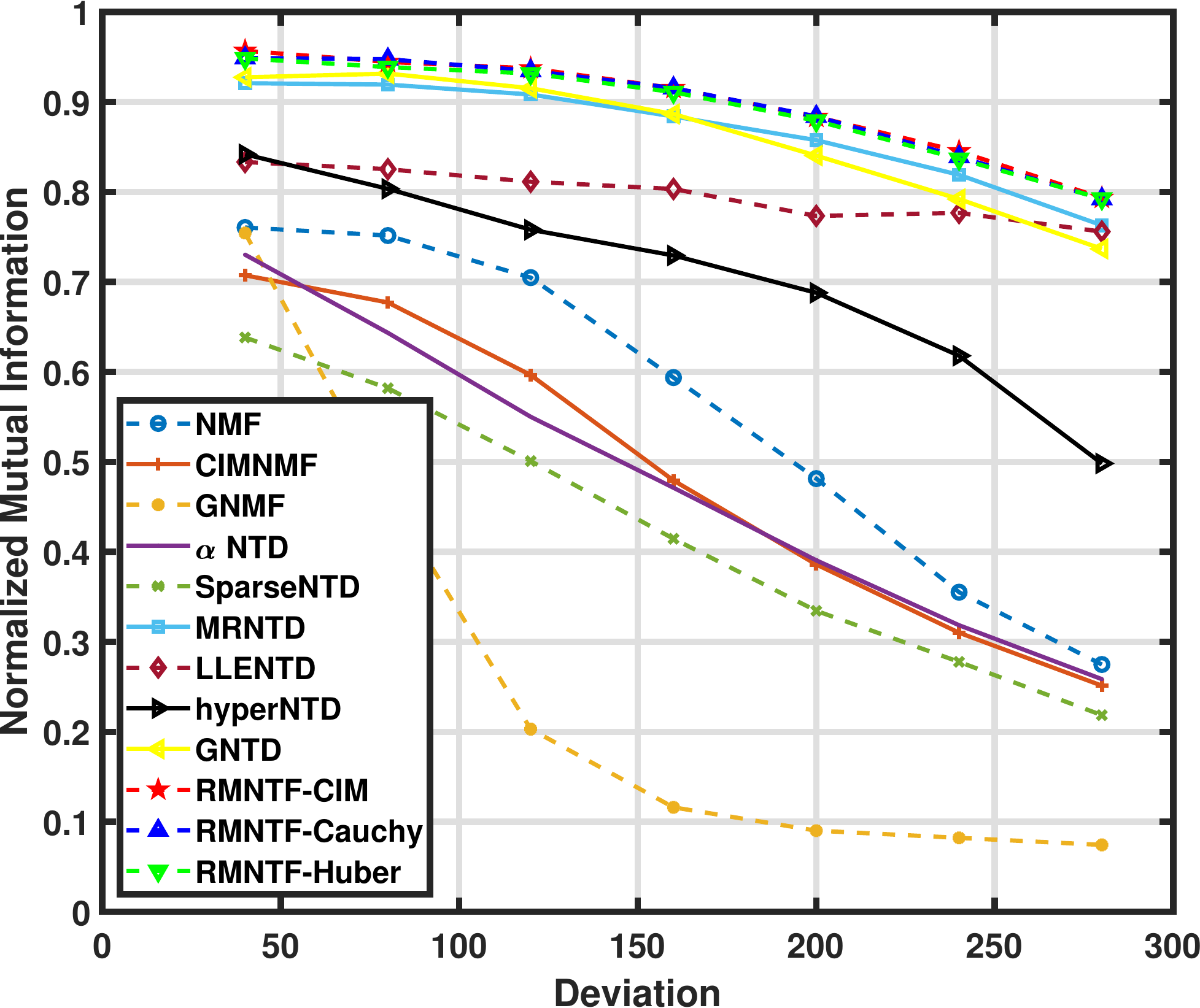}%
\label{Reuters_delta_Coh} \\
\includegraphics[width=1.25in]{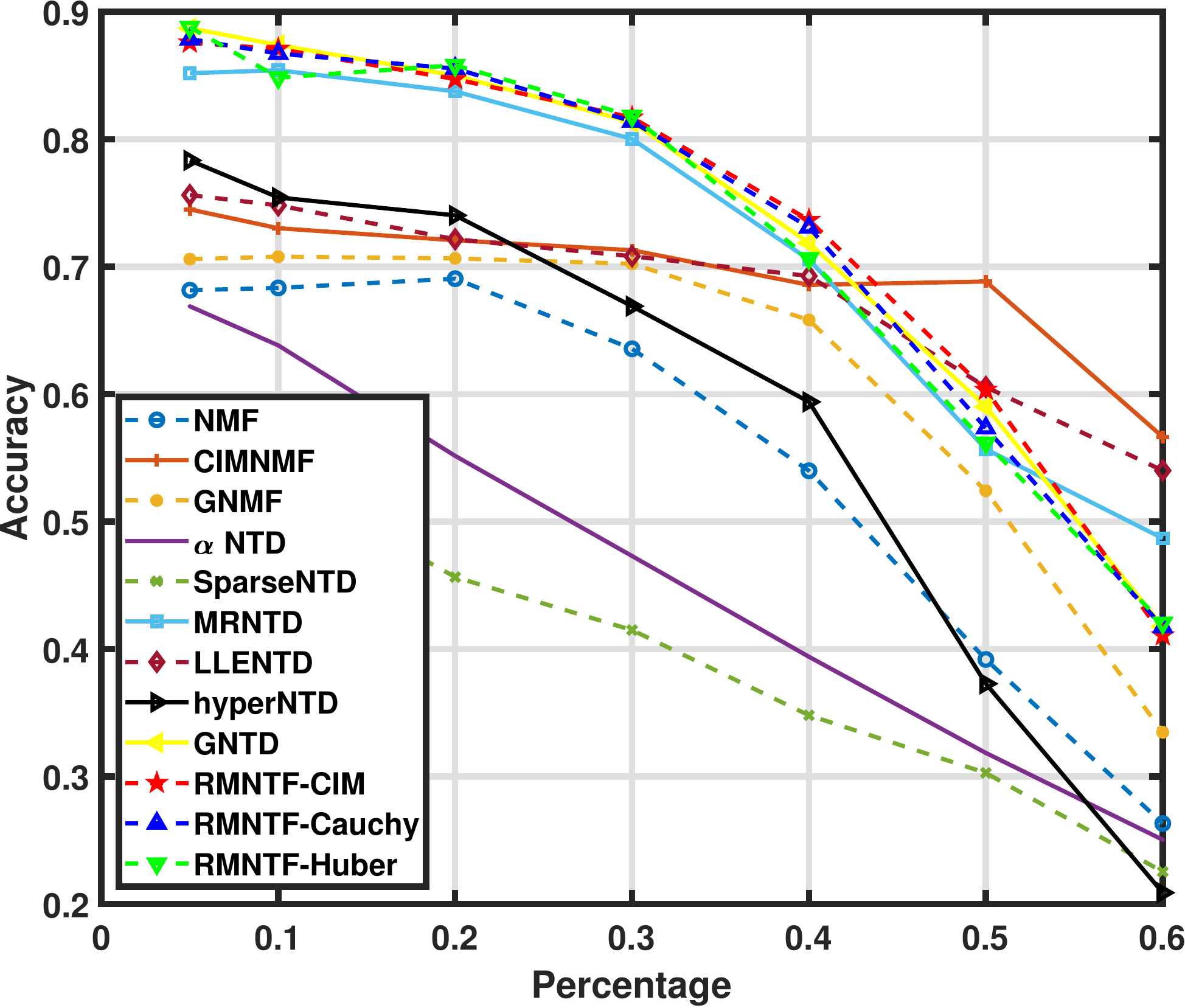}%
\label{Reuters_delta_Coh} \\
\includegraphics[width=1.25in]{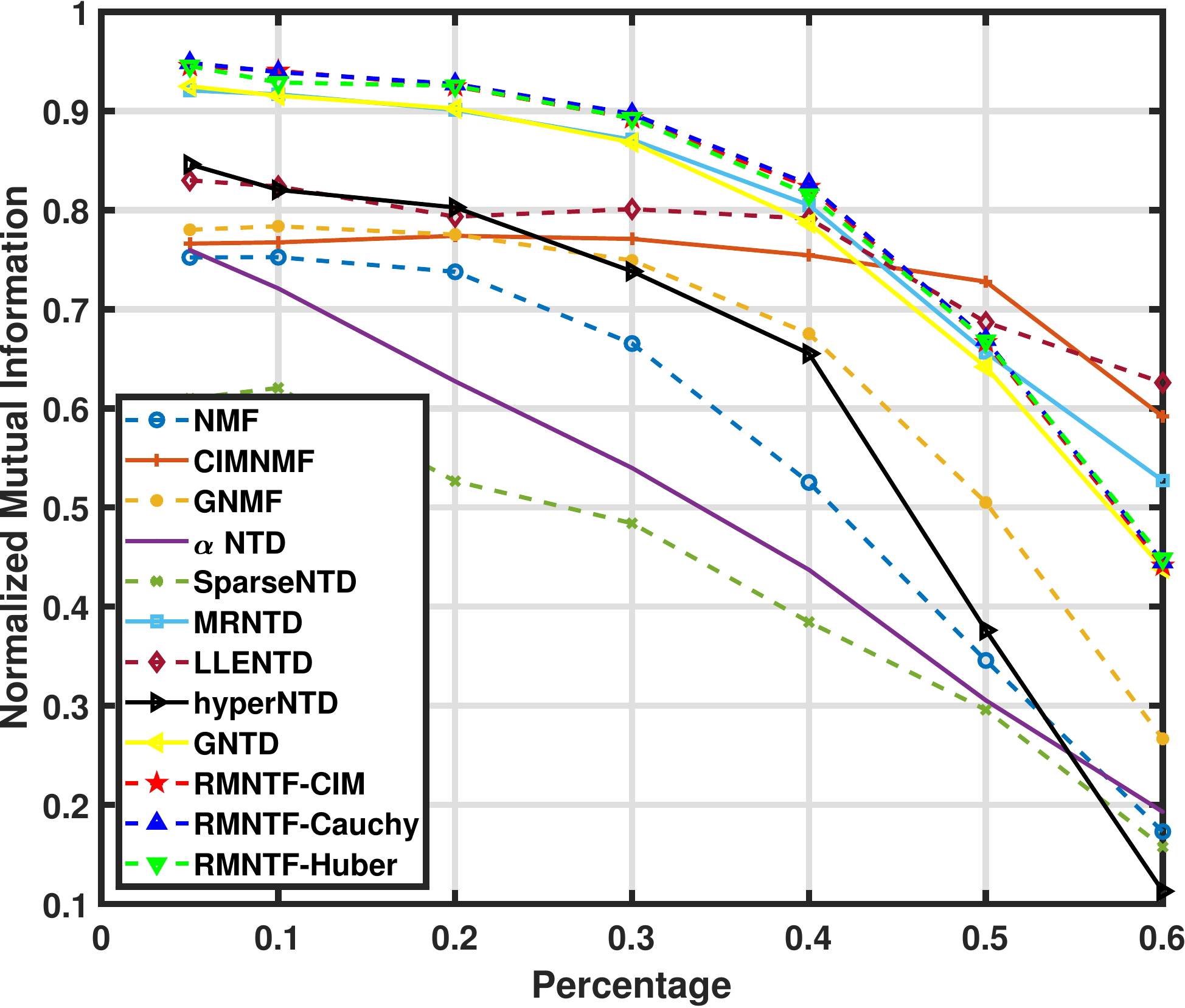}%
\end{minipage}
}\hspace{-3mm}
\subfloat[]{
\begin{minipage}[b]{0.2\textwidth}
\includegraphics[width=1.25in]{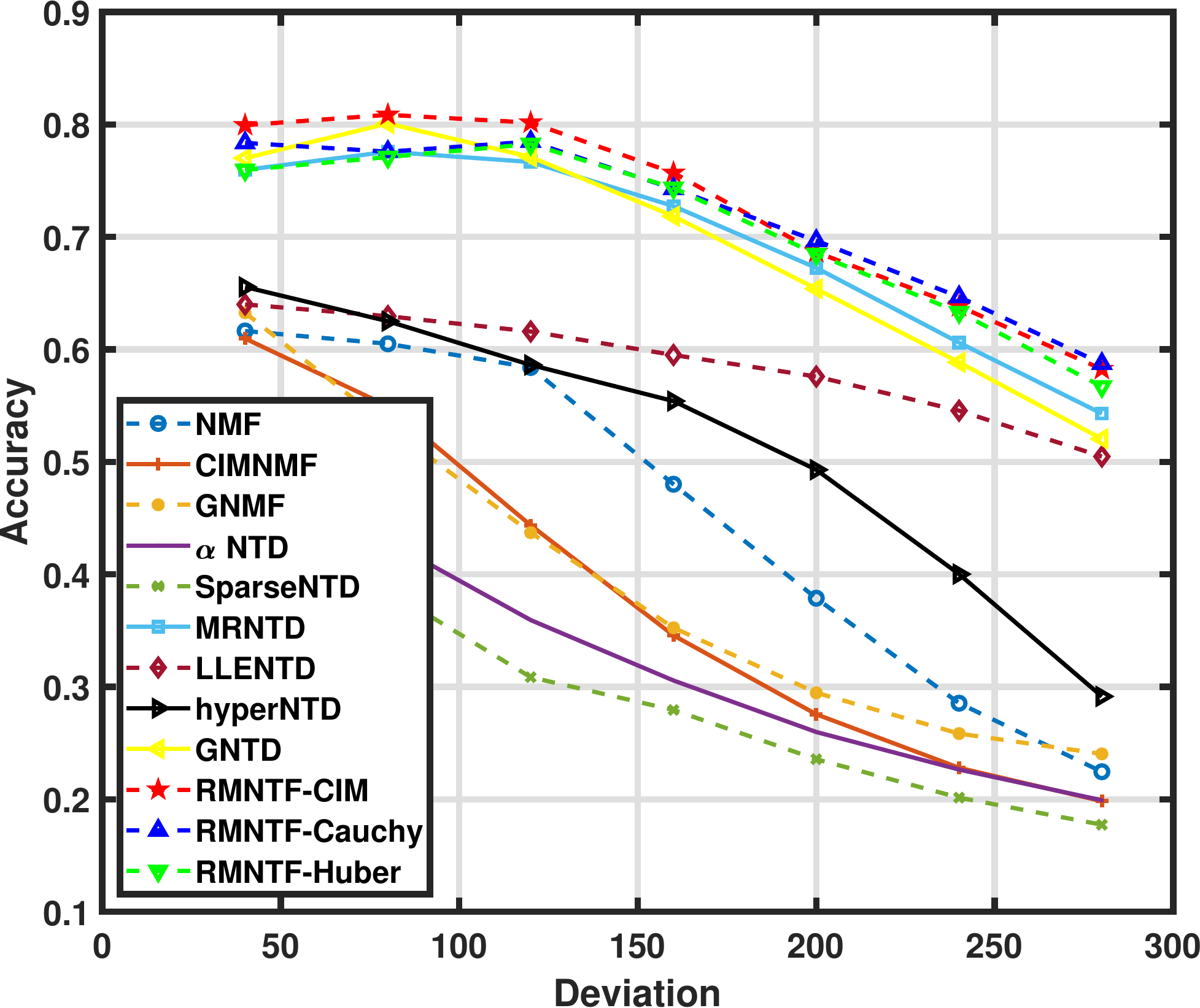}%
\label{Reuters_delta_Coh} \\
\includegraphics[width=1.25in]{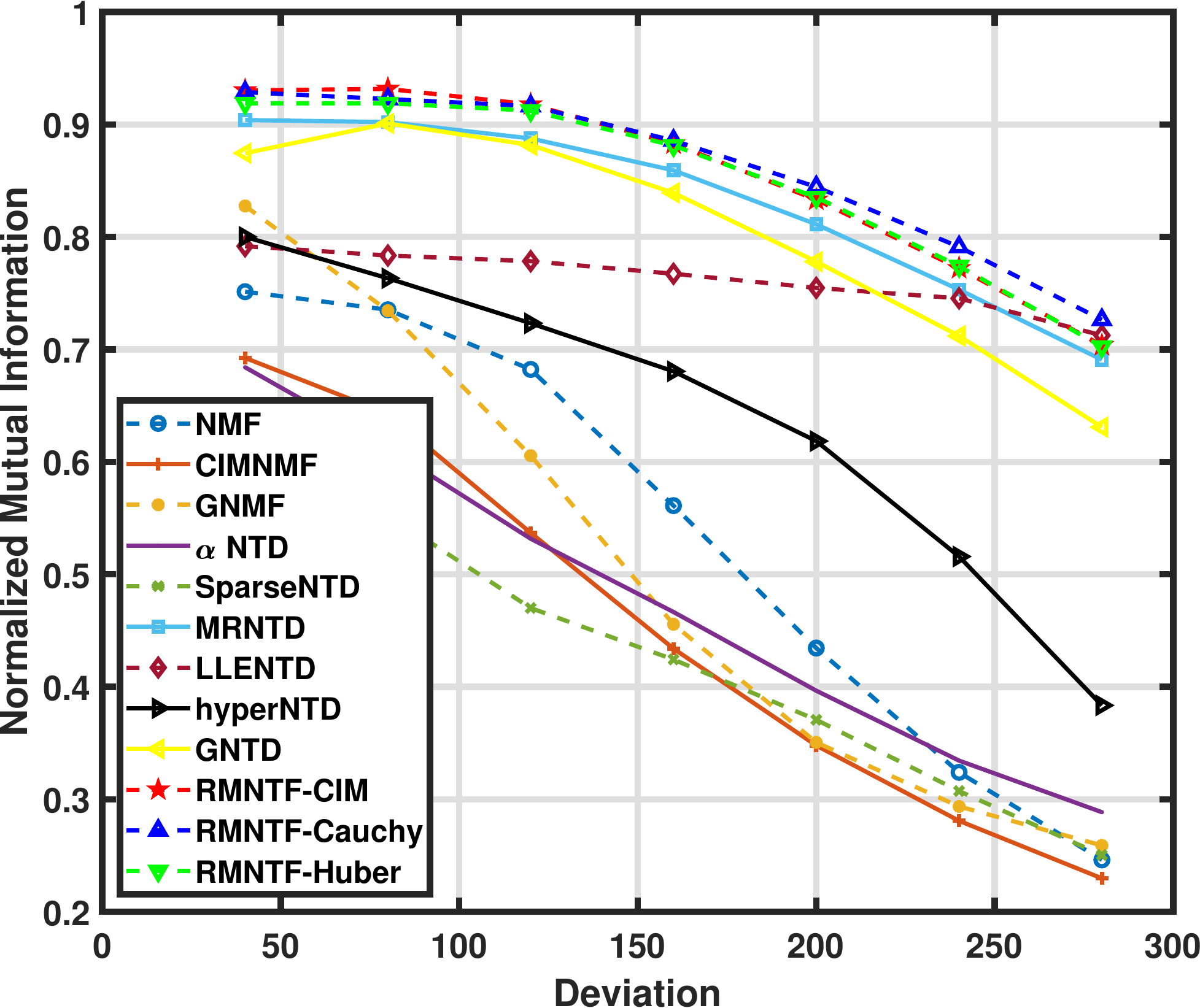}%
\label{Reuters_delta_Coh} \\
\includegraphics[width=1.25in]{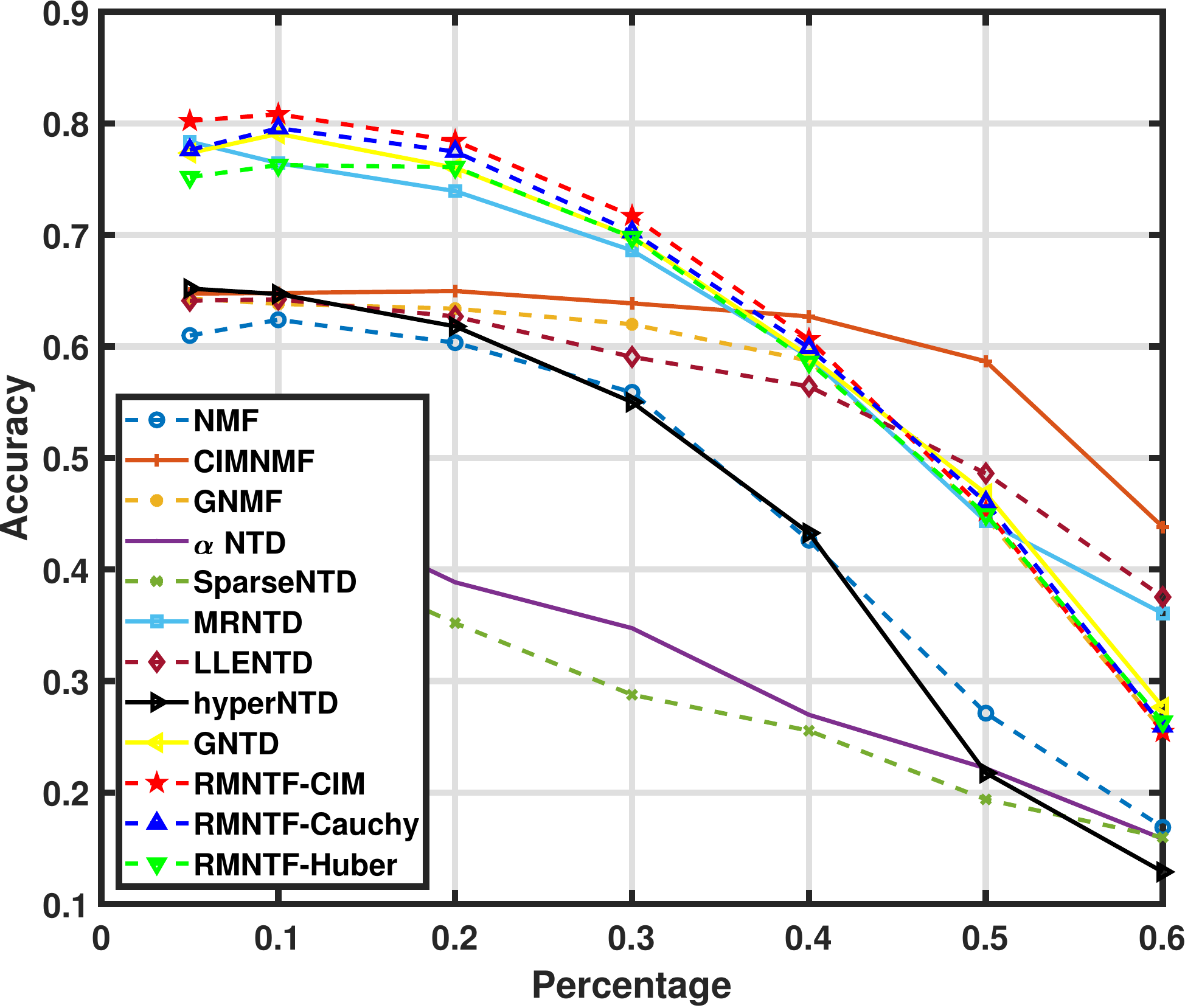}%
\label{Reuters_delta_Coh} \\
\includegraphics[width=1.25in]{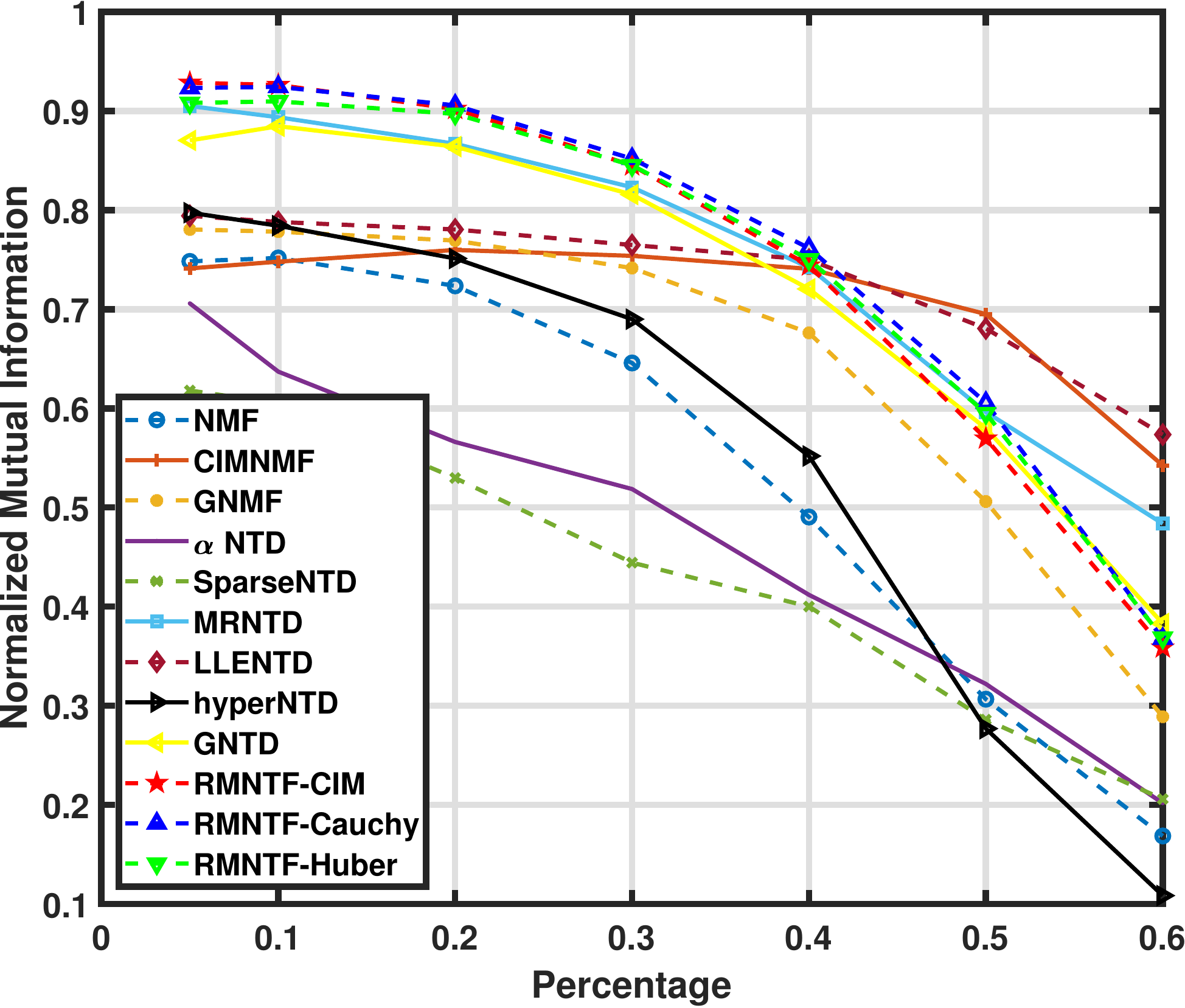}%
\end{minipage}
}\hspace{-3mm}
\subfloat[]{
\begin{minipage}[b]{0.2\textwidth}
\includegraphics[width=1.25in]{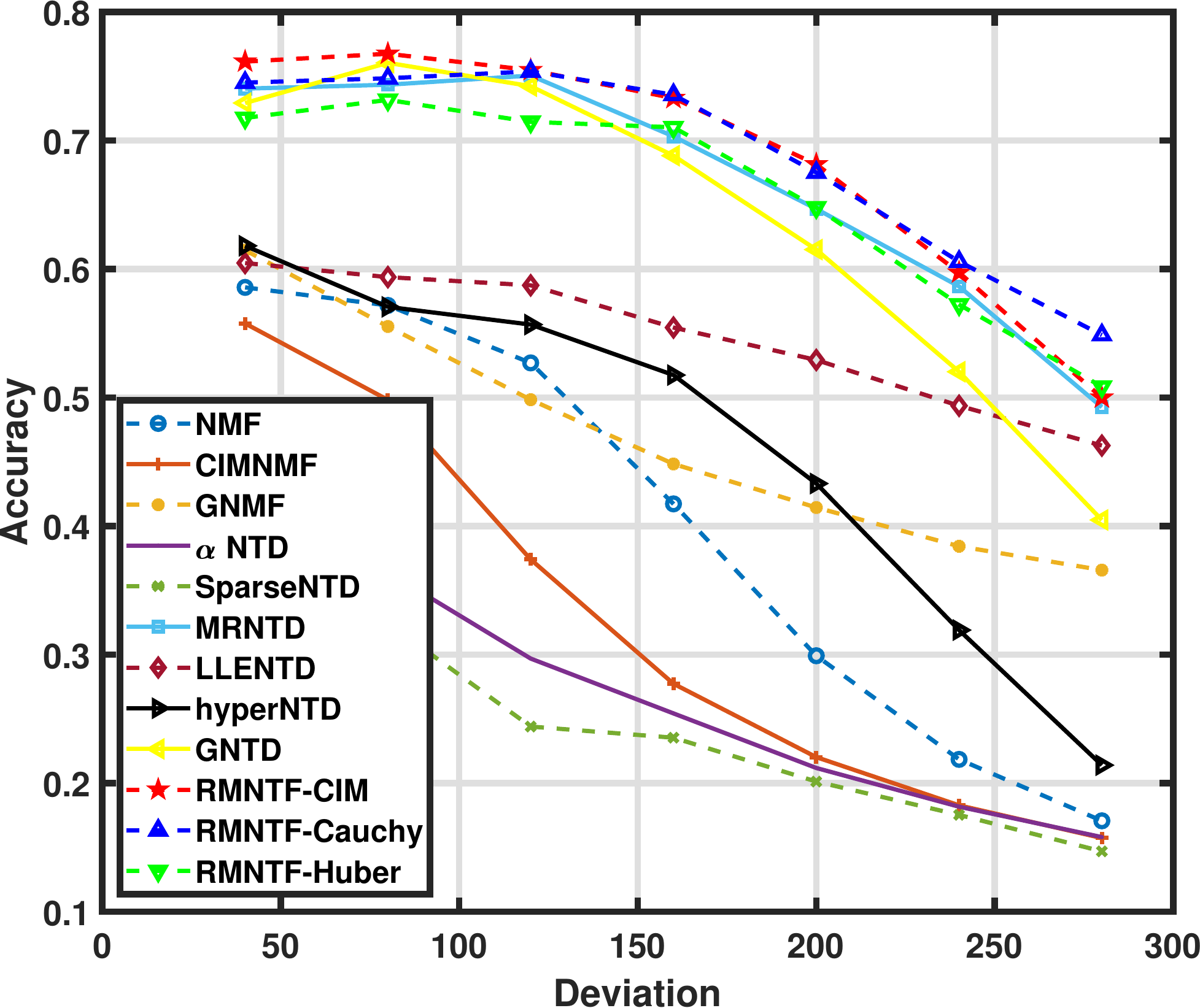}%
\label{Reuters_delta_Coh} \\
\includegraphics[width=1.25in]{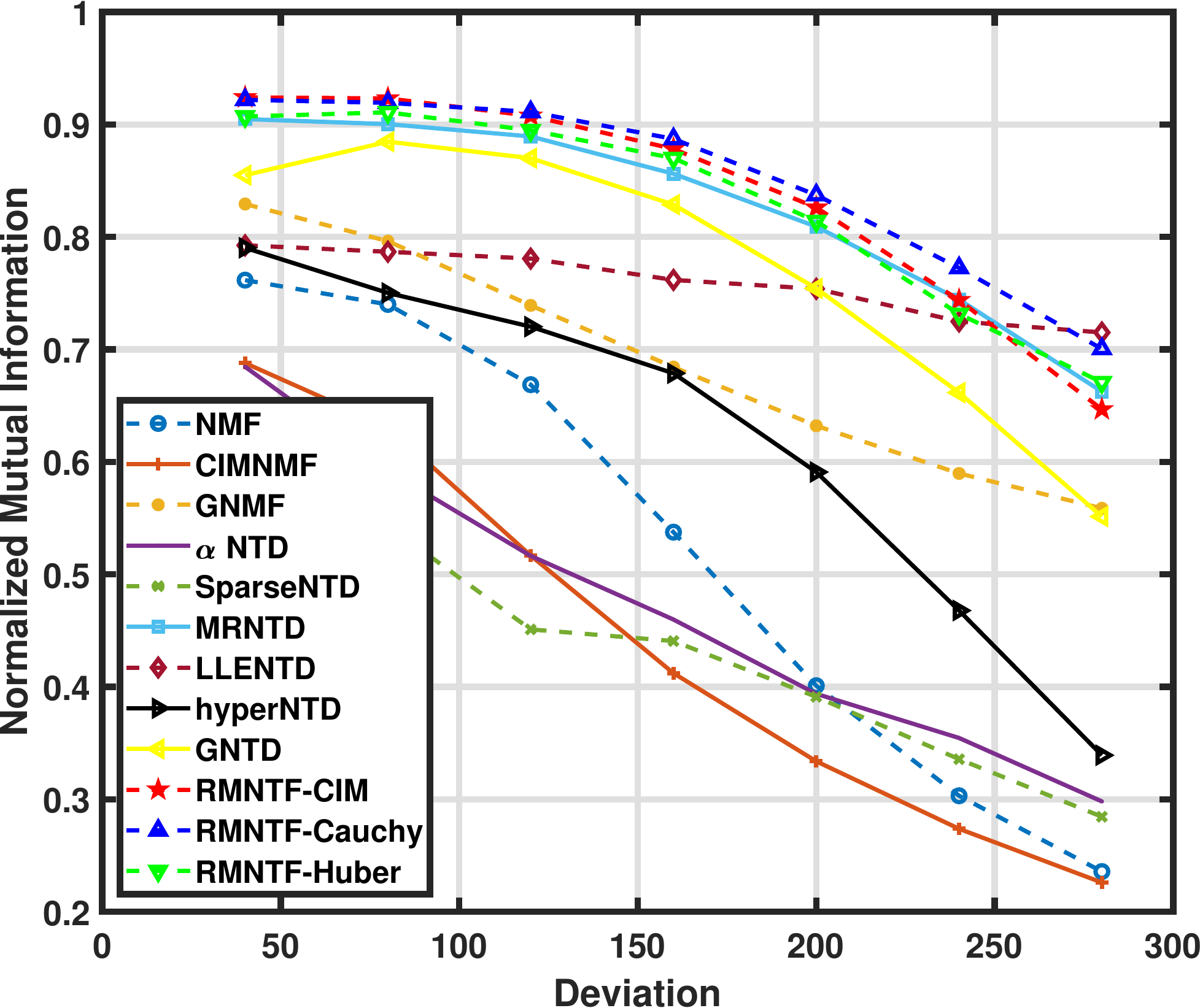}%
\label{Reuters_delta_Coh} \\
\includegraphics[width=1.25in]{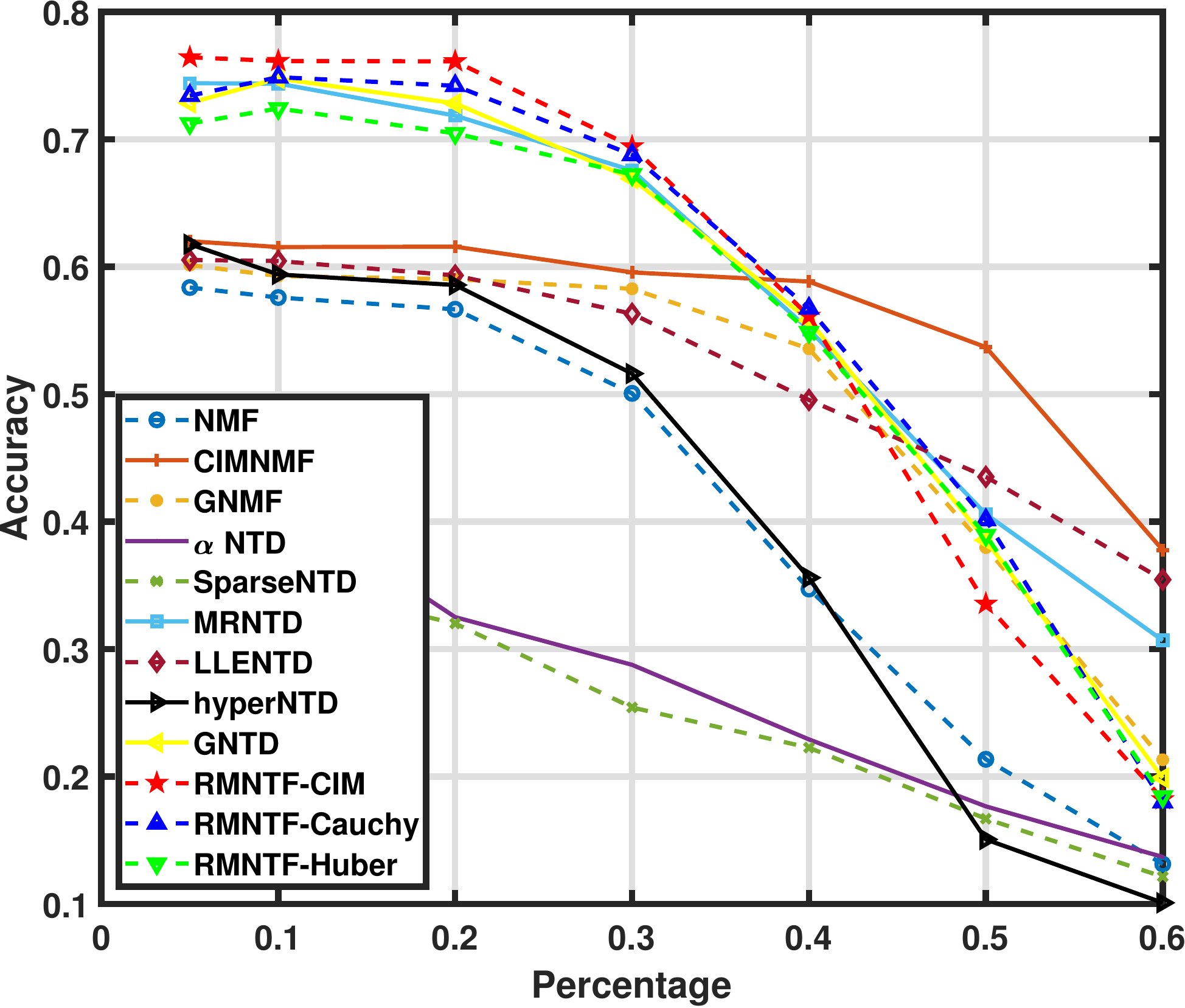}%
\label{Reuters_delta_Coh} \\
\includegraphics[width=1.25in]{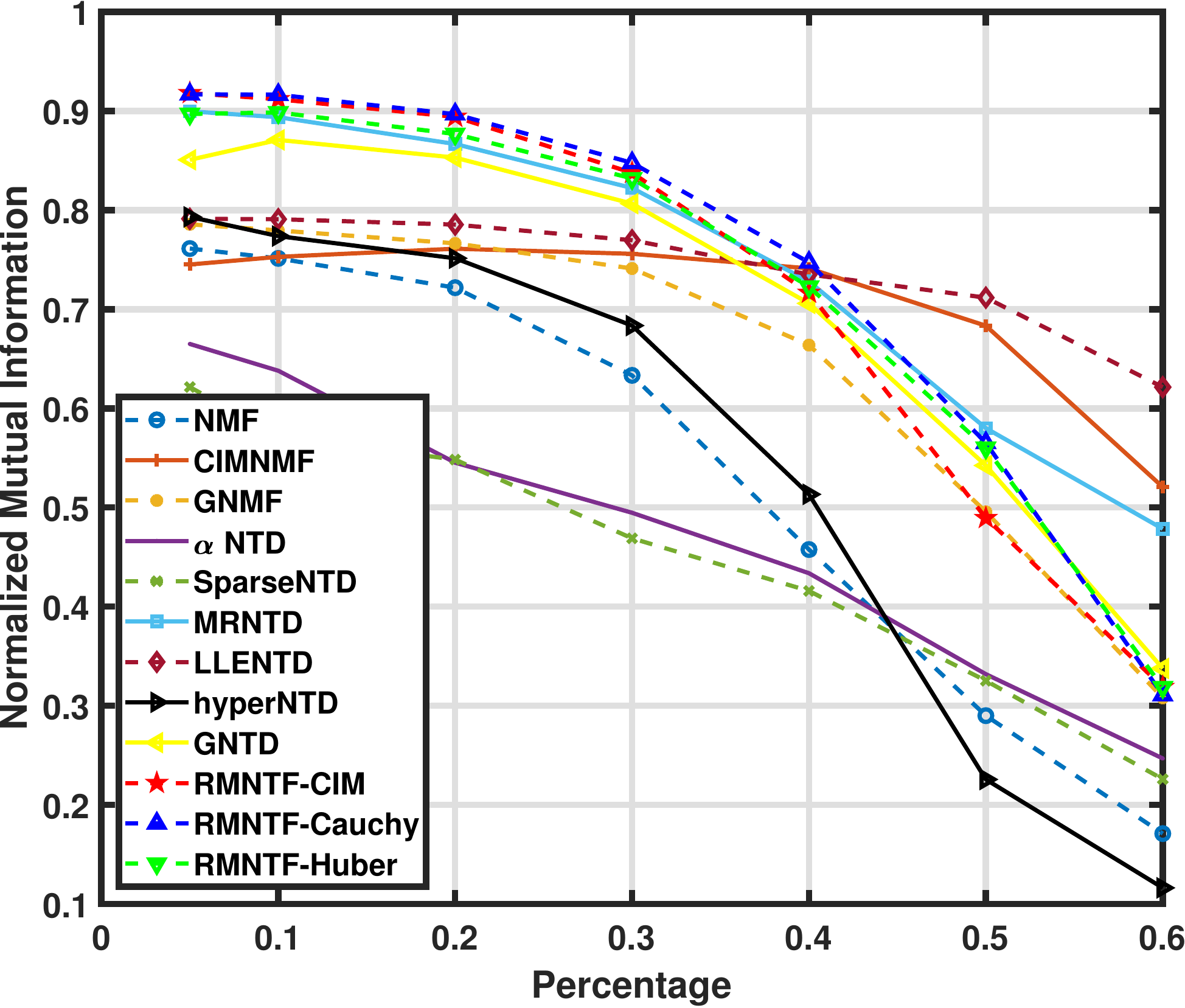}%
\end{minipage}
}\hspace{-3mm}
\subfloat[]{
\begin{minipage}[b]{0.2\textwidth}
\includegraphics[width=1.25in]{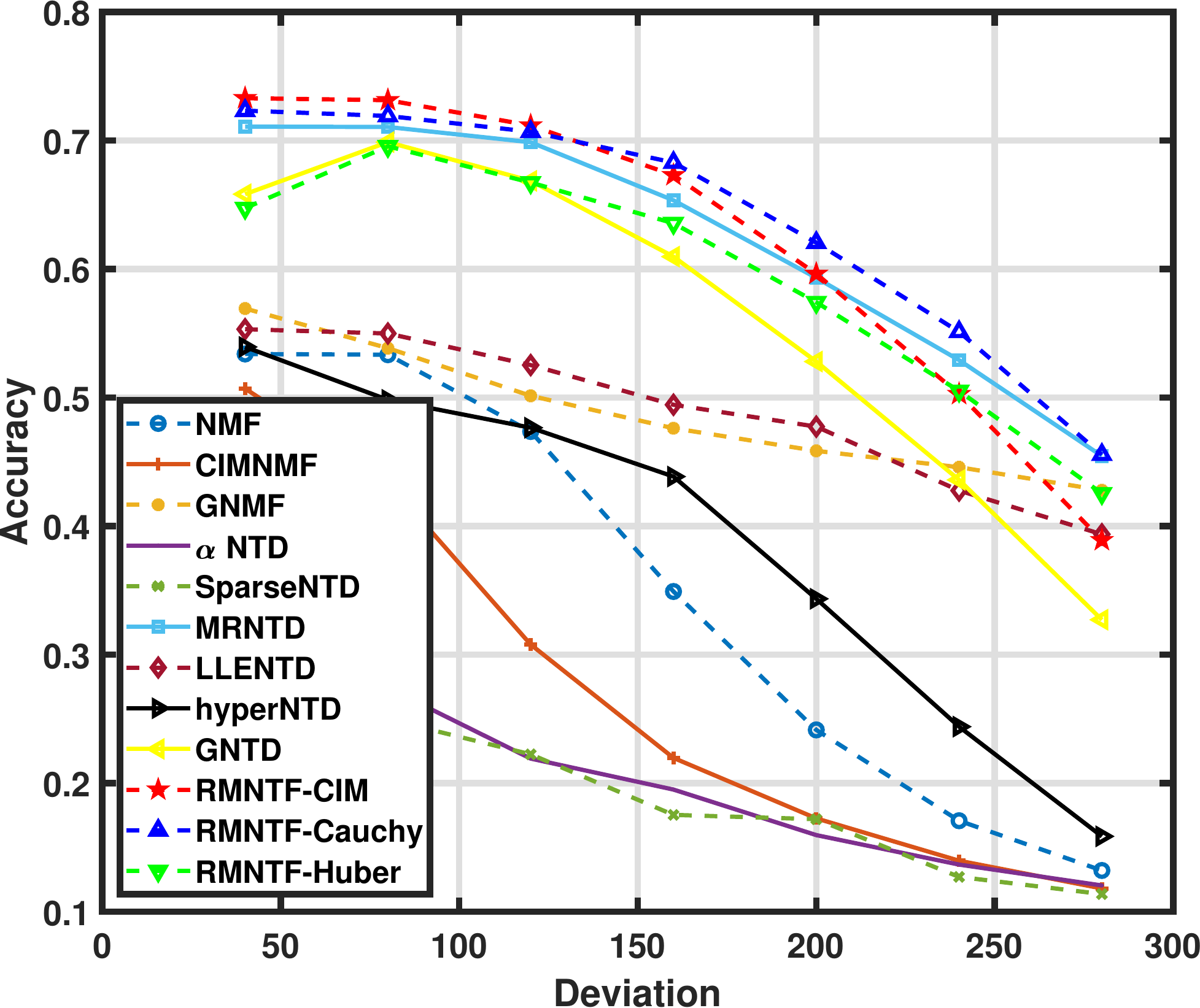}%
\label{Reuters_delta_Coh} \\
\includegraphics[width=1.25in]{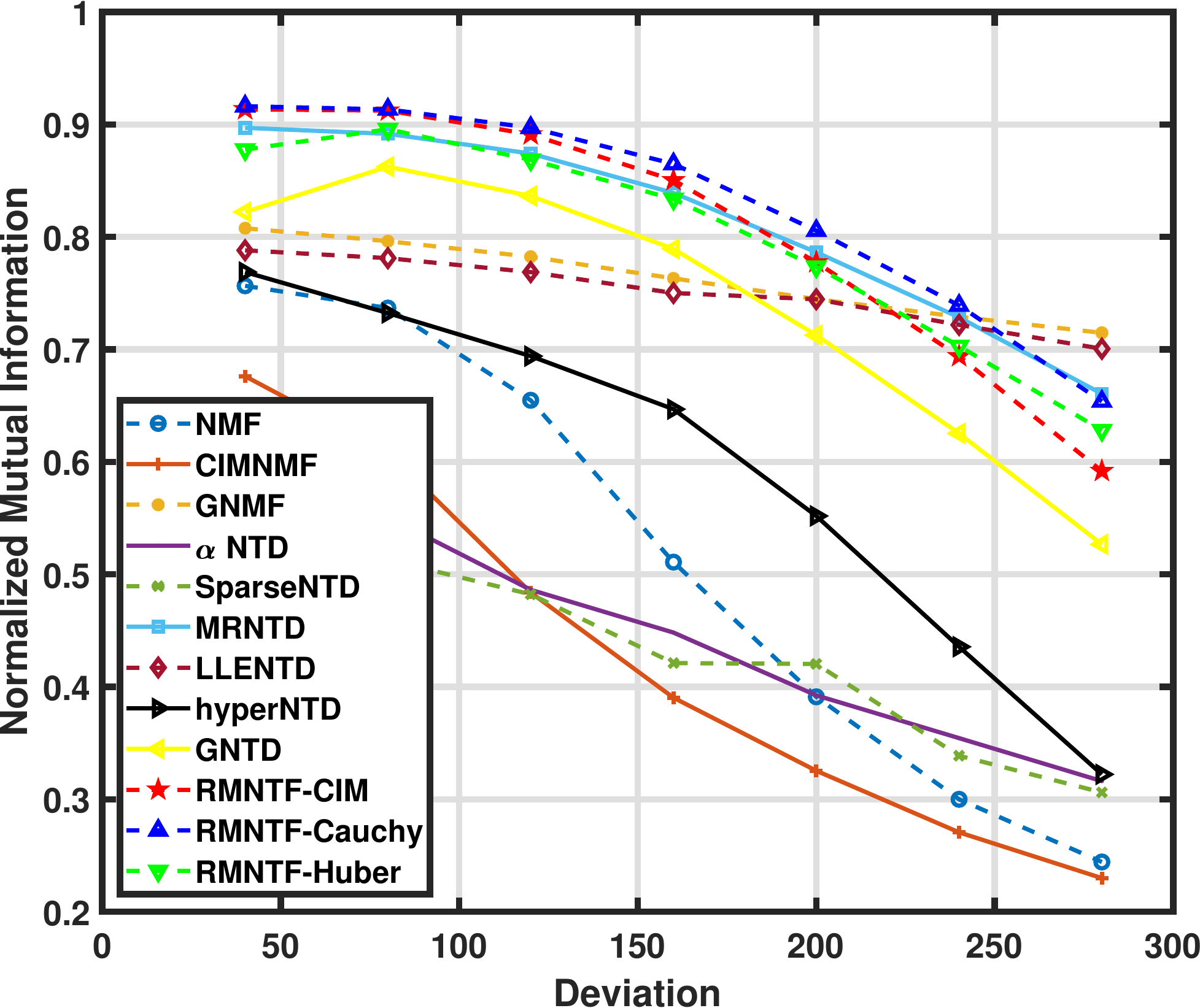}%
\label{Reuters_delta_Coh} \\
\includegraphics[width=1.25in]{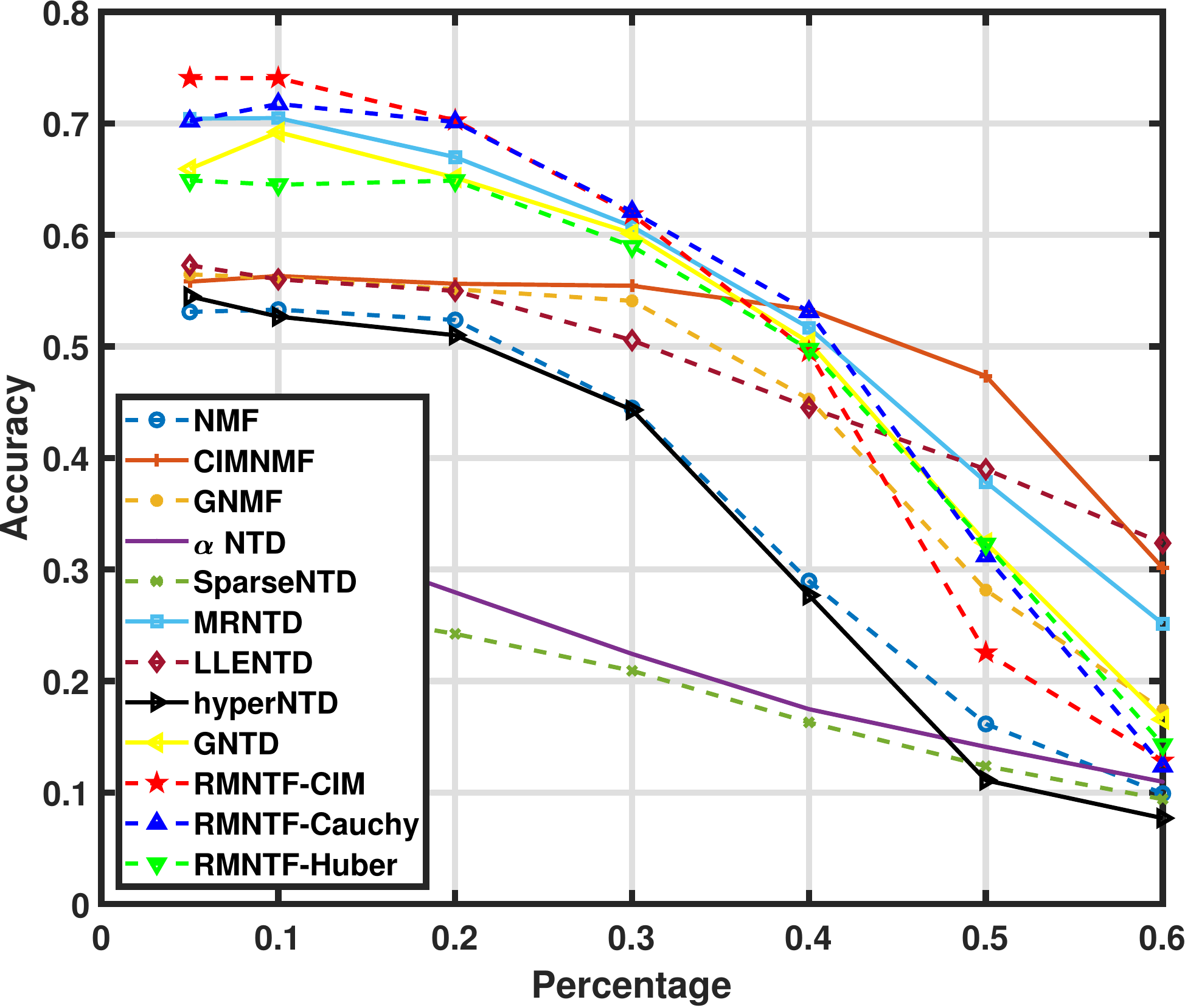}%
\label{Reuters_delta_Coh} \\
\includegraphics[width=1.25in]{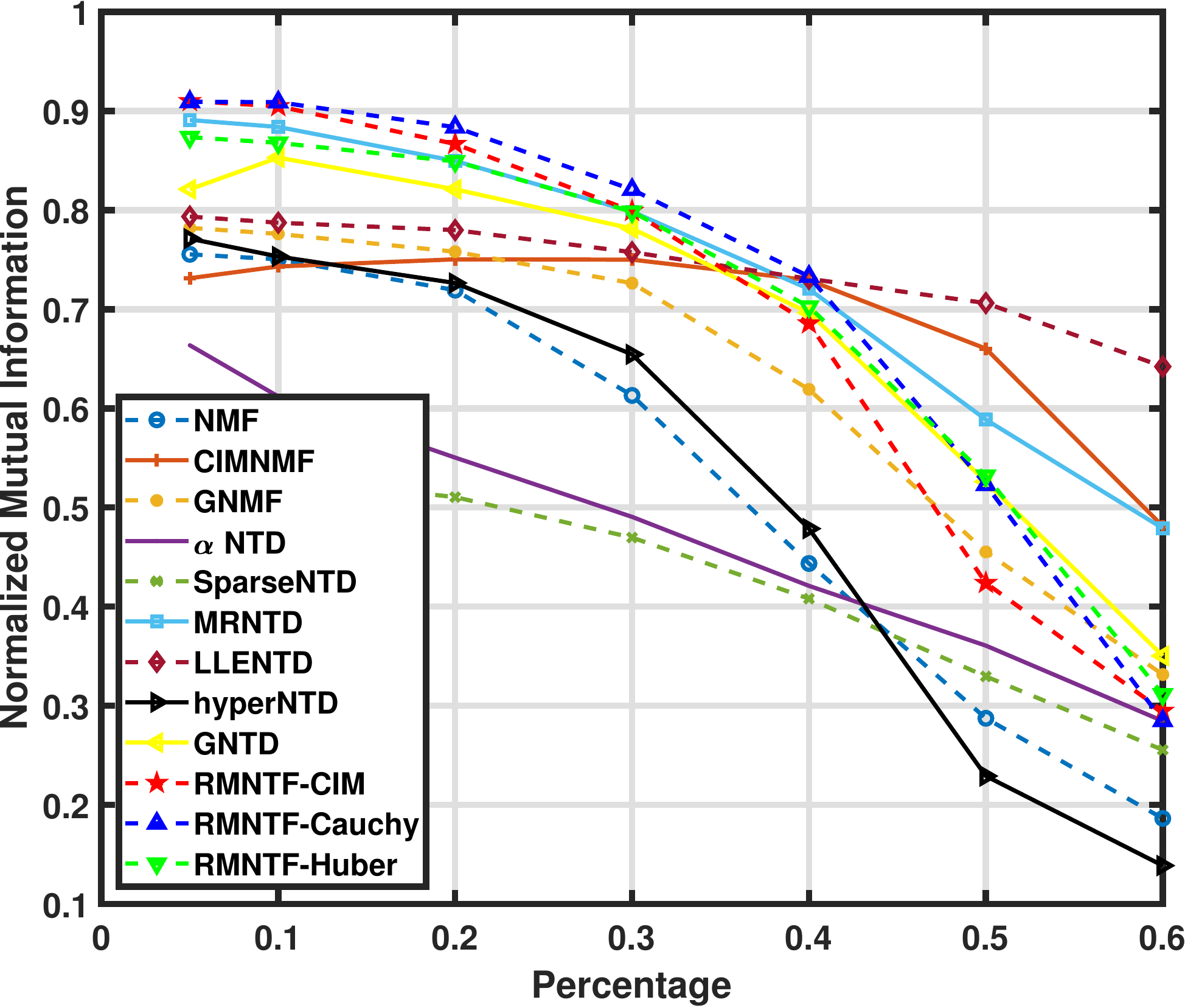}%
\end{minipage}
}\hspace{-5mm}

\vspace{2mm}
\caption{Evaluation of proposed methods on COIL100 database contaminated by Laplace noise and salt \& pepper noise, respectively. (a) Average accuracy and NMI on the subset of COIL100 which contains $5$ categories, the first two figures show the results contaminated by Laplace noise and the remains show the results contaminated by salt \& pepper noise. (b) Average evaluation on the subset of COIL100 which contains $10$ categories. (c) Average evaluation on the subset of COIL100 which contains $20$ categories. (d) Average evaluation on the subset of COIL100 which contains $30$ categories. (e) Average evaluation on the subset of COIL100 which contains $50$ categories.}
\label{fig:Reuters_delta3}
\end{figure*}

\subsection{Compared algorithms}
The hyperparameters of RMNTF-CIM, RMNTF-Huber, and RMNTF-Cauchy in all experiments were set as follows:
$\lambda = 10^5, p = 3$, unless otherwise stated.
We compared RMNTF with three NMF methods, and six NTD methods.
They are listed as follows:
\begin{itemize}
 \itemsep=0.0pt
  \item \textbf{Nonnegative Matrix Factorization (NMF)} \cite{lee1999learning}:
  We unfold tensorial original data into a matrix to apply it to NMF algorithm and obtain a low rank activation matrix $\mathbf{V}$.
  Finally, we apply $\mathbf{V}$ to downstream tasks.
  \item \textbf{Graph Regularization Nonnegative Matrix Decomposition (GNMF)} \cite{cai2010graph}: We unfold tensorial data into a matrix $\mathbf{X}$ and reduce the dimensionality of $\mathbf{X}$ into $K$ by GNMF, where the dictionary matrix $\mathbf{U}$ is used in graph regularization term. The coefficient of regularization $\lambda = 10$, and the hyperparameter $p=3$.
  \item \textbf{Correntropy Induced Metric Nonnegative Matrix Factorization (CIMNMF)} \cite{du2012robust}:Similar to the above, we unfold the tensorial data into a matrix and apply it to CIMNMF. The loss function between the unfolding tensor and reconstruction data is based on the correntropy induced metric.
      It handles outlier rows by incorporating structural knowledge about outliers.
  \item \textbf{Nonnegative Tucker Decomposition (NTD)} \cite{kim2007nonnegative}:
      NTD works directly on the tensorial data.
      It is constructed by the tucker model with nonnegativity constraints and updated by multiplicative rules.
      This method can be considered as a special case of RMNTF when all the entries of weight tensor $\mathcal{W}$ are $1$, and $\lambda = 0$.
  \item \textbf{Nonnegative Tucker Decomposition with alpha-Divergence ($\alpha$NTD)} \cite{kim2008nonnegative}:
      We directly use the tensorial data to $\alpha$NTD.
      It is considered $\alpha$-divergence as a discrepancy measure and derive multiplicative updating rules for NTD.
      The hyperparameter $\alpha = 1$.
  \item \textbf{Sparse Nonnegative Tucker Decomposition (SparseNTD)} \cite{xu2015alternating}: The tensorial data is directly utilized in SparseNTD. It adopts the sparsity and nonnegativity constraints to a core tensor and several factor matrices.
      The hyperparameter $\lambda = 0.5$.
  \item \textbf{Graph Regularized Nonnegative Tucker Decomposition (GNTD)} \cite{qiu2020generalized}: GNTD use tensorial data and it constructs the nearest neighbor graph to maintain the intrinsic manifold structure of tensor and applies this constraint on the $N$th factor matrix.
      The hyperparameters $\lambda = 10^5$, and $p = 3$.
  \item \textbf{Locally Linear Embedding Regularized Nonnegative Tucker Decomposition (LLENTD)} \cite{yin2019lle}: We directly use tensor in LLENTD. It incorporates Laplacian Eigenmaps as manifold regularization terms into the least square form of nonnegative tucker model.
      The hyperparameters $\lambda = 10^5$, and $p = 3$.
  \item \textbf{Manifold Regularization Nonnegative Tucker Decomposition (MRNTD)} \cite{li2016mr}: MRNTD utilizes tensor data directly. It employs the manifold regularization terms for the core tensors constructed in the NTD.
      The hyperparameters $\lambda = 10^5$, and $p = 3$.
  \item \textbf{Hypergraph regularized nonnegative tensor factorization (hyperNTD)} \cite{yin2021hyperntf}:We take tensorial data into hyperNTD directly. It incorporates a higher-order relationship among the nearest neighborhoods and the nonnegative tucker decomposition.
      The hyperparameters $\lambda = 10^5$, and $p = 3$.
\end{itemize}

\begin{figure*}[!t]
\setlength{\abovecaptionskip}{0cm} 
\setlength{\belowcaptionskip}{-0cm} 
\centering
\subfloat[]{
\begin{minipage}[b]{0.18\textwidth}
\includegraphics[width=1.5in]{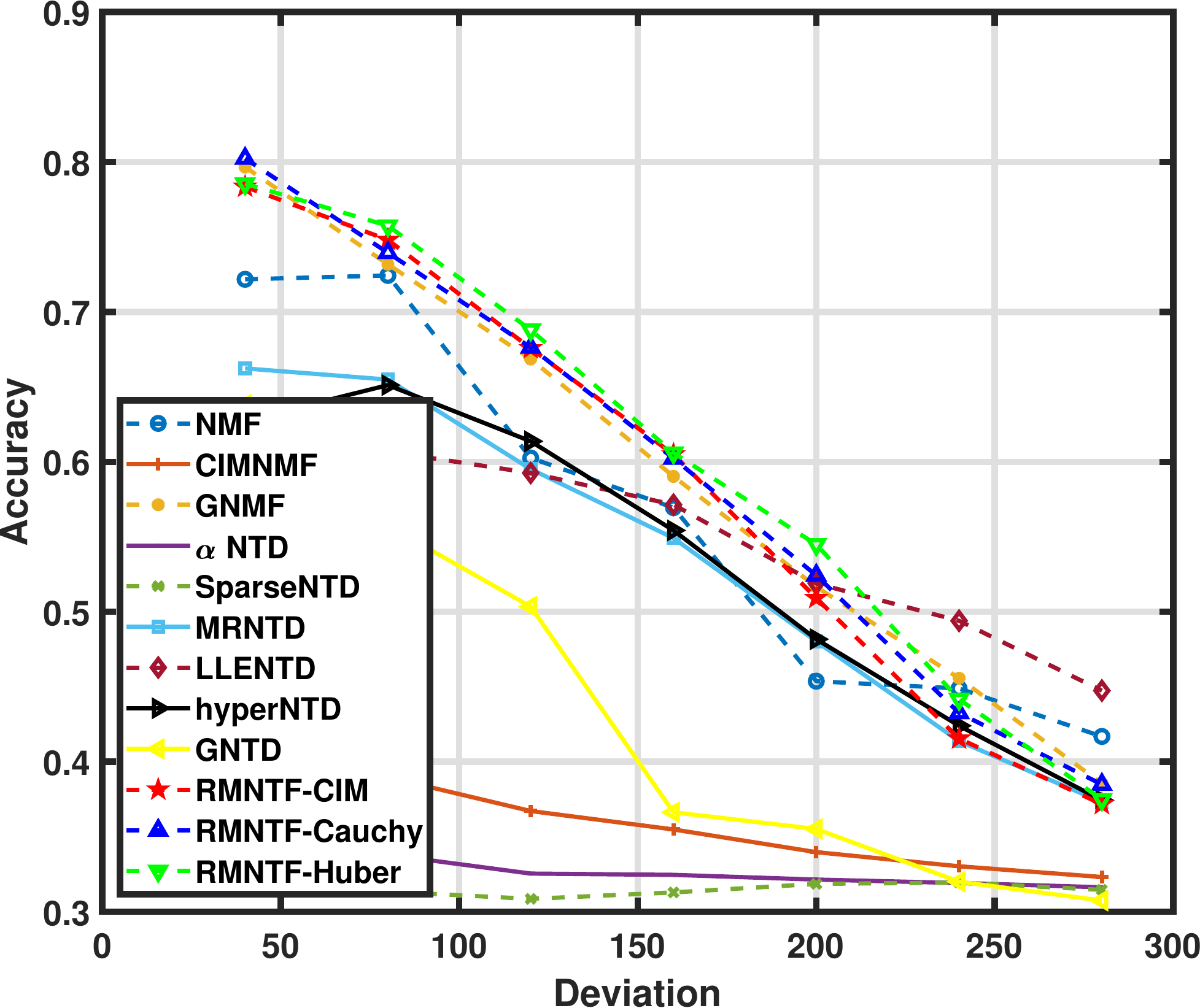}%
\label{Reuters_delta_Coh} \\
\includegraphics[width=1.5in]{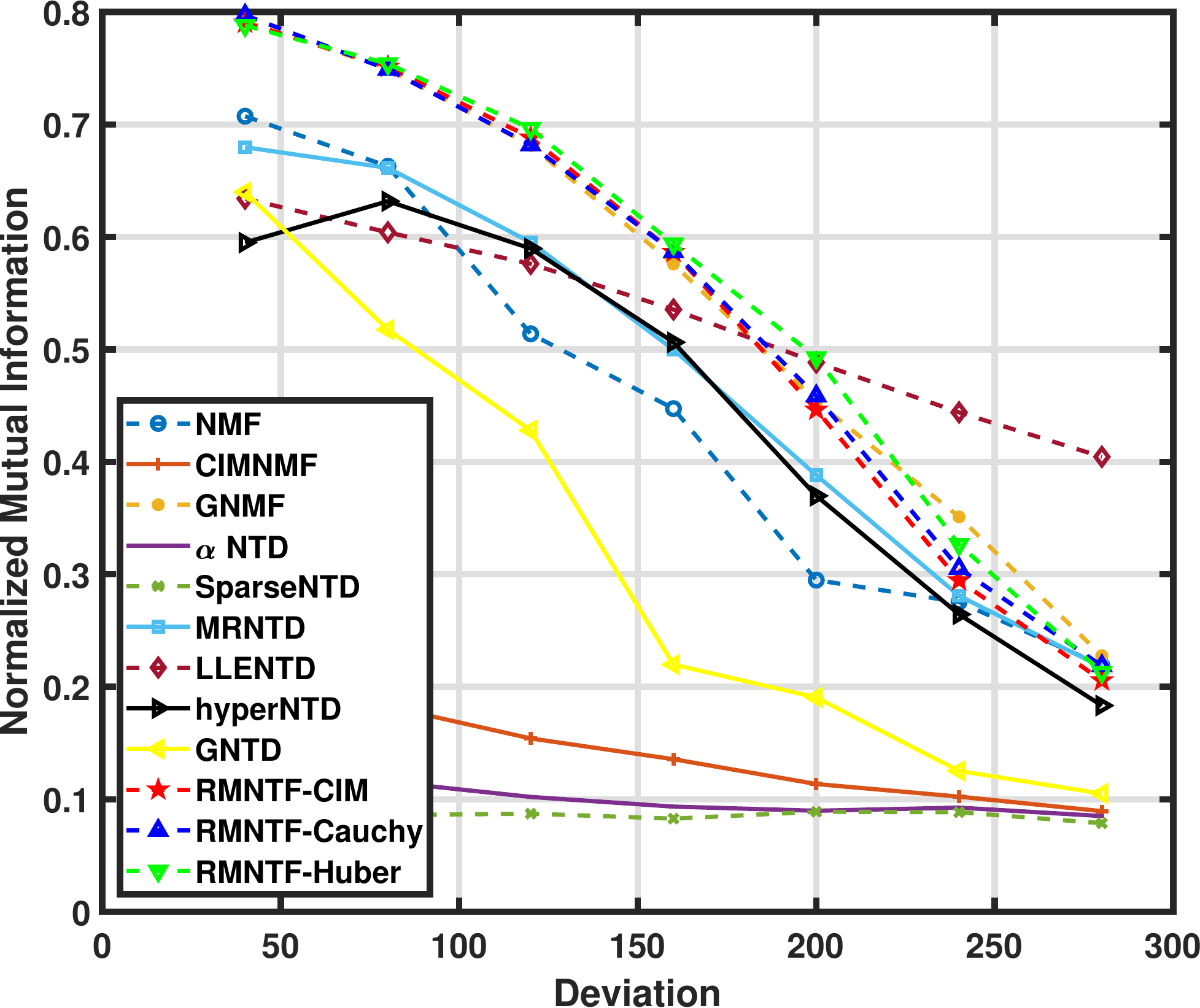}%
\label{Reuters_delta_Coh} \\
\includegraphics[width=1.5in]{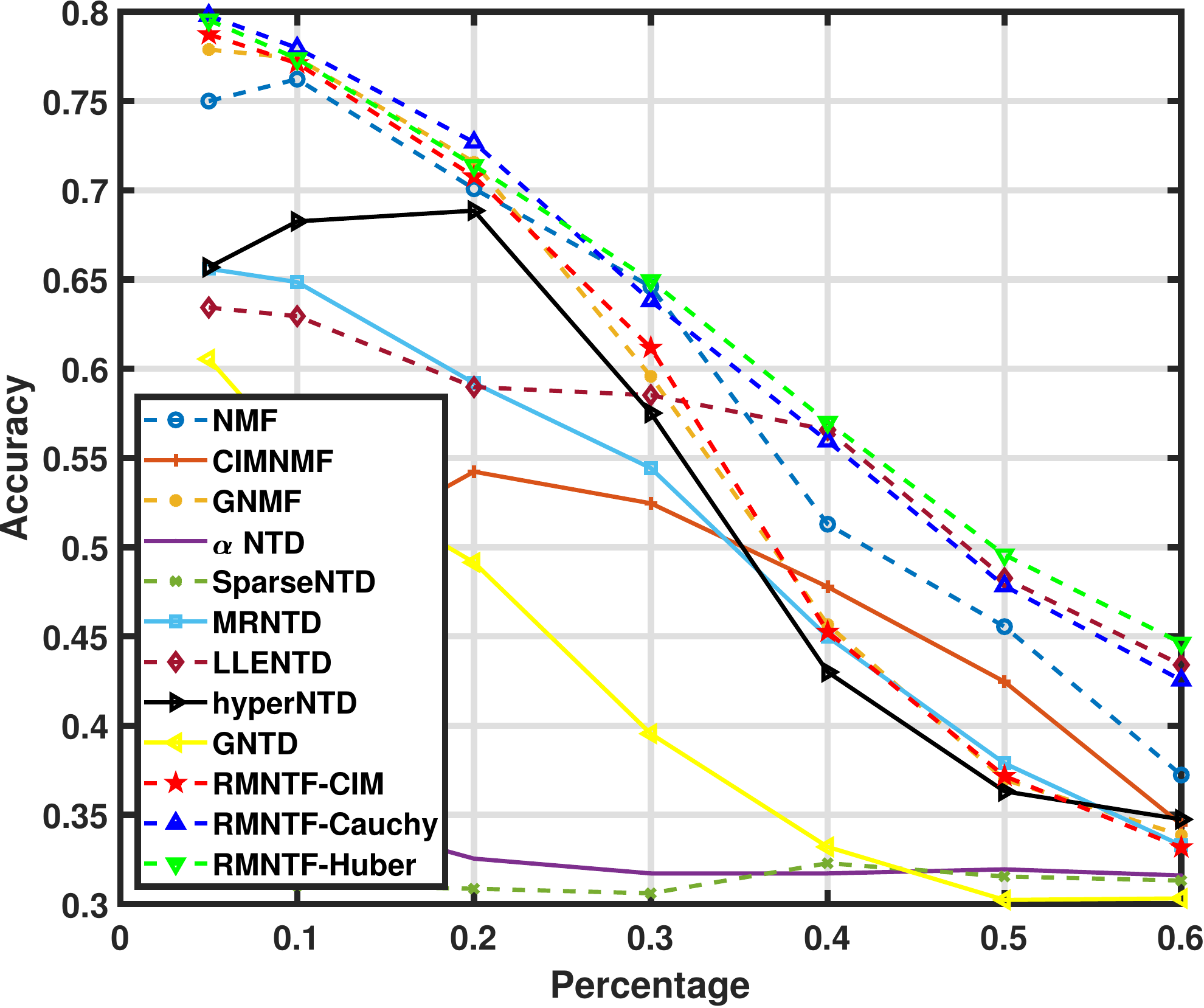}%
\label{Reuters_delta_Coh} \\
\includegraphics[width=1.5in]{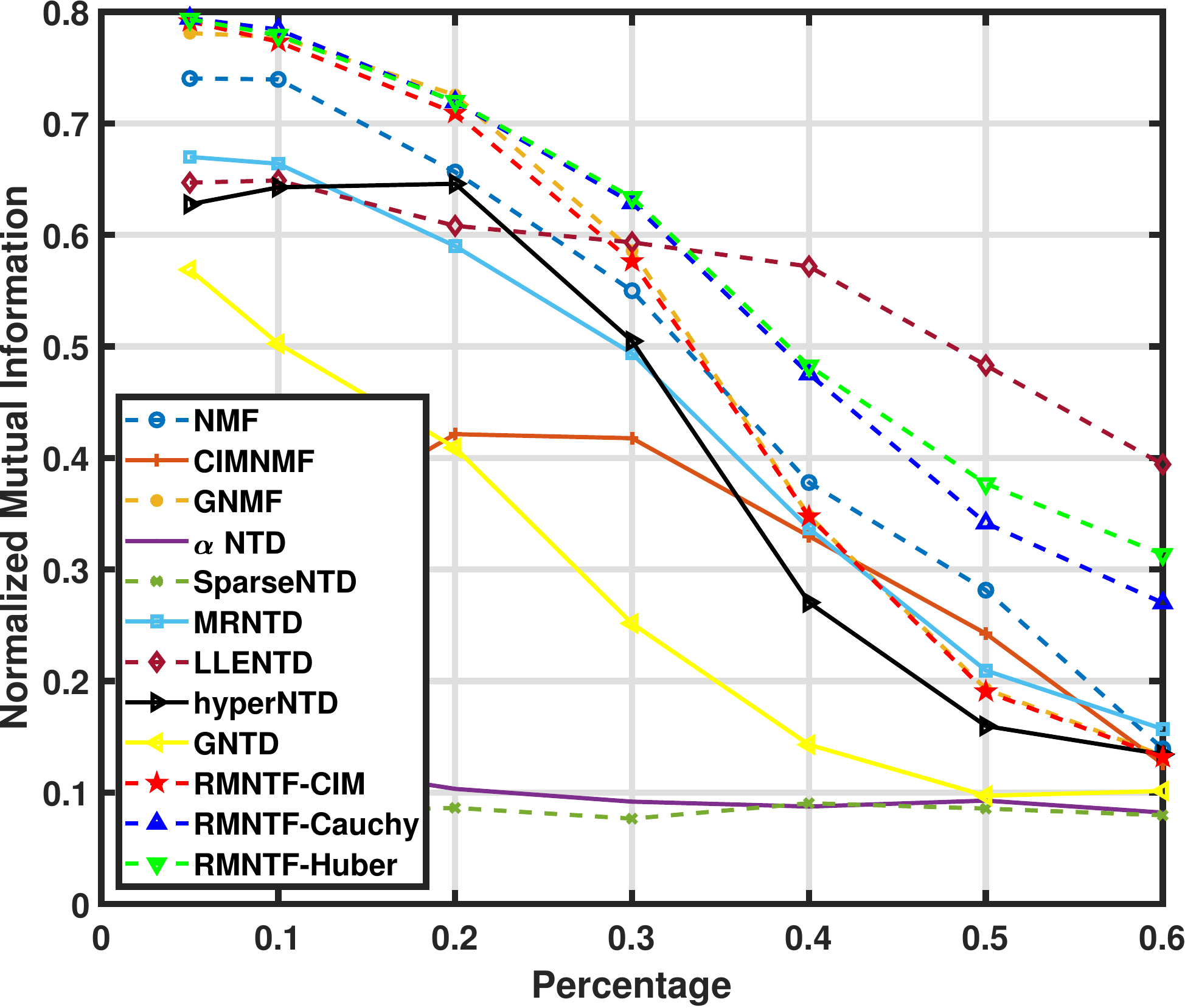}%
\end{minipage}
}
\hfil
\subfloat[]{
\begin{minipage}[b]{0.22\textwidth}
\includegraphics[width=1.5in]{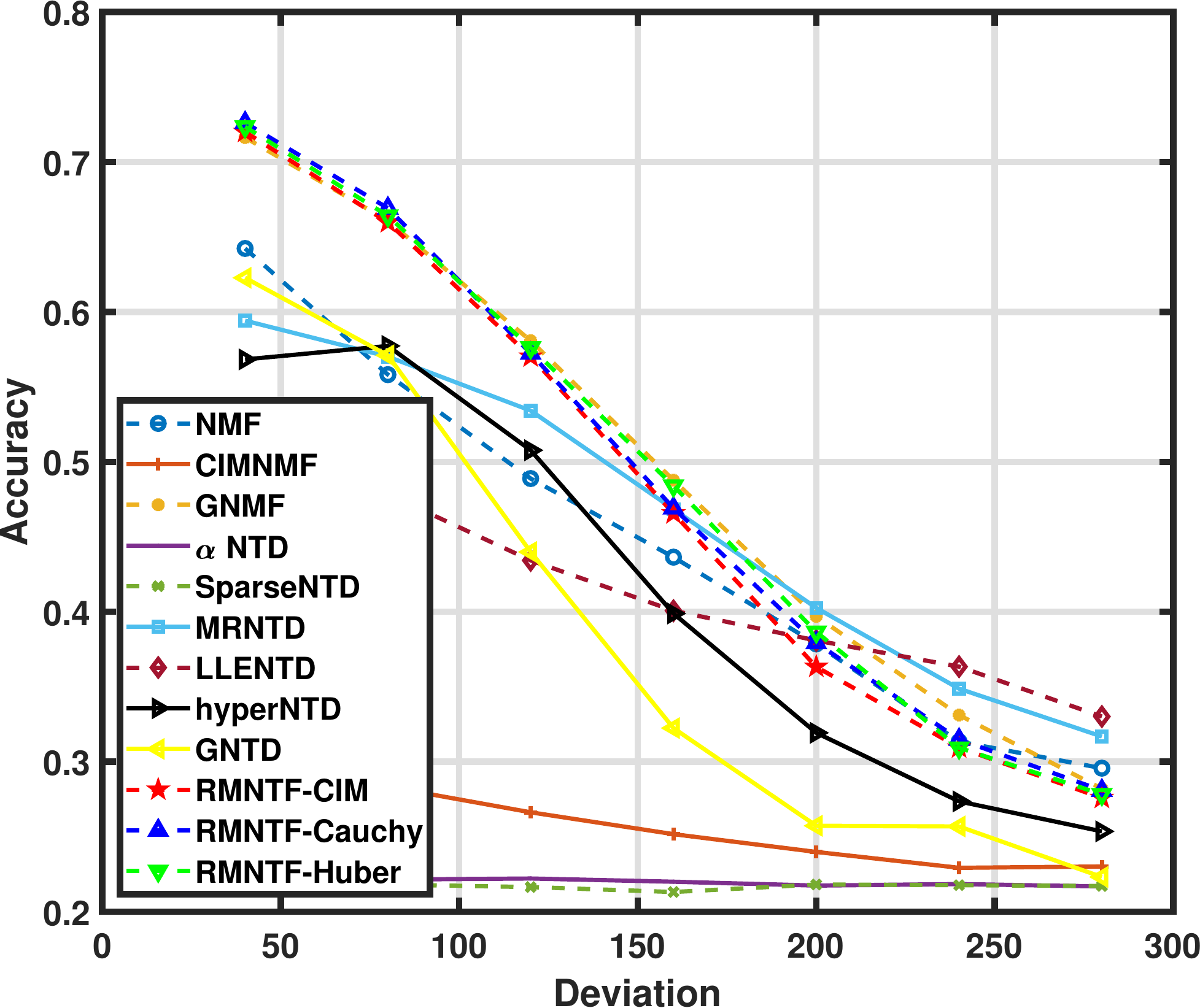}%
\label{Reuters_delta_Coh} \\
\includegraphics[width=1.5in]{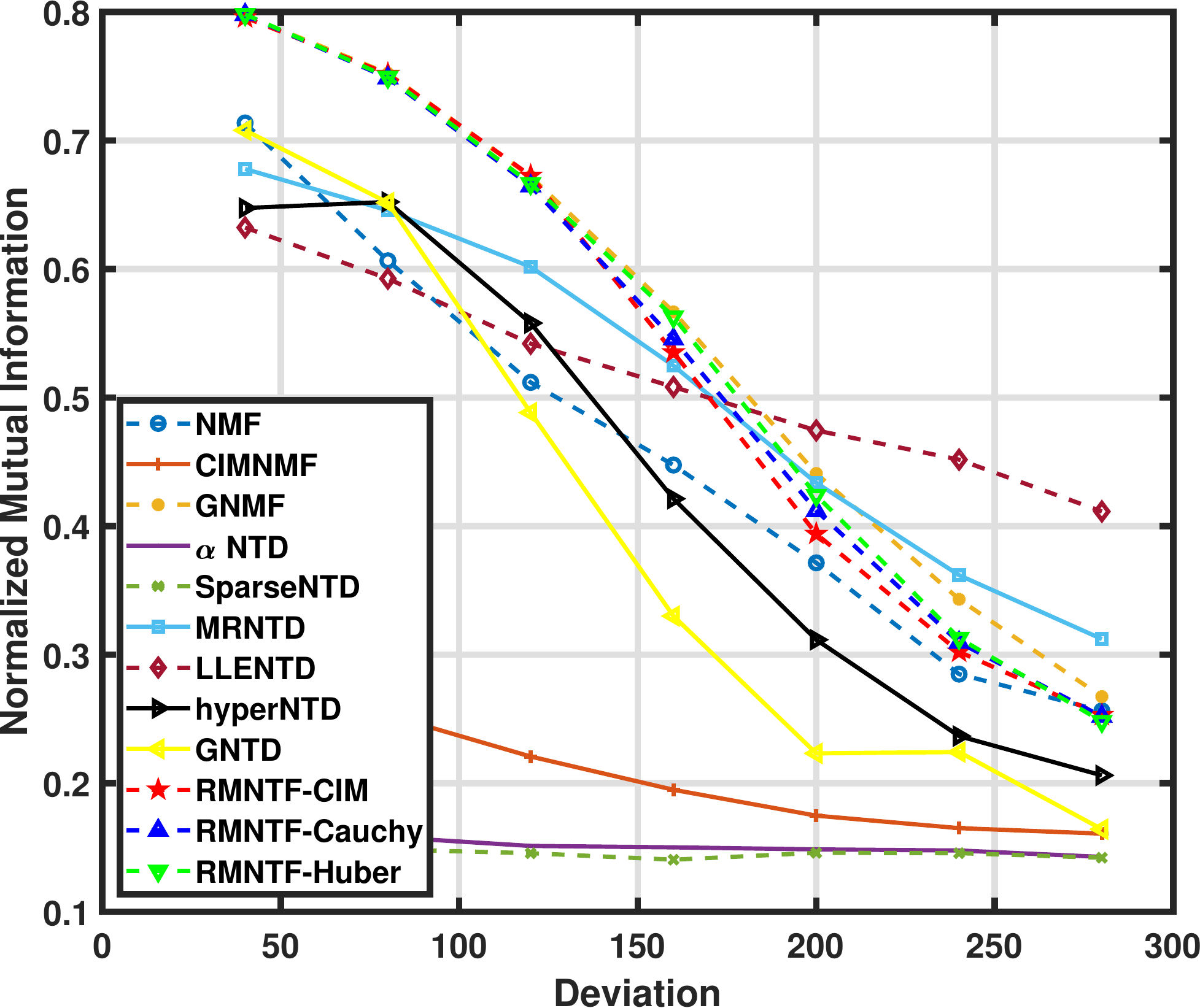}%
\label{Reuters_delta_Coh} \\
\includegraphics[width=1.5in]{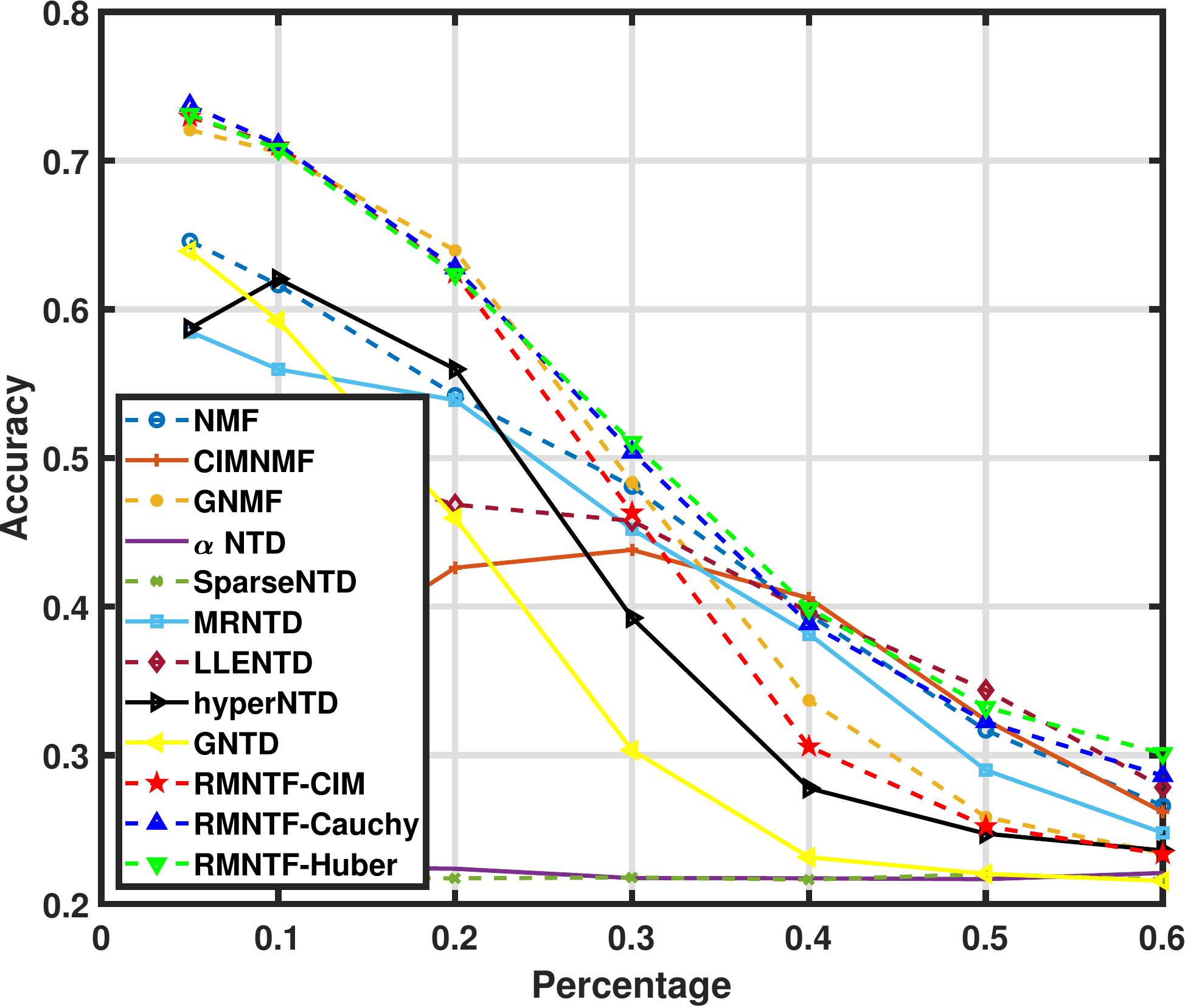}%
\label{Reuters_delta_Coh} \\
\includegraphics[width=1.5in]{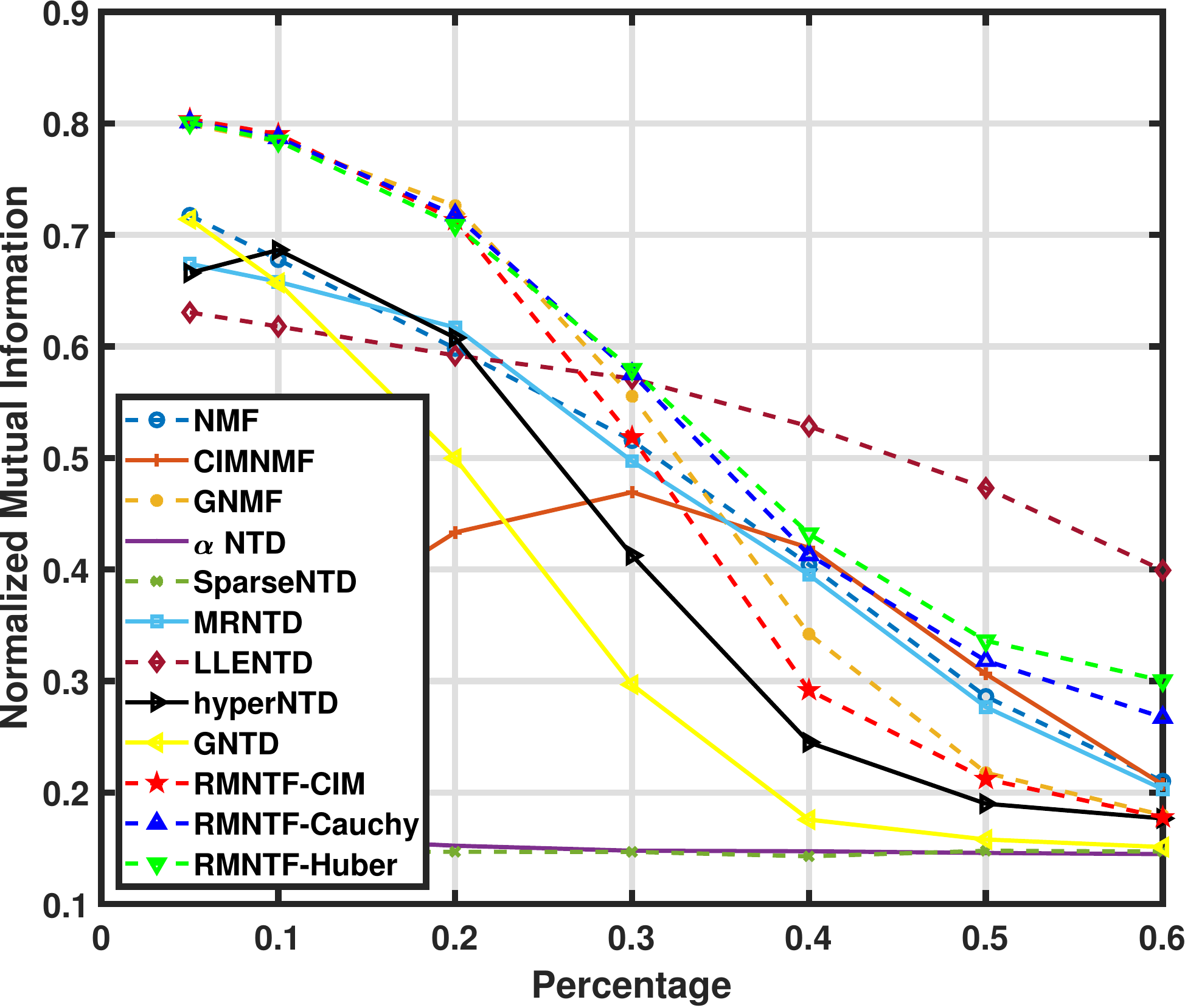}%
\end{minipage}
}
\subfloat[]{
\begin{minipage}[b]{0.22\textwidth}
\includegraphics[width=1.5in]{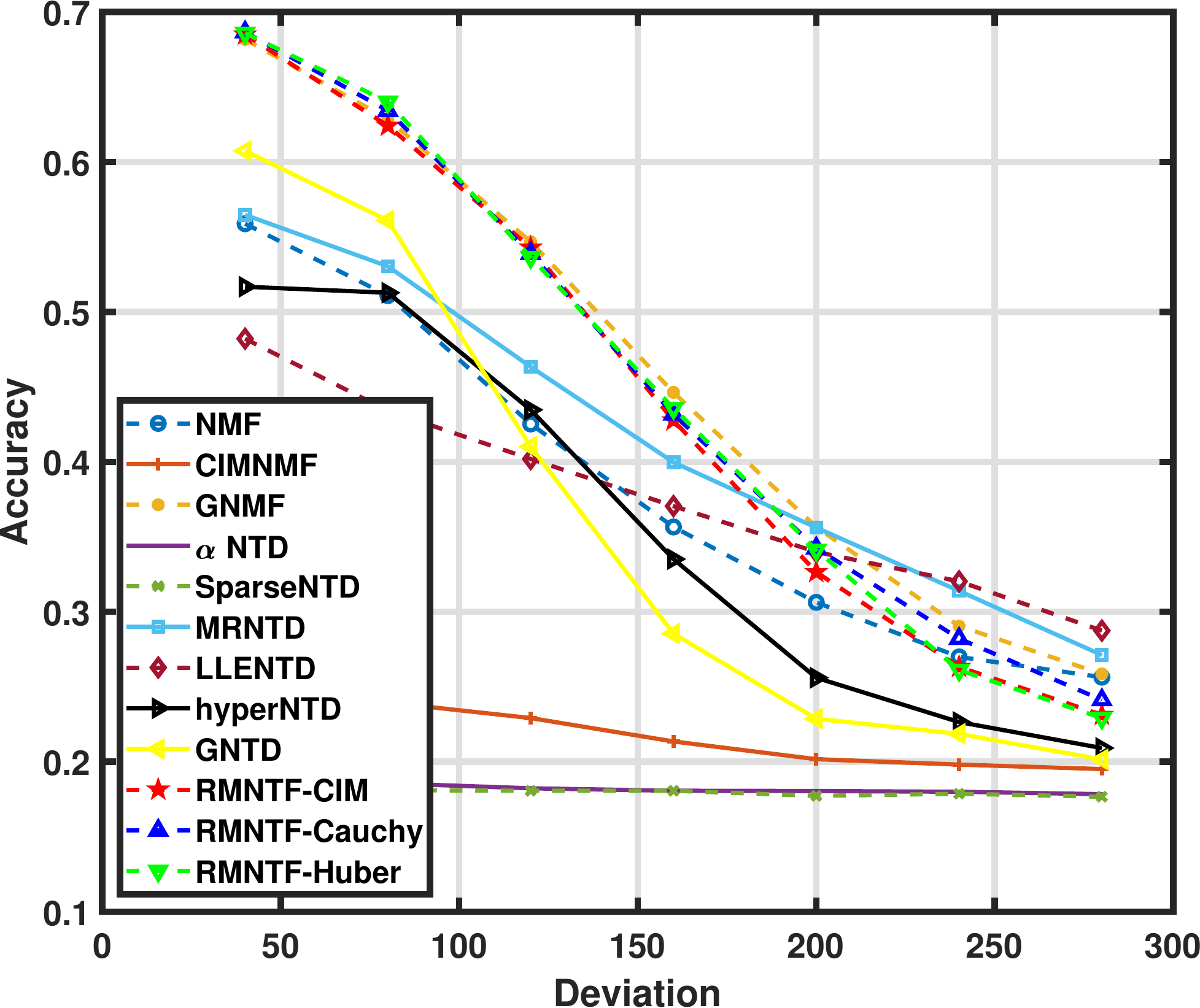}%
\label{Reuters_delta_Coh} \\
\includegraphics[width=1.5in]{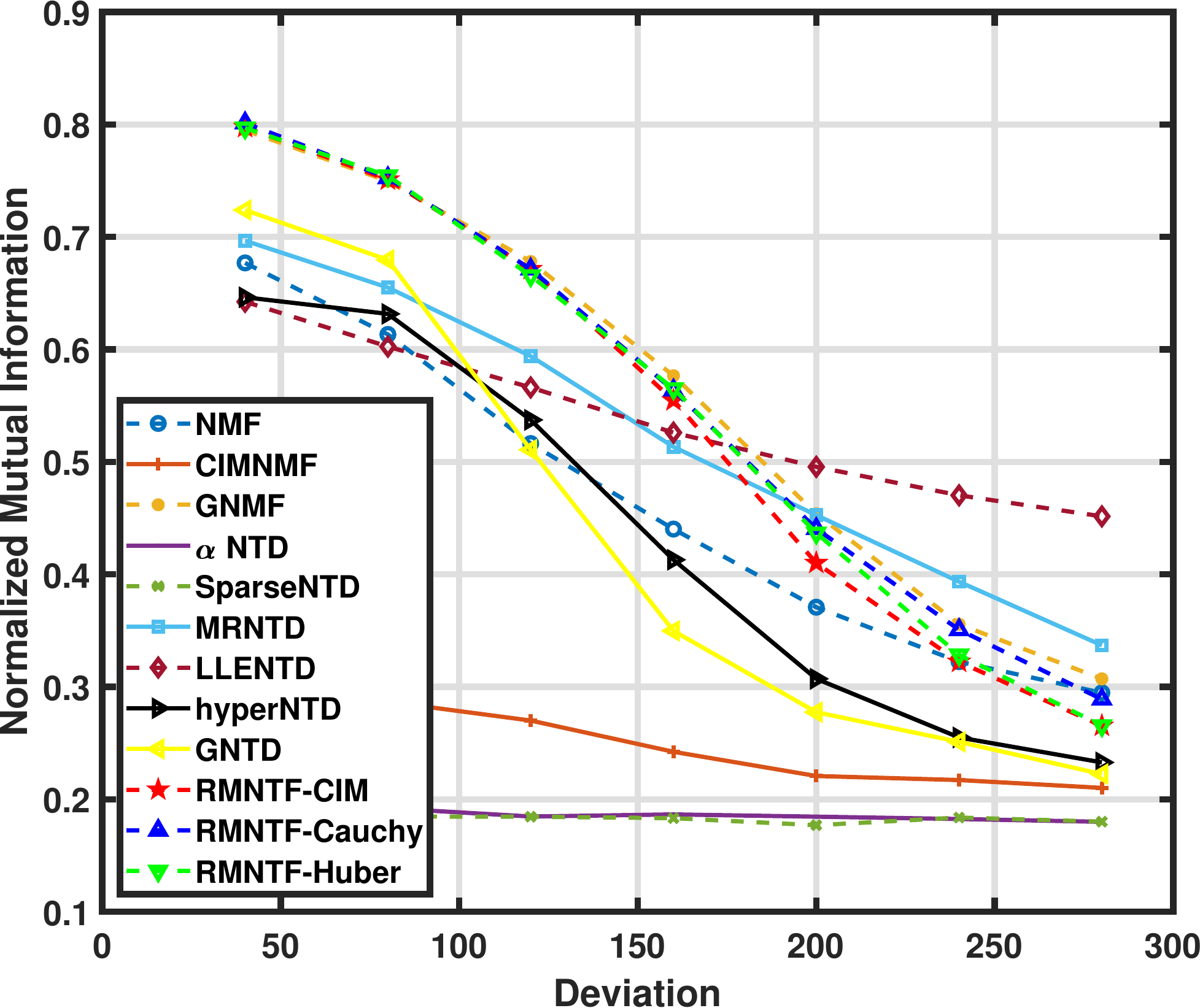}%
\label{Reuters_delta_Coh} \\
\includegraphics[width=1.5in]{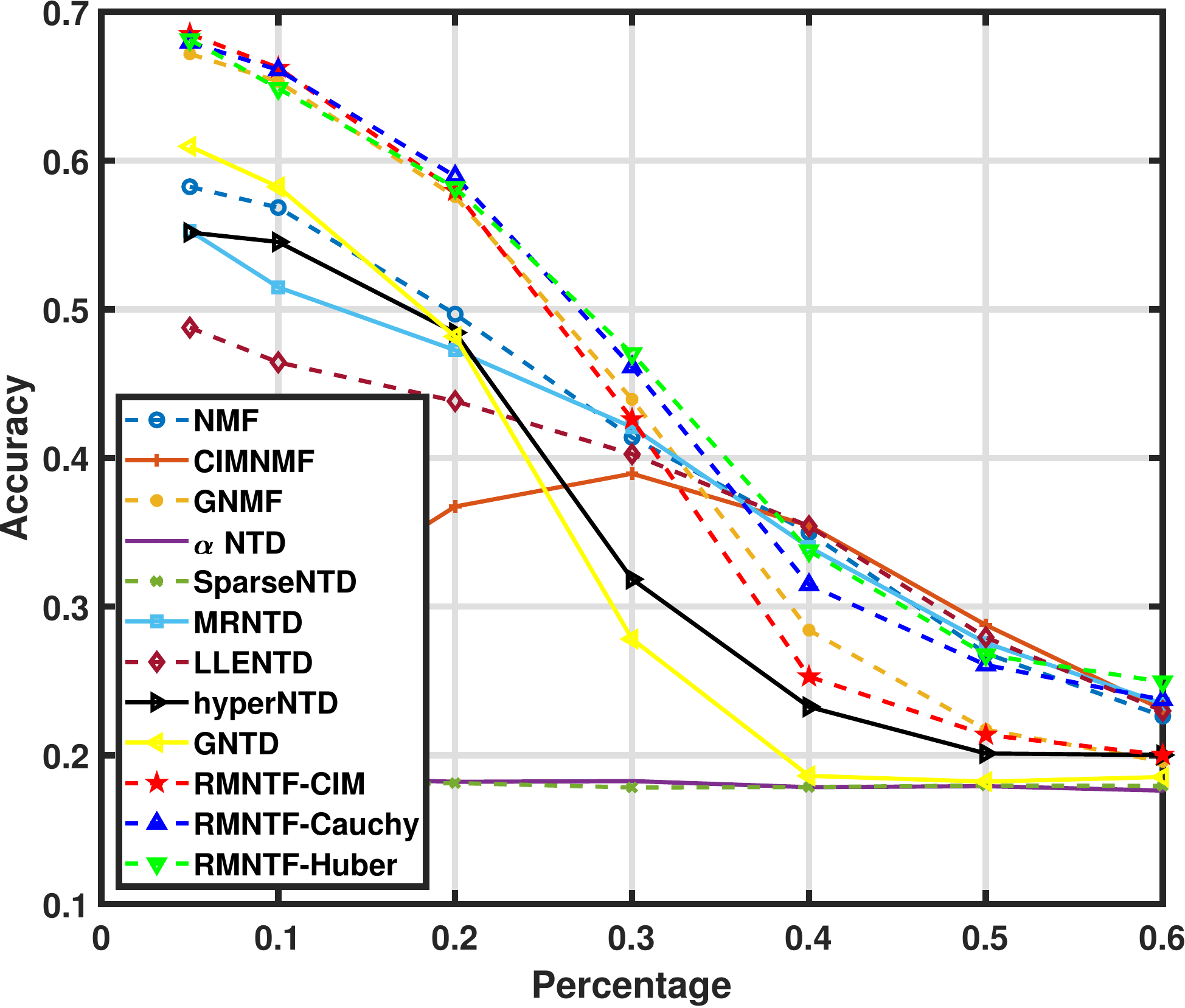}%
\label{Reuters_delta_Coh} \\
\includegraphics[width=1.5in]{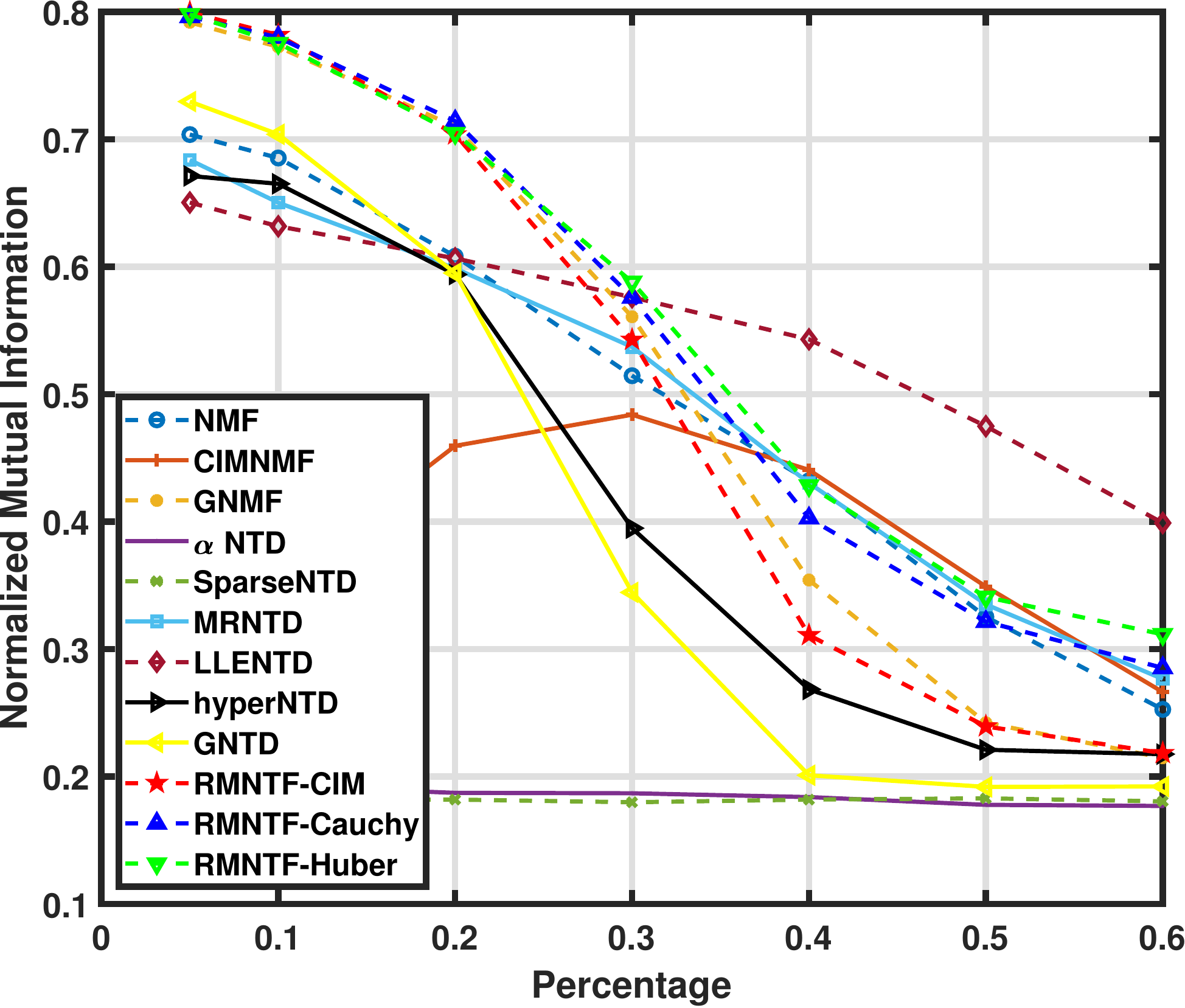}%
\end{minipage}
}
\subfloat[]{
\begin{minipage}[b]{0.22\textwidth}
\includegraphics[width=1.5in]{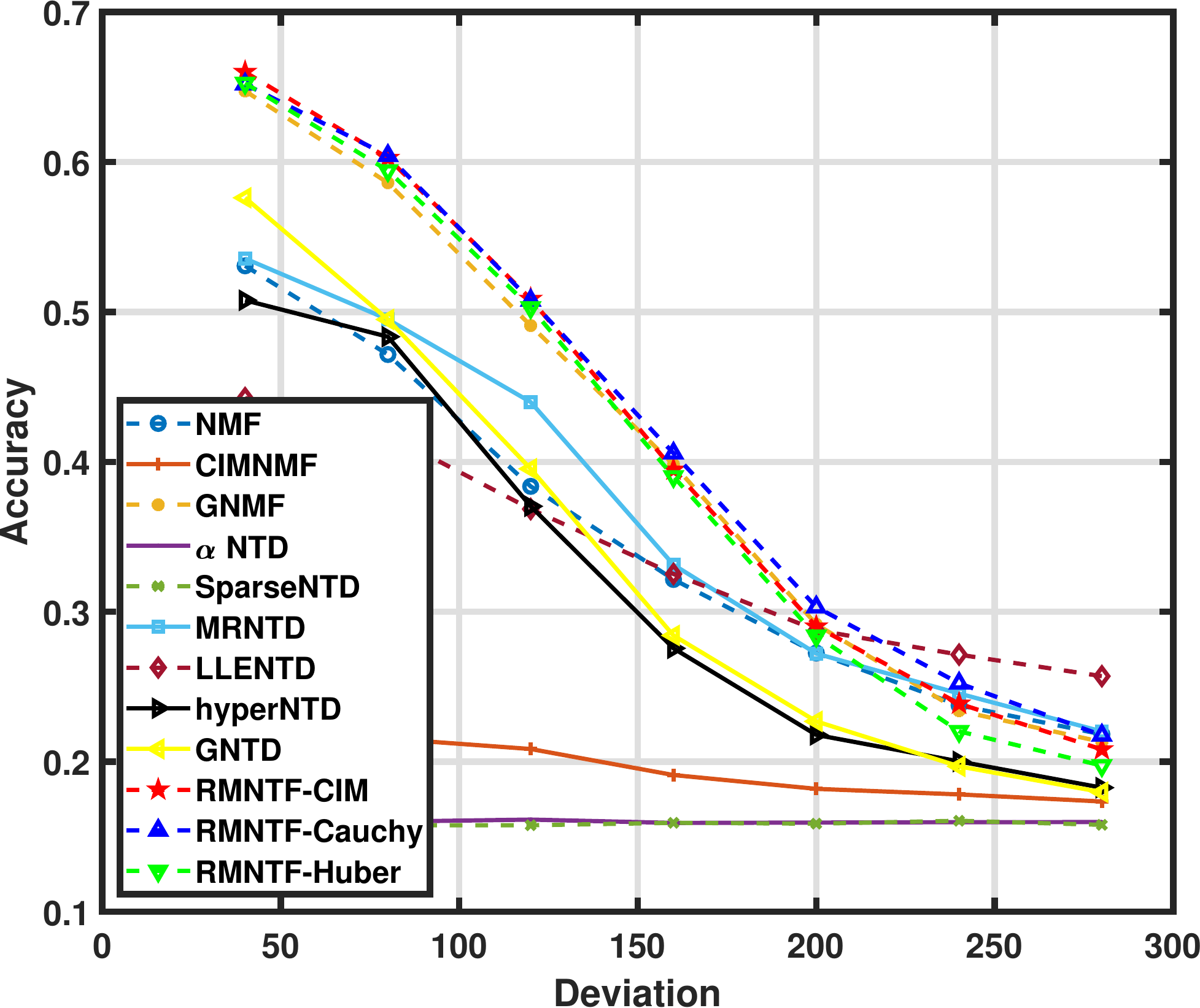}%
\label{Reuters_delta_Coh} \\
\includegraphics[width=1.5in]{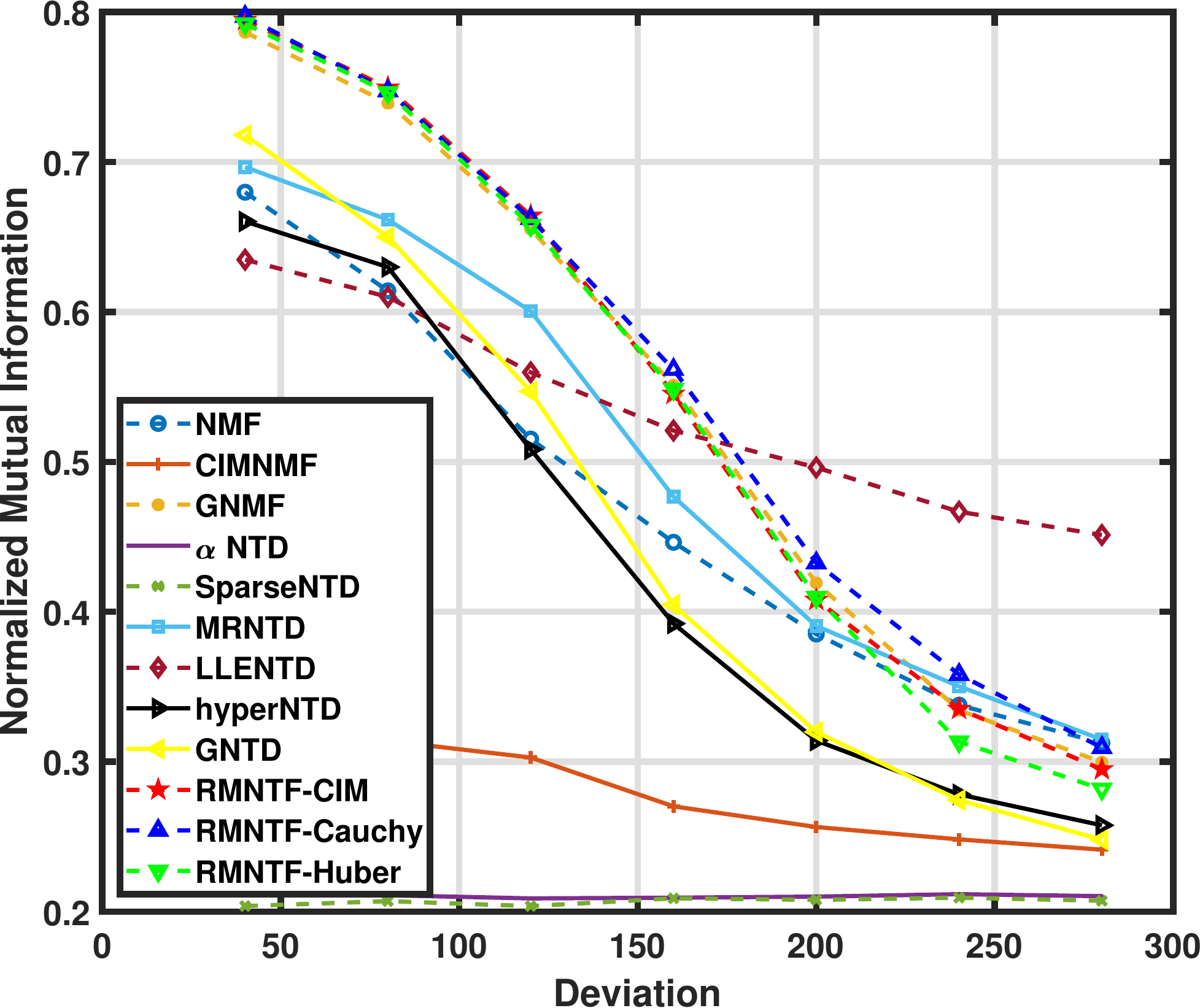}%
\label{Reuters_delta_Coh} \\
\includegraphics[width=1.5in]{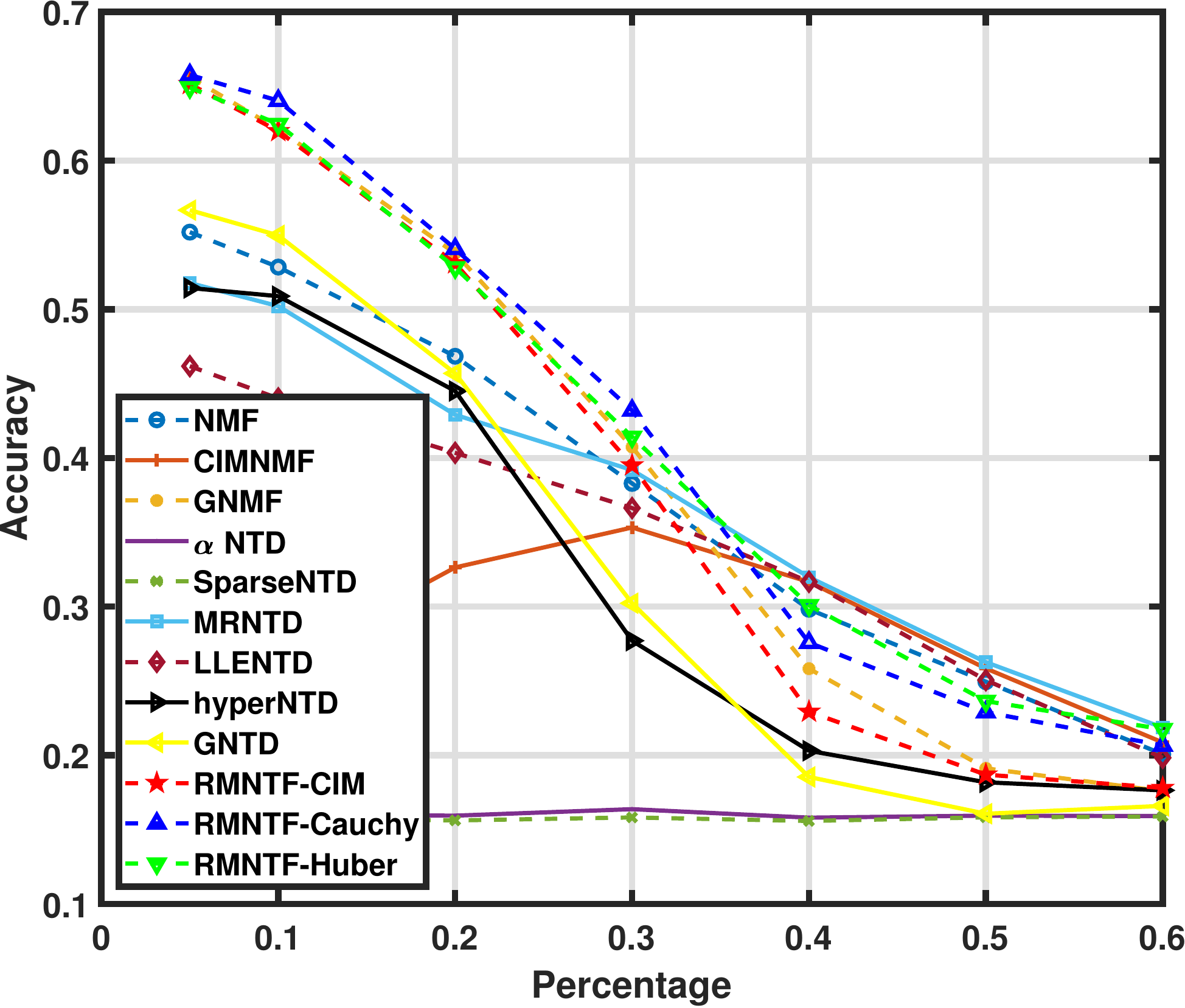}%
\label{Reuters_delta_Coh} \\
\includegraphics[width=1.5in]{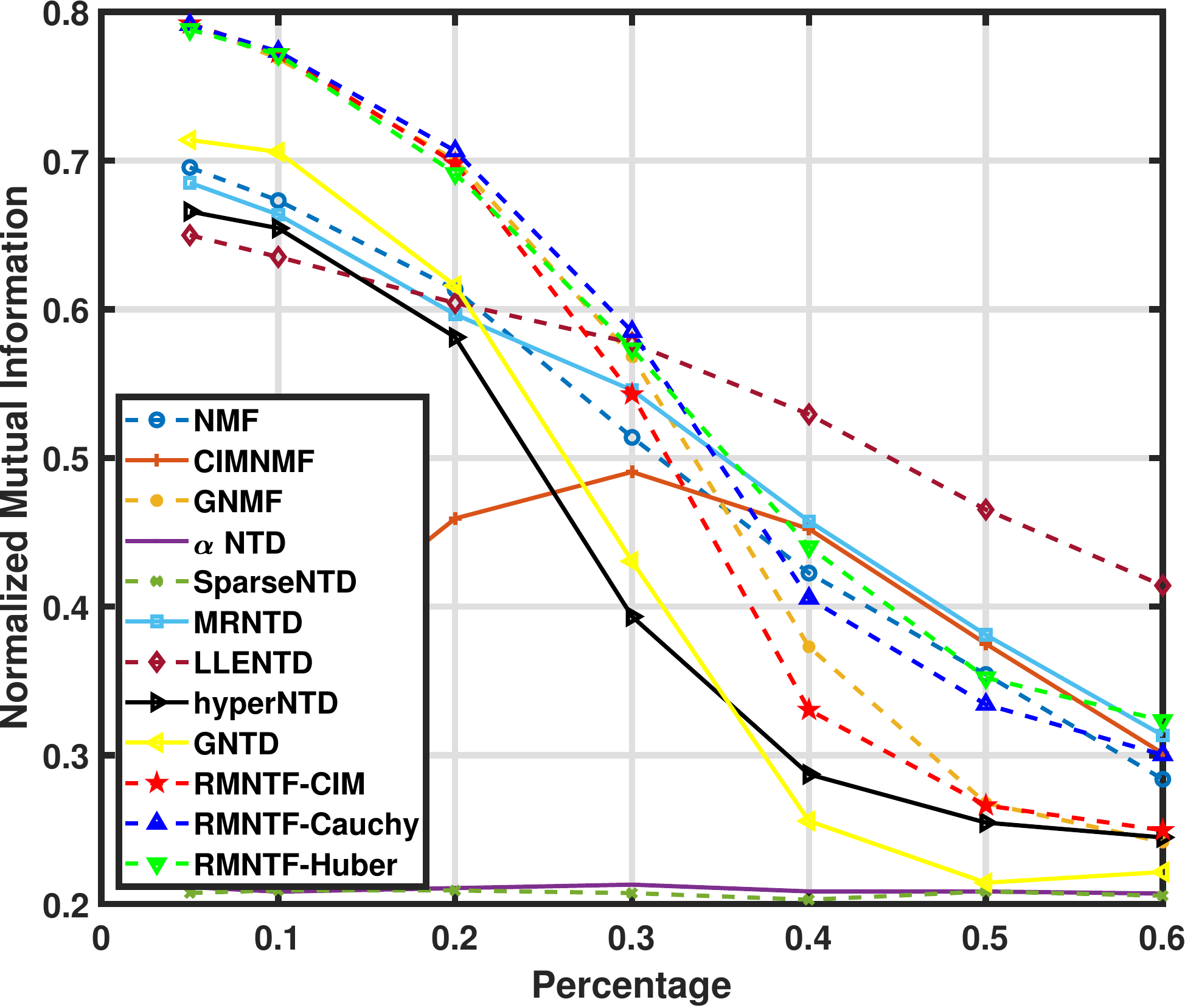}%
\end{minipage}
}

\vspace{2mm}
\caption{Evaluation of proposed methods on FEI database contaminated by Laplace noise and salt \& pepper noise, respectively, the first two rows show the results contaminated by Laplace noise and the last two rows show the results contaminated by salt \& pepper noise. (a) Average accuracy and NMI on the subset of FEI which contains $5$ categories. (b) Average evaluation on the subset of FEI which contains $10$ categories. (c) Average evaluation on the subset of FEI which contains $15$ categories. (d) Average evaluation on the subset of FEI which contains $20$ categories.}
\label{fig:Reuters_delta4}
\end{figure*}

\begin{figure*}[!t]
\vspace{-0.5cm} 
\setlength{\abovecaptionskip}{0cm} 
\setlength{\belowcaptionskip}{-0cm} 
\centering
\subfloat[]{
\begin{minipage}[b]{0.18\textwidth}
\includegraphics[width=1.25in]{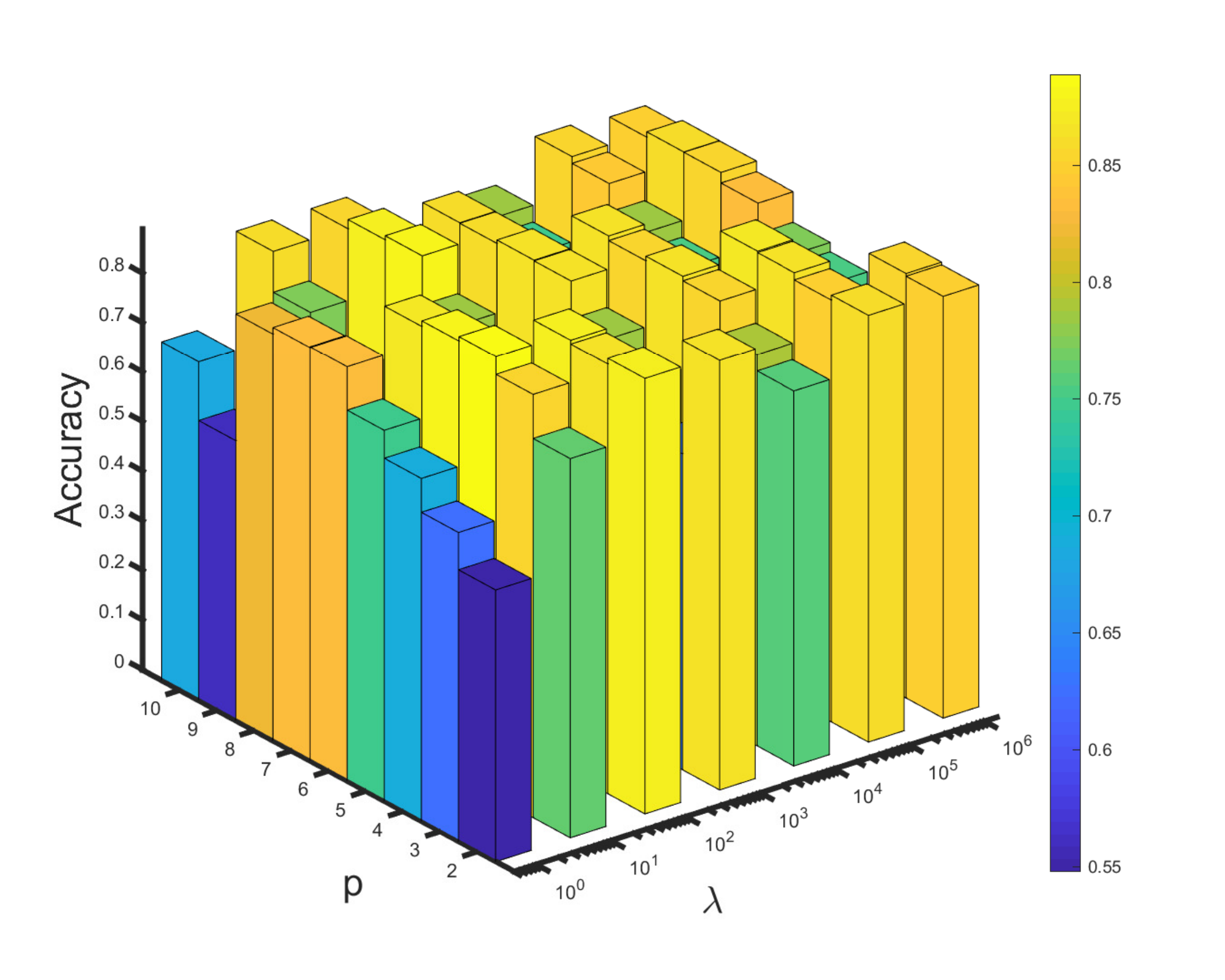}%
\label{Reuters_delta_Coh} \\
\includegraphics[width=1.25in]{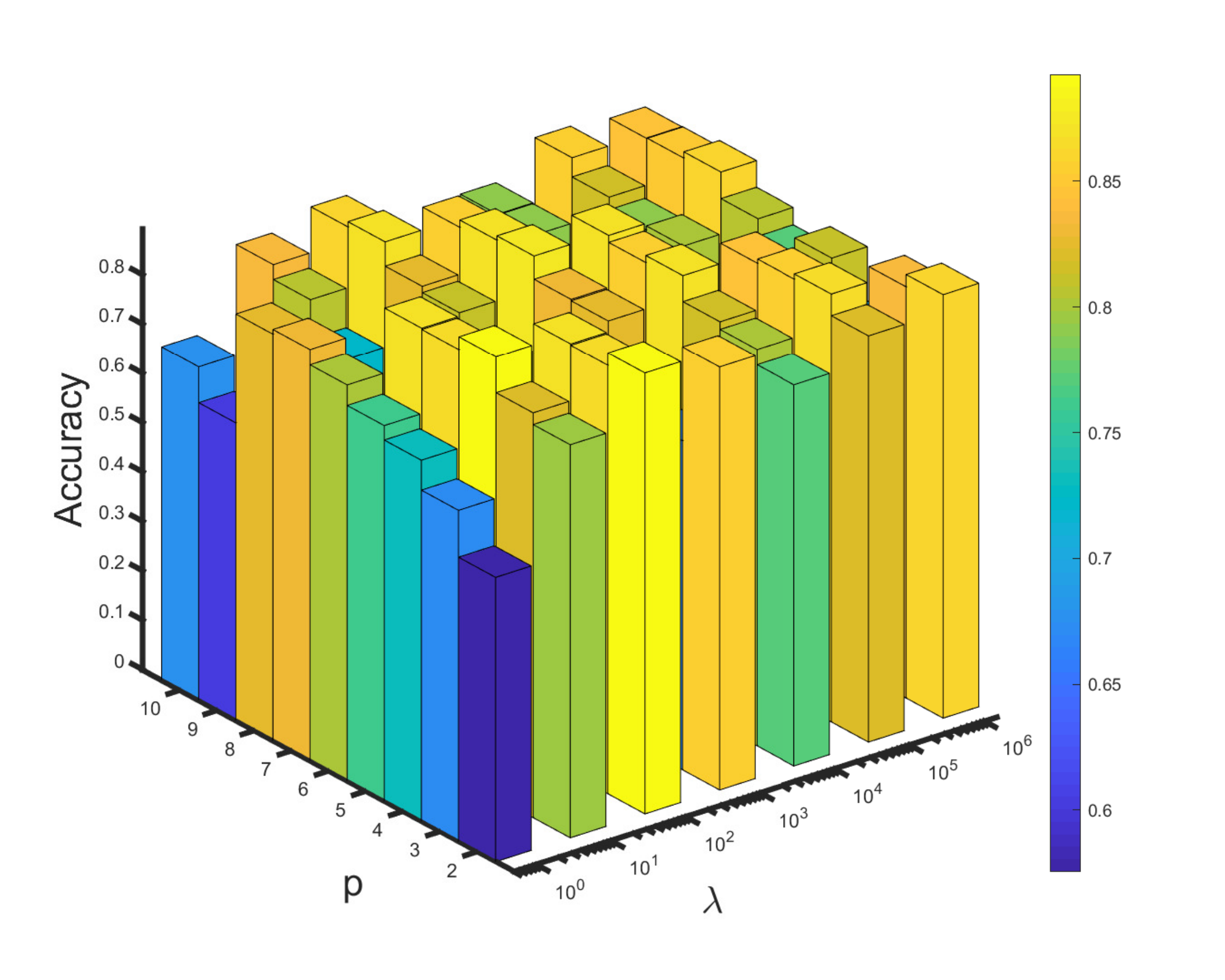}%
\label{Reuters_delta_Coh} \\
\includegraphics[width=1.25in]{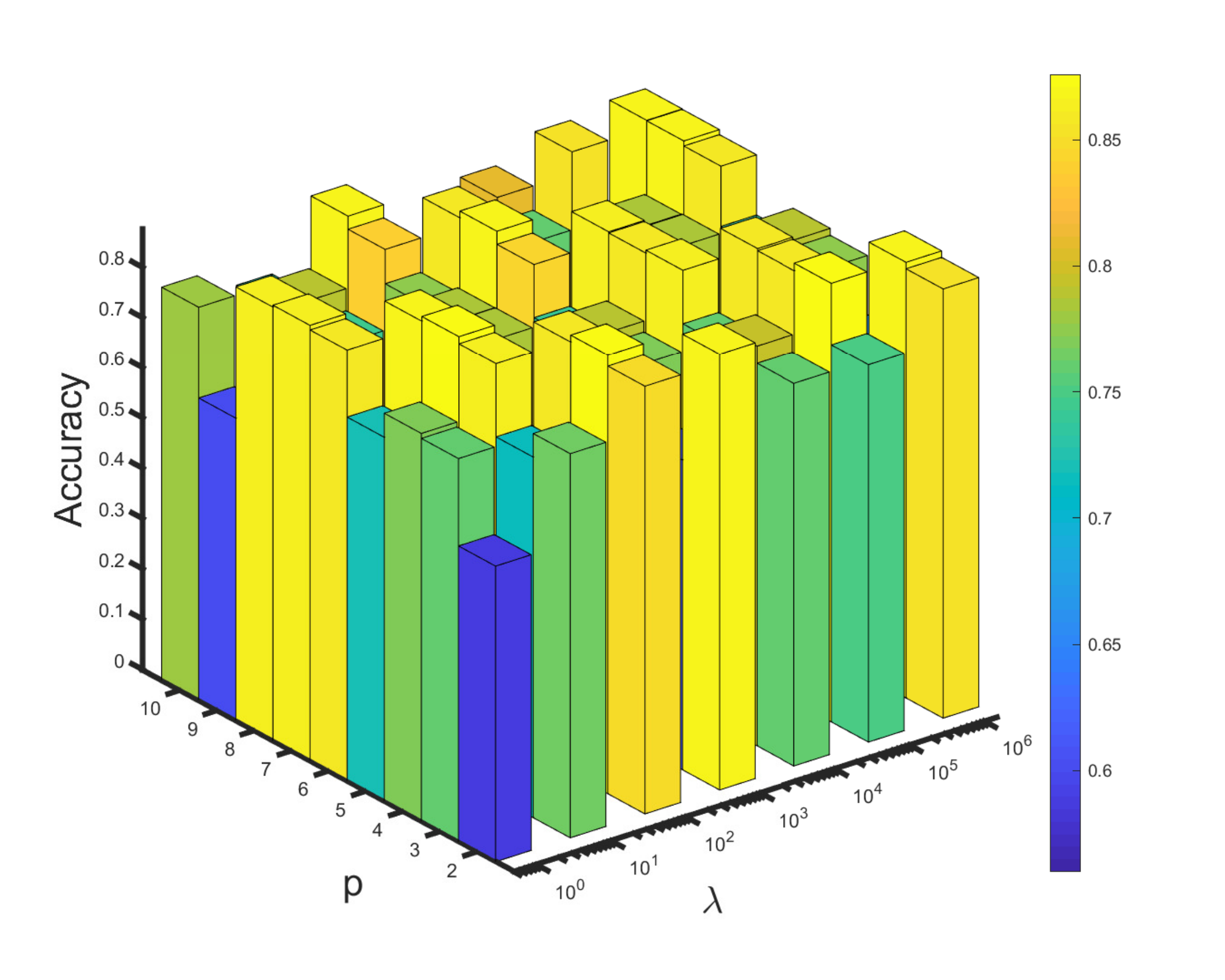}%
\label{Reuters_delta_Coh} \\
\includegraphics[width=1.25in]{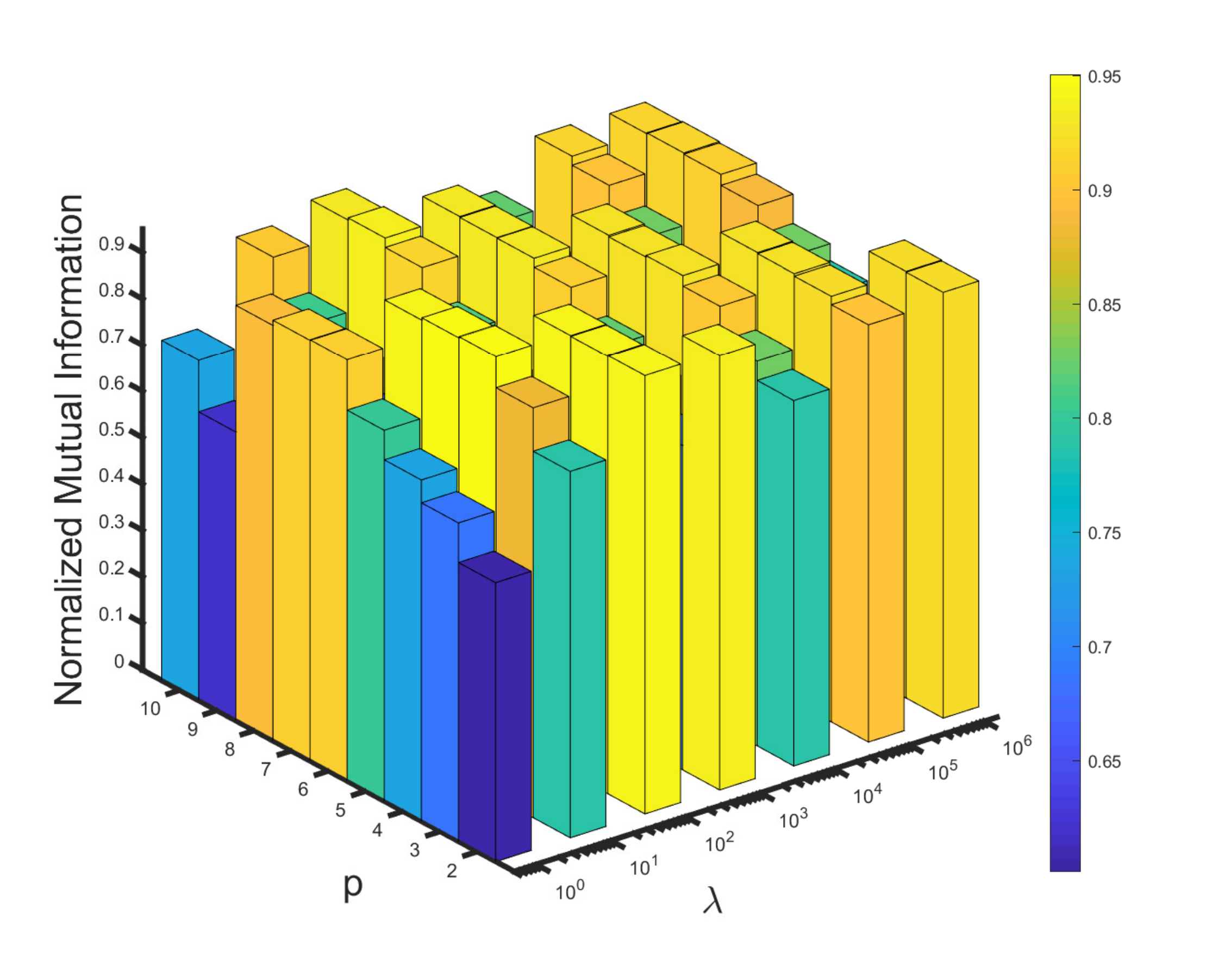}%
\label{Reuters_delta_Coh} \\
\includegraphics[width=1.25in]{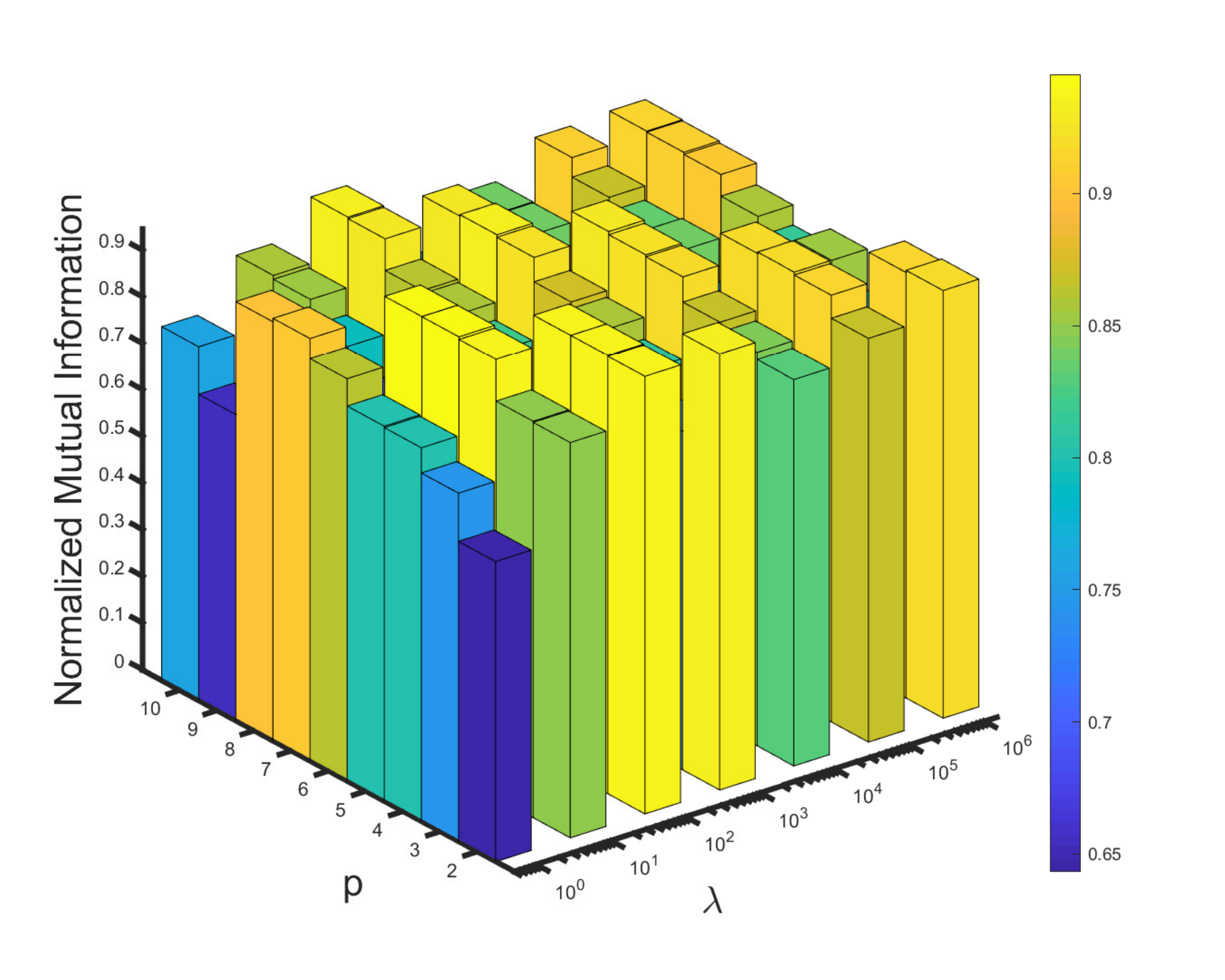}%
\label{Reuters_delta_Coh} \\
\includegraphics[width=1.25in]{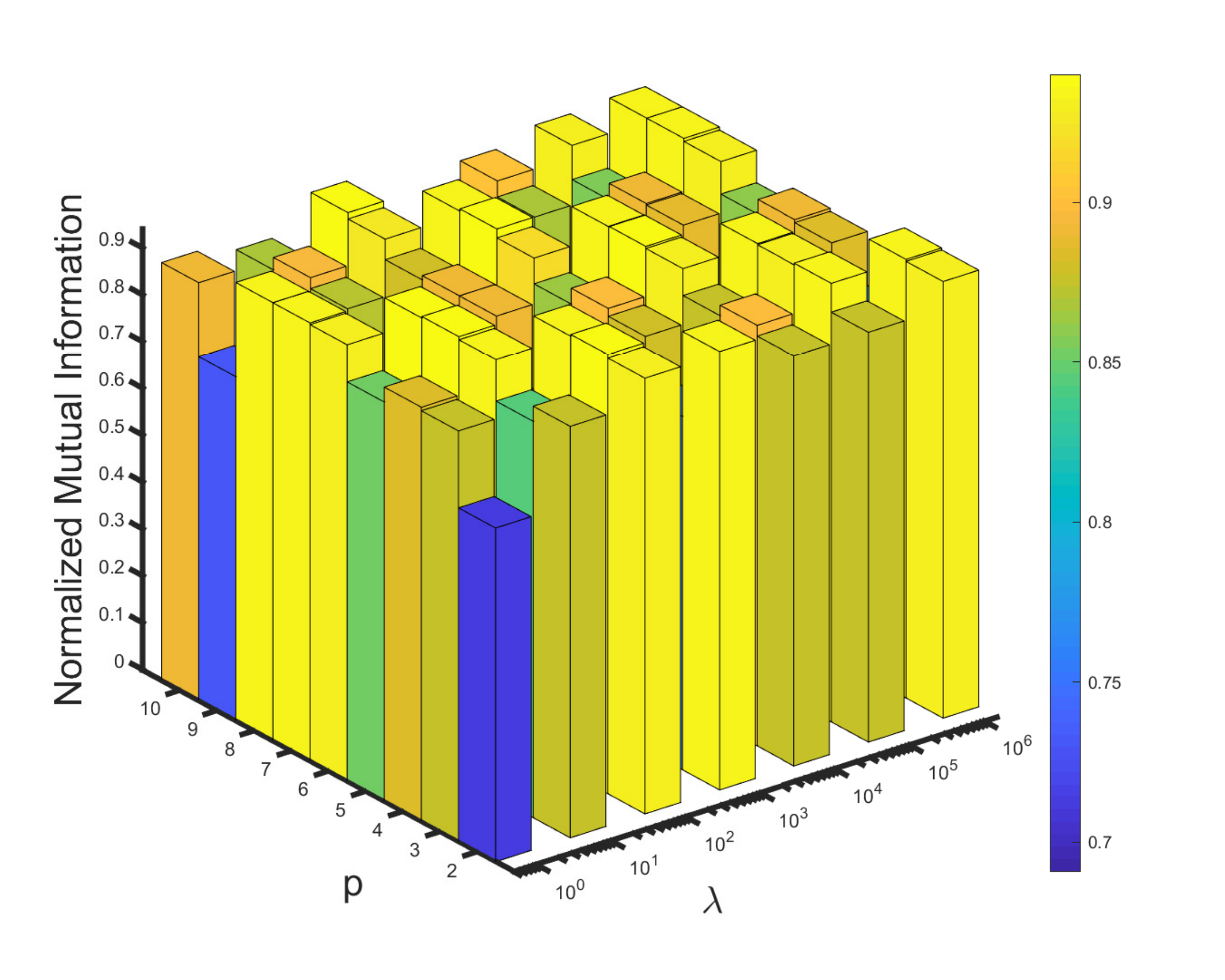}%
\label{Reuters_delta_Coh}
\end{minipage}
}\hspace{-5mm}
\hfil
\subfloat[]{
\begin{minipage}[b]{0.2\textwidth}
\includegraphics[width=1.25in]{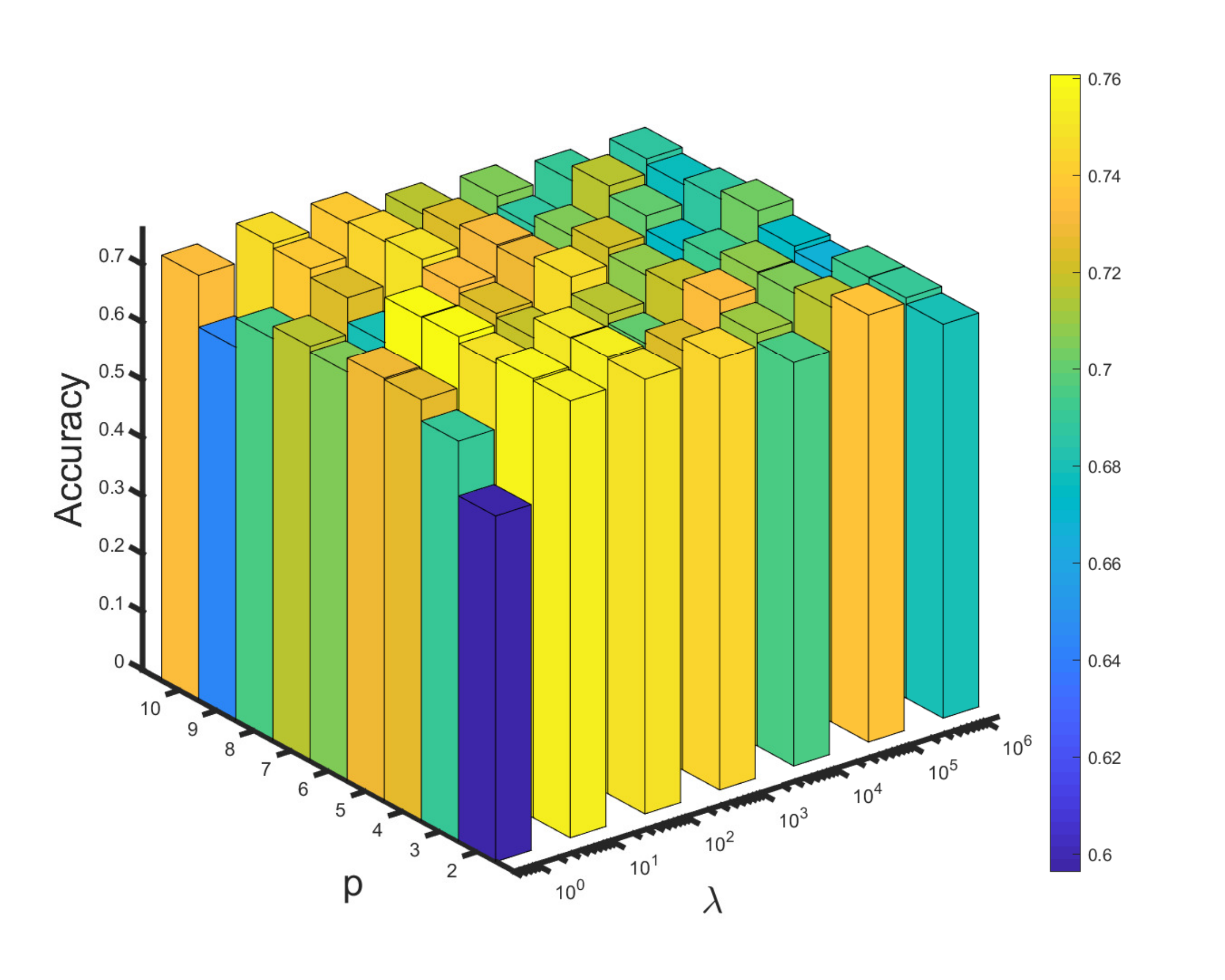}%
\label{Reuters_delta_Coh} \\
\includegraphics[width=1.25in]{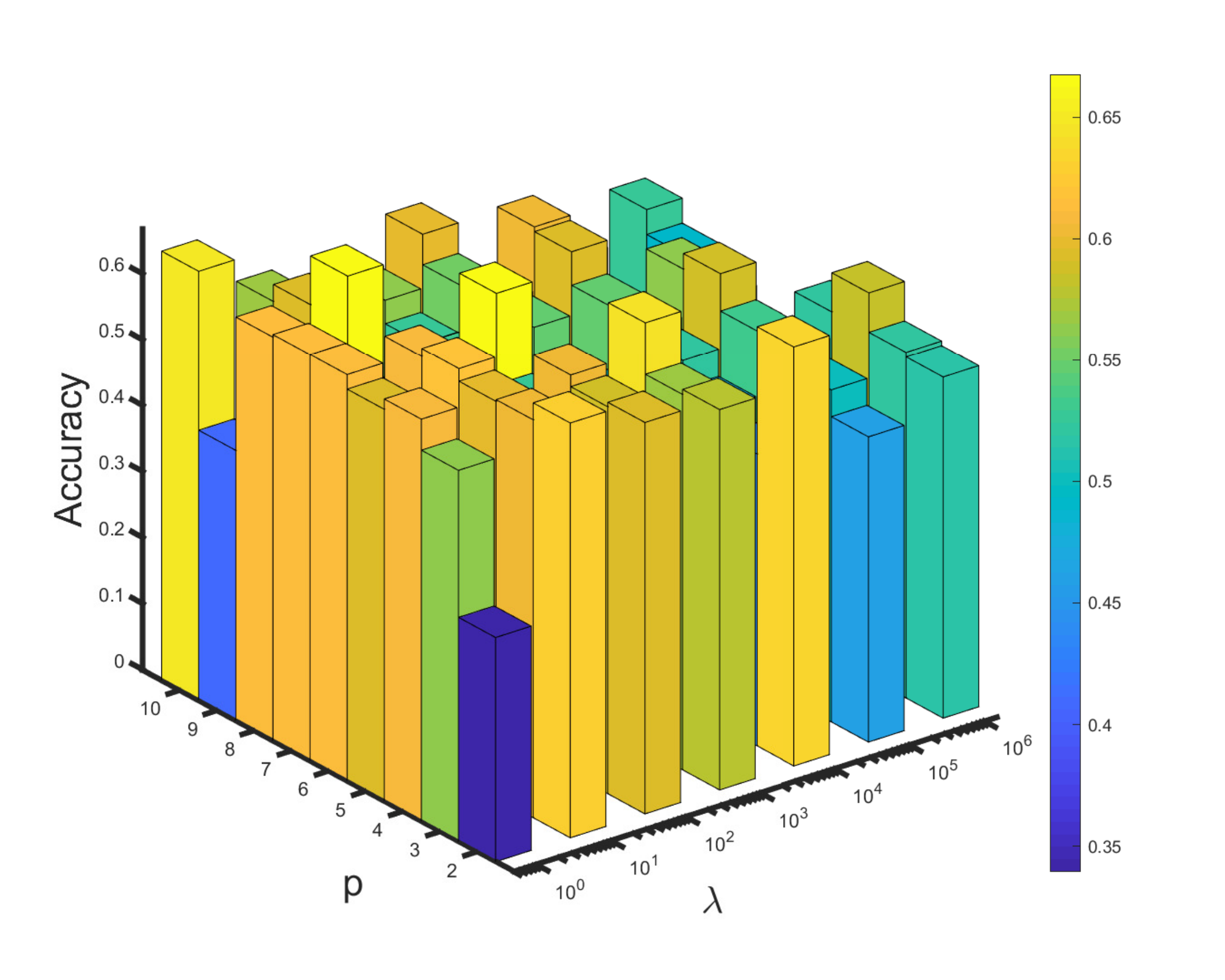}%
\label{Reuters_delta_Coh} \\
\includegraphics[width=1.25in]{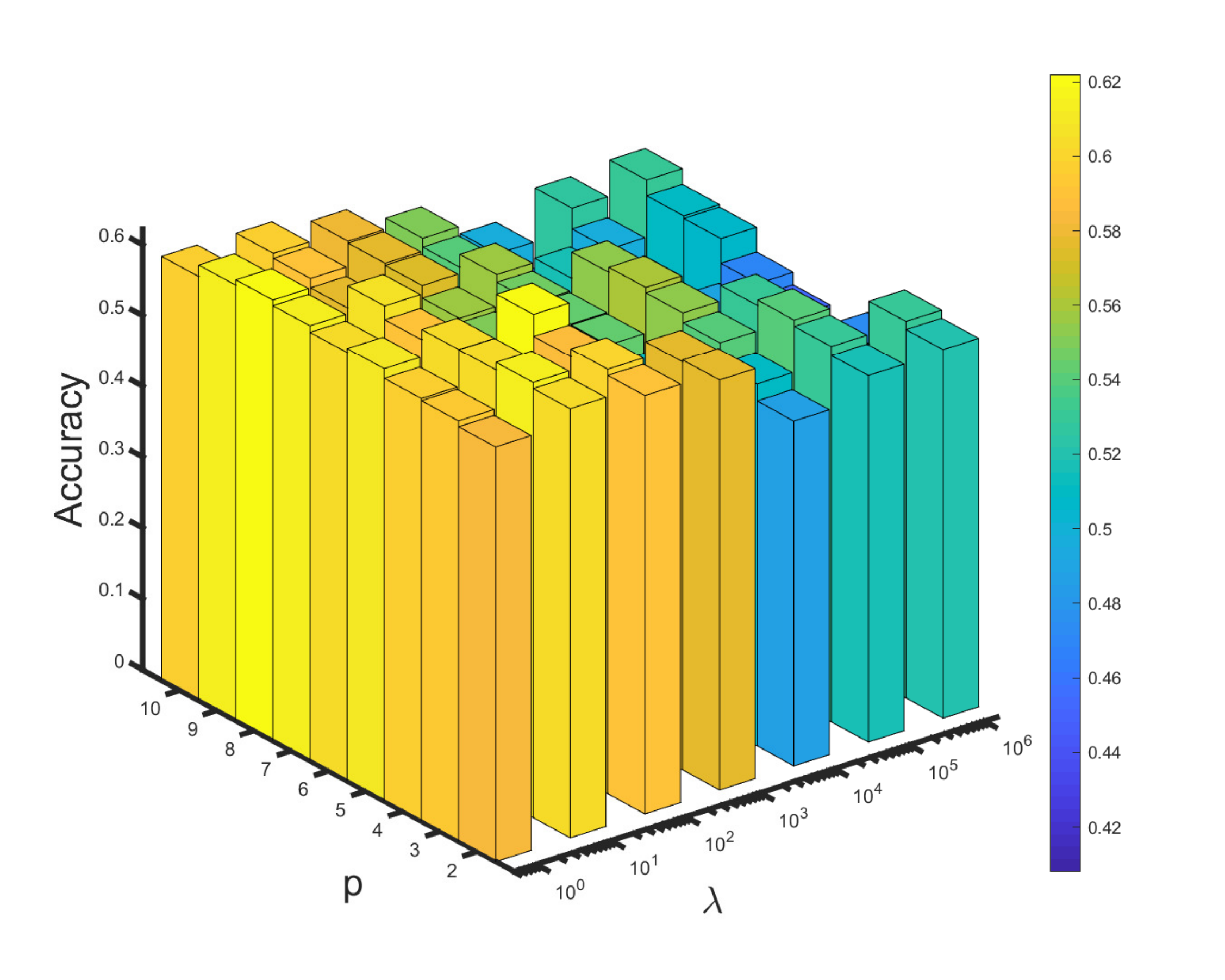}%
\label{Reuters_delta_Coh} \\
\includegraphics[width=1.25in]{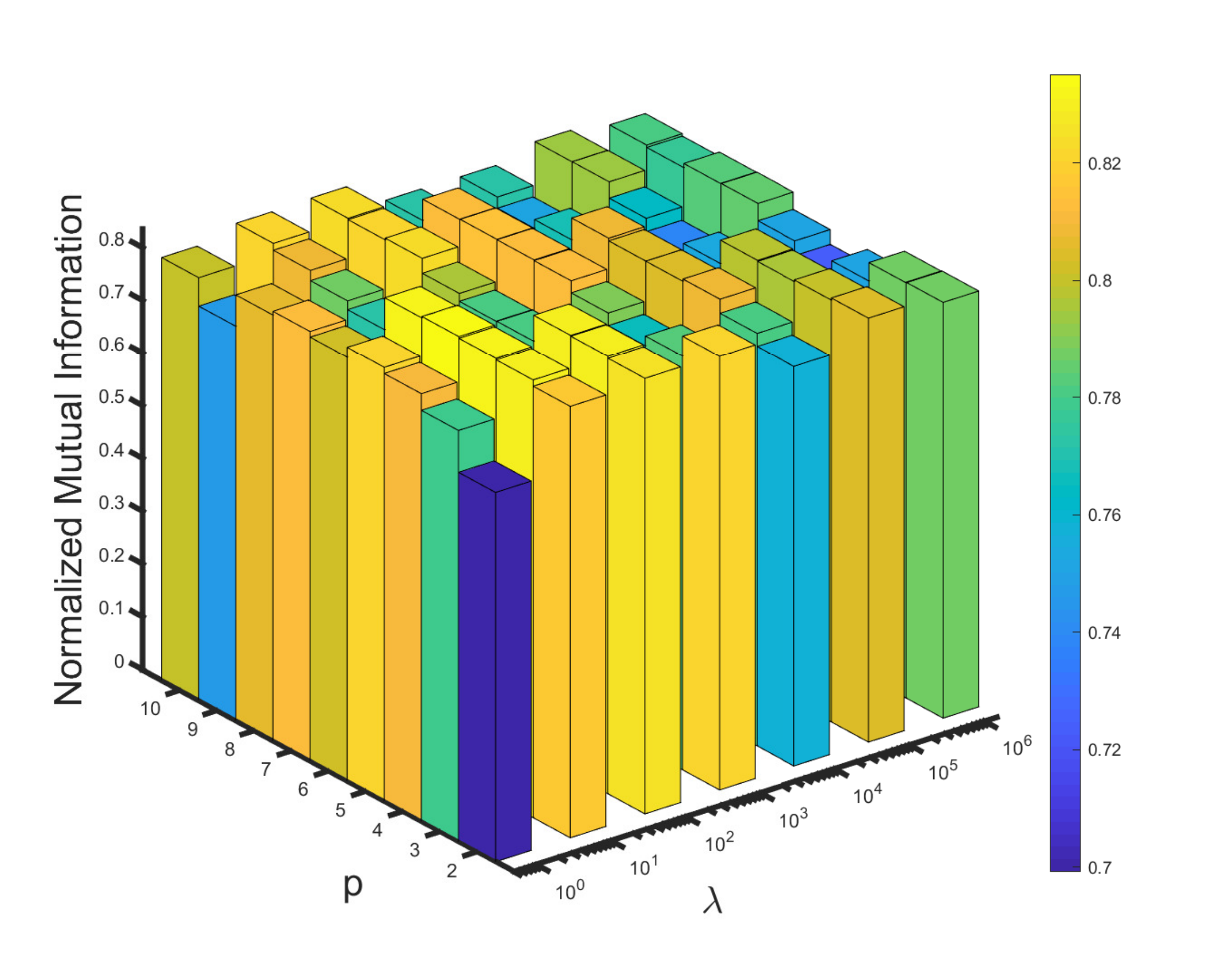}%
\label{Reuters_delta_Coh} \\
\includegraphics[width=1.25in]{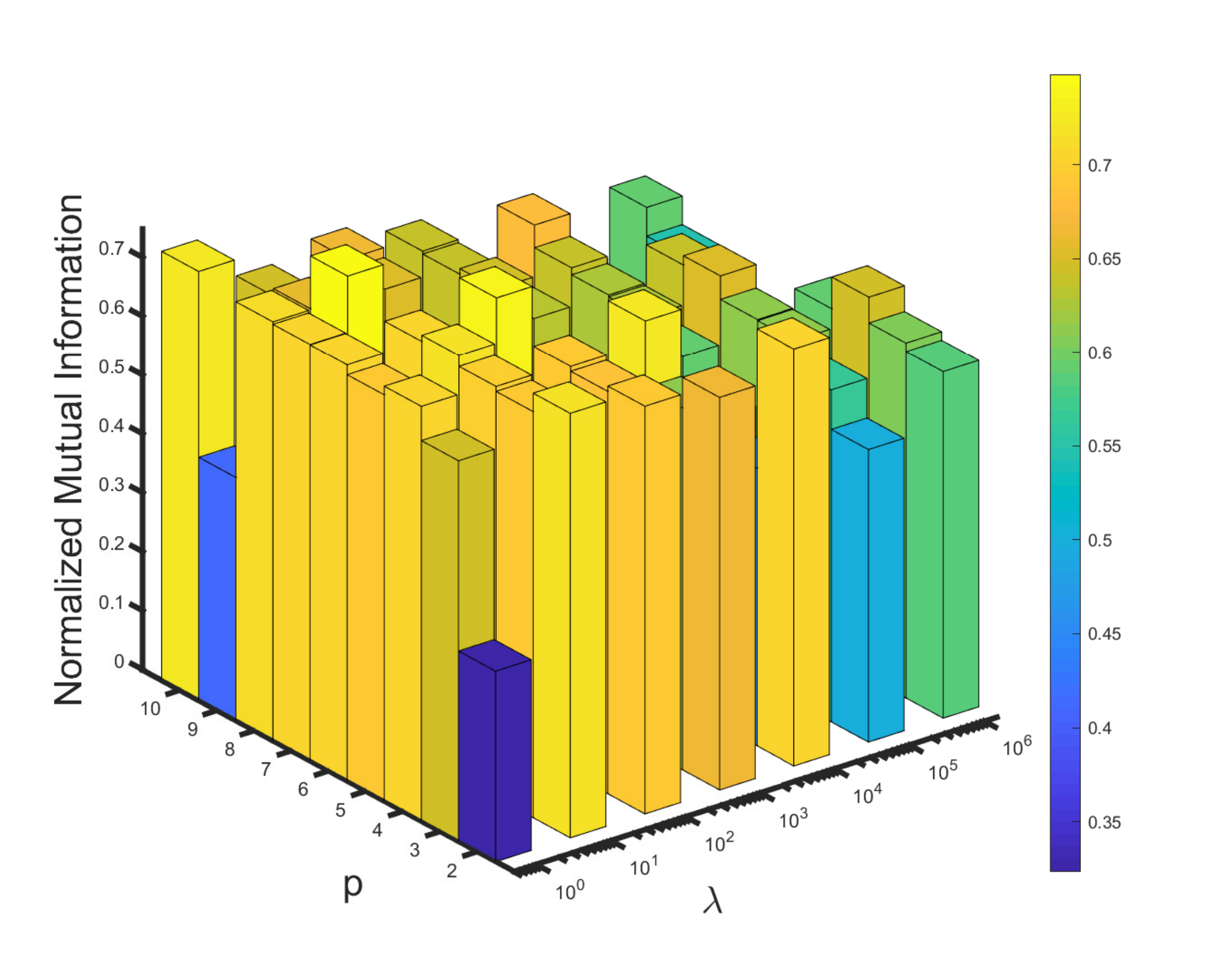}%
\label{Reuters_delta_Coh} \\
\includegraphics[width=1.25in]{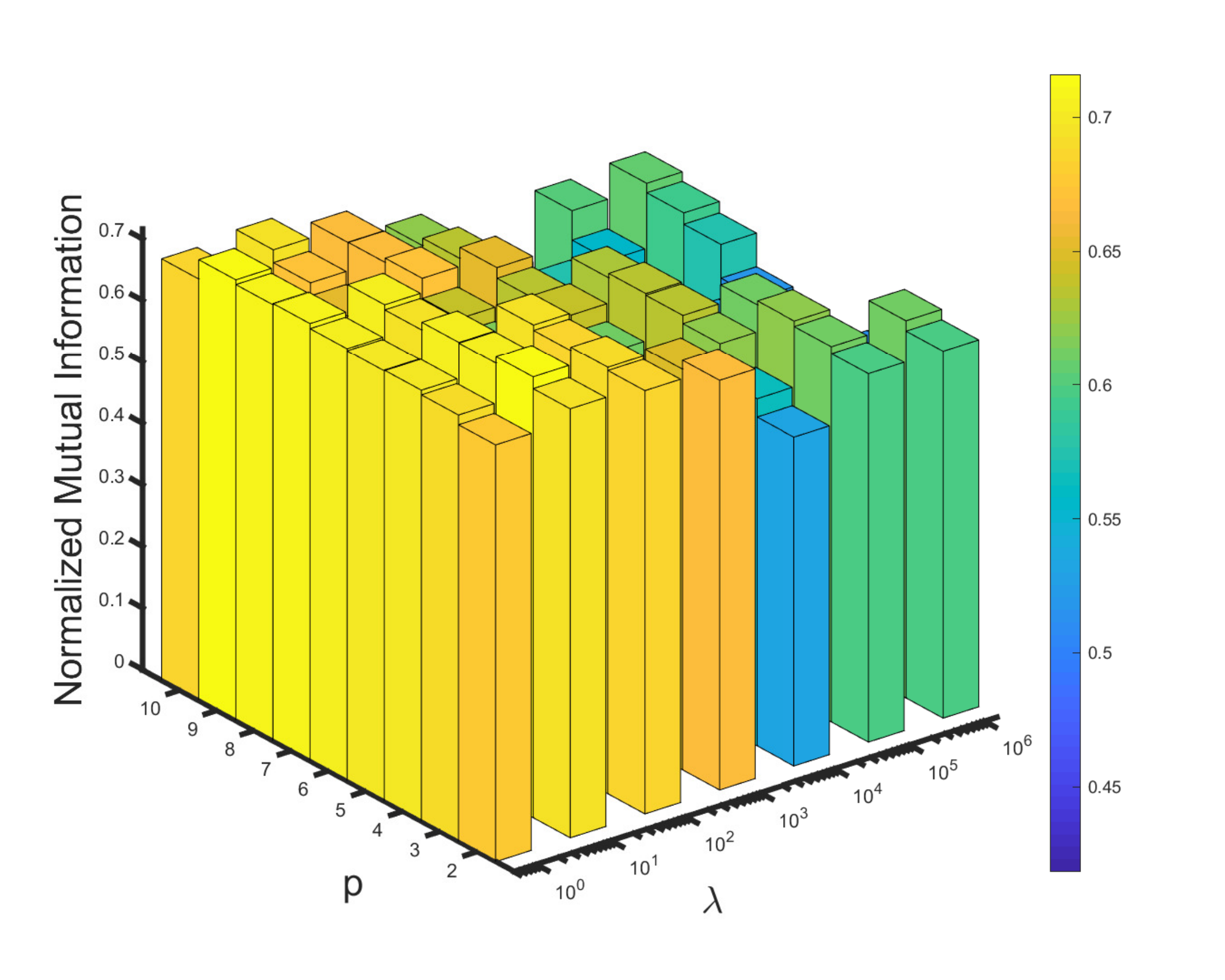}%
\label{Reuters_delta_Coh}
\end{minipage}
}\hspace{-5mm}
\subfloat[]{
\begin{minipage}[b]{0.2\textwidth}
\includegraphics[width=1.25in]{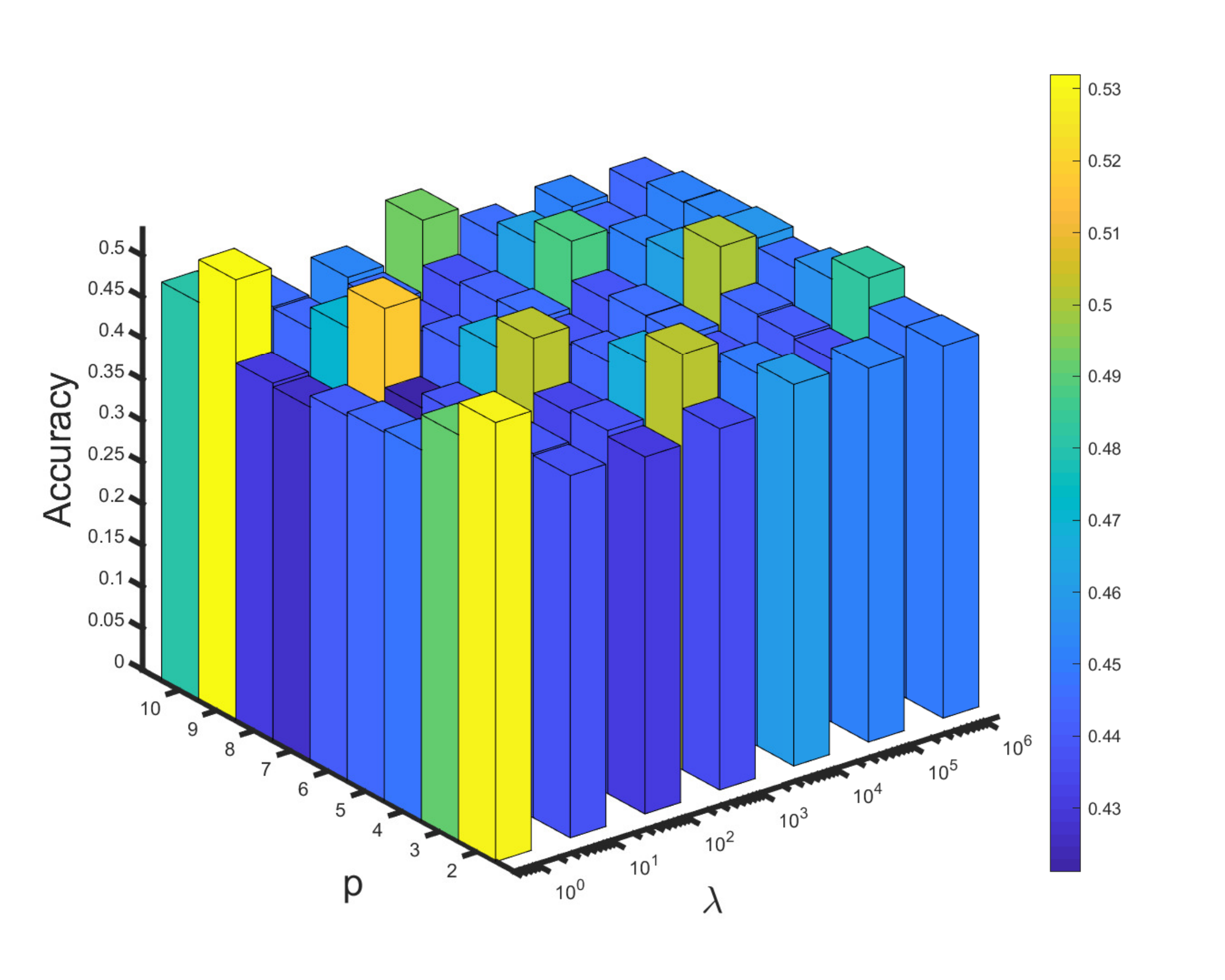}%
\label{Reuters_delta_Coh} \\
\includegraphics[width=1.25in]{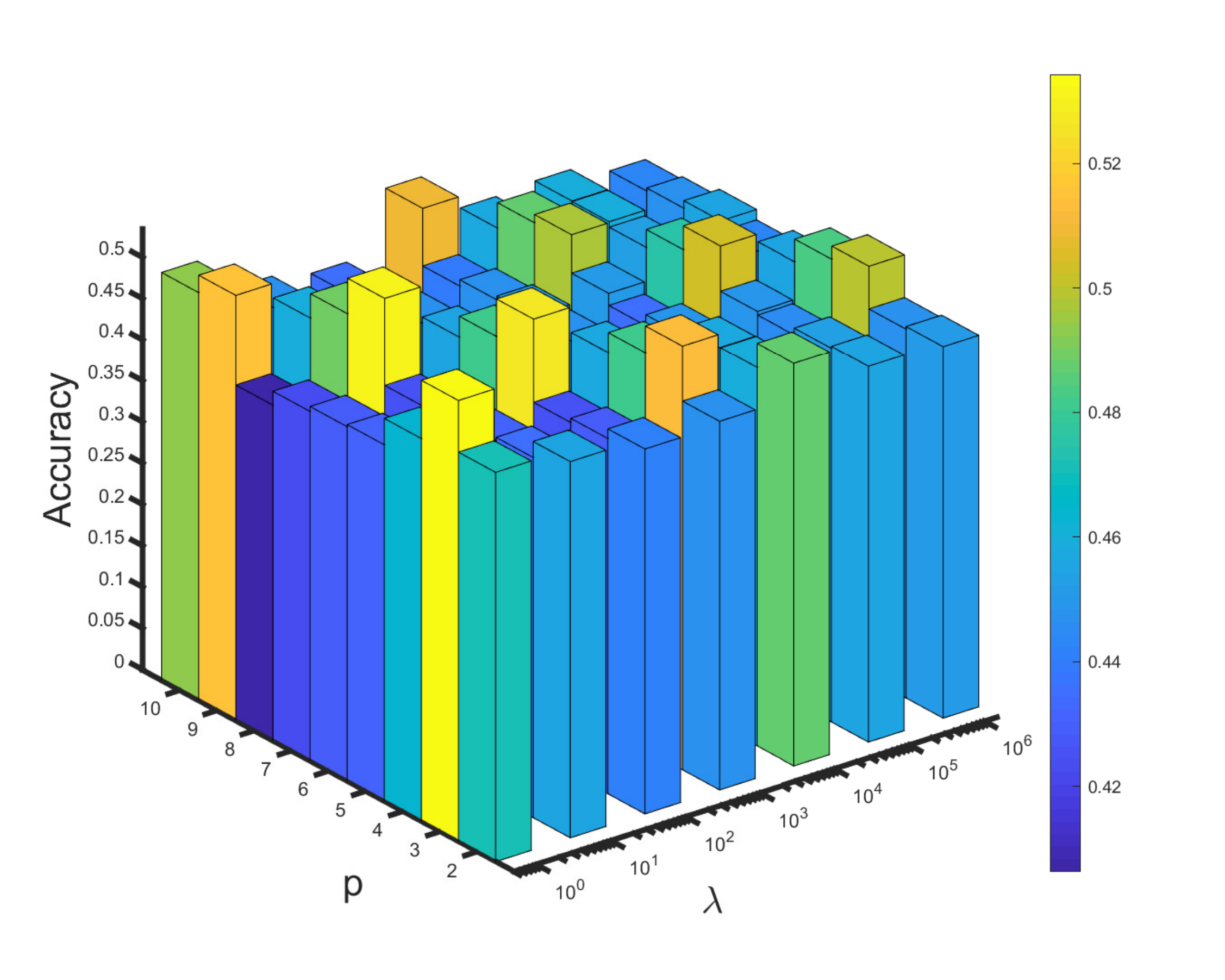}%
\label{Reuters_delta_Coh} \\
\includegraphics[width=1.25in]{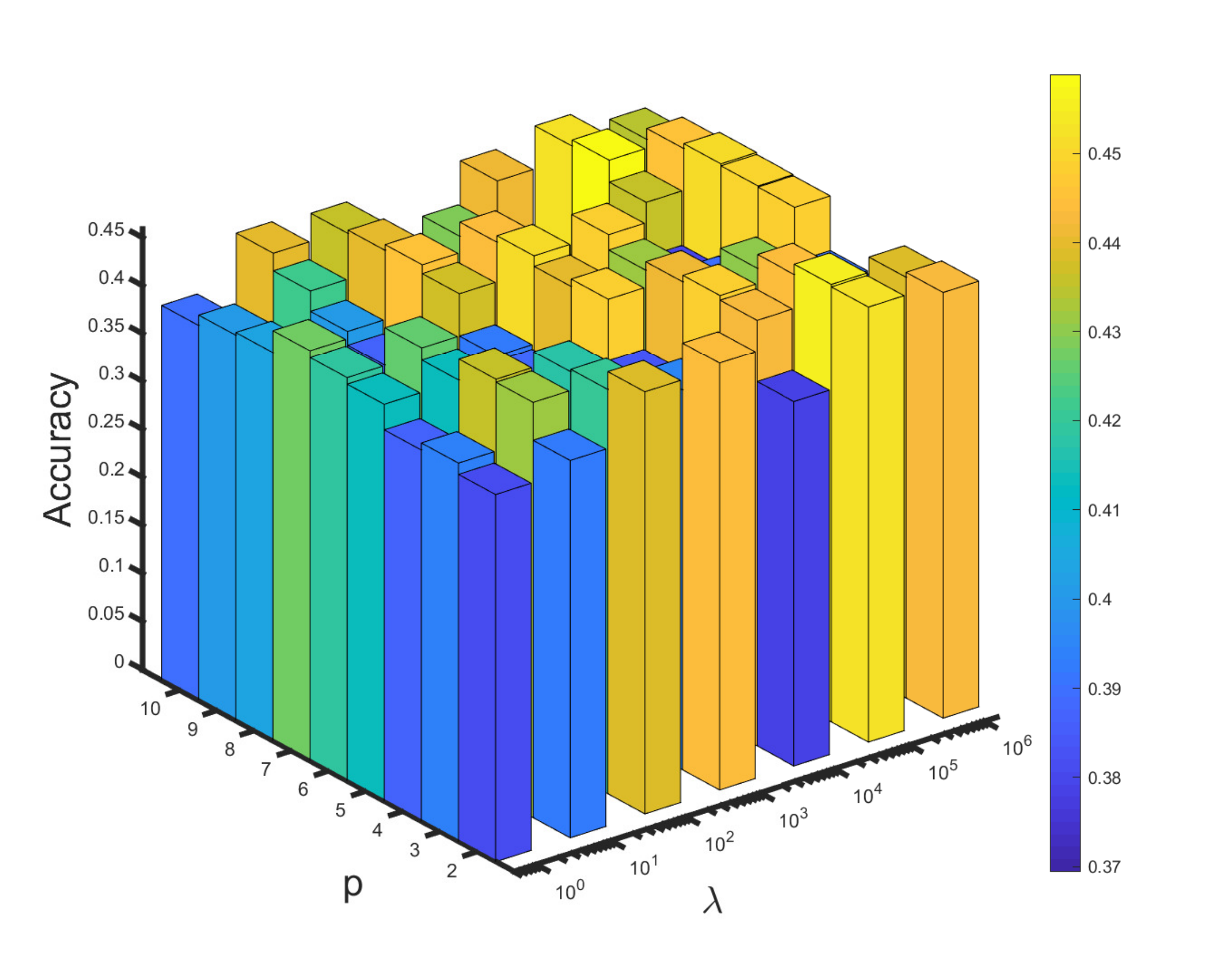}%
\label{Reuters_delta_Coh} \\
\includegraphics[width=1.25in]{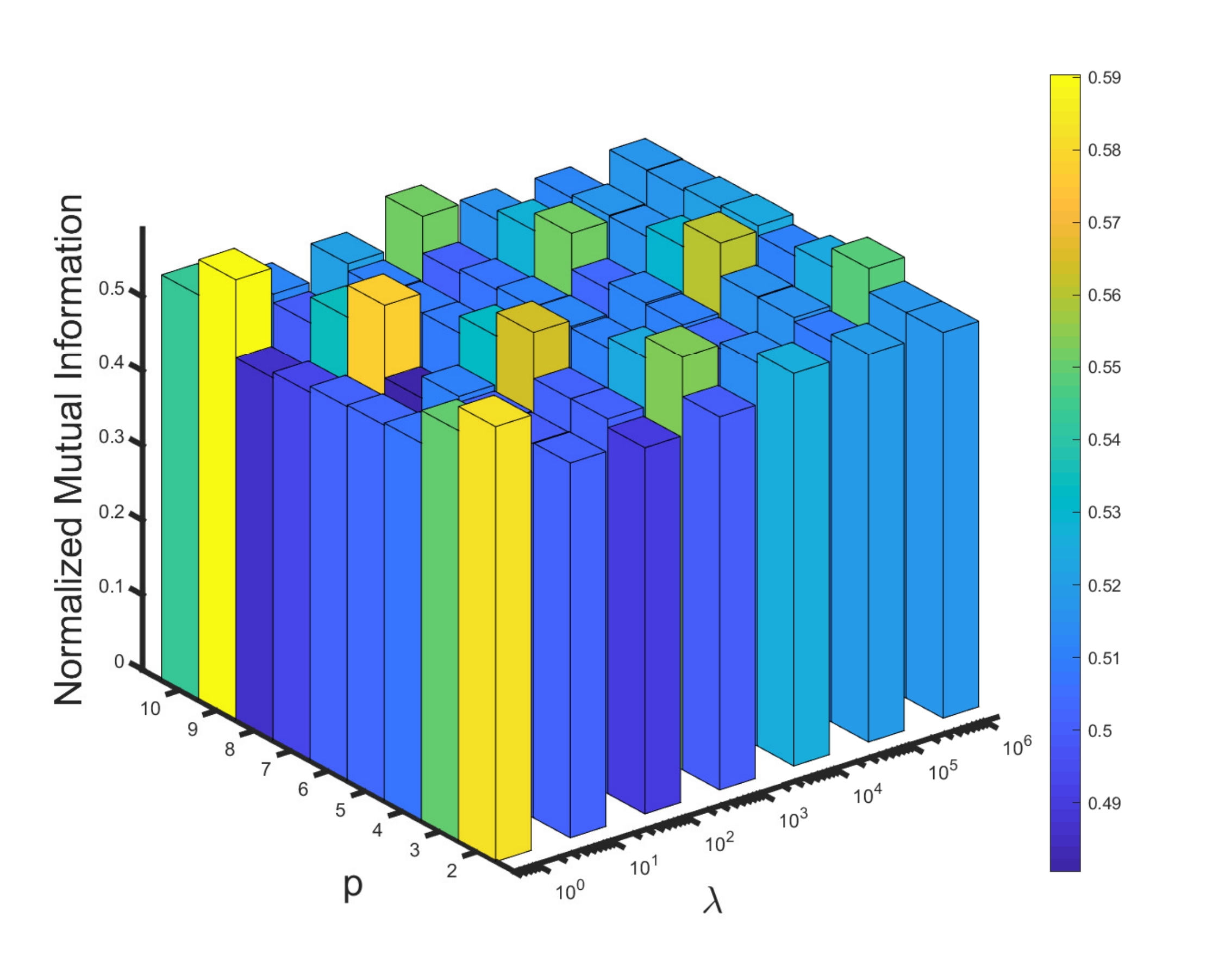}%
\label{Reuters_delta_Coh} \\
\includegraphics[width=1.25in]{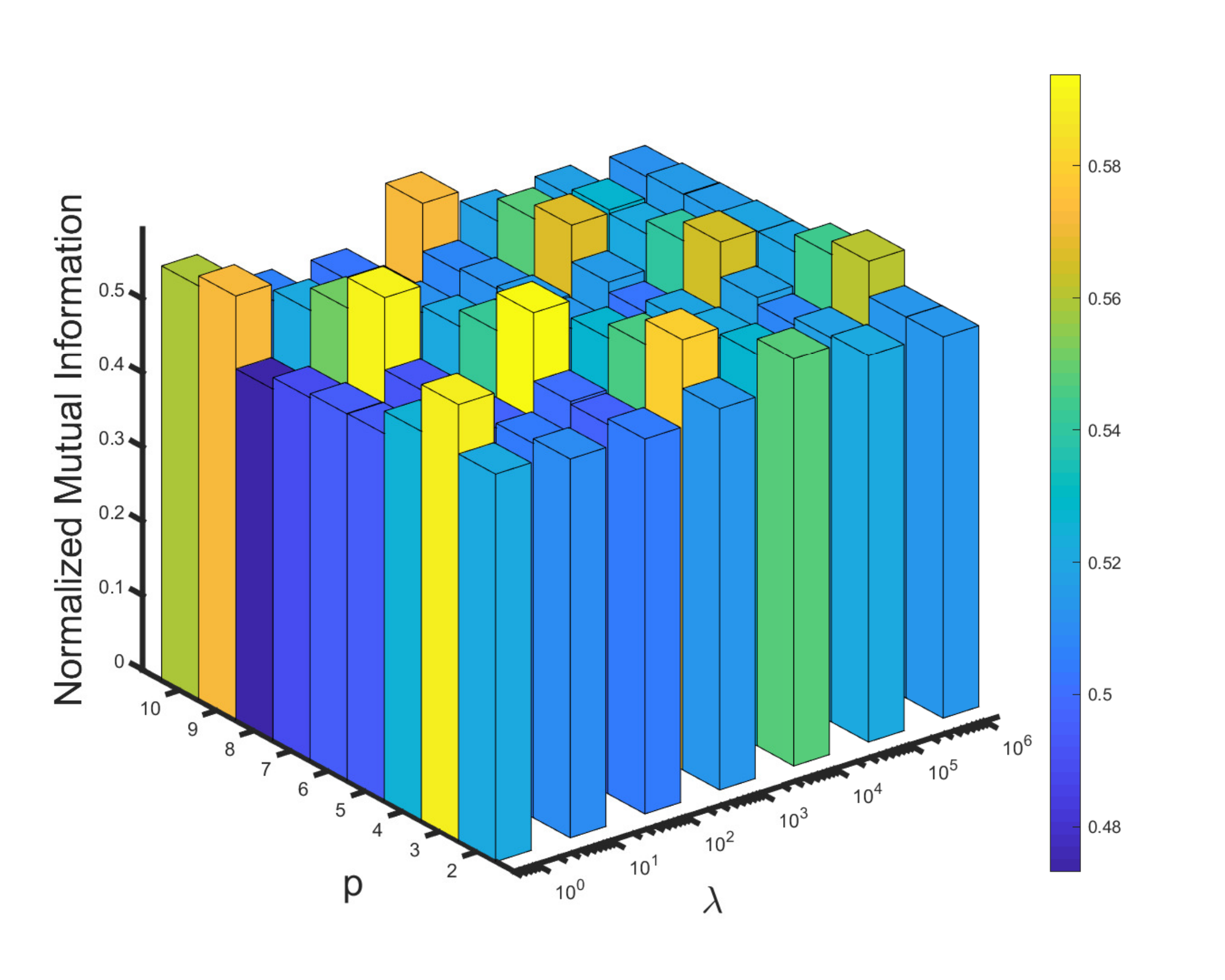}%
\label{Reuters_delta_Coh} \\
\includegraphics[width=1.25in]{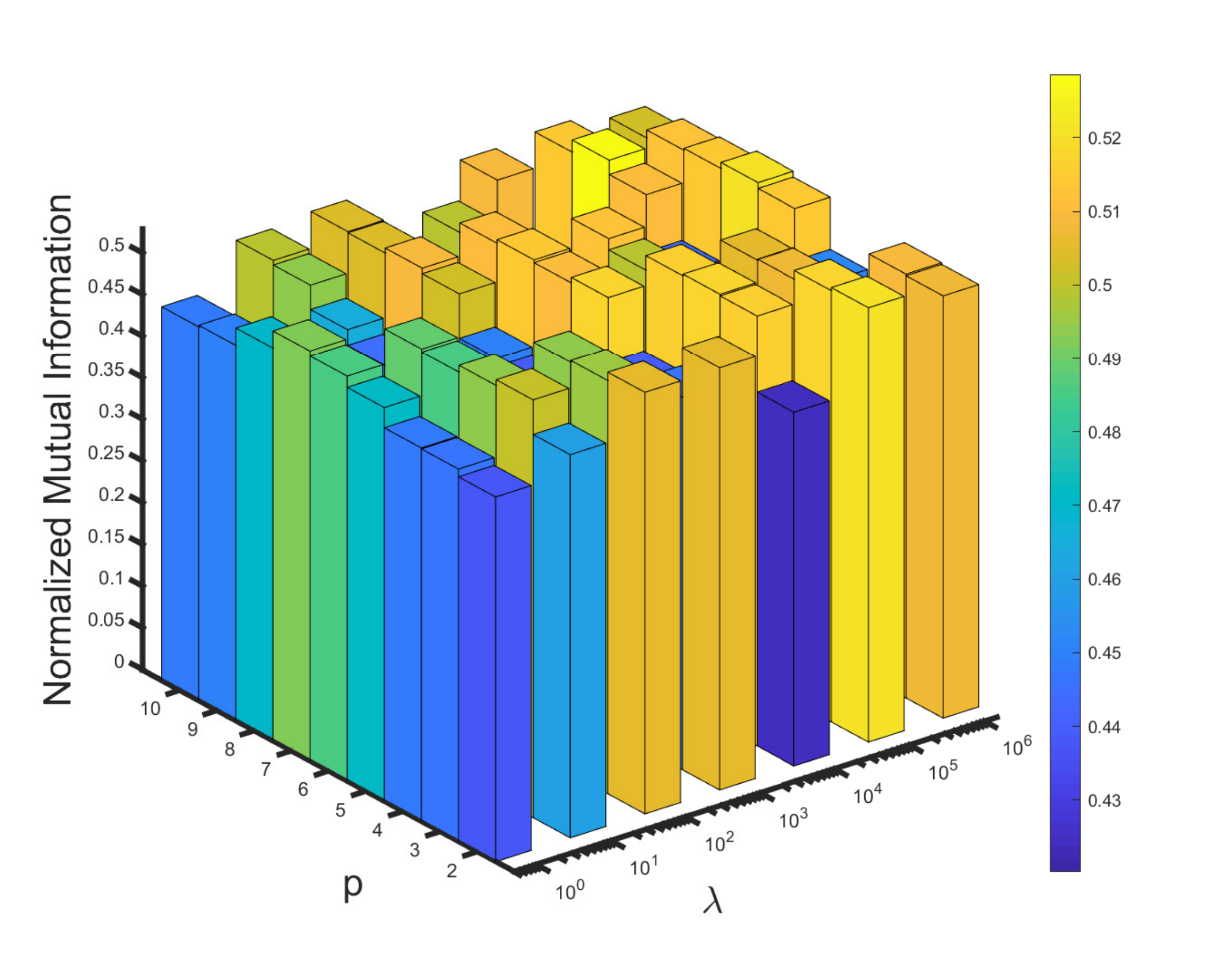}%
\label{Reuters_delta_Coh}
\end{minipage}
}\hspace{-5mm}
\subfloat[]{
\begin{minipage}[b]{0.2\textwidth}
\includegraphics[width=1.25in]{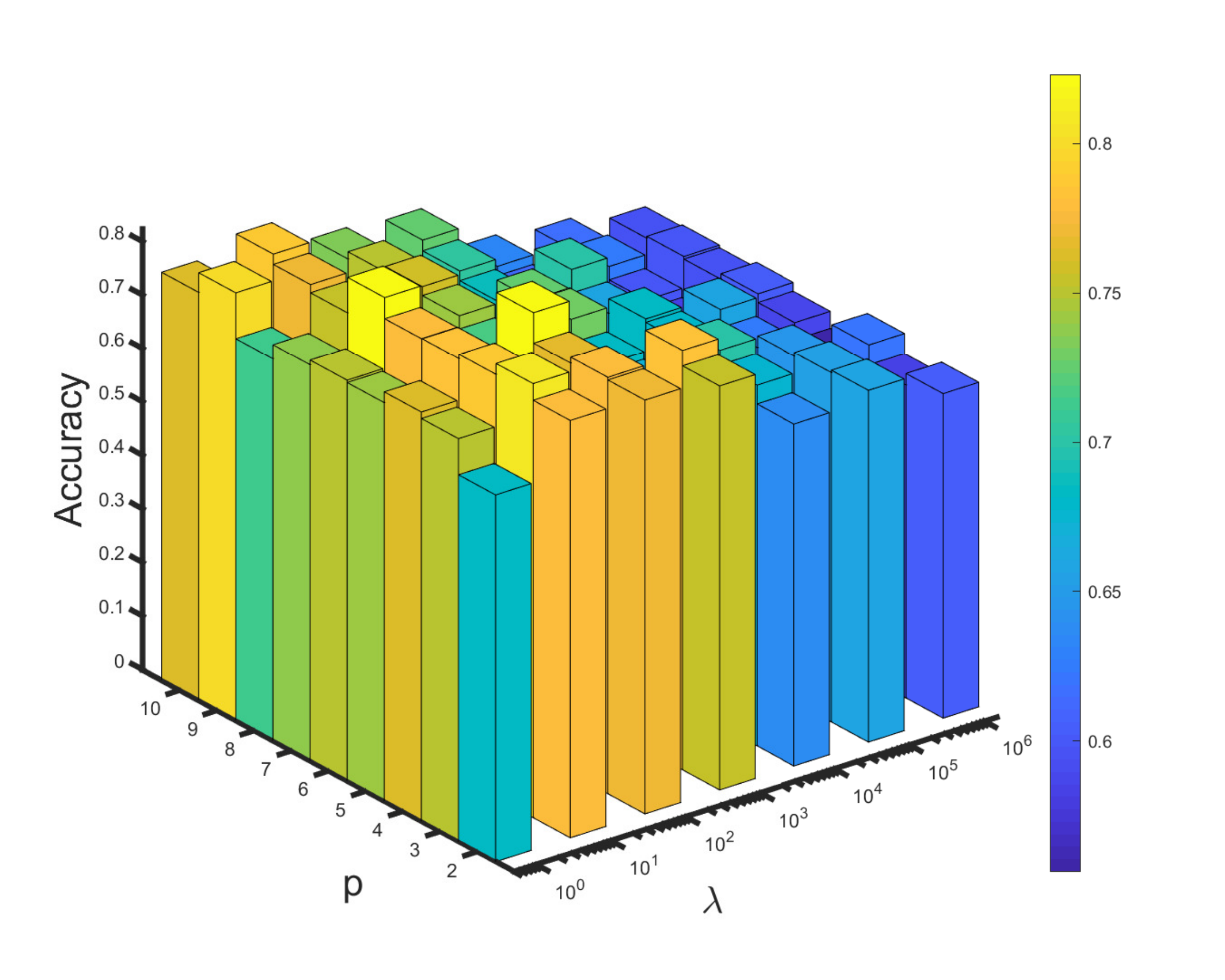}%
\label{Reuters_delta_Coh} \\
\includegraphics[width=1.25in]{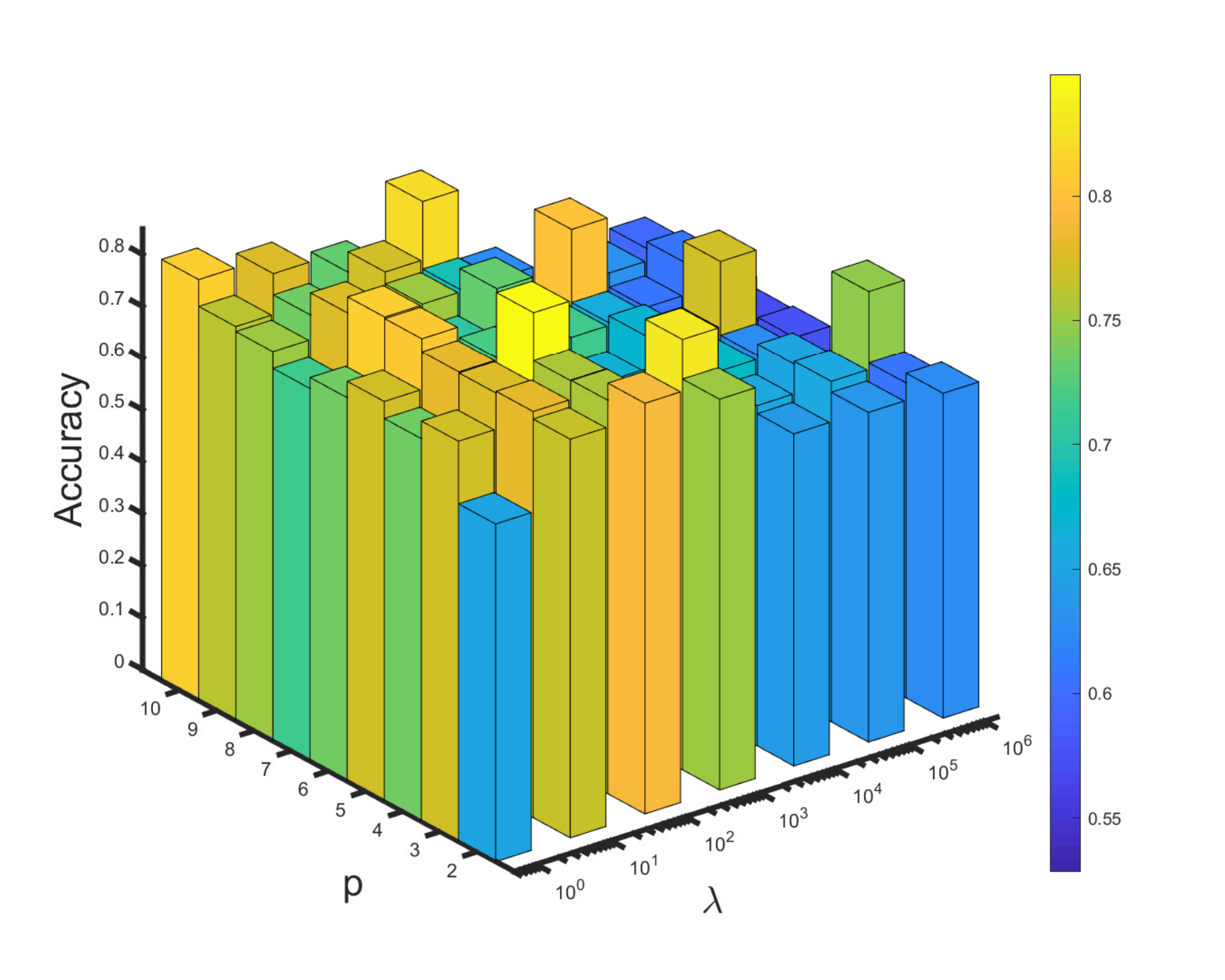}%
\label{Reuters_delta_Coh} \\
\includegraphics[width=1.25in]{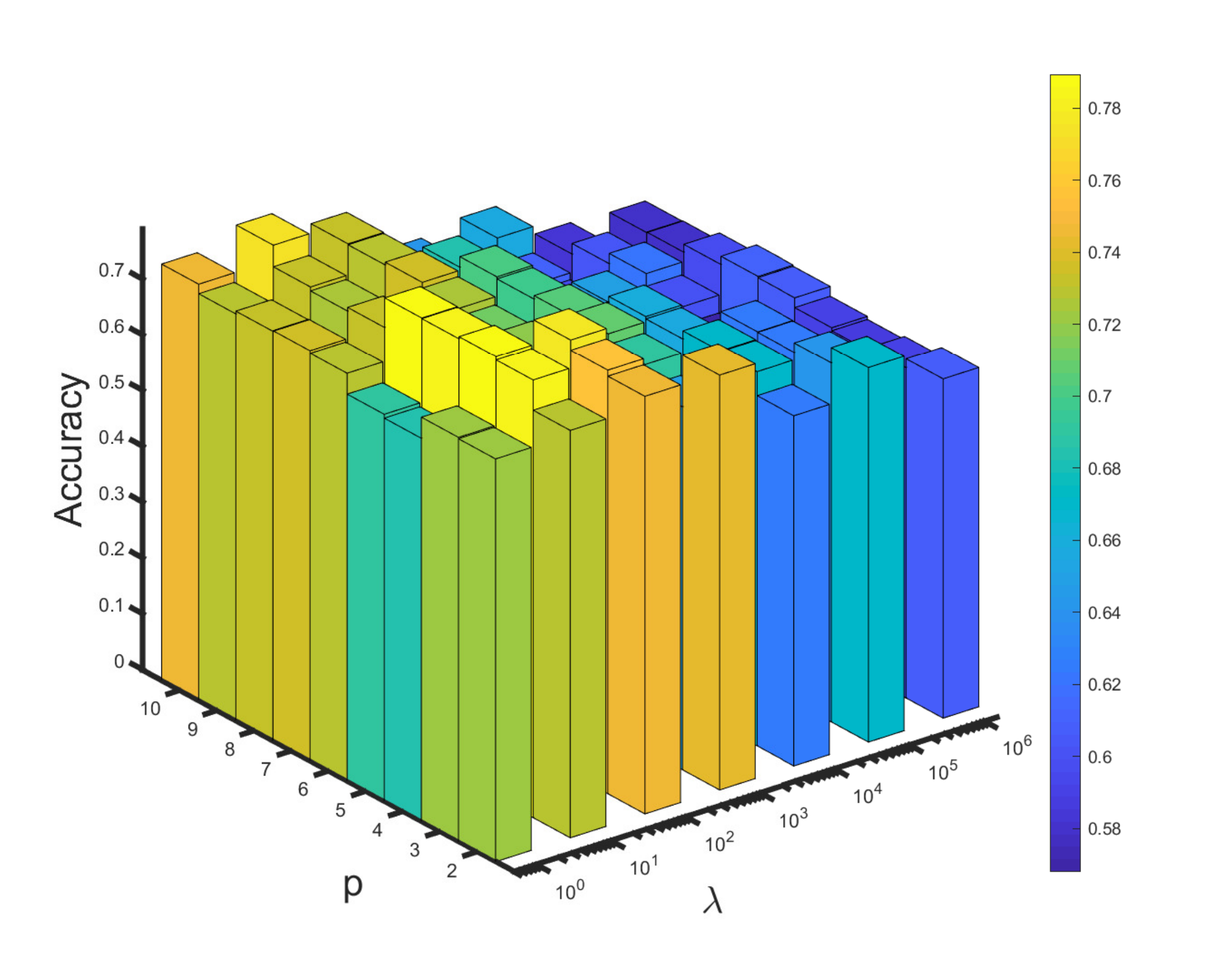}%
\label{Reuters_delta_Coh} \\
\includegraphics[width=1.25in]{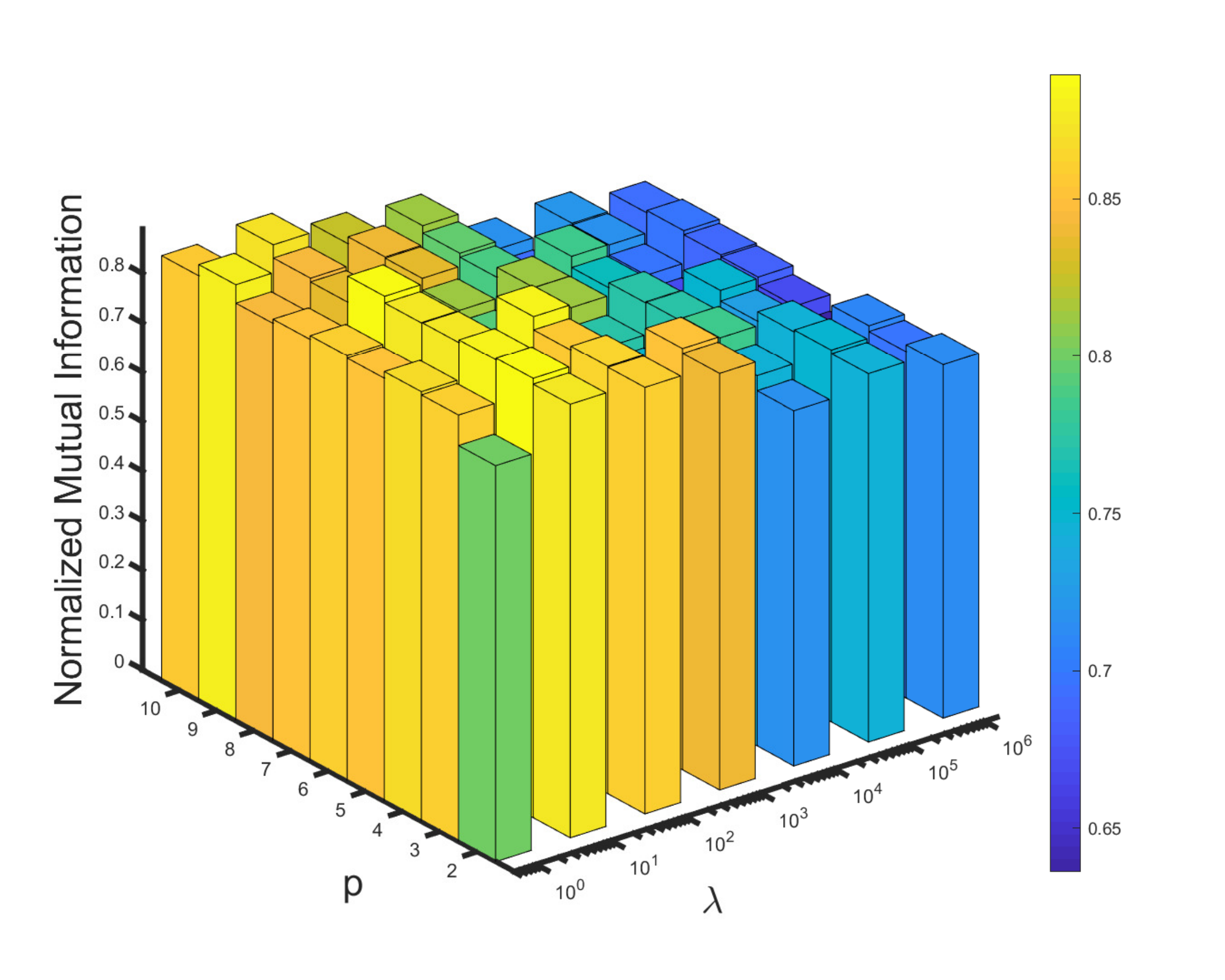}%
\label{Reuters_delta_Coh} \\
\includegraphics[width=1.25in]{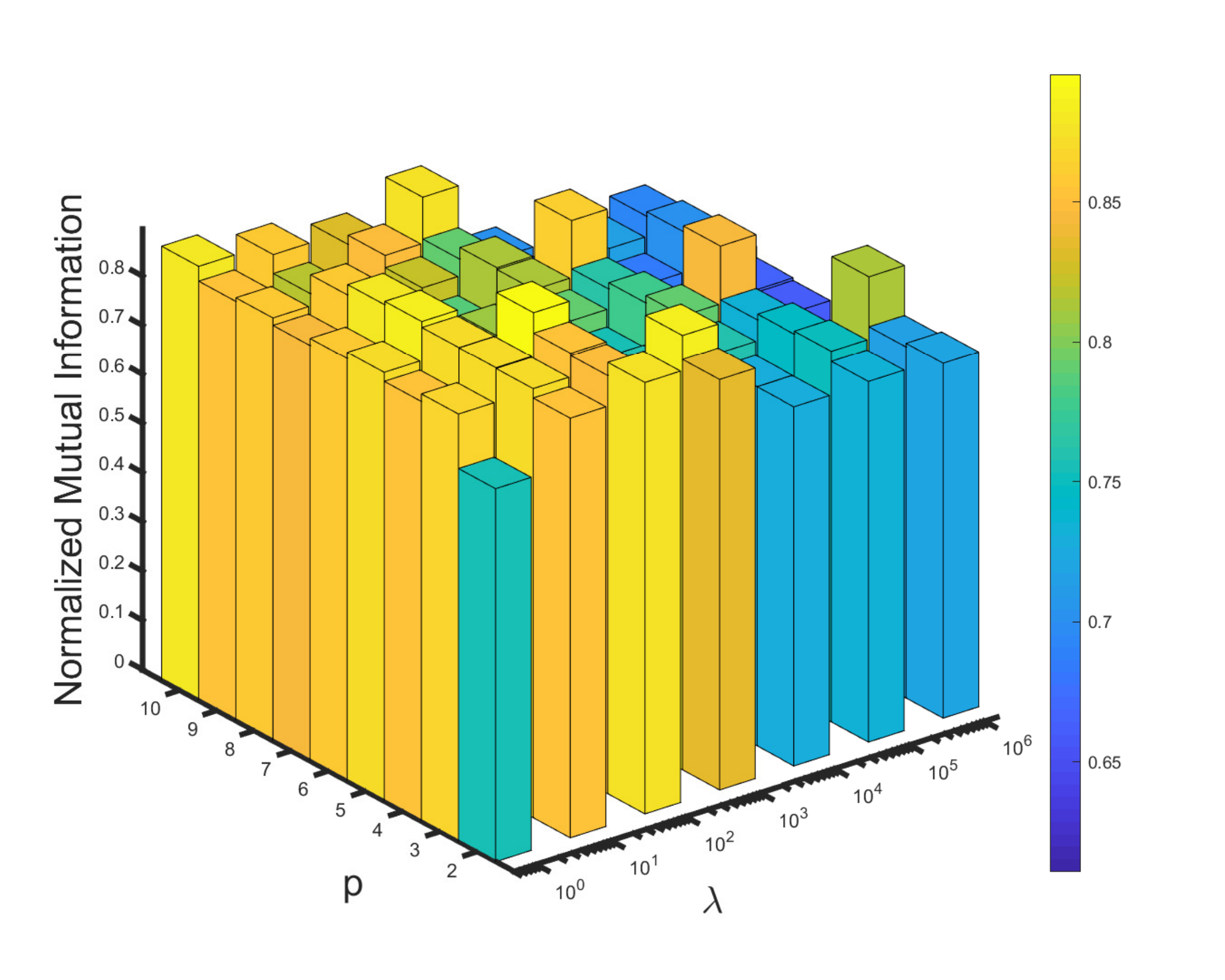}%
\label{Reuters_delta_Coh} \\
\includegraphics[width=1.25in]{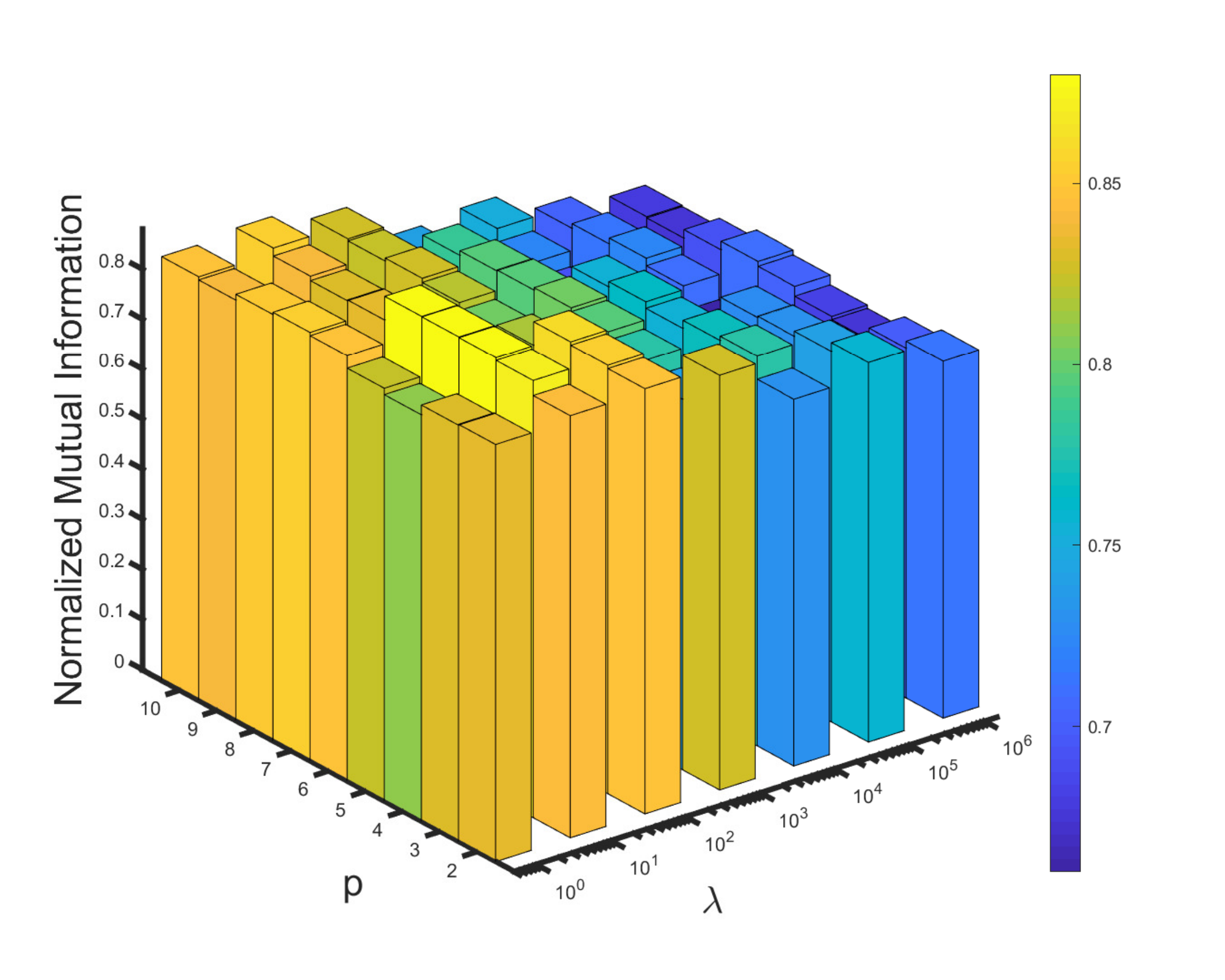}%
\label{Reuters_delta_Coh}
\end{minipage}
}\hspace{-5mm}
\subfloat[]{
\begin{minipage}[b]{0.2\textwidth}
\includegraphics[width=1.25in]{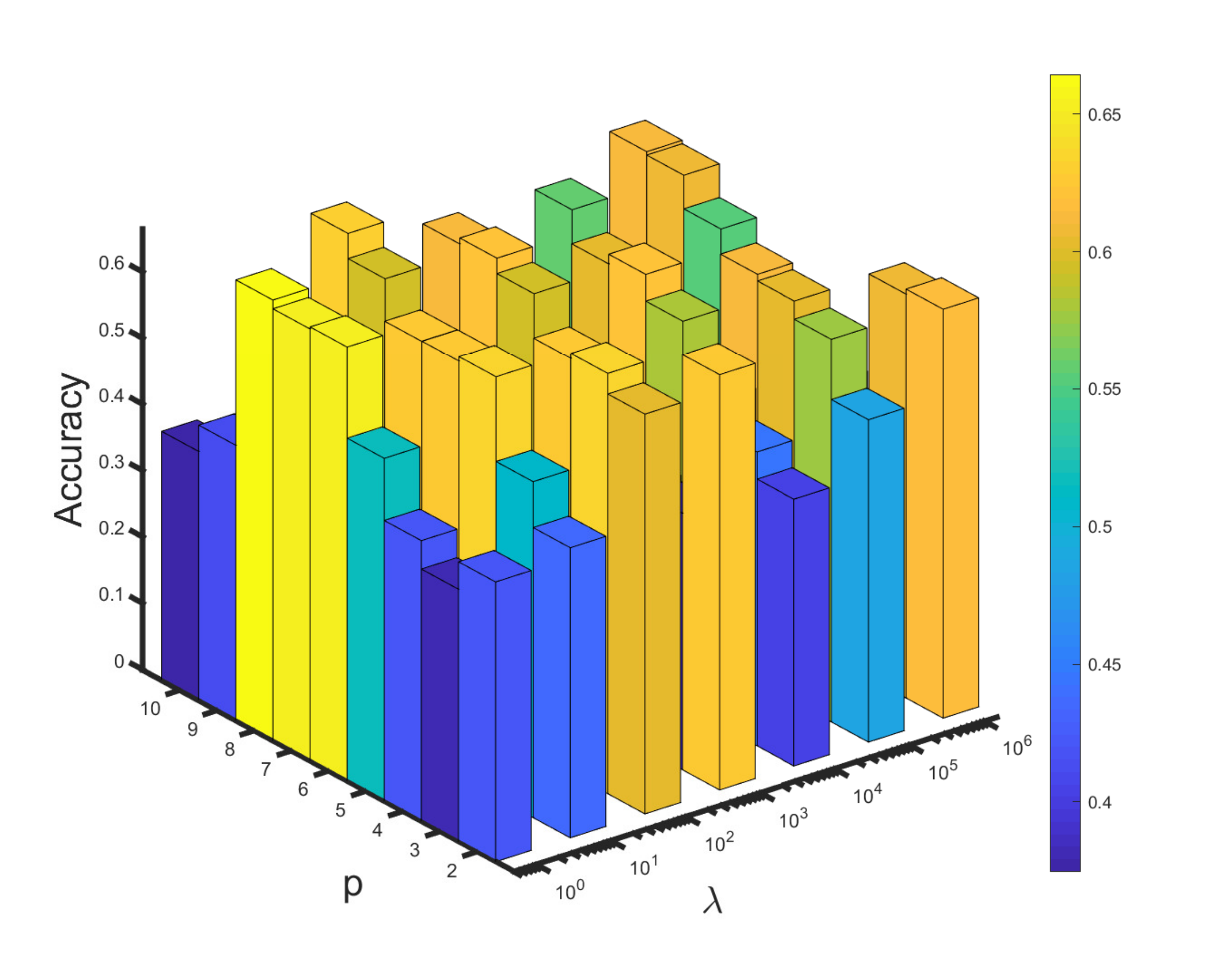}%
\label{Reuters_delta_Coh} \\
\includegraphics[width=1.25in]{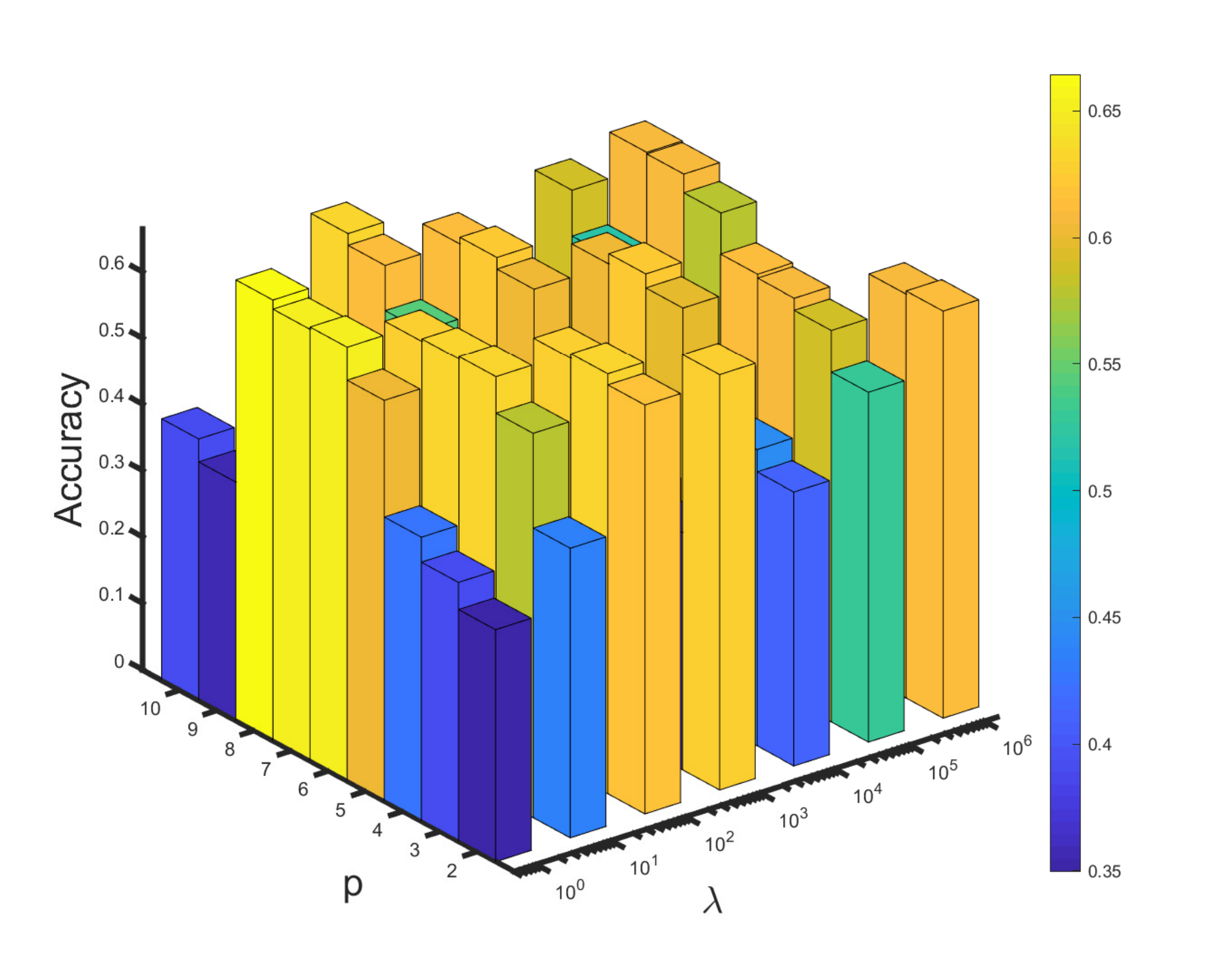}%
\label{Reuters_delta_Coh} \\
\includegraphics[width=1.25in]{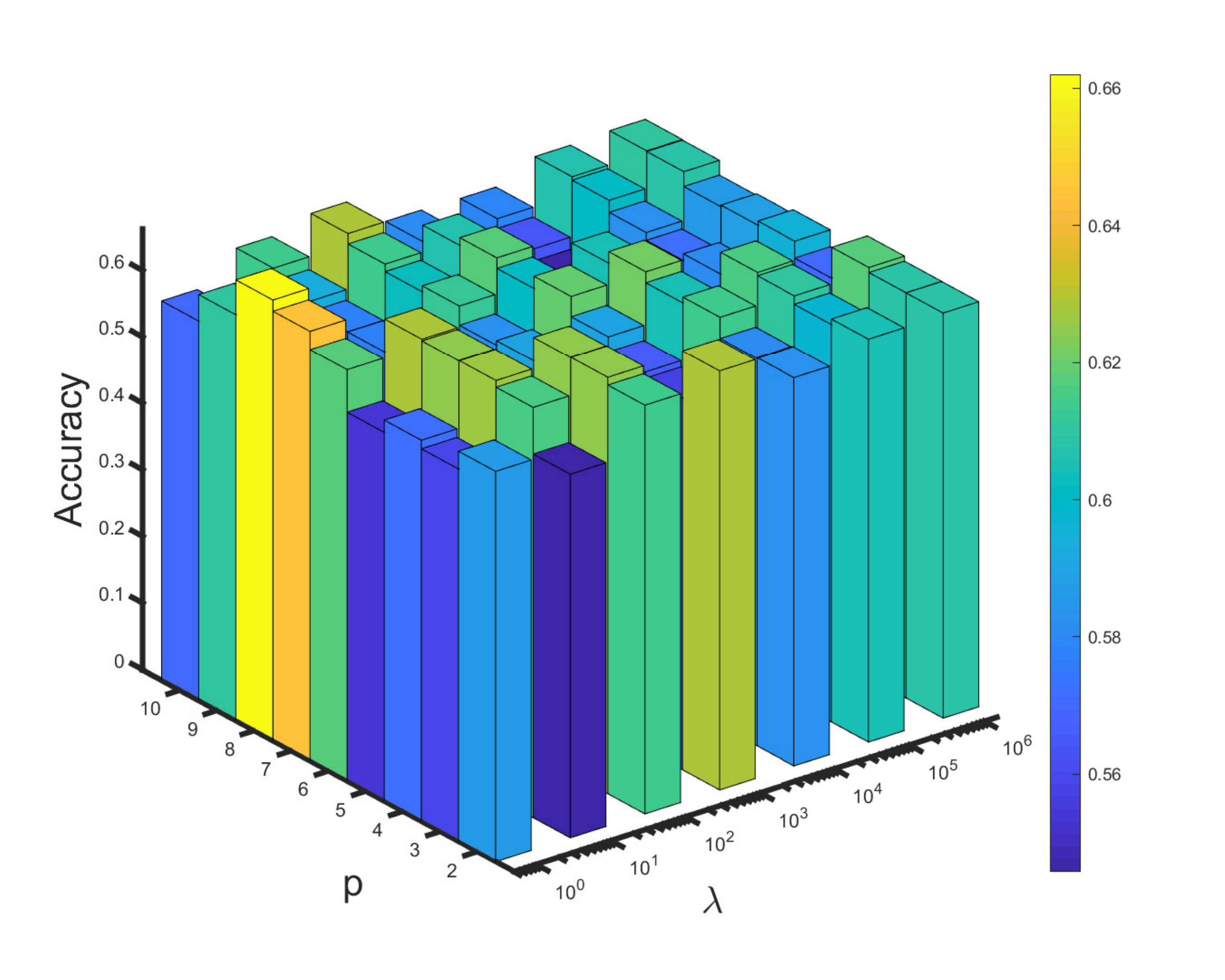}%
\label{Reuters_delta_Coh} \\
\includegraphics[width=1.25in]{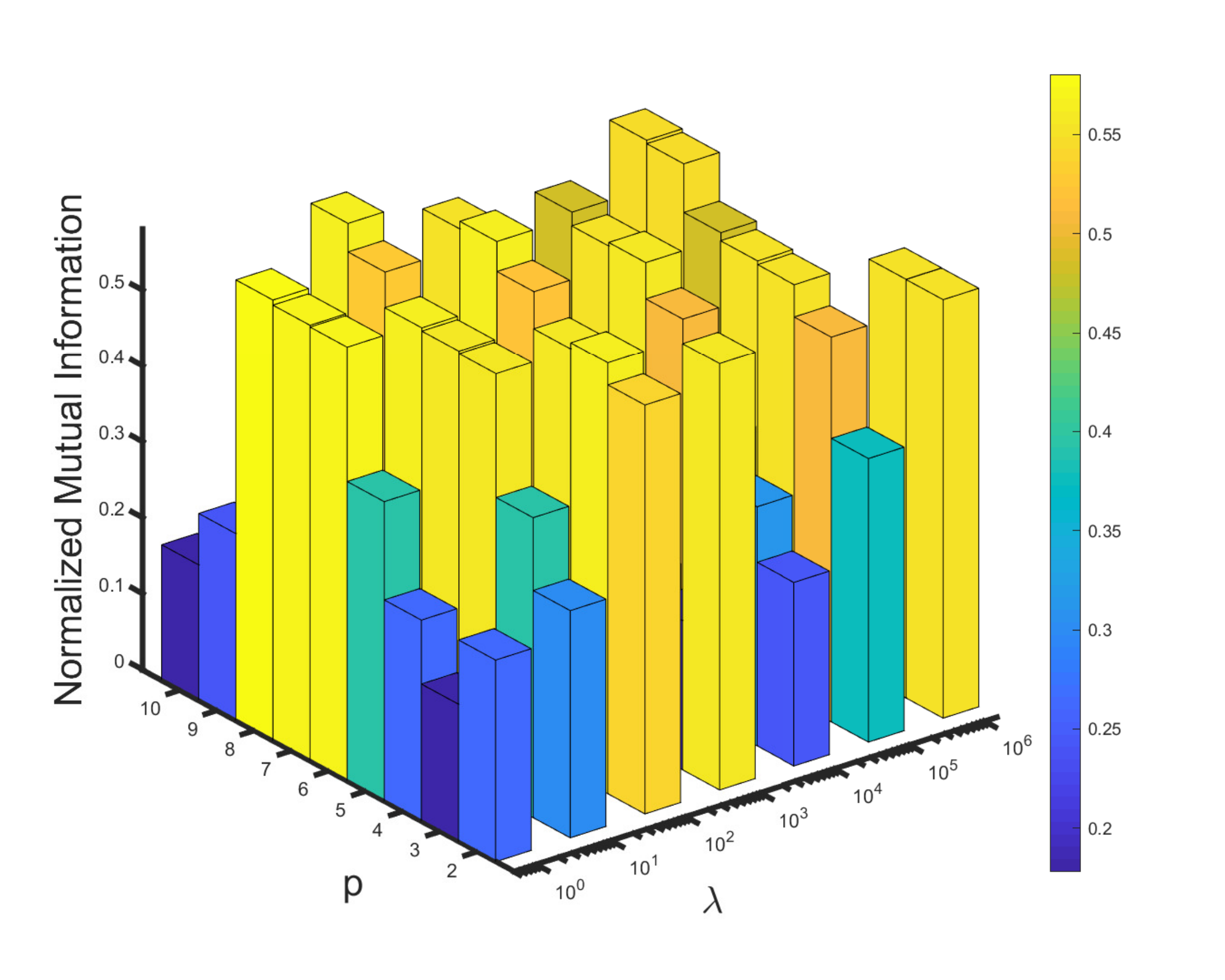}%
\label{Reuters_delta_Coh} \\
\includegraphics[width=1.25in]{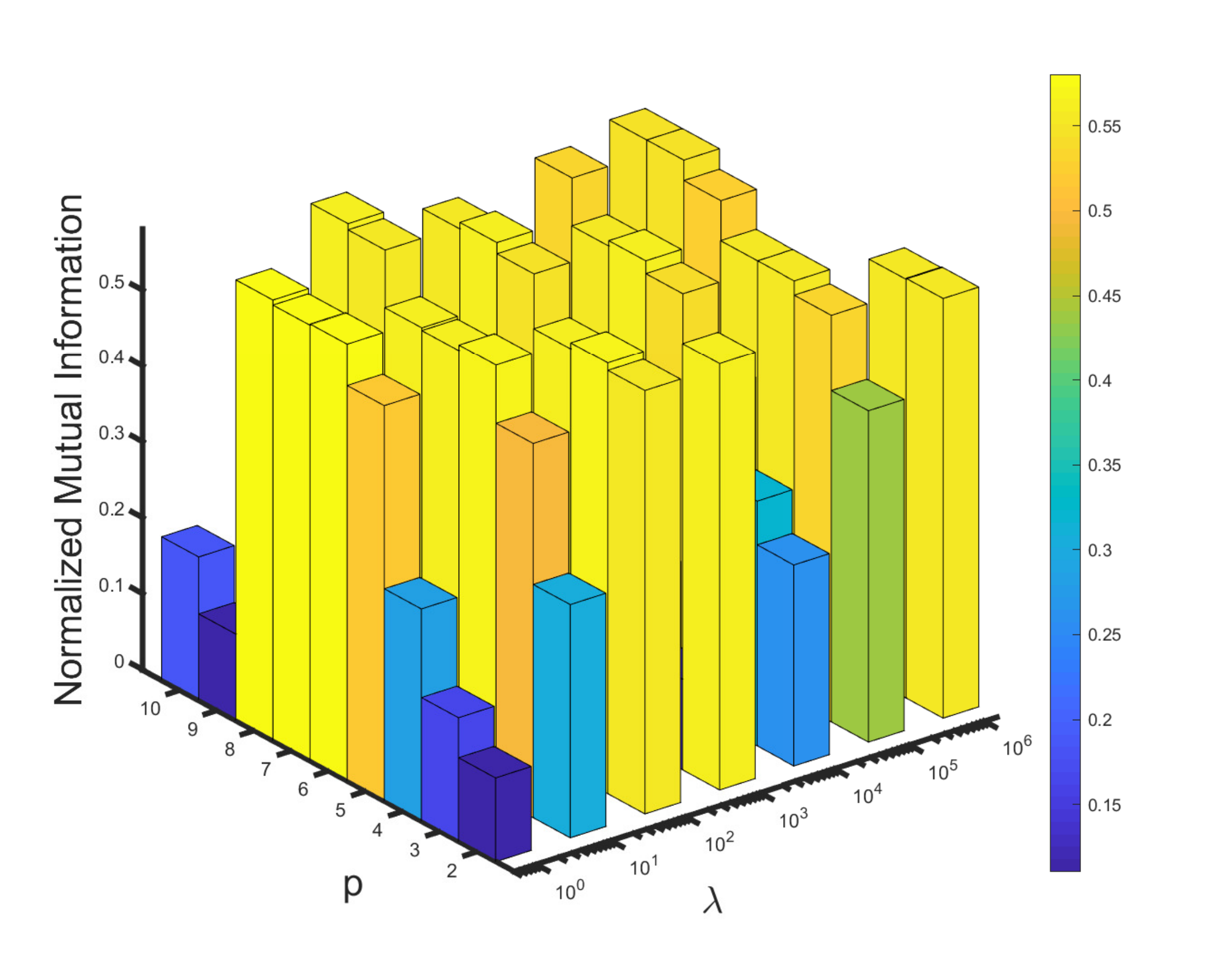}%
\label{Reuters_delta_Coh} \\
\includegraphics[width=1.25in]{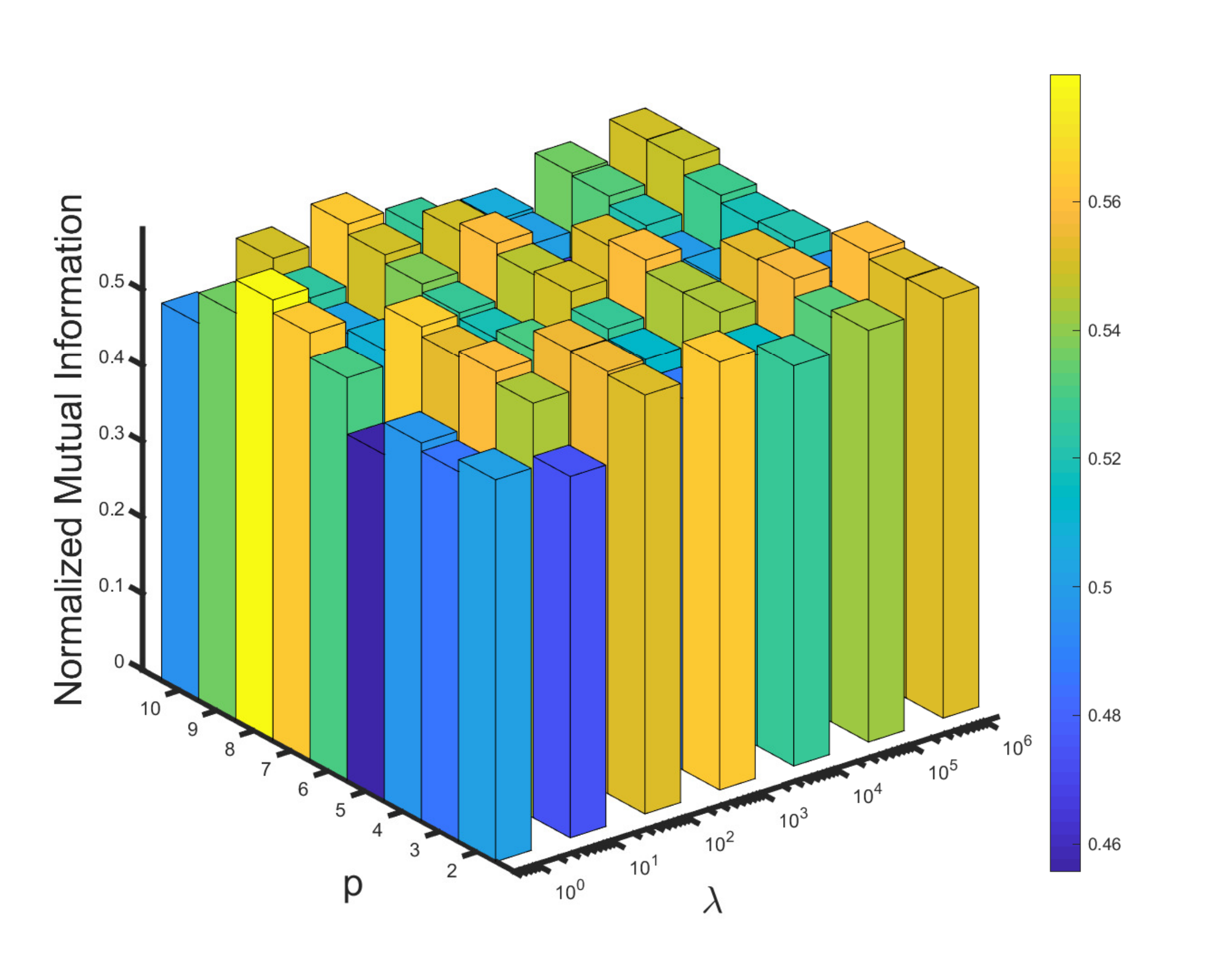}%
\label{Reuters_delta_Coh}
\end{minipage}
}\hspace{-5mm}

\vspace{2mm}
\caption{Sensitive of hyperparatmeters, $\lambda$ and $p$ in terms of ACC and NMI on five datasets, the first three figures show the ACC of RMNTF-Cauchy, RMNTF-CIM, RMNTF-Huber, respectively and they are contaminated by Laplace noise which the deviation is set to 120, and the last show the NMI of RMNTF-Cauchy, RMNTF-CIM, RMNTF-Huber, respectively. (a) Average accuracy and NMI on the subset of COIL100 which contains $10$ categories. (b) Average accuracy and NMI on the subset of FEI which contains $10$ categories. (c) Average accuracy and NMI on the subset of FERET which contains $10$ categories. (b) Average accuracy and NMI on the subset of ORL which contains $10$ categories. (b) Average accuracy and NMI on the subset of USPS which contains $10$ categories. }
\label{fig:hyperparameter}
\end{figure*}

\subsection{Evaluation Metrics}
To evaluate the clustering performance of these algorithms, we adopt two commonly used metrics: 1) accuracy(ACC) and 2) normalized mutual information (NMI). The ACC is defined by
\begin{equation}\label{ACC}
\begin{split}
\operatorname{ACC}({{\mathbf{l}}},{{\mathbf{\hat l}}}) = \frac{1}{n}\sum\limits_{i = 1}^n {\delta [{l_i},\operatorname{map}({{\hat l}_i})]},
\end{split}
\end{equation}
where $n$ is the total number of samples in a dataset. We have the cluster labels ${\mathbf{\hat l}} = \{ {\hat l_1},{\hat l_2}, \cdots ,{\hat l_n}\} $ and the ground-truth labels ${\mathbf{l}} = \{ {l_1},{l_2}, \cdots ,{l_n}\} $. $\delta (x,y)$ is set to $1$ if and only if $x=y$, and $0$ otherwise. $\operatorname{map}(\cdot)$ is a displacement mapping function that maps each cluster label $\hat{l_i}$ to the equivalent label from the dataset.
\par
Another metric NMI is defined by
\begin{equation}\label{NMI}
\begin{split}
\operatorname{NMI} ({\mathbf{l}},{\mathbf{\hat l}}) = \frac{{\operatorname{MI} ({\mathbf{l}},{\mathbf{\hat l}})}}{{\max[\operatorname{H} ({\mathbf{l}}),\operatorname{H} ({\mathbf{\hat l}})]}},
\end{split}
\end{equation}
where $\operatorname{H}(\mathbf{l})$ and $\operatorname{H}(\mathbf{\hat l})$ denote the entropy of $\mathbf{l}$ and $\mathbf{\hat l}$, respectively, and
\begin{equation}\label{NMI}
\begin{split}
\operatorname{MI}({\mathbf{l}},{\mathbf{\hat l}}) = \sum\limits_{{l_i} \in {\mathbf{l}}} {\sum\limits_{{l_i} \in {\mathbf{\hat l}}} {p({l_i},{{\hat l}_i})} } \log_2\left[ {\frac{{p({l_i},{{\hat l}_i})}}{{p({l_i})p({{\hat l}_i})}}} \right],
\end{split}
\end{equation}
${p({l_i})}$ and ${p({{\hat l}_i})}$ represent the marginal probability distribution functions of ${\mathbf{l}}$ and ${\mathbf{\hat l}}$, respectively, and ${p({l_i},{{\hat l}_i})}$ is the joint probability distribution function of ${\mathbf{l}}$ and ${\mathbf{\hat l}}$. $\operatorname{MI}({\mathbf{l}},{\mathbf{\hat l}})$ ranges from 0 to 1, with $\operatorname{MI}({\mathbf{l}},{\mathbf{\hat l}})=1$ if the two sets of clusters are identical, and $\operatorname{MI}({\mathbf{l}},{\mathbf{\hat l}})=0$ otherwise.

\subsection{Main Results}

\subsubsection{Basis Visualization and Convergence}
To compare the ability of extracting parts-based feature of tensor objects by NMF, GNMF, NTD, and RMNTF, respectively, we visualize the basic images extracted by each algorithms on 25 subjects randomly chosen from the AT\&T ORL dataset in Fig. \ref{fig:Reuters_delta1}. From the experimental results, we note that the proposed RMNTF extracts more localized parts of face images than other algorithms, since these images reconstructed from the base are more homogeneous. It means that RMNTD can provide a more sparse representation.
\par
In addition, we investigate the convergence speed of RMNTF. We show the convergence curves of the three implementations of RMNTF on five image datasets in Fig. \ref{fig:Reuters_delta2}. It is shown that the proposed three algorithms exhibit fast convergence rates, usually taking less than 100 iterations.

\subsubsection{Simulated Corruption and Clustering Results}
 In order to evaluate the robustness of RMNTF, we compare our algorithms with the state-of-the-art clustering algorithms on five image datasets contaminated by Laplace noise and salt \& pepper noise. The experimental results on COIL100 and FEI are presented in Fig. \ref{fig:Reuters_delta3} and Fig. \ref{fig:Reuters_delta4}. Due to the space limitation, the results on FERET, ORL and USPS are represented in the supplement file.
\par

Laplace noise and salt \& pepper noise sometimes exist in image corruption. However, the cost function of some traditional methods, such as NMF, usually adopt Euclidean distance. They cannot deal with this kind of data well since the noise distribution is not consistent with the noise assumption.
\par
The simulated Laplace noise obeys a Laplace distribution $La(0,\delta )$. We set the deviation $\delta$ from 40 to 280 and add the noise to each pixels of images randomly. The first two rows in Fig. \ref{fig:Reuters_delta3} and Fig. \ref{fig:Reuters_delta4} show the mean and standard deviations of average accuracy and NMI of RMNTF's three implementations and other nine representative algorithms. The experimental results confirms that RMNTF based method performs better than other methods when the deviation of Laplace noise is within 200. However, when the deviation is excessive and many outliers come into being, performance of all the methods reduce dramatically.
\par
In terms of salt \& pepper noise, we set the percentage of contaminated pixels from 5 to 60 percent for each image. Results are shown in the last two rows in Fig. \ref{fig:Reuters_delta3} and Fig. \ref{fig:Reuters_delta4}. With increase of corrupted pixels, only GNMF is competitive with RMNTF based methods. And they outperform than other methods. When more than 30 percent of pixels are corrupted, performance of all the algorithms is dramatically degraded and gradually reaches unanimity. Because it is difficult to separate outliers from inliers.

{We have conducted experiments on FERET, ORL, and USPS. The results are listed in Appendix A .}

\subsection{Effects of the hyperparameters}
This subsection investigates the effects of the hyperparameters of RMNTF based algorithms on the performance of clustering task. Experiments are conducted on five image datasets contaminated by Laplace noise which the deviation is set to 120. There are two hyperparameters $\lambda$ and $p$ need to be predefined. We report the average accuracy and NMI on 10 categories. For the three implementations of RMNTF, the parameters were setting as follows: fix $p = \{ 2,3,4,5,6,7,8,9,10\}$ and choose $\lambda  \in \{ {10^0},{10^1},{10^2},{10^3},{10^4},{10^5},{10^6}\} $.
\par
Fig. \ref{fig:hyperparameter} shows the effect of hyperparameters. The clustering performances are oscillate while $\lambda  < {10^4}$ and then tend to be stable while $\lambda$ increases in range $[{10^4},{10^6}]$. Therefore, $\lambda$ can be selected around $10^4$. It can be seen that the clustering performances are quite stable while the integer $p$ is in range $[5,8]$, which means that $p$ hardly influences RMNTF. In total, our methods are probably robust to $\lambda$ and they tend to achieve better performance when $\lambda$ is slightly smaller, but they may be influenced by hyperparameter $p$.

\section{Conclusion}
\label{Conclusion}

In this paper, we explored three robust cost function for manifold structure of nonnegative tucker factorization.
To deal with the minimization of non-convex cost functions, we derive an iterative half-quadratic minimization optimization.
Then, the optimization problem can be reduced to a weighted Euclidean distance of NTF.
The proposed methods are further utilizing manifold structure information to enhance the performance of accuracy and avoid the rotational ambiguity.
Due to the connection between robust loss functions and robust M-estimators, we adopt CIM, Huber and Cauchy functions to replace traditional Euclidean loss function.
The proposed methods combine manifold structures with robust loss function to improve the clustering accuracy under noisy data and outliers.
We investigated the effective of Laplacian noise and Salt \& Pepper noise on the performance of the models respectively.
Under a small degree of noise interference, the proposed algorithms improve greatly compared with other algorithms.
For example, in FEI database, when the deviation value of the Laplacian noise is 50, the accuracy of our methods are absolutely improved by 10\% compared with GNTD.
In general, experimental results show that the proposed methods outperform the comparison methods in terms of clustering accuracy and normalized mutual information under noisy data or outliers.





\bibliographystyle{elsarticle-num}
\bibliography{refs}

\begin{thebibliography}{10}
\expandafter\ifx\csname url\endcsname\relax
  \def\url#1{\texttt{#1}}\fi
\expandafter\ifx\csname urlprefix\endcsname\relax\def\urlprefix{URL }\fi
\expandafter\ifx\csname href\endcsname\relax
  \def\href#1#2{#2} \def\path#1{#1}\fi

\bibitem{kim2007nonnegative}
Y.-D. Kim, S.~Choi, Nonnegative tucker decomposition, in: Proc. IEEE Conf.
  Comput. Vis. Pattern Recognit., Minneapolis, MN, USA, 2007, pp. 1--8.

\bibitem{schein2016bayesian}
A.~Schein, M.~Zhou, D.~Blei, H.~Wallach, Bayesian poisson tucker decomposition
  for learning the structure of international relations, in: Proceedings of the
  33rd International Conference on Machine Learning (ICML-16), PMLR, 2016, pp.
  2810--2819.

\bibitem{karami2012compression}
A.~Karami, M.~Yazdi, G.~Mercier, Compression of hyperspectral images using
  discerete wavelet transform and tucker decomposition, IEEE J. Sel. Topics
  Appl. Earth Observ. Remote Sens. 5~(2) (2012) 444--450.

\bibitem{cichocki2009nonnegative}
A.~Cichocki, R.~Zdunek, A.~H. Phan, S.~Amari, Nonnegative matrix and tensor
  factorizations: applications to exploratory multi-way data analysis and blind
  source separation, John Wiley \& Sons, 2009.

\bibitem{sun2015heterogeneous}
Y.~Sun, J.~Gao, X.~Hong, B.~Mishra, B.~Yin, Heterogeneous tensor decomposition
  for clustering via manifold optimization, IEEE Trans. Pattern Anal. Mach.
  Intell 38~(3) (2016) 476--489.

\bibitem{wold1987principal}
H.~Abdi, L.~J. Williams, Principal component analysis, Wiley Interdiscipl. Rev.
  Comput. Stat. 2~(4) (2010) 433--459.

\bibitem{gray1984vector}
R.~M. Gray, Vector quantization, IEEE Assp Mag. 1~(2) (1984) 4--29.

\bibitem{lee1999learning}
D.~Lee, H.~Seung, Learning the parts of objects by non-negative matrix
  factorization, Nature 401 (Oct. 1999) 788--791.

\bibitem{wang2021deep}
J.~Wang, X.~Zhang, Deep nmf topic modeling, Neurocomputing 515 (2023) 157--173.

\bibitem{kolda2009tensor}
T.~G. Kolda, B.~W. Bader, Tensor decompositions and applications, SIAM Rev.
  51~(3) (2009) 455--500.

\bibitem{zhou2019probabilistic}
Y.~Zhou, H.~Lu, Y.-M. Cheung, Probabilistic rank-one tensor analysis with
  concurrent regularizations, IEEE Trans. Cybern. 51~(7) (2021) 3496--3509.

\bibitem{shi2017tensor}
Q.~Shi, H.~Lu, Y.-M. Cheung, Tensor rank estimation and completion via cp-based
  nuclear norm, in: Proc. ACM Conf. Inf. Knowl. Manage., Nov. 2017, pp.
  949--958.

\bibitem{shi2018feature}
Q.~Shi, Y.-M. Cheung, Q.~Zhao, H.~Lu, Feature extraction for incomplete data
  via low-rank tensor decomposition with feature regularization, IEEE Trans.
  Neural Netw. Learn. Syst. 30~(6) (Jun. 2018) 1803--1817.

\bibitem{zhou2019bayesian}
Y.~Zhou, Y.-M. Cheung, Bayesian low-tubal-rank robust tensor factorization with
  multi-rank determination, IEEE Trans. Pattern Anal. Mach. Intell. 43~(1)
  (May. 2019) 62--76.

\bibitem{tipping1999probabilistic}
M.~E. Tipping, C.~M. Bishop, Probabilistic principal component analysis, J.
  Roy. Stat. Soc. B (Stat. Methodol.) 61~(3) (1999) 611--622.

\bibitem{tucker1966some}
L.~R. Tucker, Some mathematical notes on three-mode factor analysis,
  Psychometrika 31~(3) (Sep. 1966) 279--311.

\bibitem{de2000multilinear}
L.~De~Lathauwer, B.~De~Moor, J.~Vandewalle, A multilinear singular value
  decomposition, SIAM J. Matrix Anal. Appl. 21~(4) (2000) 1253--1278.

\bibitem{ding2008convex}
C.~H. Ding, T.~Li, M.~I. Jordan, Convex and semi-nonnegative matrix
  factorizations, IEEE Trans. Pattern Anal. Mach. Intell. 32~(1) (Jan. 2010)
  45--55.

\bibitem{cai2010graph}
D.~Cai, X.~He, J.~Han, T.~Huang, Graph regularized nonnegative matrix
  factorization for data representation, IEEE Trans. Pattern Anal. Mach.
  Intell. 33~(8) (Aug. 2010) 1548--1560.

\bibitem{liu2011constrained}
H.~Liu, Z.~Wu, X.~Li, D.~Cai, T.~Huang, Constrained nonnegative matrix
  factorization for image representation, IEEE Trans. Pattern Anal. Mach.
  Intell. 34~(7) (Jul. 2012) 1299--1311.

\bibitem{pan2019generalized}
J.~Pan, N.~Gillis, Generalized separable nonnegative matrix factorization, IEEE
  Trans. Pattern Anal. Mach. intell. 43~(5) (May. 2021) 1546--1561.

\bibitem{guan2017truncated}
N.~Guan, T.~Liu, Y.~Zhang, D.~Tao, L.~S. Davis, Truncated cauchy non-negative
  matrix factorization, IEEE Trans. Pattern Anal. Mach. intell. 41~(1) (Jan.
  2019) 246--259.

\bibitem{haeffele2019structured}
B.~D. Haeffele, R.~Vidal, Structured low-rank matrix factorization: Global
  optimality, algorithms, and applications, IEEE Trans. Pattern Anal. Mach.
  Intell. 42~(6) (Jun. 2020) 1468--1482.

\bibitem{harshman1970foundations}
R.~A. Harshman, Foundations of the parafac procedure: Models and conditions for
  an" explanatory" multimodal factor analysis, UCLA Work. Pap. Phonetics 16~(1)
  (1970) 1--84.

\bibitem{zhao2015bayesianpami}
Q.~Zhao, L.~Zhang, A.~Cichocki, Bayesian cp factorization of incomplete tensors
  with automatic rank determination, IEEE Trans. Pattern Anal. Mach. Intell.
  37~(9) (Sep. 2015) 1751--1763.

\bibitem{zhao2015bayesian}
Q.~Zhao, G.~Zhou, L.~Zhang, A.~Cichocki, S.-I. Amari, Bayesian robust tensor
  factorization for incomplete multiway data, IEEE Trans. Neural Netw. Learn.
  Syst. 27~(4) (2015) 736--748.

\bibitem{han2018generalized}
X.~Chen, Z.~Han, Y.~Wang, Q.~Zhao, D.~Meng, L.~Lin, Y.~Tang, A generalized
  model for robust tensor factorization with noise modeling by mixture of
  gaussians, IEEE Trans. Neural Netw. Learn. Syst. 29~(11) (Nov. 2018)
  5380--5393.

\bibitem{fang2021bayesian}
S.~Fang, R.~M. Kirby, S.~Zhe, Bayesian streaming sparse tucker decomposition,
  in: Proc. 35th Conference on Uncertainty in Artificial Intelligence (UAI),
  2021.

\bibitem{li2016mr}
X.~Li, M.~K. Ng, G.~Cong, Y.~Ye, Q.~Wu, Mr-ntd: Manifold regularization
  nonnegative tucker decomposition for tensor data dimension reduction and
  representation, IEEE Trans. Neural Netw. Learn. Syst. 28~(8) (Aug. 2017)
  1787--1800.

\bibitem{jiang2018image}
B.~Jiang, C.~Ding, J.~Tang, B.~Luo, Image representation and learning with
  graph-laplacian tucker tensor decomposition, IEEE Trans. Cybern. 49~(4) (Apr.
  2019) 1417--1426.

\bibitem{yin2019lle}
W.~Yin, Z.~Ma, Le \& lle regularized nonnegative tucker decomposition for
  clustering of high dimensional datasets, Neurocomputing 364 (2019) 77--94.

\bibitem{pan2021orthogonal}
J.~Pan, M.~K. Ng, Y.~Liu, X.~Zhang, H.~Yan, Orthogonal nonnegative tucker
  decomposition, SIAM J. Sci. Comput. 43~(1) (2021) B55--B81.

\bibitem{yin2021hyperntf}
W.~Yin, Z.~Ma, Q.~Liu, Hyperntf: A hypergraph regularized nonnegative tensor
  factorization for dimensionality reduction, arXiv preprint arXiv:2101.06827
  (2021).

\bibitem{tenenbaum2000global}
J.~Tenenbaum, V.~De~Silva, J.~C. Langford, A global geometric framework for
  nonlinear dimensionality reduction, science 290~(5500) (2000) 2319--2323.

\bibitem{roweis2000nonlinear}
S.~Roweis, L.~K. Saul, Nonlinear dimensionality reduction by locally linear
  embedding, science 290~(5500) (2000) 2323--2326.

\bibitem{belkin2001laplacian}
M.~Belkin, P.~Niyogi, Laplacian eigenmaps and spectral techniques for embedding
  and clustering., in: Proc. Conf. Neural Inf. Process. Syst. (NIPS), 2002, pp.
  585--591.

\bibitem{he2004locality}
X.~He, P.~Niyogi, Locality preserving projections, Proc. Conf. Neural Inf.
  Process. Syst. (NIPS) (2004) 153--160.

\bibitem{he2005face}
X.~He, S.~Yan, Y.~Hu, P.~Niyogi, H.-J. Zhang, Face recognition using
  laplacianfaces, IEEE Trans. Pattern Anal. Mach. Intell. 27~(3) (Mar. 2005)
  328--340.

\bibitem{yang2008harmonizing}
Y.~Yang, Y.~Zhuang, F.~Wu, Y.~Pan, Harmonizing hierarchical manifolds for
  multimedia document semantics understanding and cross-media retrieval, IEEE
  Trans. Multimedia 10~(3) (Apr. 2008) 437--446.

\bibitem{qiu2020generalized}
Y.~Qiu, G.~Zhou, Y.~Wang, Y.~Zhang, S.~Xie, A generalized graph regularized
  non-negative tucker decomposition framework for tensor data representation,
  IEEE Trans. Cybern. 52~(1) (2022) 594--607.

\bibitem{boyd2004convex}
S.~Boyd, L.~Vandenberghe, Convex optimization, Cambridge, U.K.: Cambridge Univ.
  Press, 2004.

\bibitem{geman1992constrained}
D.~Geman, G.~Reynolds, Constrained restoration and the recovery of
  discontinuities, IEEE Trans. Pattern Anal. Mach. Intell. 14~(3) (Mar. 1992)
  367--383.

\bibitem{nikolova2007equivalence}
M.~Nikolova, R.~H. Chan, The equivalence of half-quadratic minimization and the
  gradient linearization iteration, IEEE Trans. Image Process. 16~(6) (Jun.
  2007) 1623--1627.

\bibitem{nikolova2005analysis}
M.~Nikolova, M.~K. Ng, Analysis of half-quadratic minimization methods for
  signal and image recovery, SIAM J. Sci. Comput. 27~(3) (2005) 937--966.

\bibitem{charbonnier1997deterministic}
P.~Charbonnier, L.~Blanc-F{\'e}raud, G.~Aubert, M.~Barlaud, Deterministic
  edge-preserving regularization in computed imaging, IEEE Trans. Image
  Process. 6~(2) (Feb. 1997) 298--311.

\bibitem{liu2007correntropy}
W.~Liu, P.~P. Pokharel, J.~C. Principe, Correntropy: Properties and
  applications in non-gaussian signal processing, IEEE Trans. Signal Processing
  55~(11) (Nov. 2007) 5286--5298.

\bibitem{zhang1997parameter}
Z.~Zhang, Parameter estimation techniques: A tutorial with application to conic
  fitting, Image Vis. Comput. 15~(1) (1997) 59--76.

\bibitem{phillips2000feret}
P.~J. Phillips, H.~Moon, S.~A. Rizvi, P.~J. Rauss, The feret evaluation
  methodology for face-recognition algorithms, IEEE Trans. Pattern Anal. Mach.
  Intell. 22~(10) (Oct. 2000) 1090--1104.

\bibitem{du2012robust}
L.~Du, X.~Li, Y.-D. Shen, Robust nonnegative matrix factorization via
  half-quadratic minimization, in: Proc. IEEE 12th Int. Conf. Data Mining
  (ICDM), Dec. 2012, pp. 201--210.

\bibitem{kim2008nonnegative}
Y.-D. Kim, A.~Cichocki, S.~Choi, Nonnegative tucker decomposition with
  alpha-divergence, in: Proc. IEEE Int. Conf. Acoust. Speech Signal Process.,
  2008, pp. 1829--1832.

\bibitem{xu2015alternating}
Y.~Xu, Alternating proximal gradient method for sparse nonnegative tucker
  decomposition, Math. Programm. Comput. 7~(1) (2015) 39--70.

\bibitem{jiang2013graph}
B.~Jiang, C.~Ding, J.~Tang, Graph-laplacian pca: Closed-form solution and
  robustness, in: Proc. IEEE Conf. Comput. Vis. Pattern Recognit., 2013, pp.
  3492--3498.

\bibitem{van1993approximation}
C.~Van~Loan, N.~Pitsianis, Approximation with kronecker products, in: Linear
  Algebra for Large Scale and Real Time Applications, Springer, 1993, pp.
  293--314.

\bibitem{huang2013non}
K.~Huang, N.~D. Sidiropoulos, A.~Swami, Non-negative matrix factorization
  revisited: Uniqueness and algorithm for symmetric decomposition, IEEE Trans.
  Signal Process. 62~(1) (Jan. 2014) 211--224.

\end{thebibliography}

\appendix

\section{  }\label{appendices_A}

\begin{figure*}[t]
\vspace{-0.5cm} 
\setlength{\abovecaptionskip}{0cm} 
\setlength{\belowcaptionskip}{-0cm} 
\centering
\subfloat[]{
\begin{minipage}[b]{0.18\textwidth}
\includegraphics[width=1.5in]{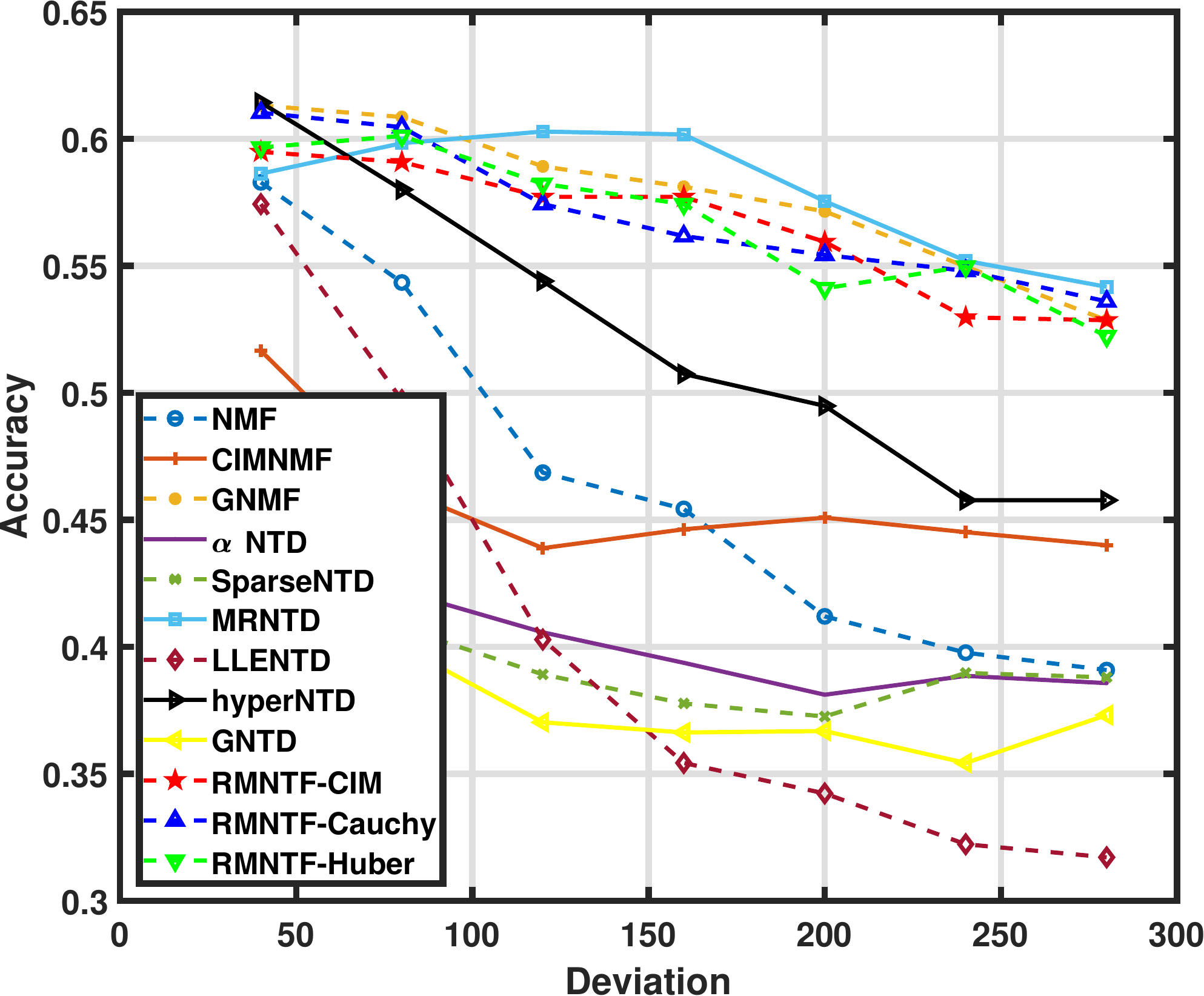}%
\label{Reuters_delta_Coh} \\
\includegraphics[width=1.5in]{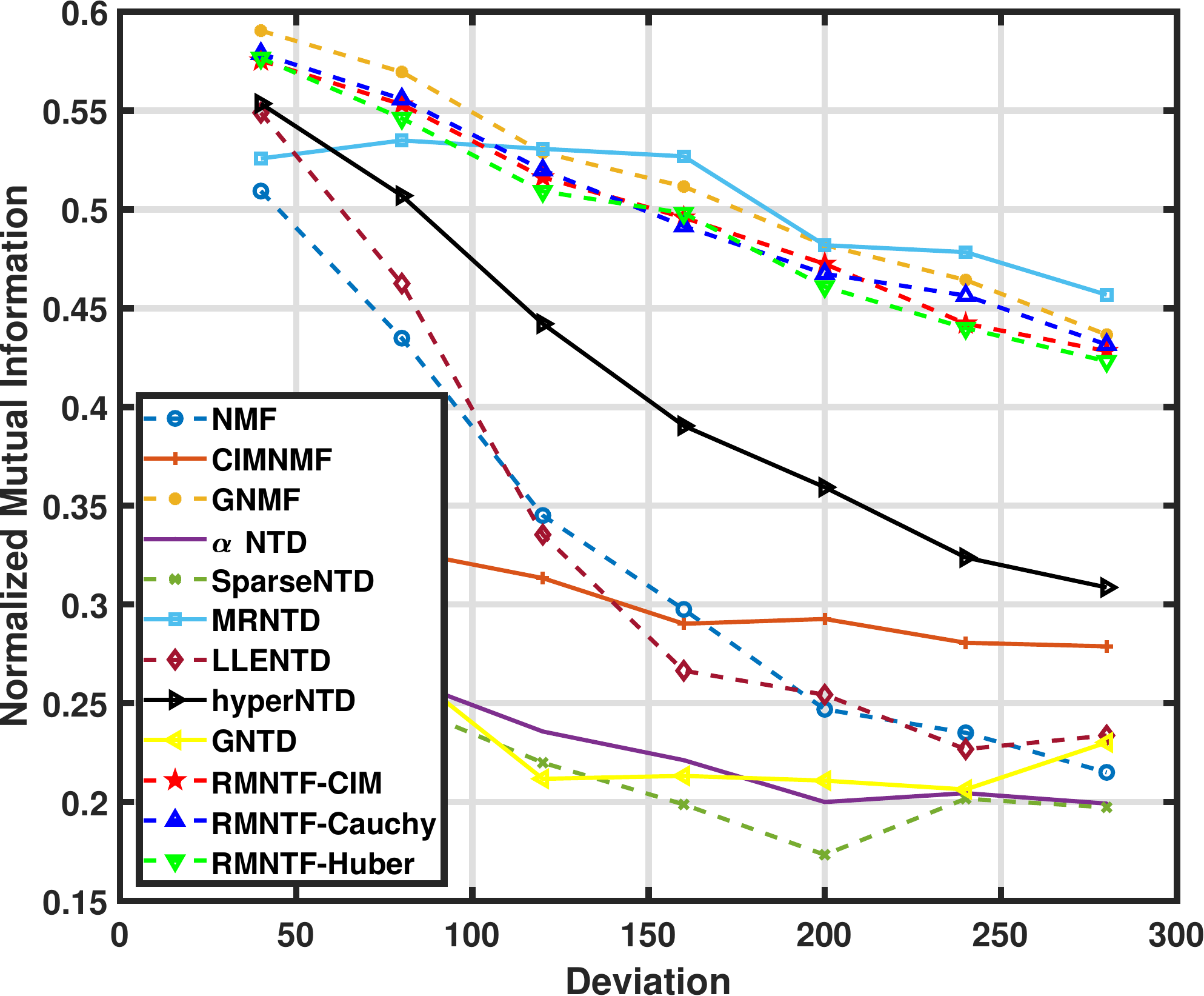}%
\label{Reuters_delta_Coh} \\
\includegraphics[width=1.5in]{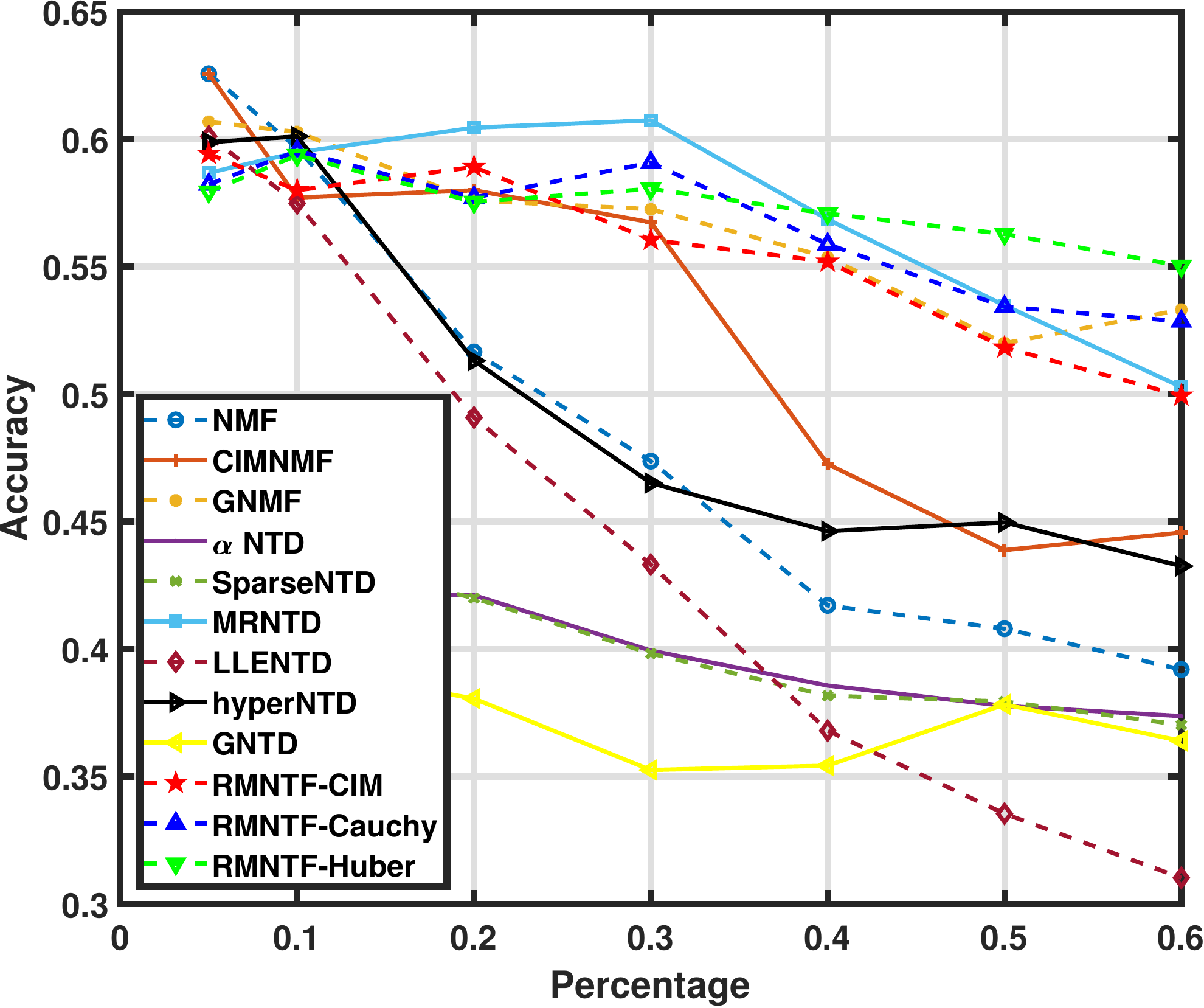}%
\label{Reuters_delta_Coh} \\
\includegraphics[width=1.5in]{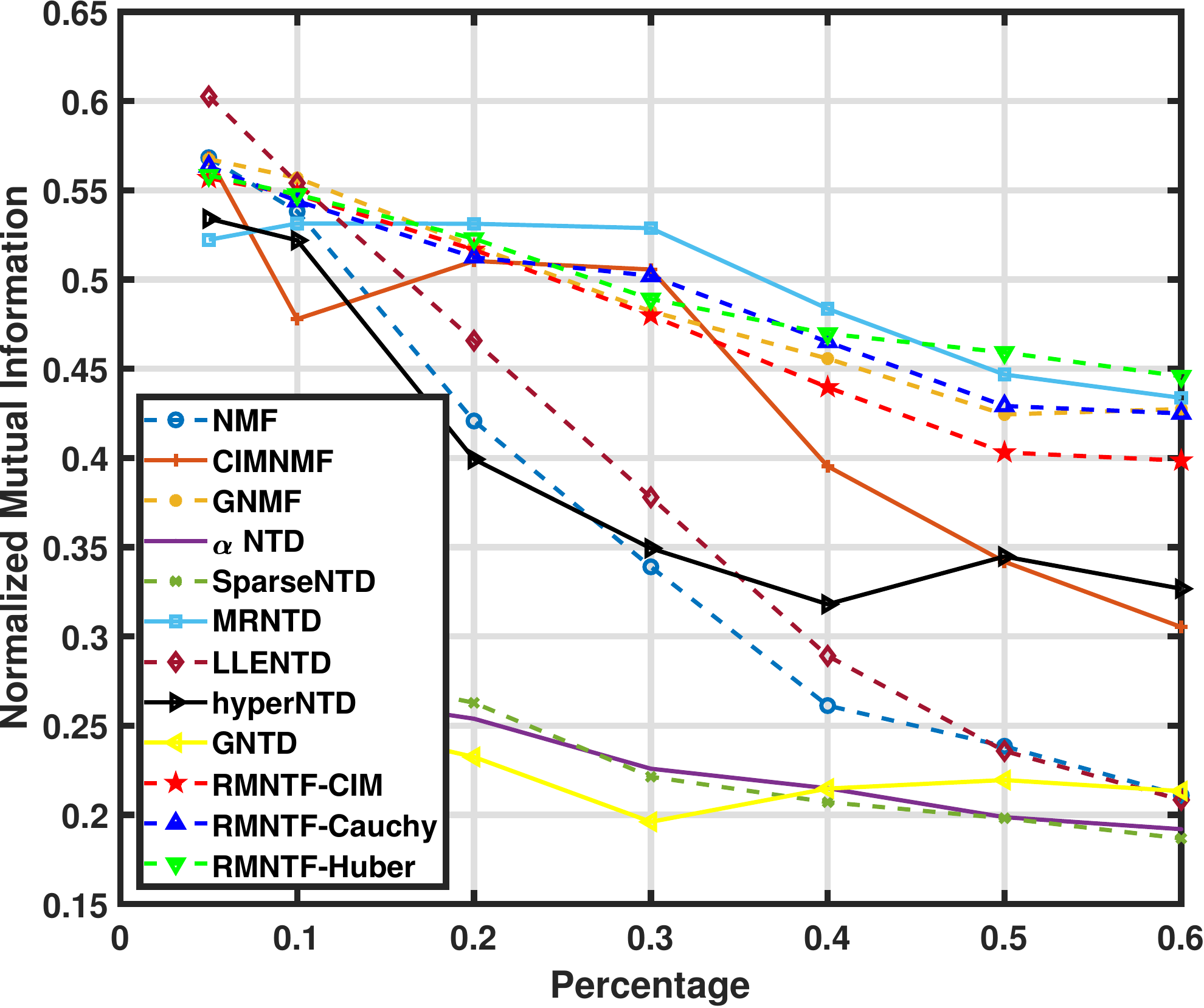}%
\end{minipage}
}
\hfil
\subfloat[]{
\begin{minipage}[b]{0.22\textwidth}
\includegraphics[width=1.5in]{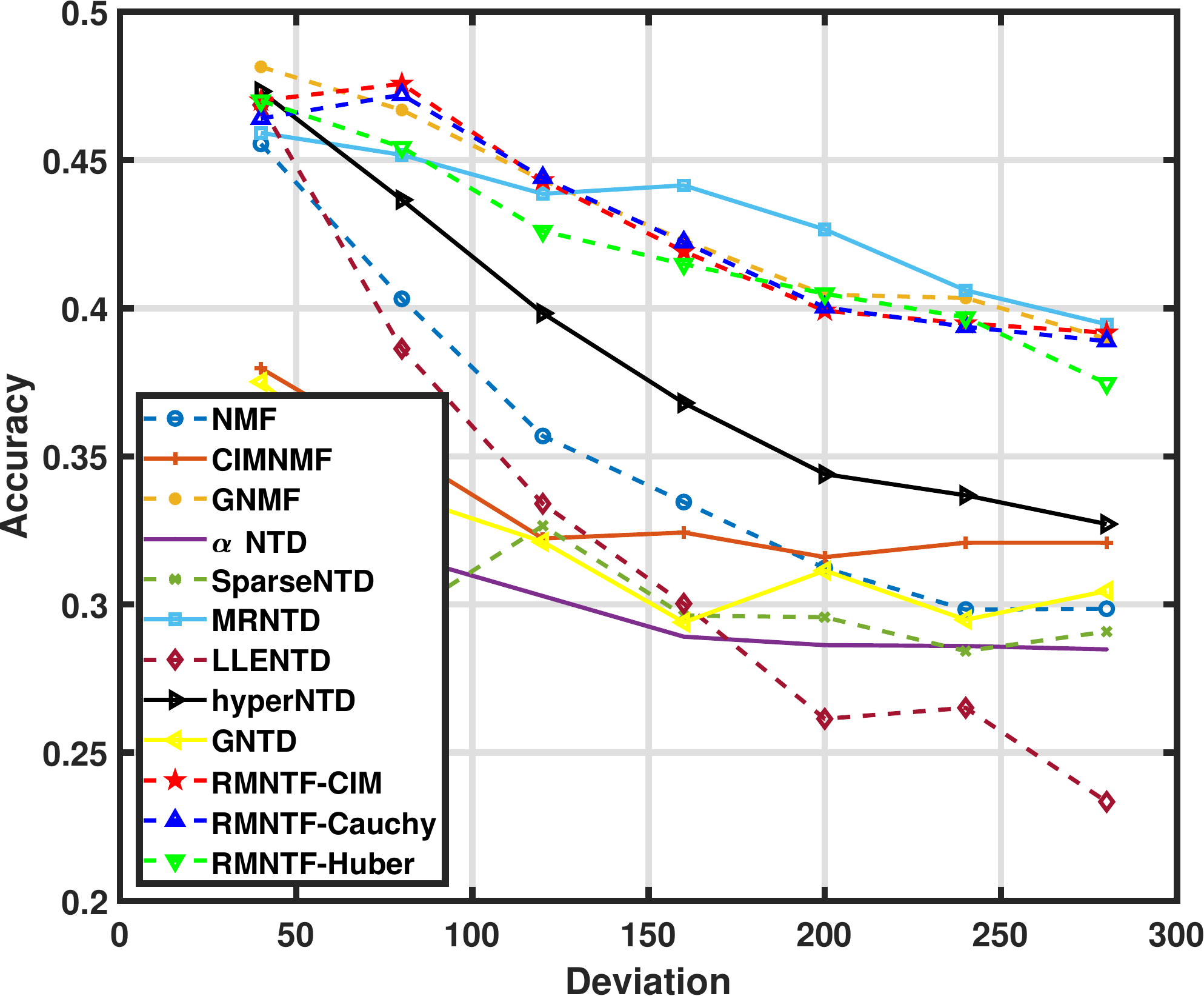}%
\label{Reuters_delta_Coh} \\
\includegraphics[width=1.5in]{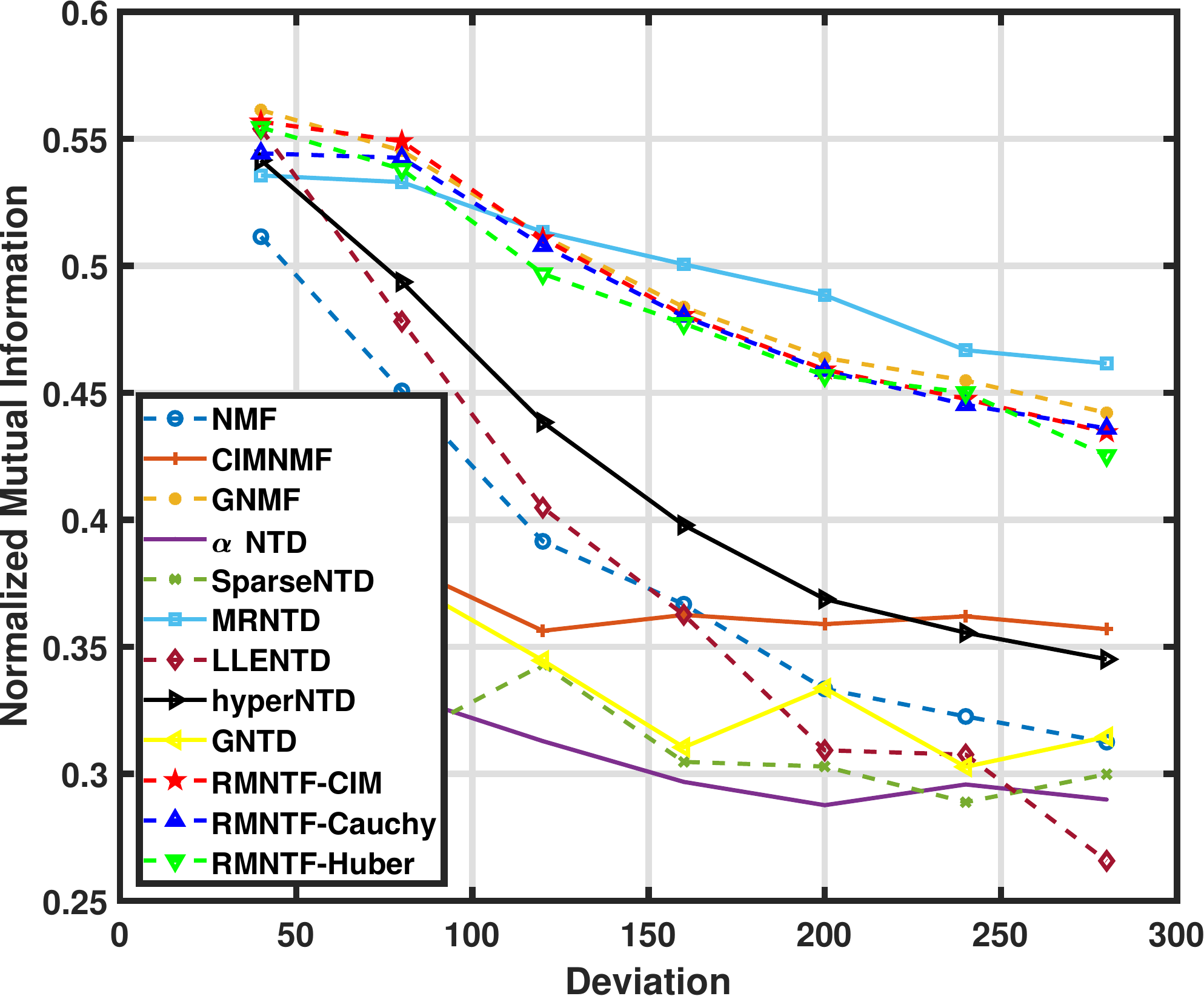}%
\label{Reuters_delta_Coh} \\
\includegraphics[width=1.5in]{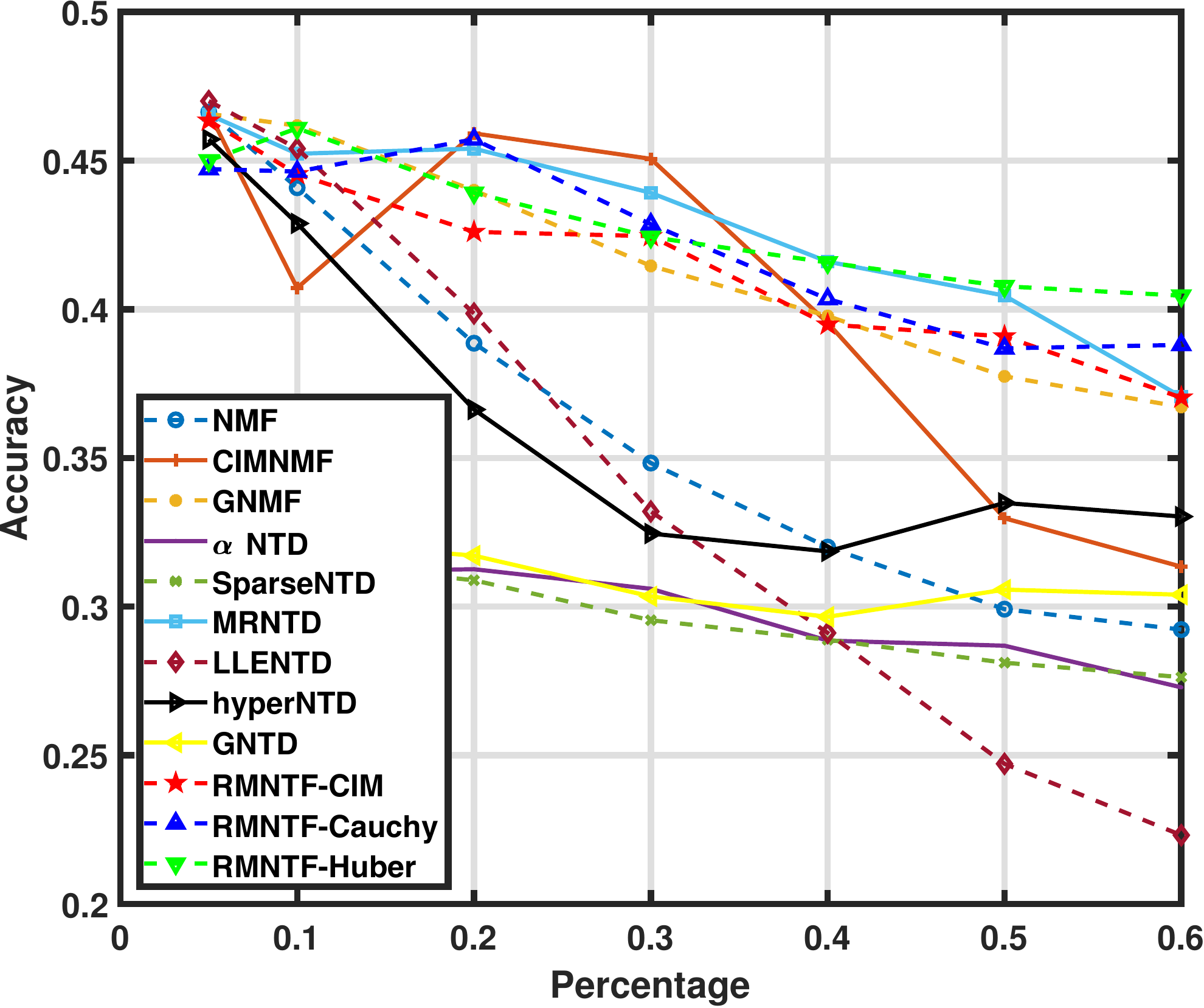}%
\label{Reuters_delta_Coh} \\
\includegraphics[width=1.5in]{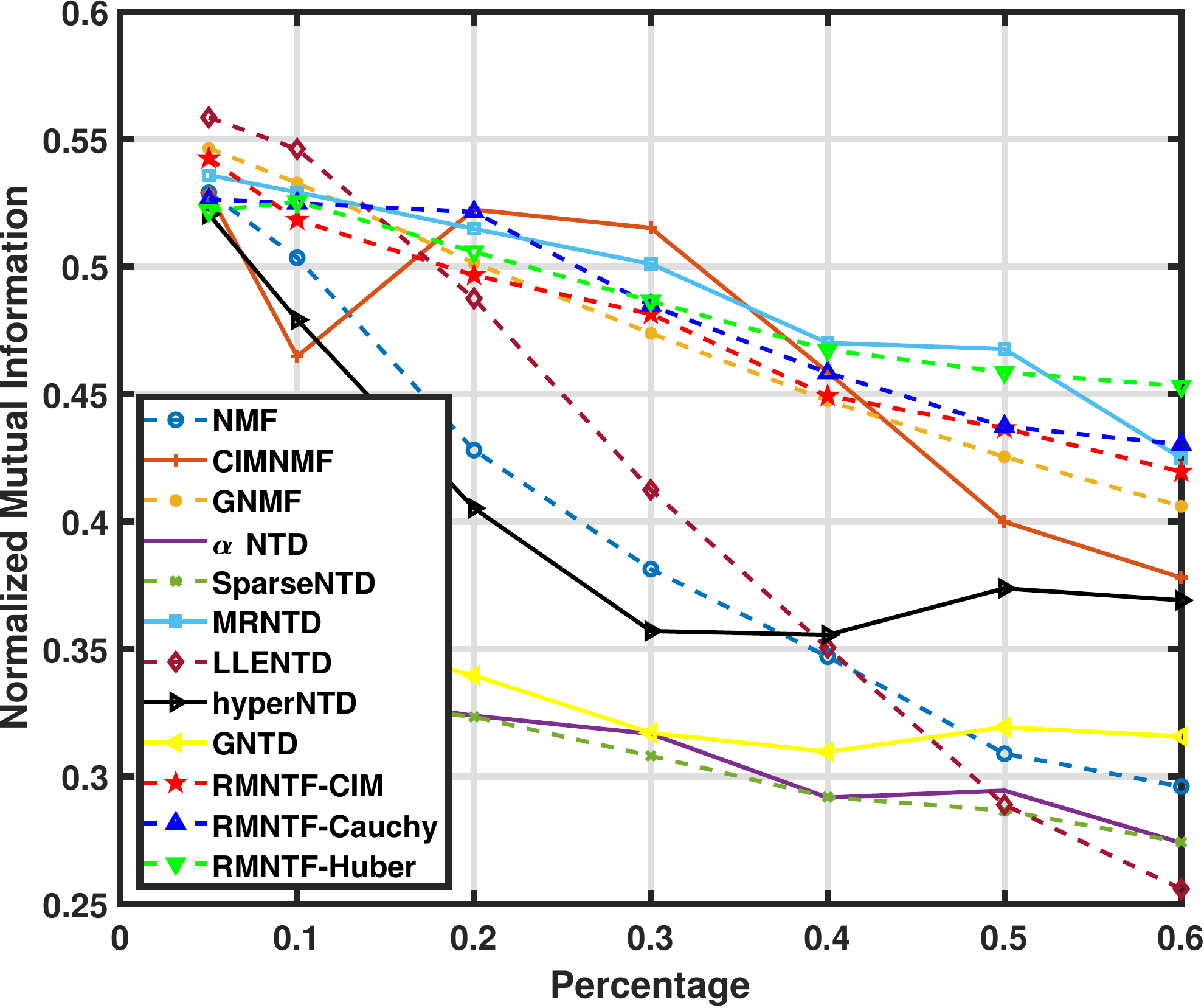}%
\end{minipage}
}
\subfloat[]{
\begin{minipage}[b]{0.22\textwidth}
\includegraphics[width=1.5in]{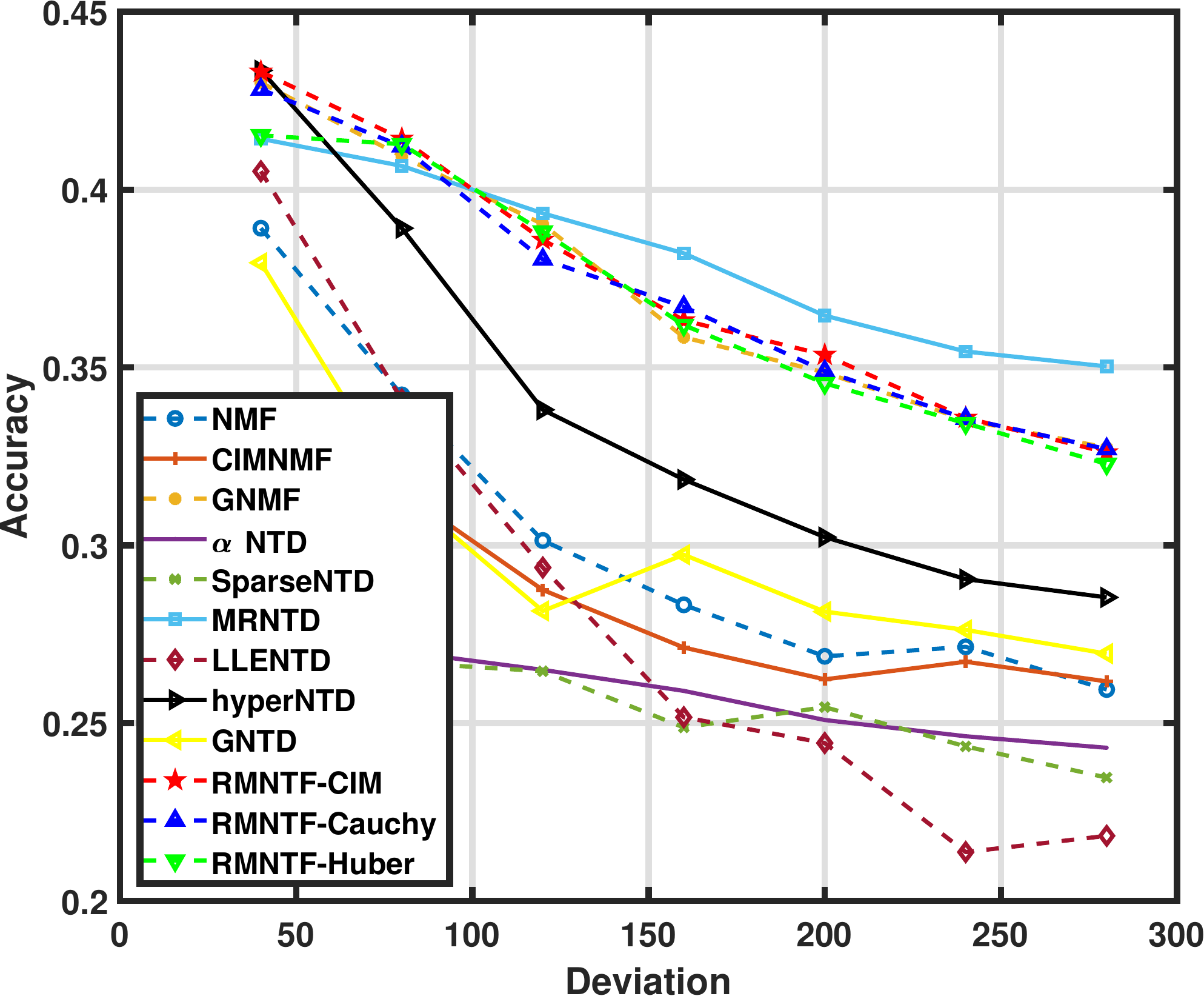}%
\label{Reuters_delta_Coh} \\
\includegraphics[width=1.5in]{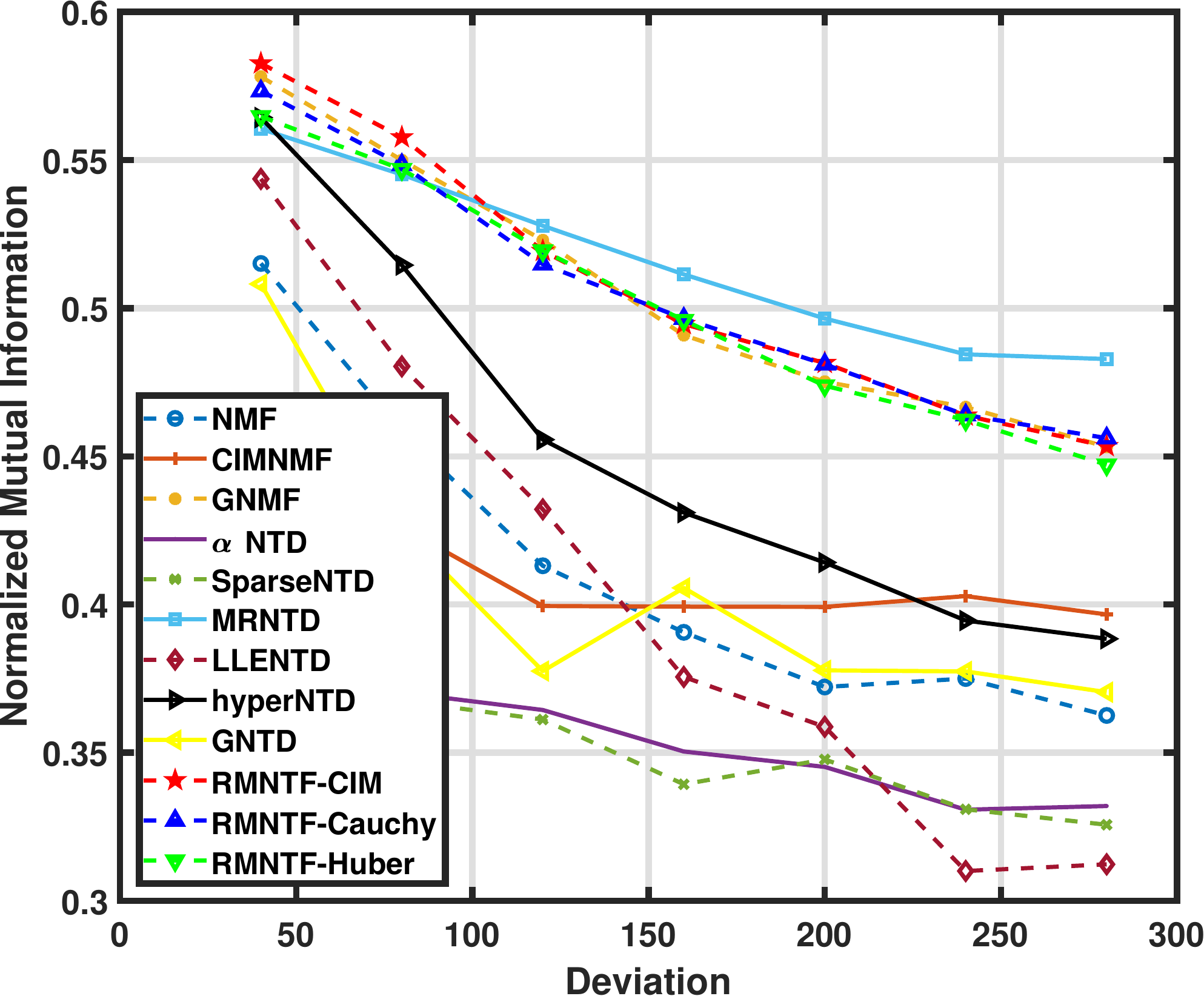}%
\label{Reuters_delta_Coh} \\
\includegraphics[width=1.5in]{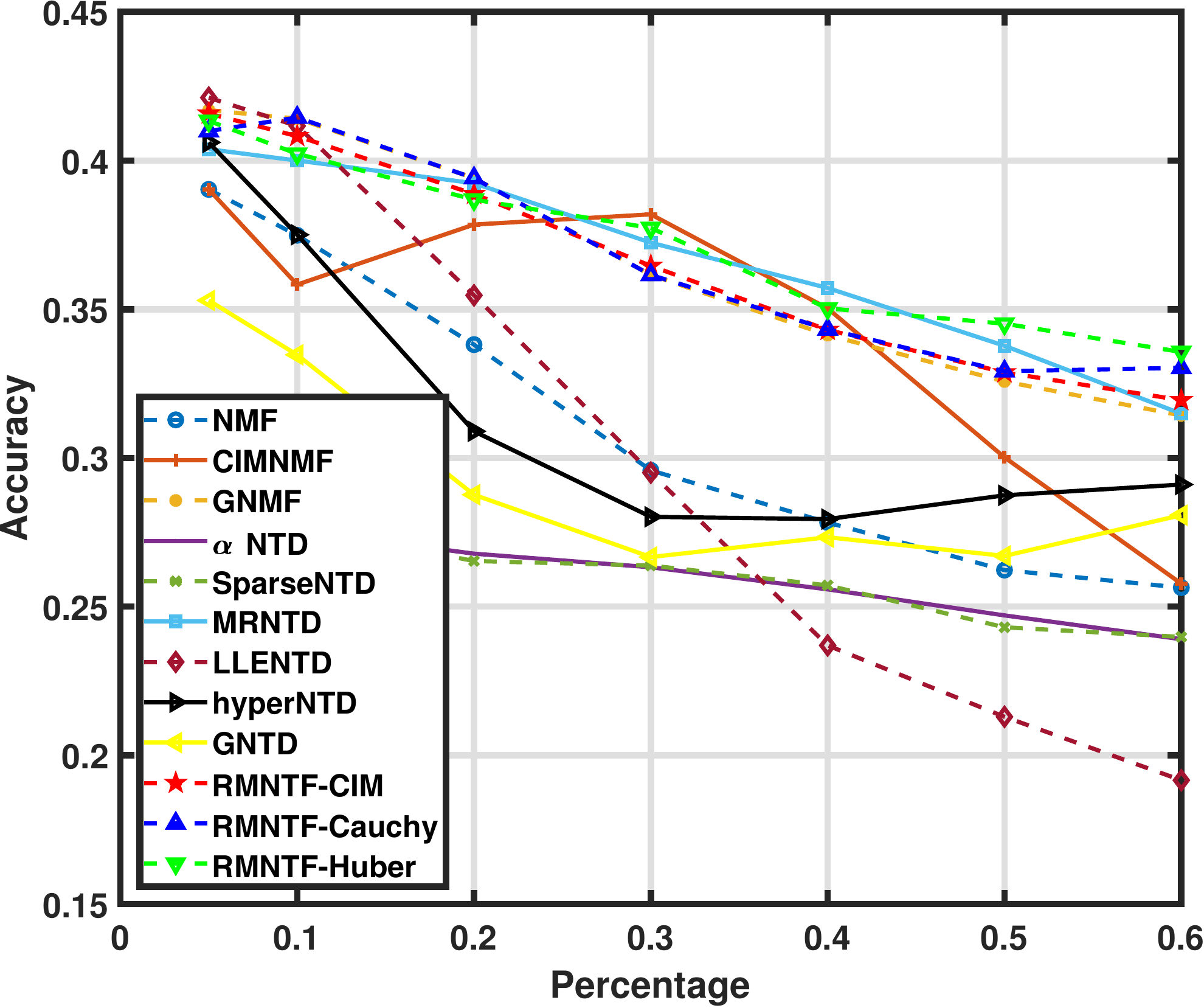}%
\label{Reuters_delta_Coh} \\
\includegraphics[width=1.5in]{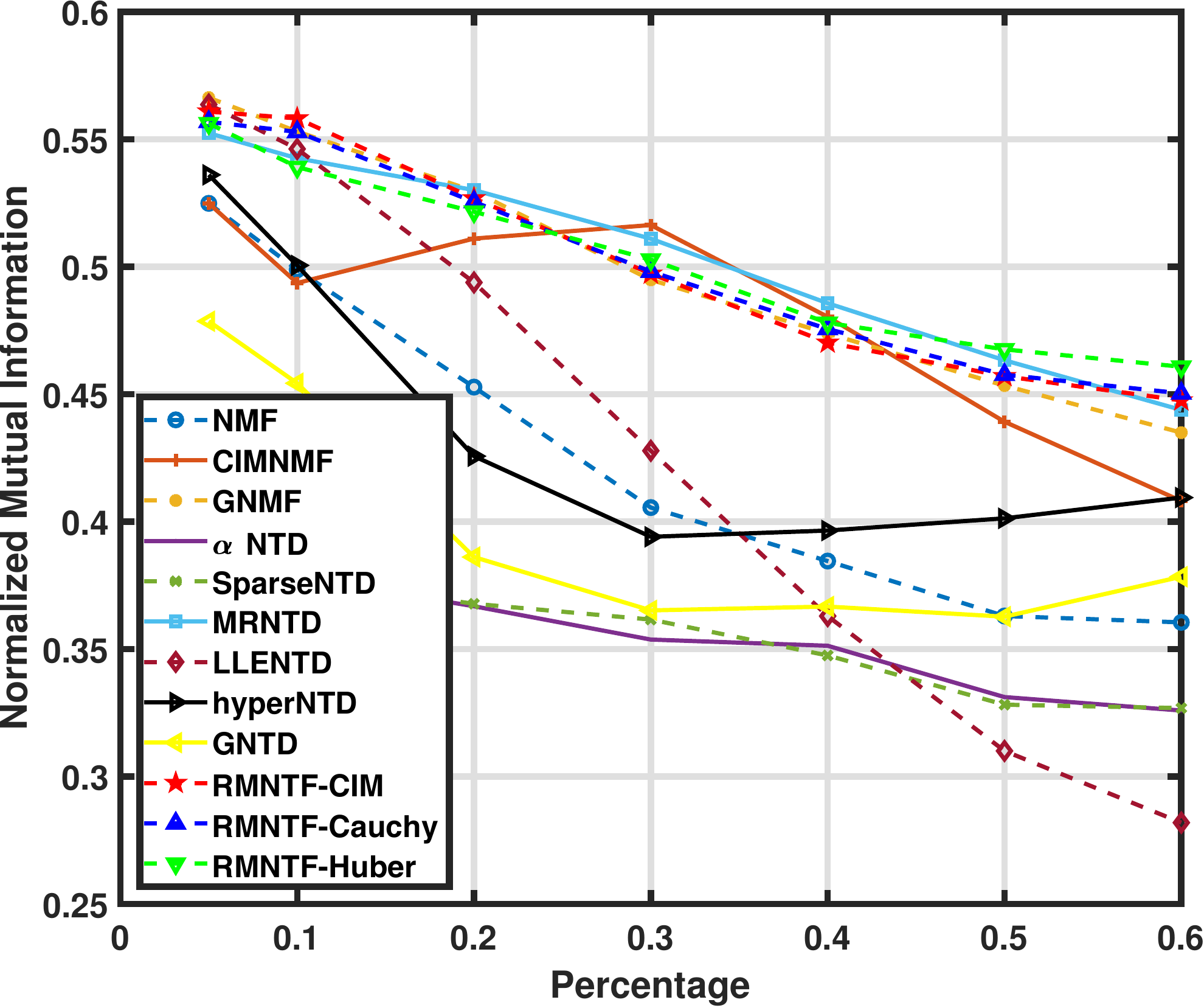}%
\end{minipage}
}
\subfloat[]{
\begin{minipage}[b]{0.22\textwidth}
\includegraphics[width=1.5in]{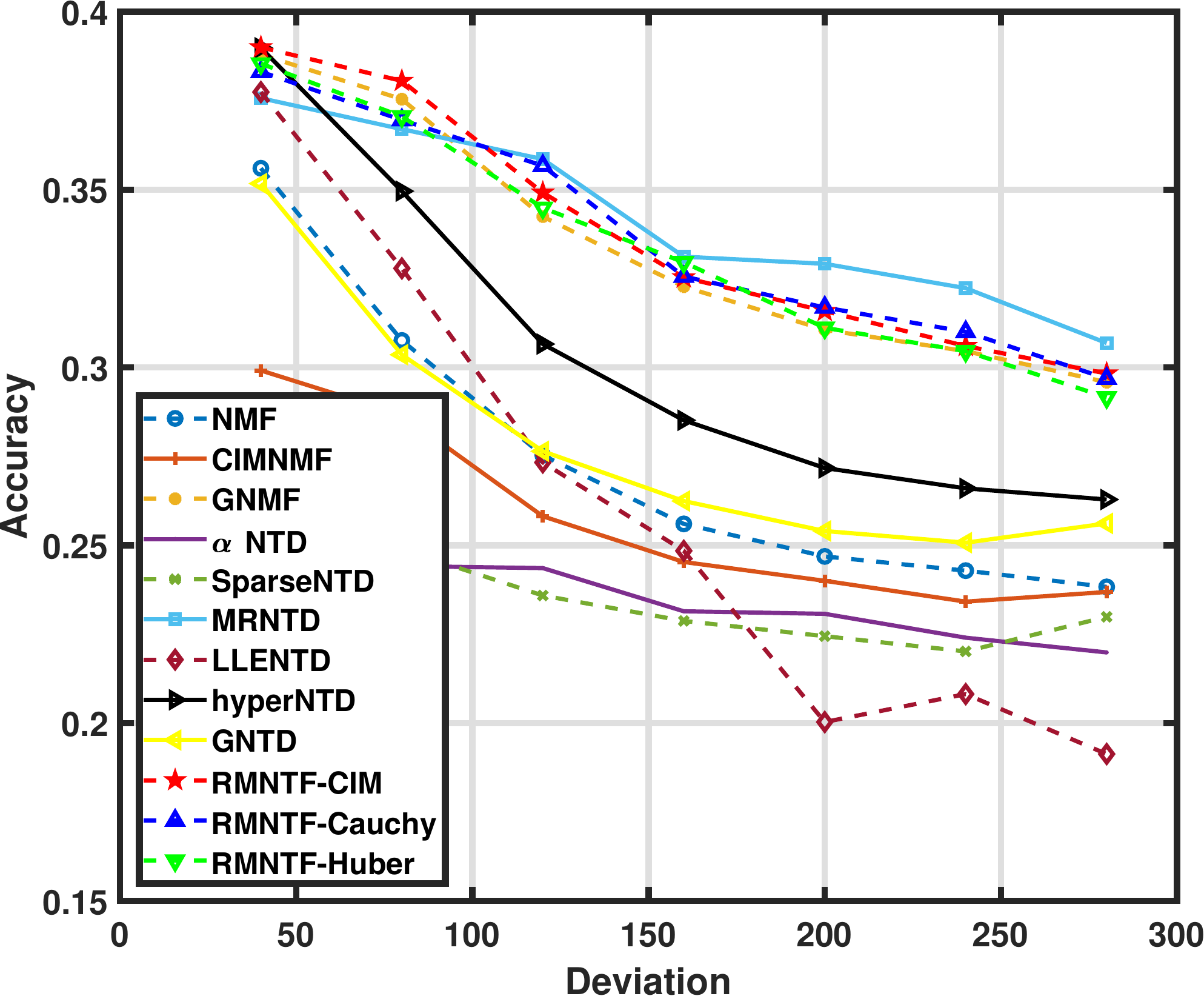}%
\label{Reuters_delta_Coh} \\
\includegraphics[width=1.5in]{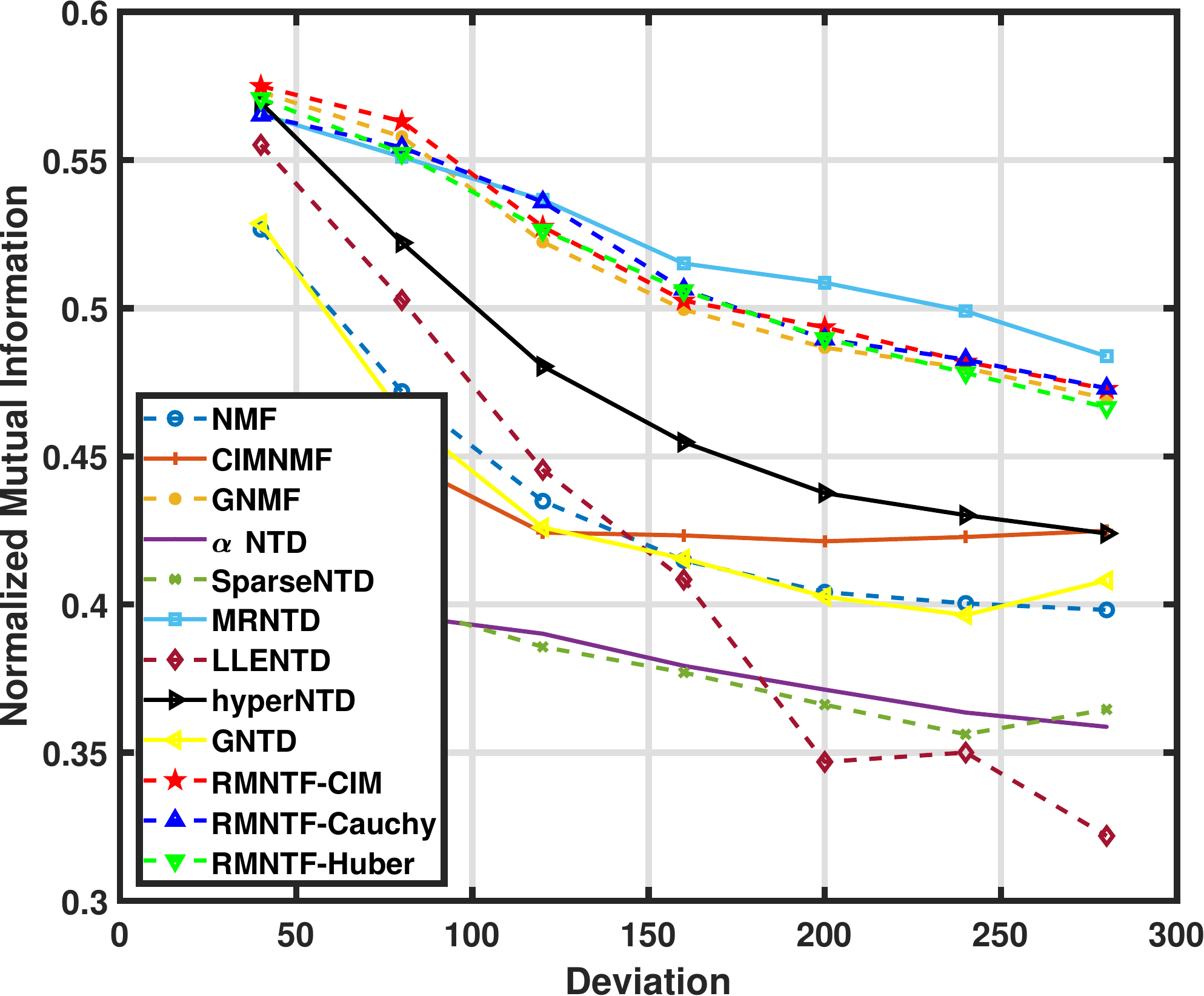}%
\label{Reuters_delta_Coh} \\
\includegraphics[width=1.5in]{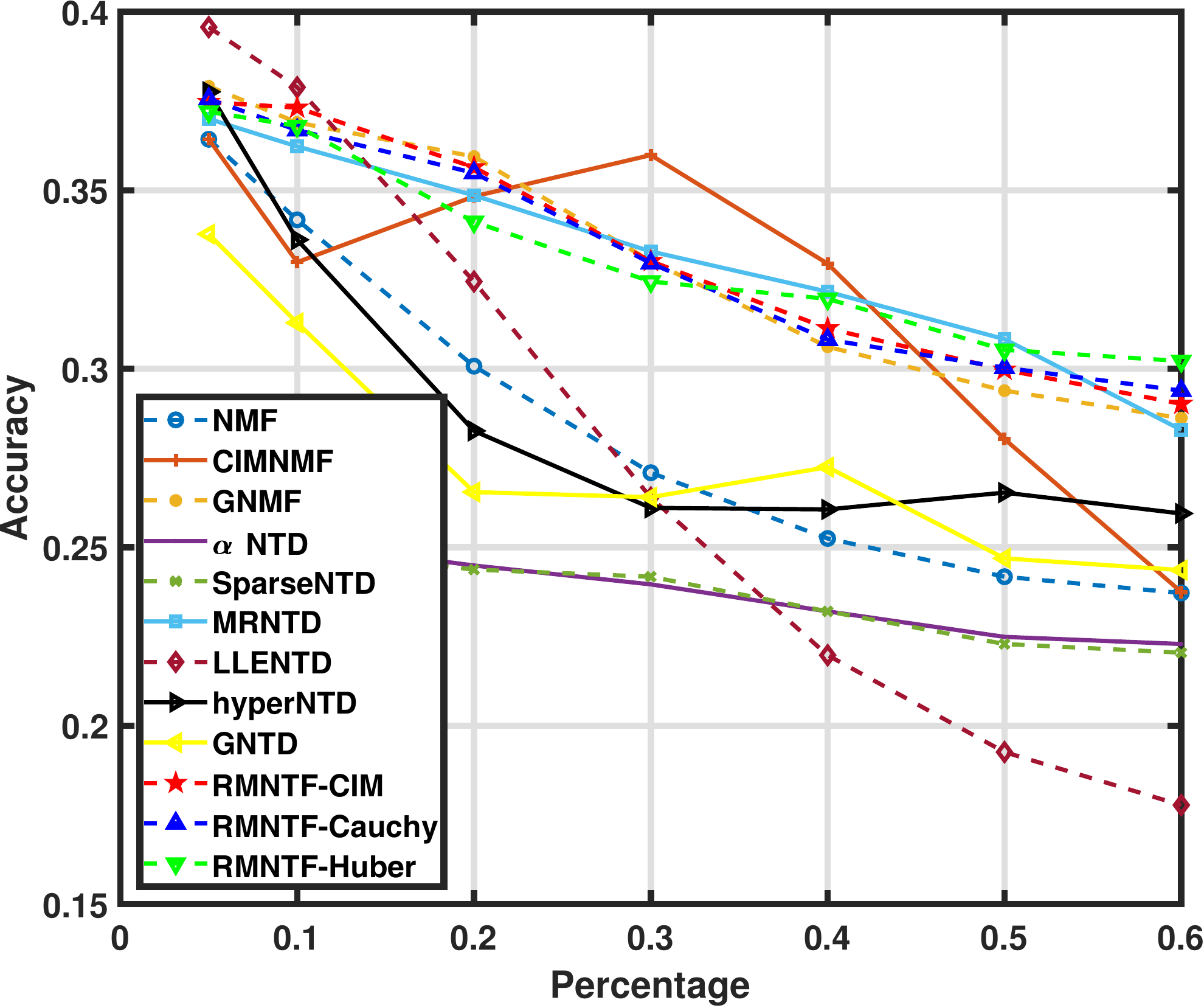}%
\label{Reuters_delta_Coh} \\
\includegraphics[width=1.5in]{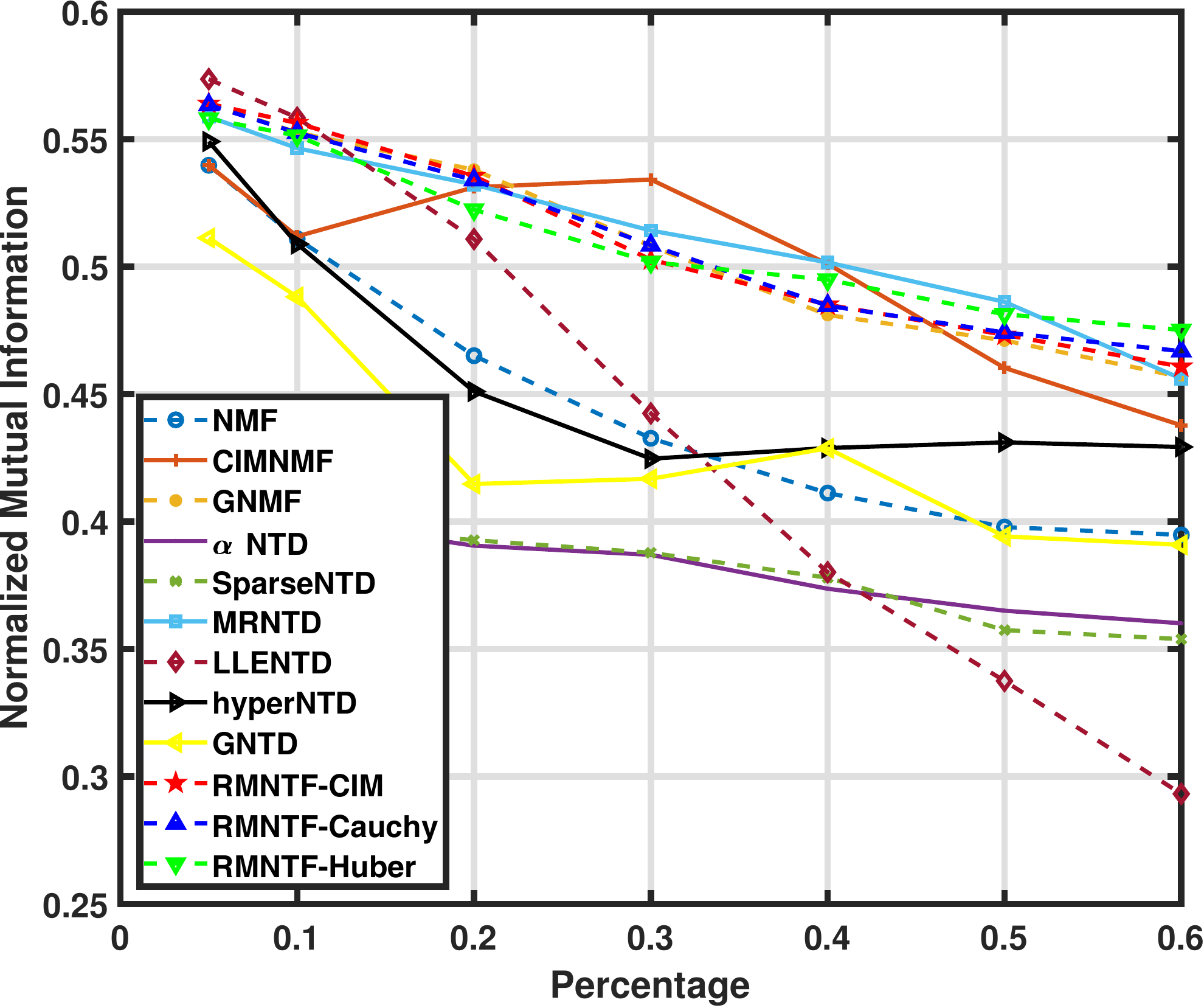}%
\end{minipage}
}

\vspace{2mm}
\caption{Evaluation of proposed methods on FERET database contaminated by Laplace noise and salt \& pepper noise, respectively. (a) Average accuracy and NMI on the subset of FERET which contains $5$ categories, the first two figures show the results contaminated by Laplace noise and the remains show the results contaminated by salt \& pepper noise. (b) Average evaluation on the subset of FERET which contains $10$ categories. (c) Average evaluation on the subset of FERET which contains $15$ categories. (d) Average evaluation on the subset of FERET which contains $20$ categories.}
\label{Reuters_delta}
\end{figure*}

\begin{figure*}[t]
\vspace{-0.5cm} 
\setlength{\abovecaptionskip}{0cm} 
\setlength{\belowcaptionskip}{-0cm} 
\centering
\subfloat[]{
\begin{minipage}[b]{0.18\textwidth}
\includegraphics[width=1.5in]{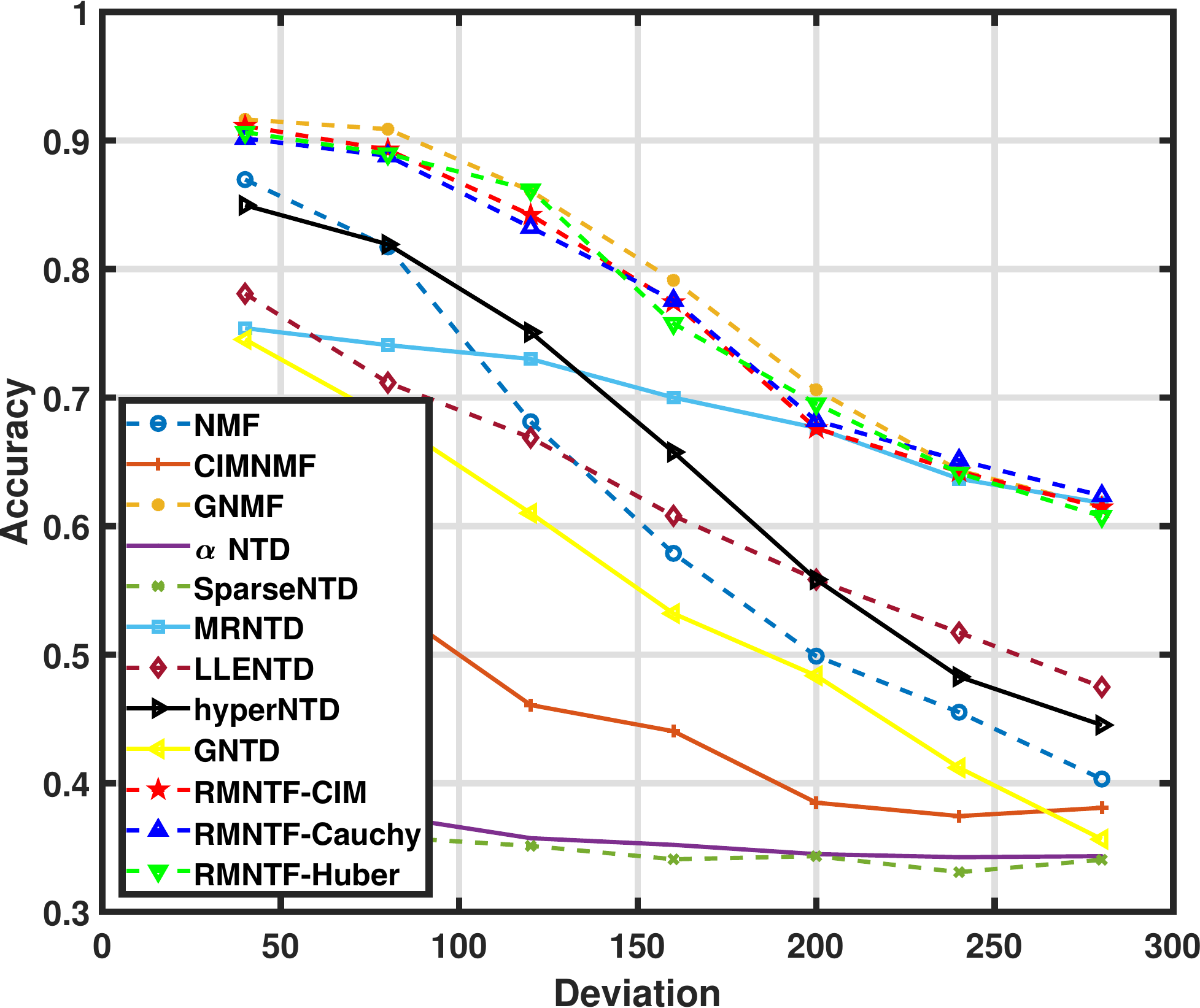}%
\label{Reuters_delta_Coh} \\
\includegraphics[width=1.5in]{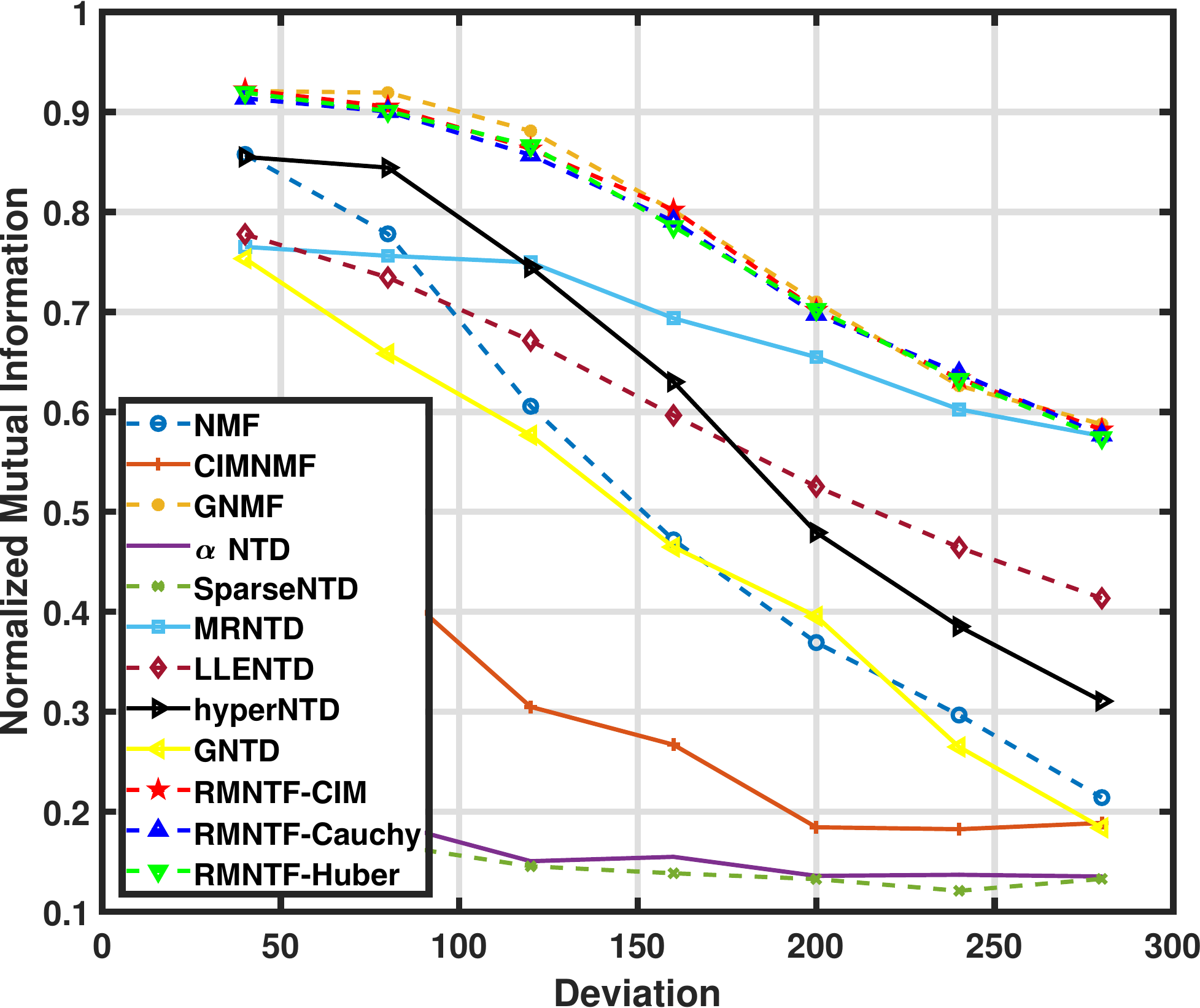}%
\label{Reuters_delta_Coh} \\
\includegraphics[width=1.5in]{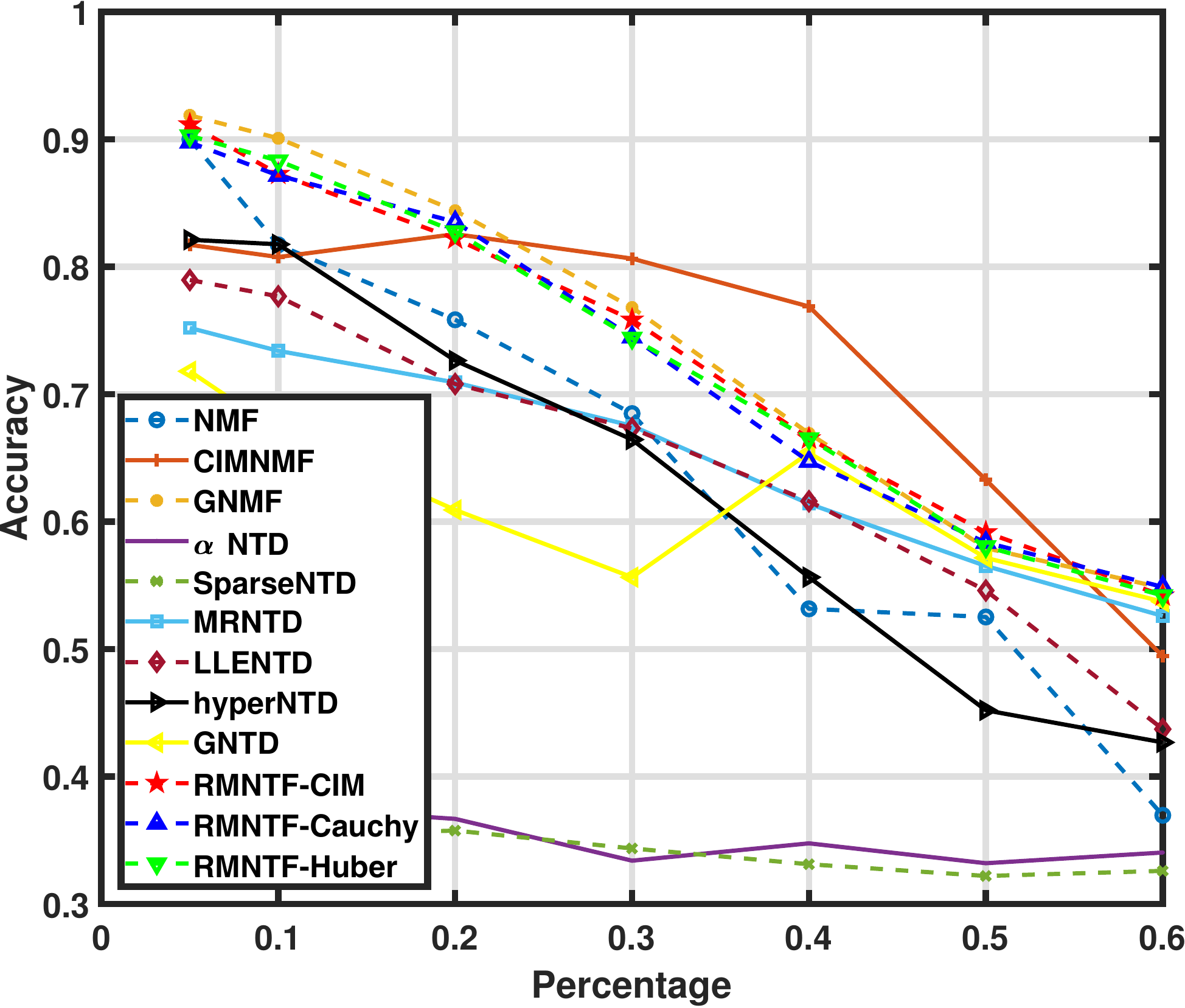}%
\label{Reuters_delta_Coh} \\
\includegraphics[width=1.5in]{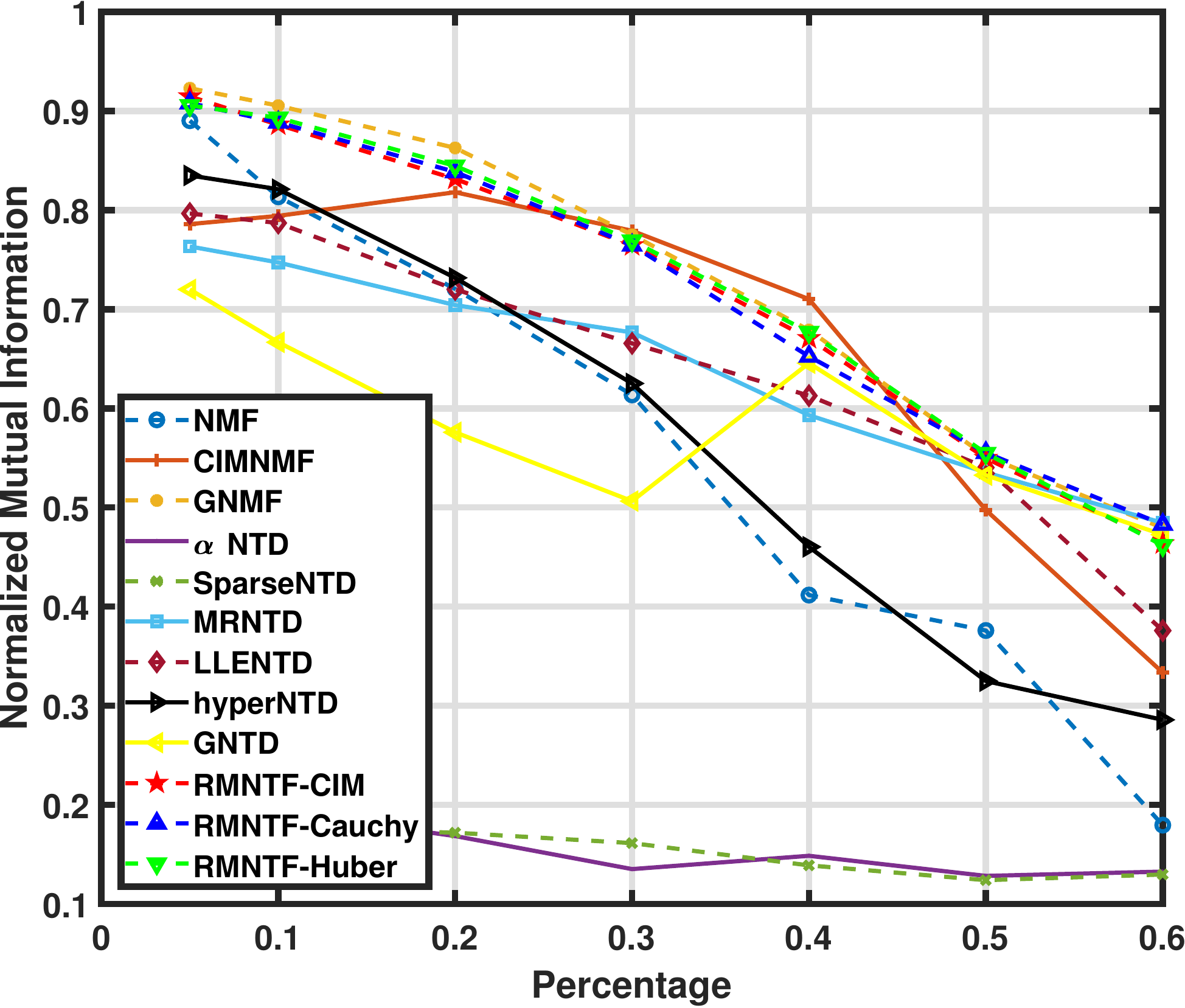}%
\end{minipage}
}
\hfil
\subfloat[]{
\begin{minipage}[b]{0.22\textwidth}
\includegraphics[width=1.5in]{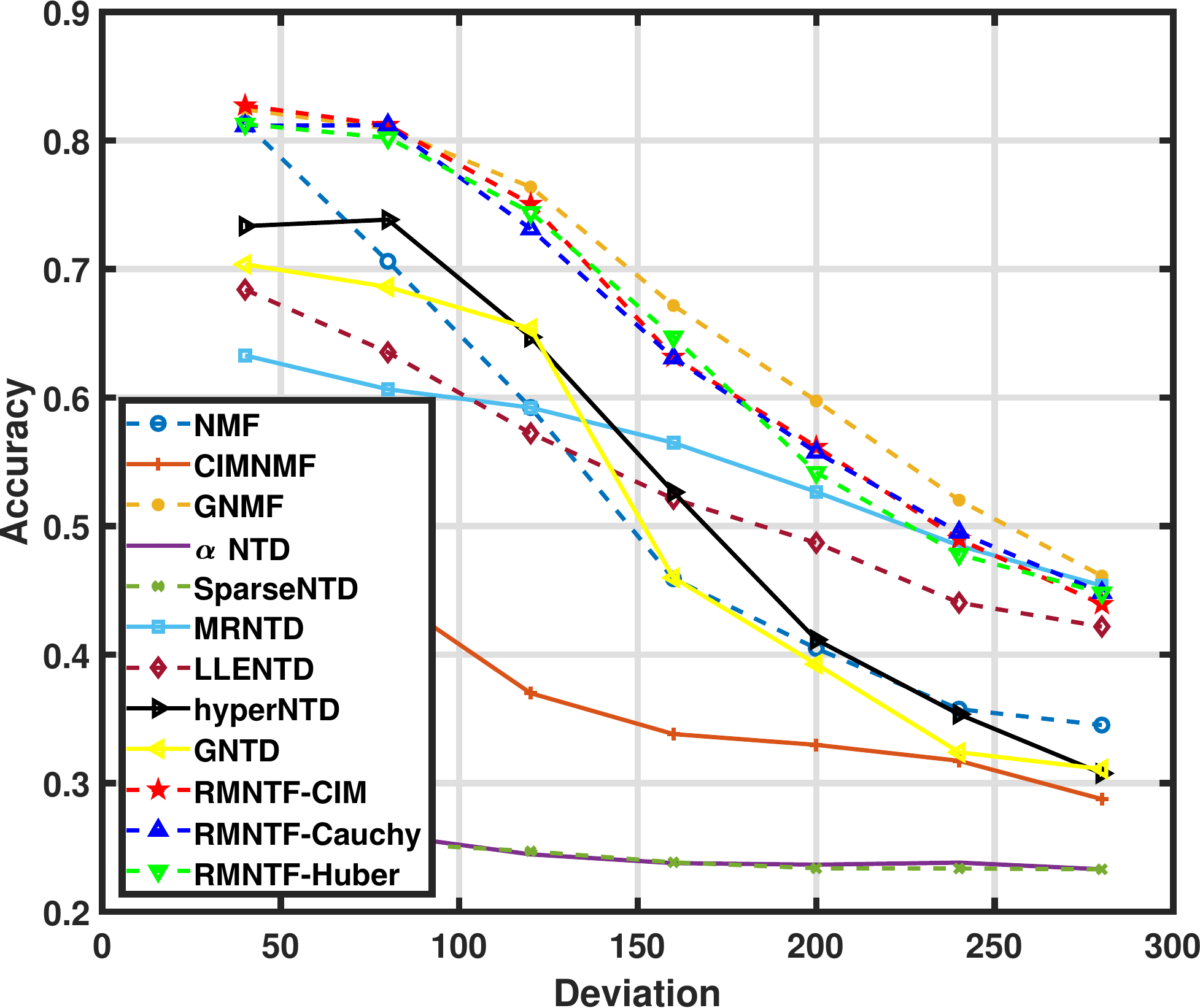}%
\label{Reuters_delta_Coh} \\
\includegraphics[width=1.5in]{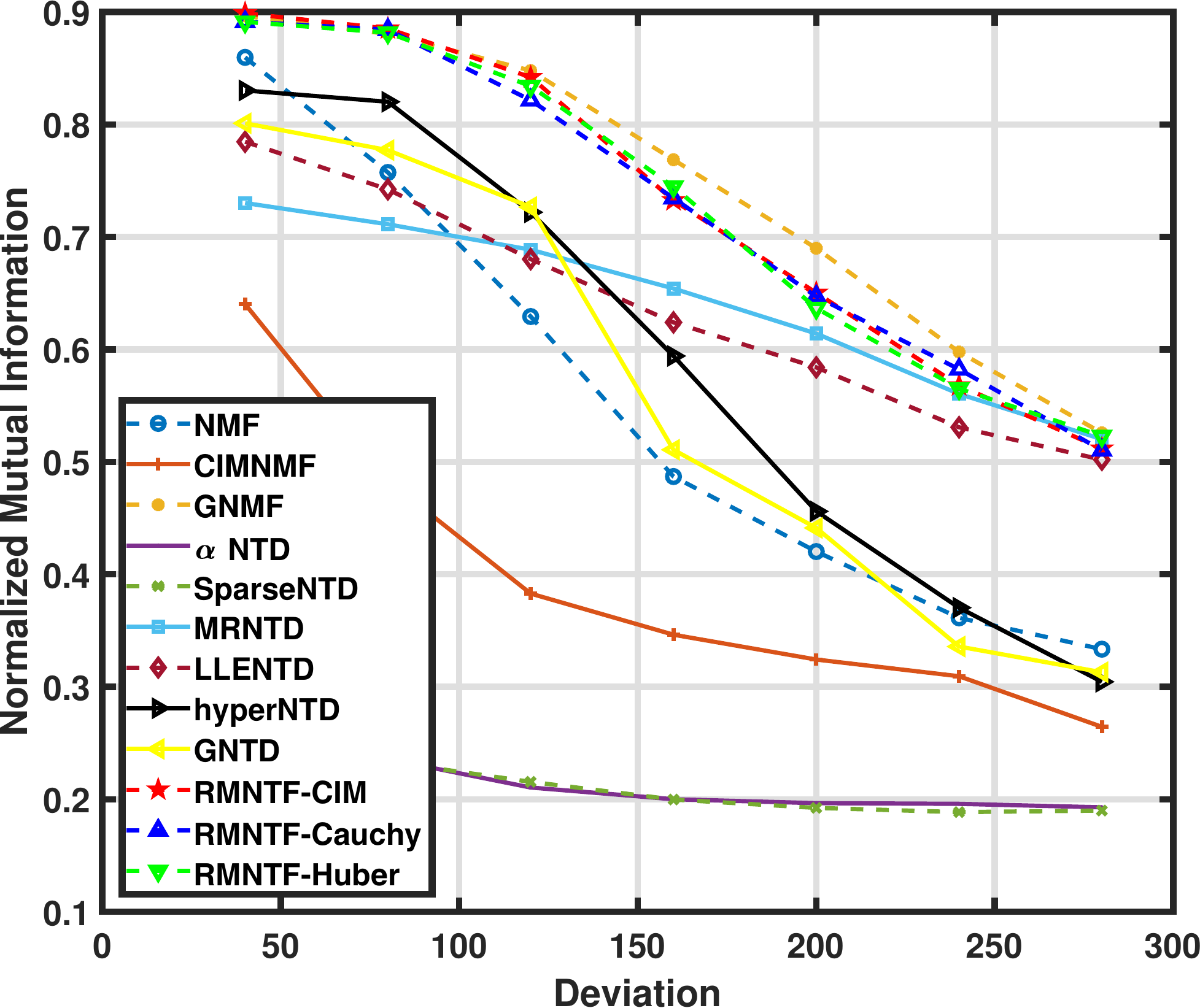}%
\label{Reuters_delta_Coh} \\
\includegraphics[width=1.5in]{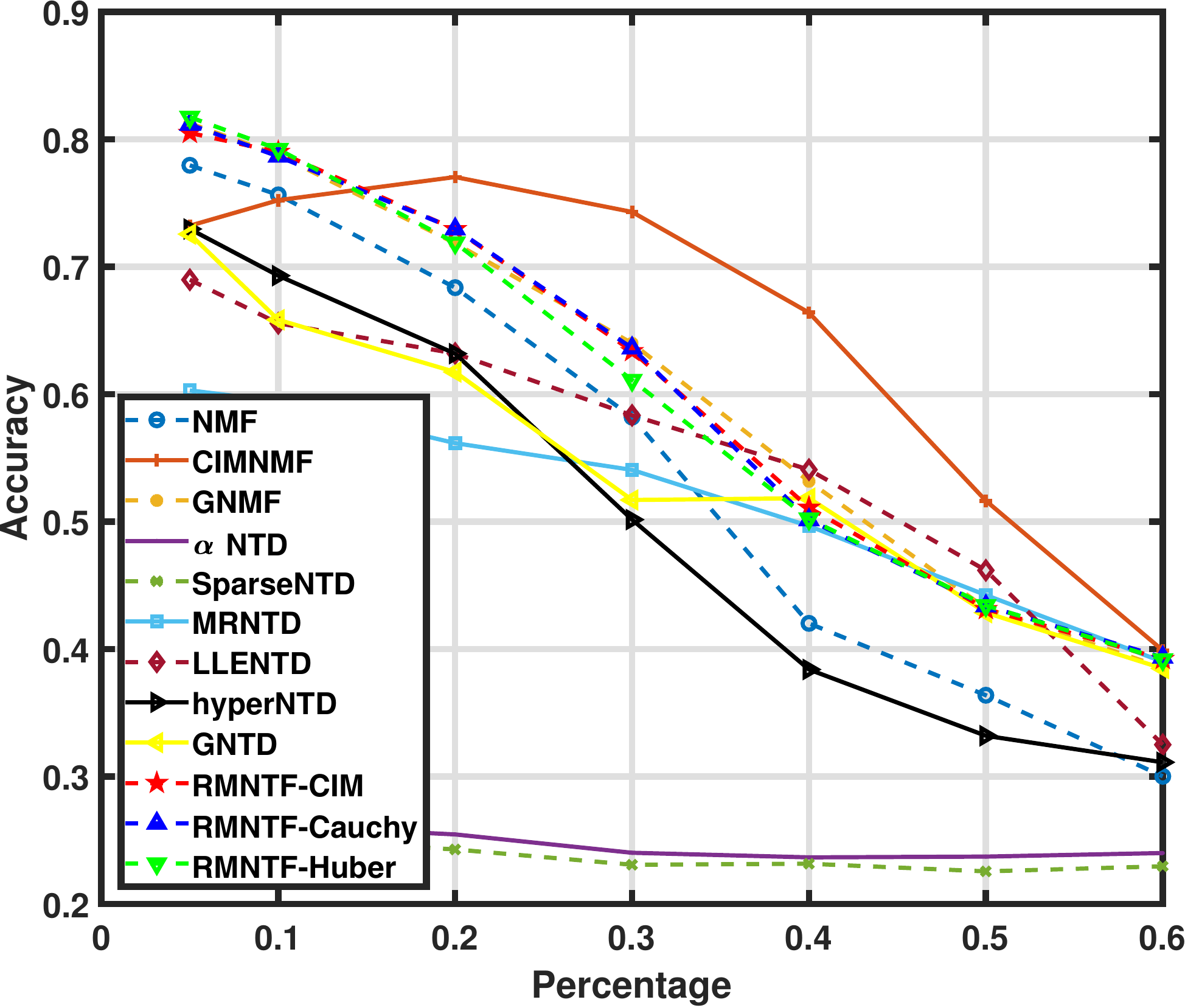}%
\label{Reuters_delta_Coh} \\
\includegraphics[width=1.5in]{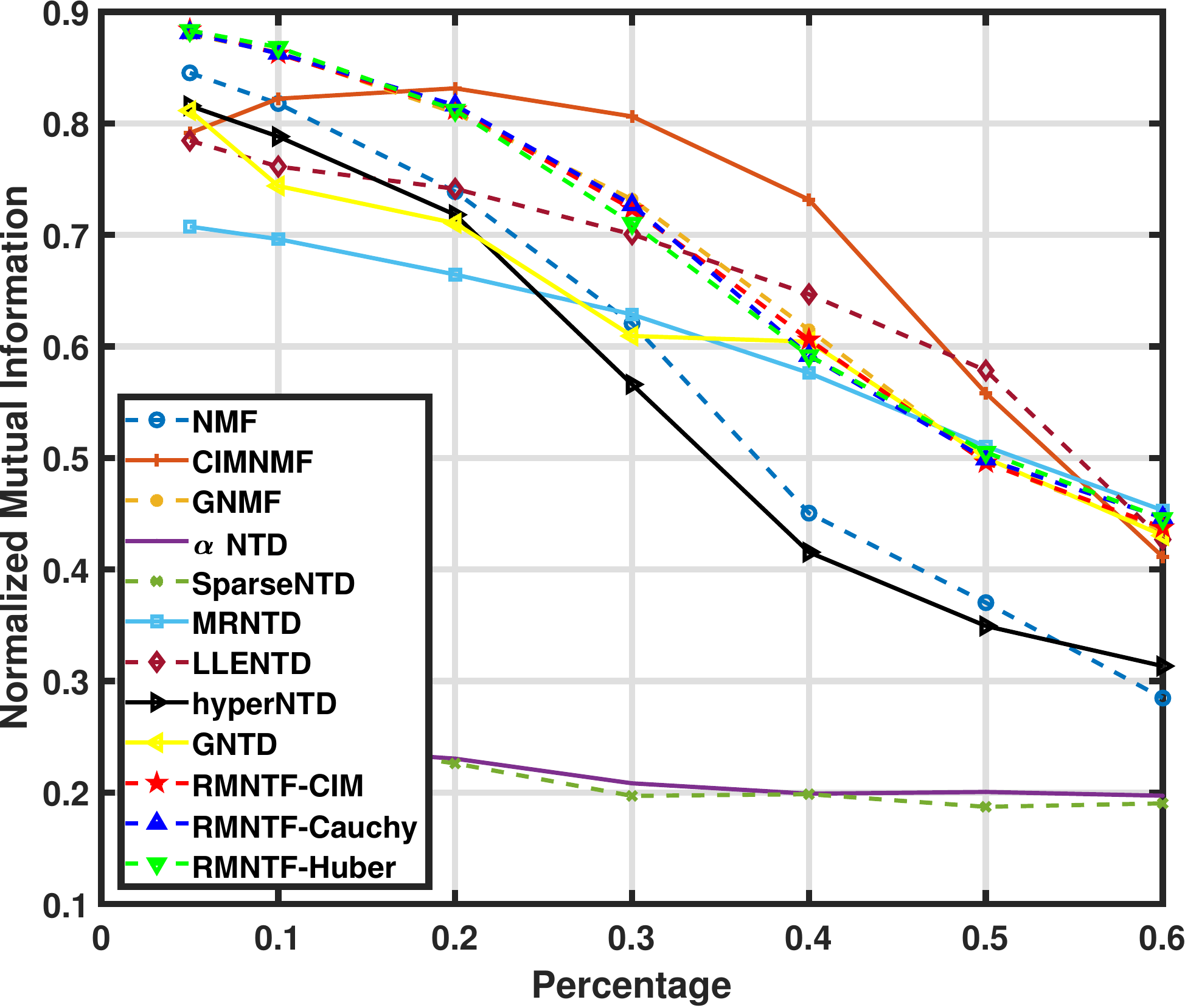}%
\end{minipage}
}
\subfloat[]{
\begin{minipage}[b]{0.22\textwidth}
\includegraphics[width=1.5in]{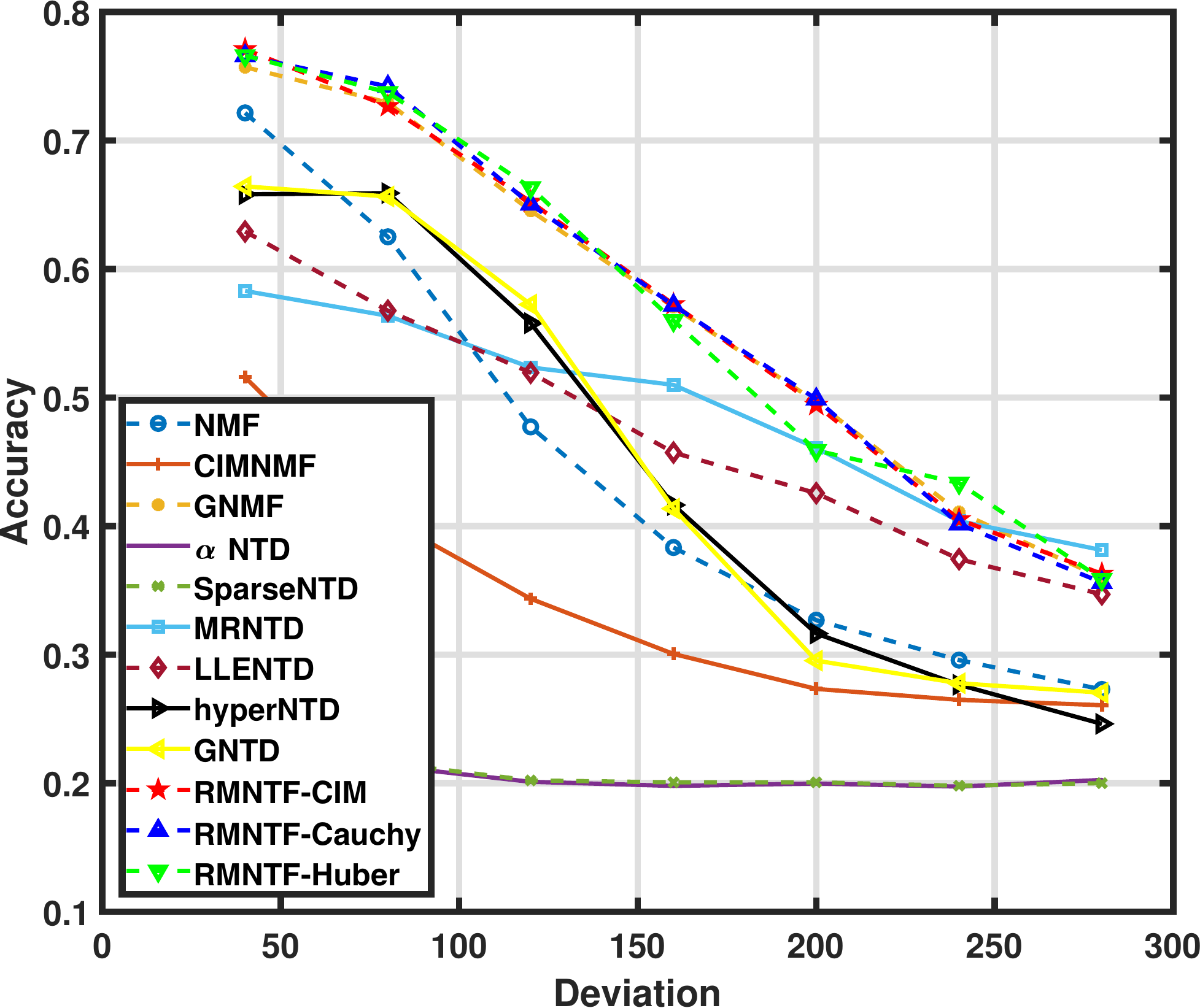}%
\label{Reuters_delta_Coh} \\
\includegraphics[width=1.5in]{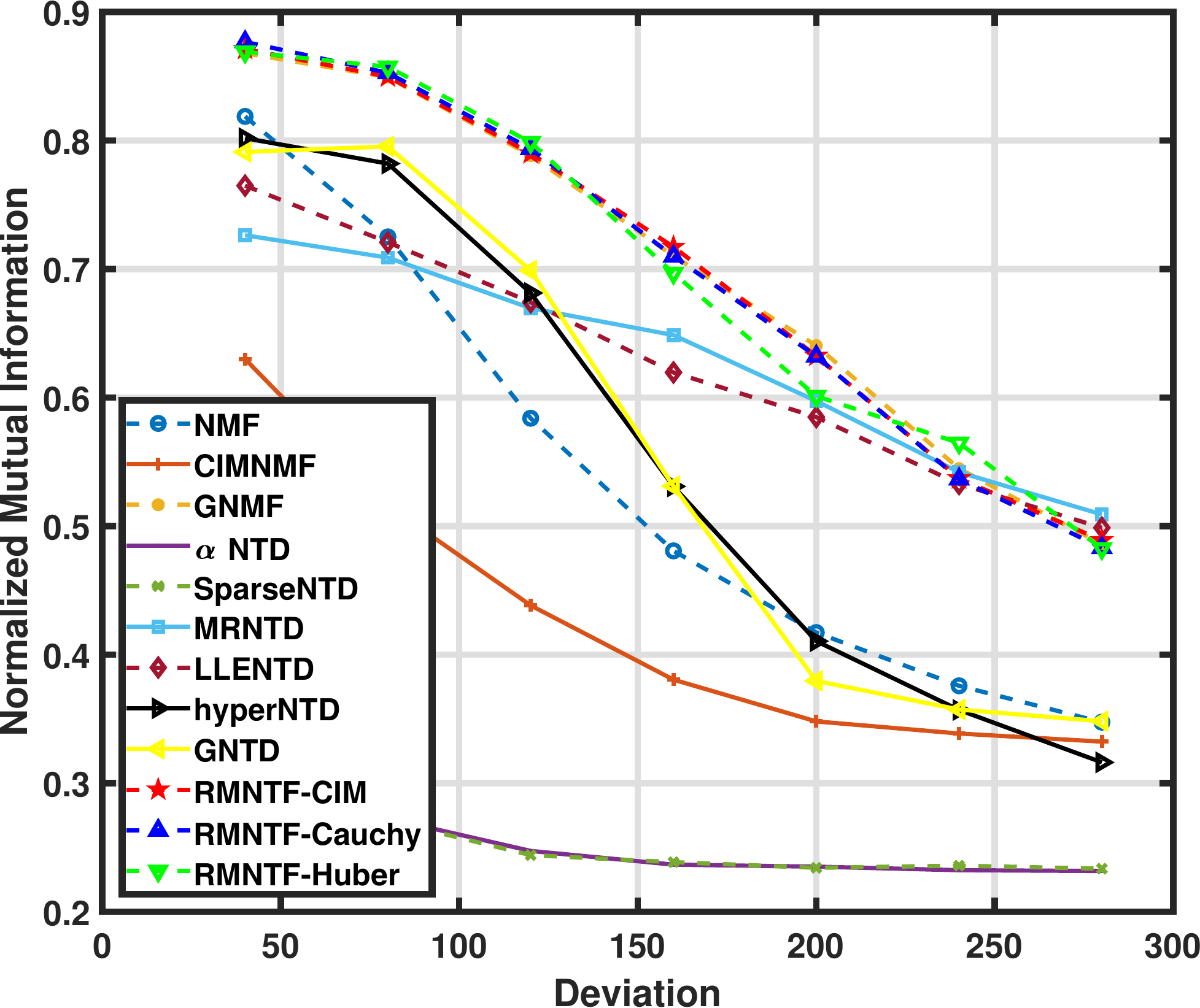}%
\label{Reuters_delta_Coh} \\
\includegraphics[width=1.5in]{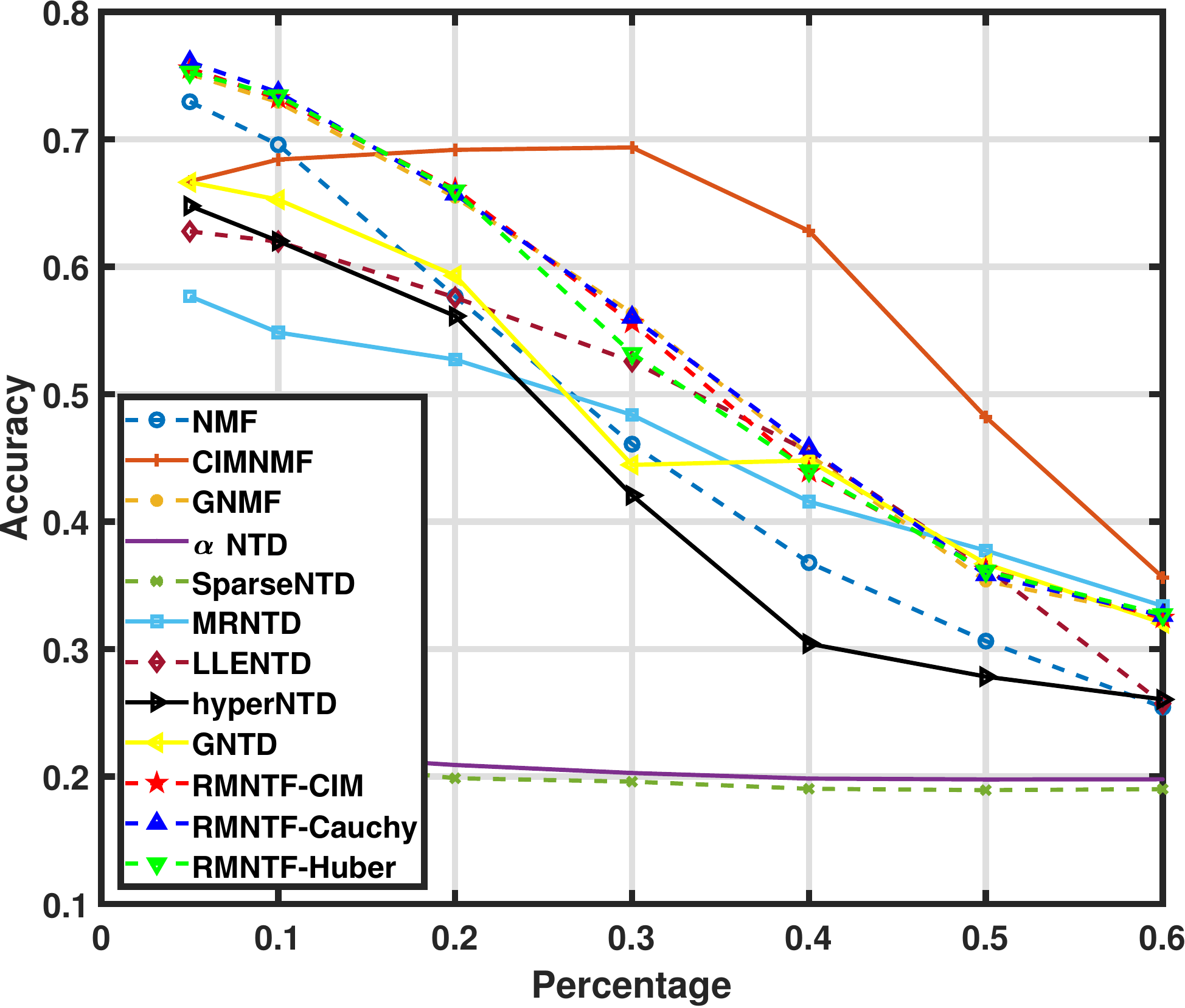}%
\label{Reuters_delta_Coh} \\
\includegraphics[width=1.5in]{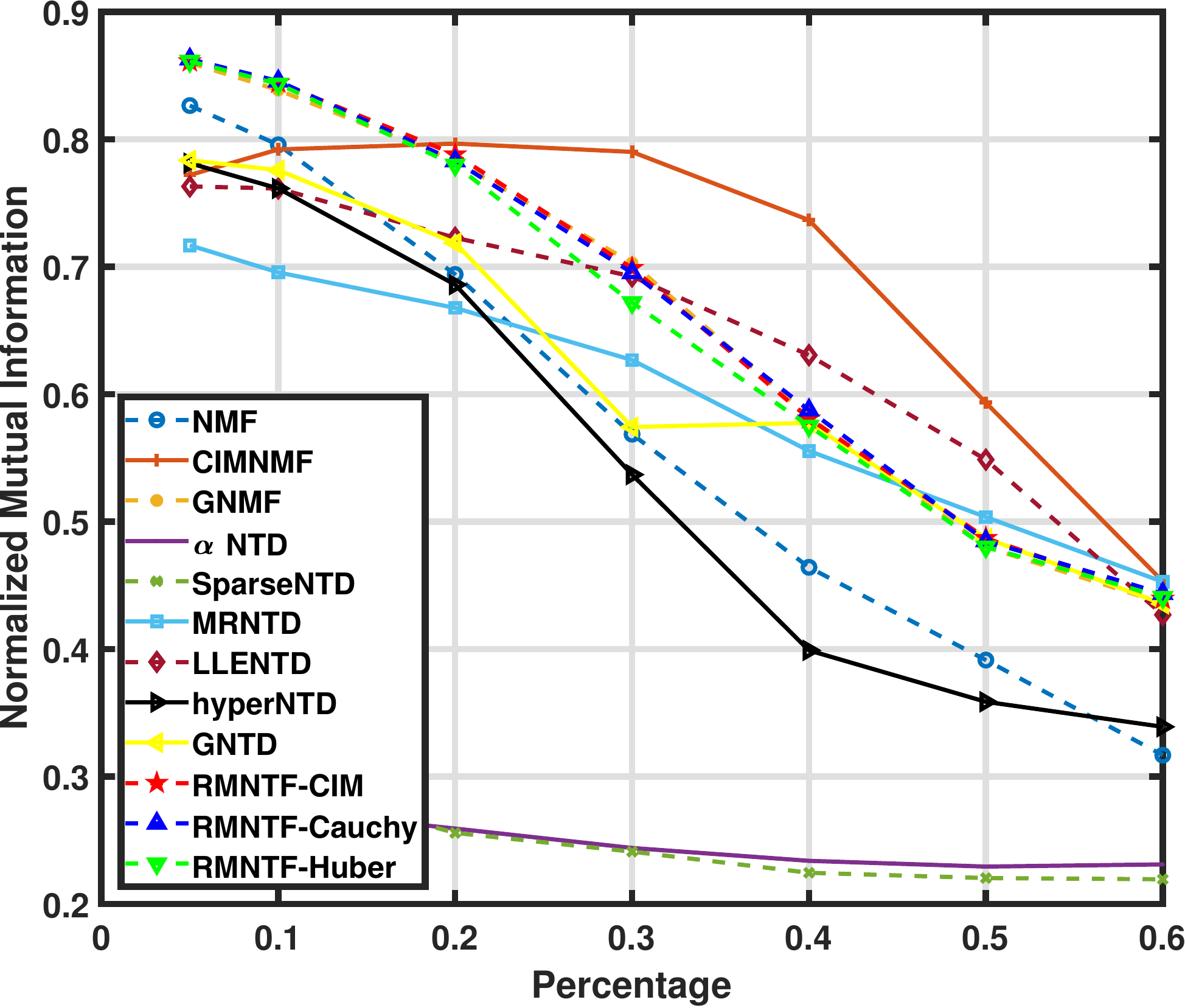}%
\end{minipage}
}
\subfloat[]{
\begin{minipage}[b]{0.22\textwidth}
\includegraphics[width=1.5in]{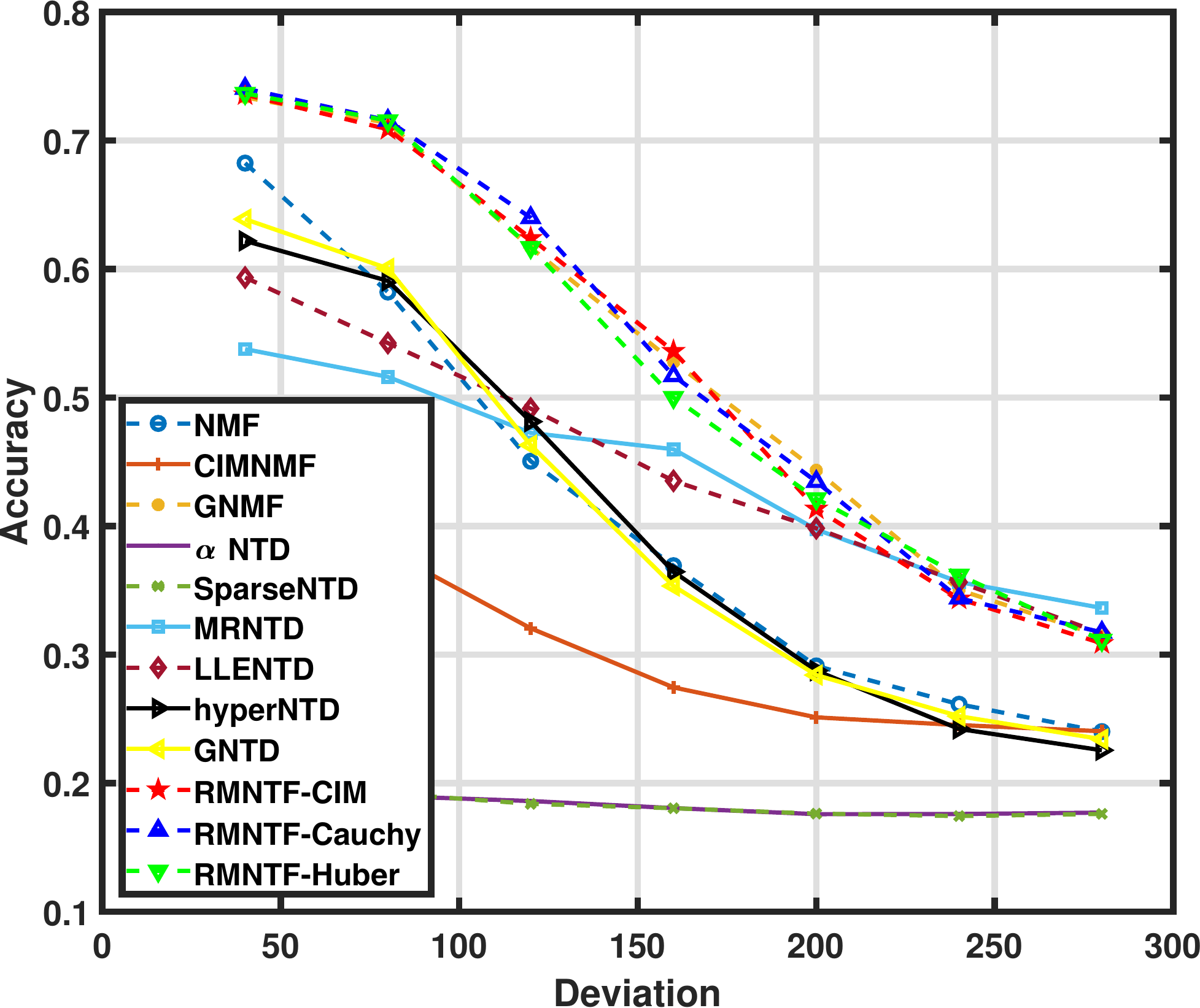}%
\label{Reuters_delta_Coh} \\
\includegraphics[width=1.5in]{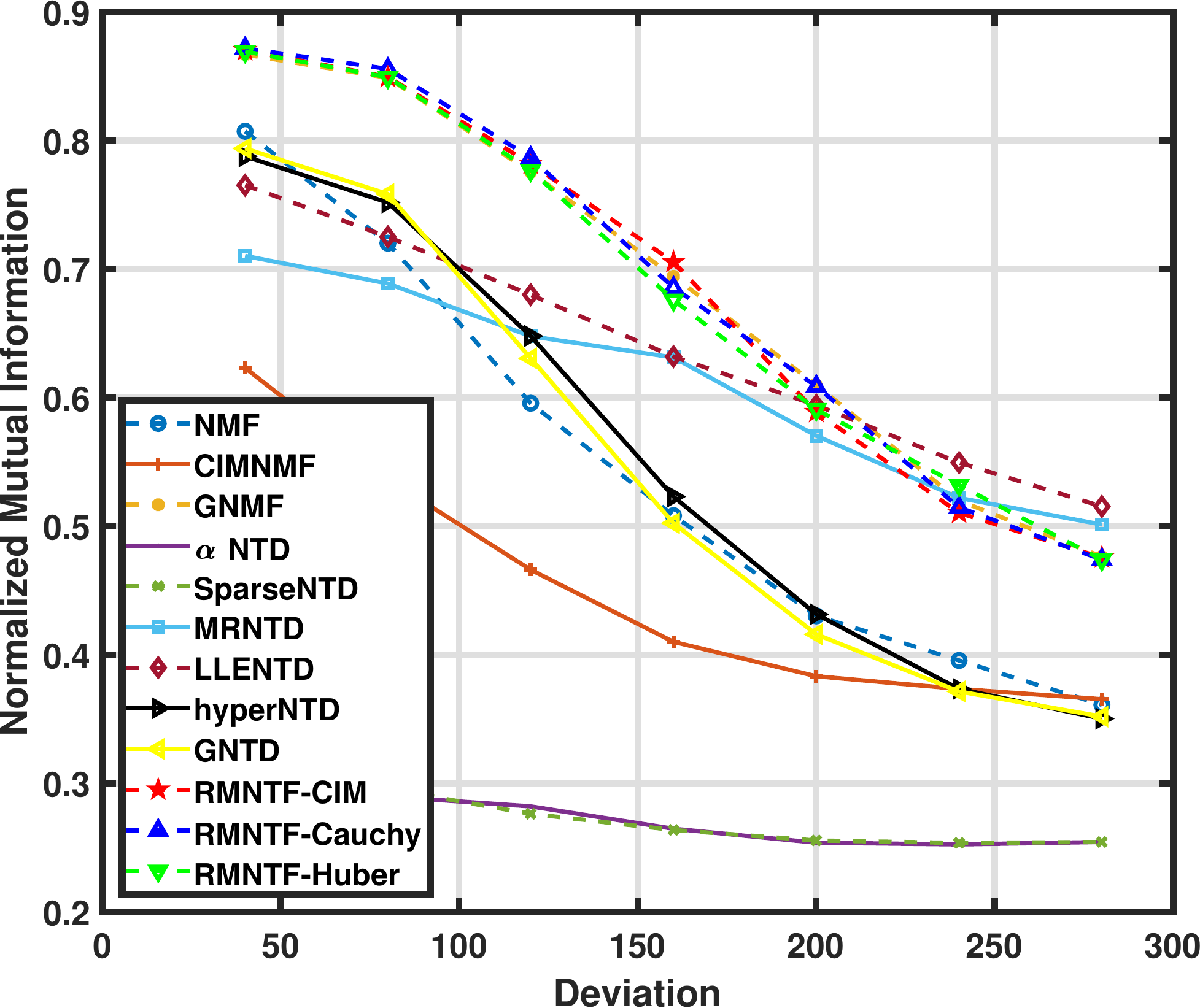}%
\label{Reuters_delta_Coh} \\
\includegraphics[width=1.5in]{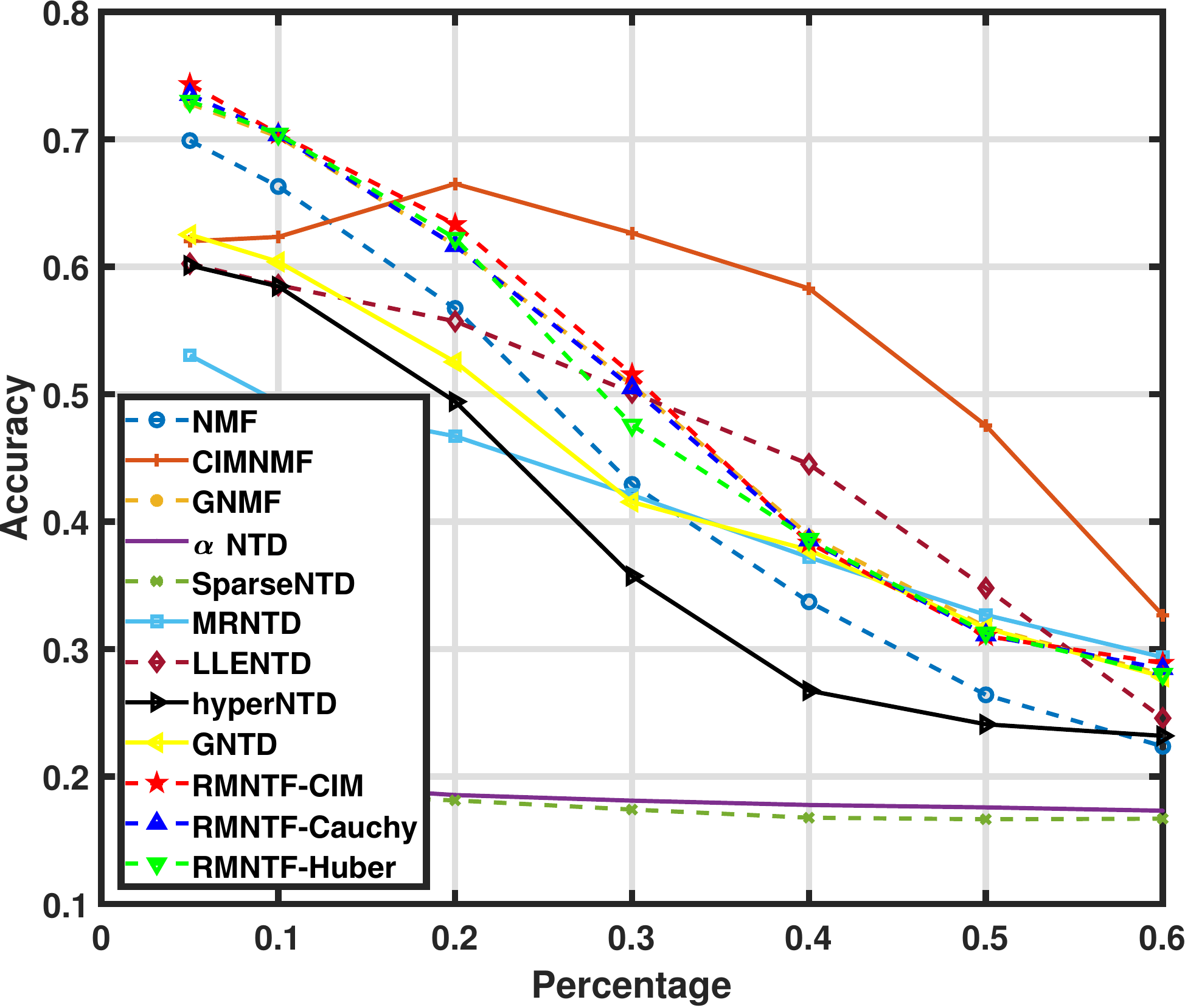}%
\label{Reuters_delta_Coh} \\
\includegraphics[width=1.5in]{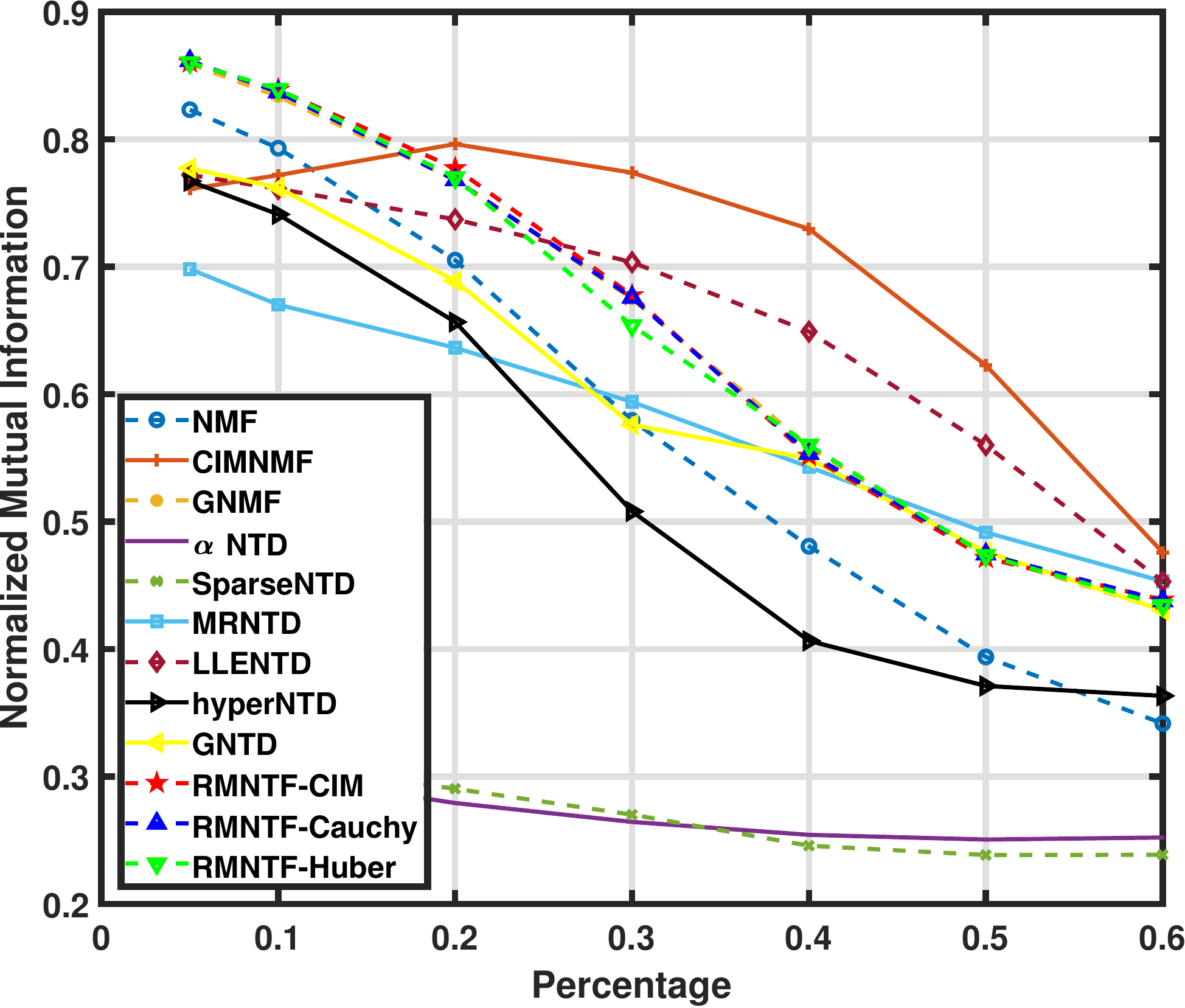}%
\end{minipage}
}

\vspace{2mm}
\caption{Evaluation of proposed methods on ORL database contaminated by Laplace noise and salt \& pepper noise, respectively. (a) Average accuracy and NMI on the subset of ORL which contains $5$ categories, the first two figures show the results contaminated by Laplace noise and the remains show the results contaminated by salt \& pepper noise. (b) Average evaluation on the subset of ORL which contains $10$ categories. (c) Average evaluation on the subset of ORL which contains $15$ categories. (d) Average evaluation on the subset of ORL which contains $20$ categories.}
\label{Reuters_delta}
\end{figure*}

\begin{figure*}[t]
\vspace{-0.5cm} 
\setlength{\abovecaptionskip}{0cm} 
\setlength{\belowcaptionskip}{-0cm} 
\centering
\subfloat[]{
\begin{minipage}[b]{0.18\textwidth}
\includegraphics[width=1.5in]{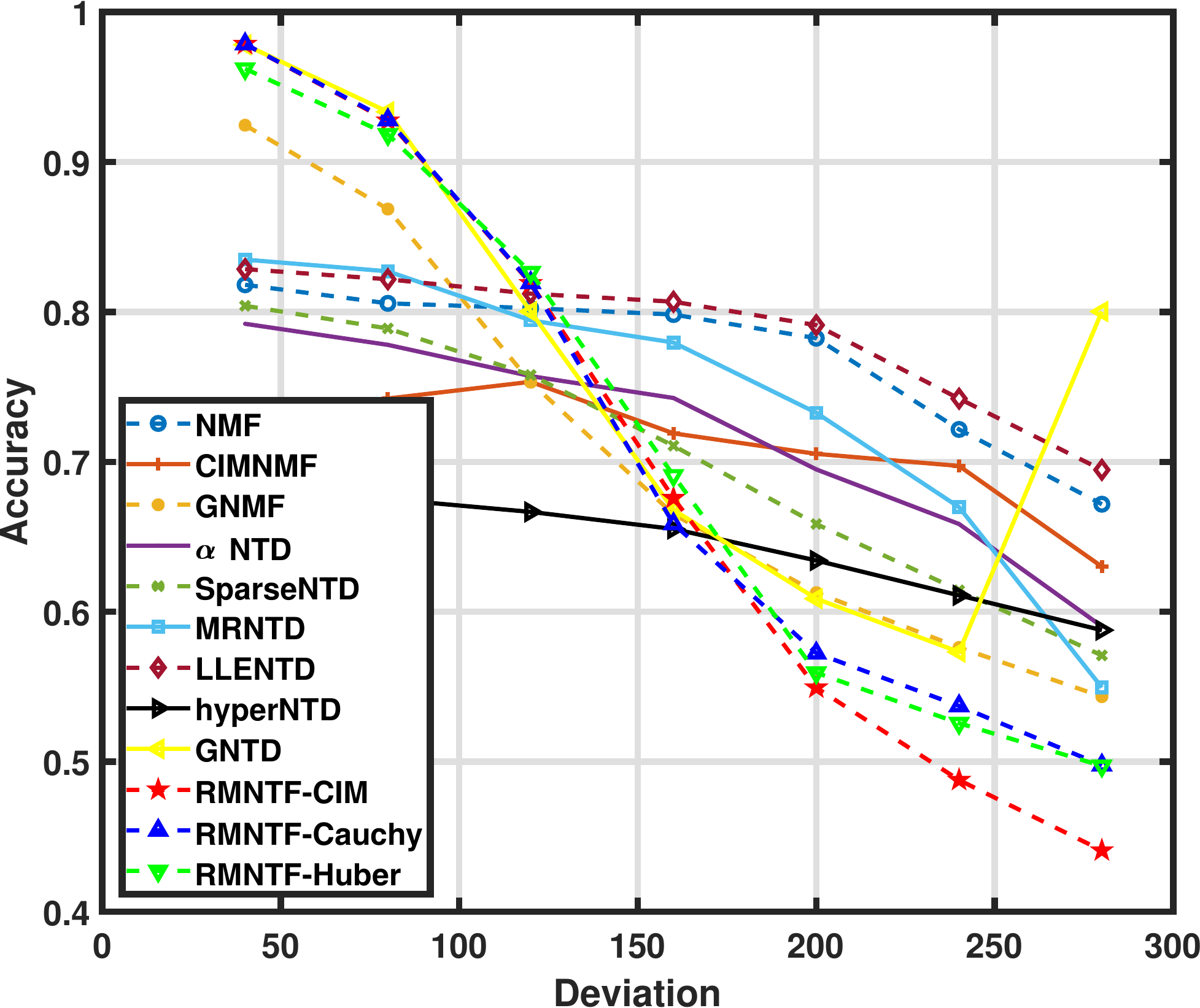}%
\label{Reuters_delta_Coh} \\
\includegraphics[width=1.5in]{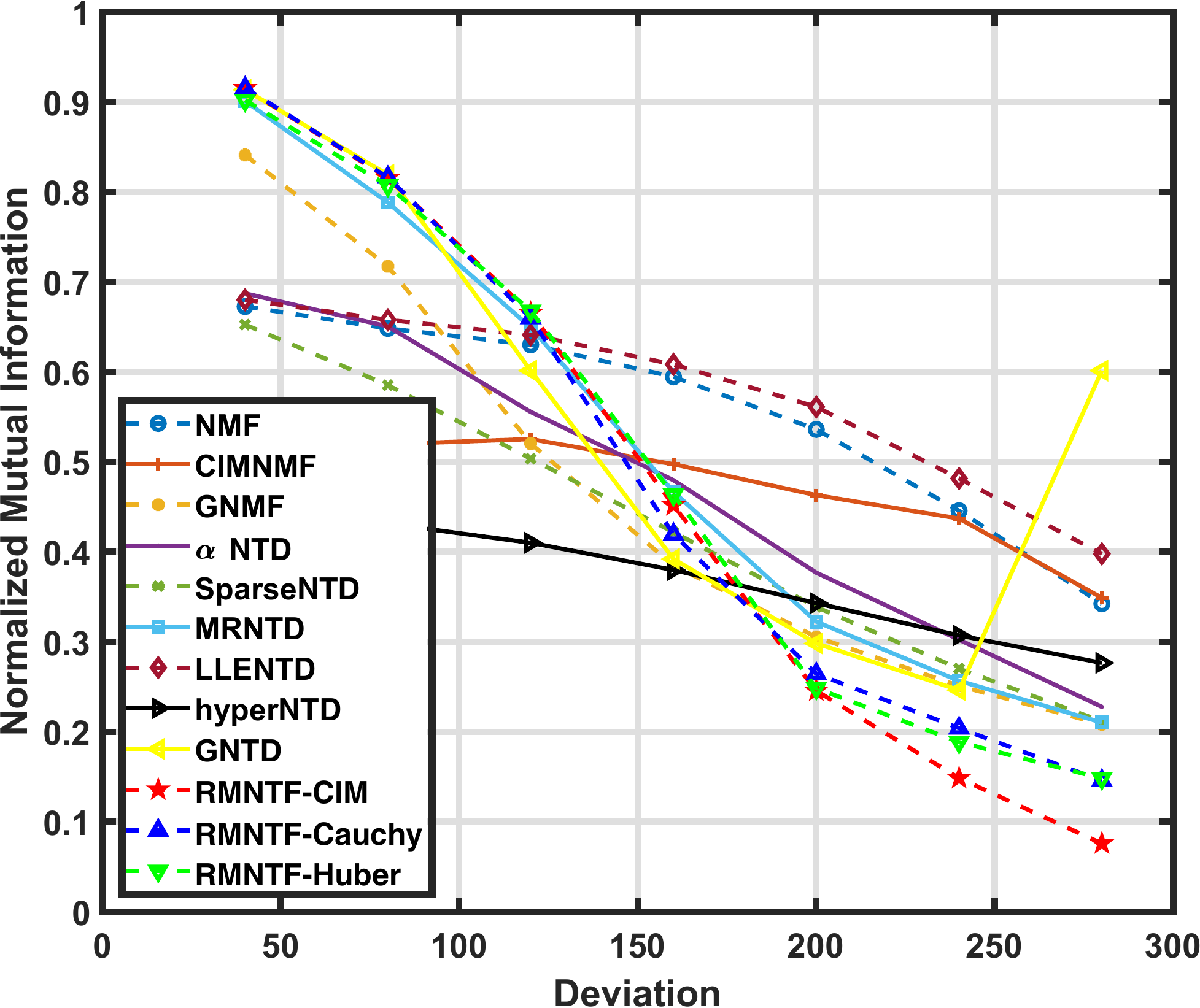}%
\label{Reuters_delta_Coh} \\
\includegraphics[width=1.5in]{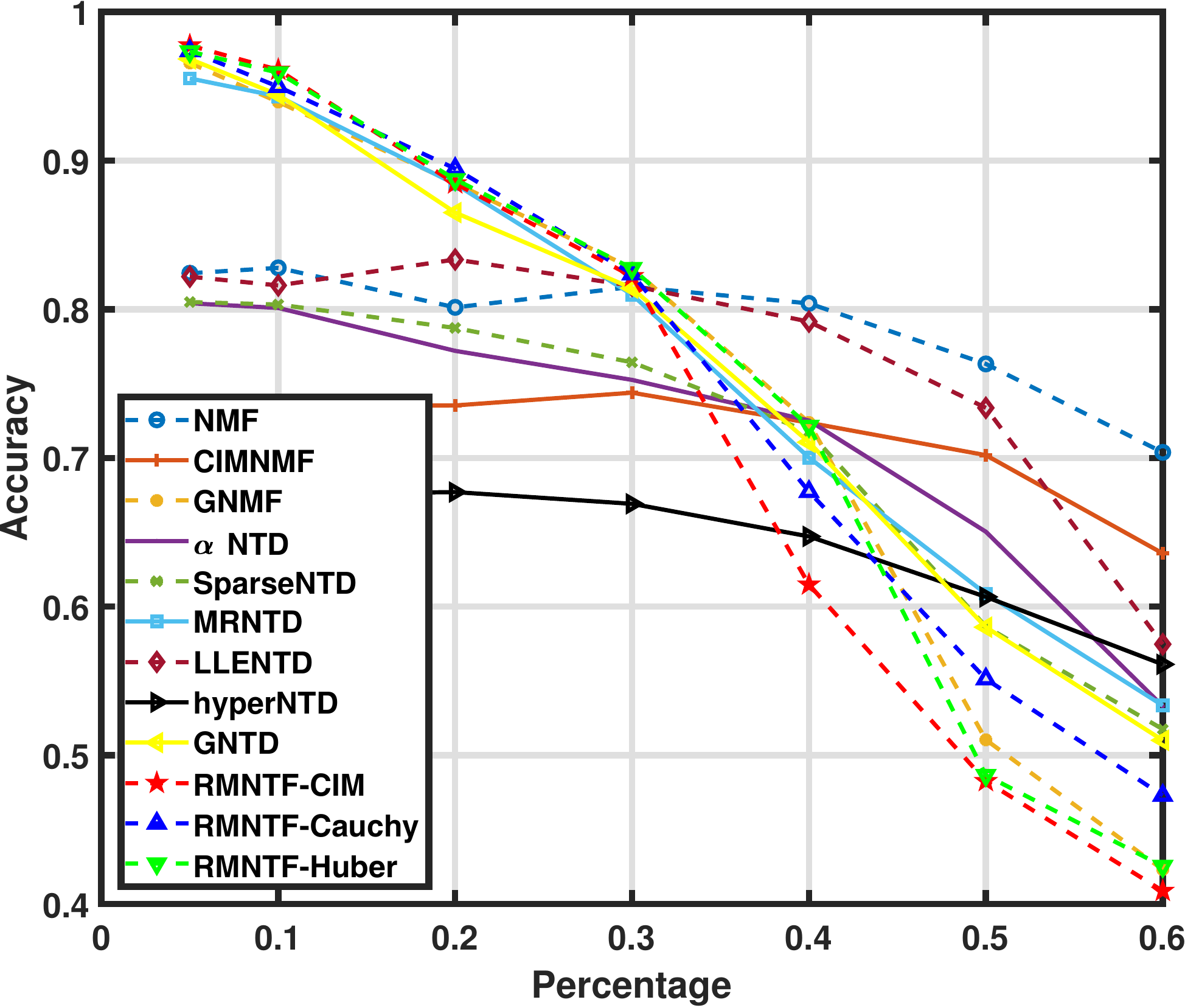}%
\label{Reuters_delta_Coh} \\
\includegraphics[width=1.5in]{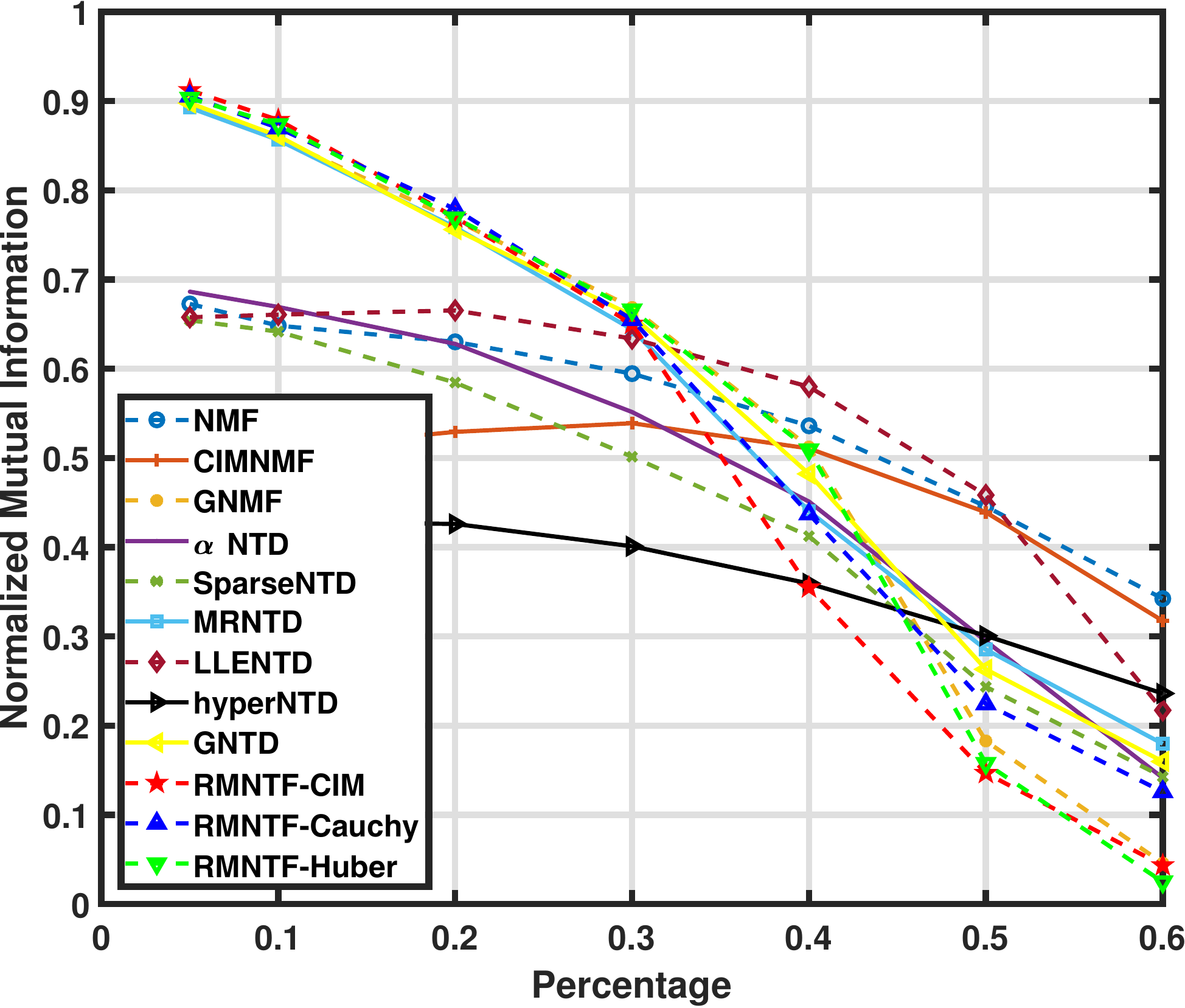}%
\end{minipage}
}
\hfil
\subfloat[]{
\begin{minipage}[b]{0.22\textwidth}
\includegraphics[width=1.5in]{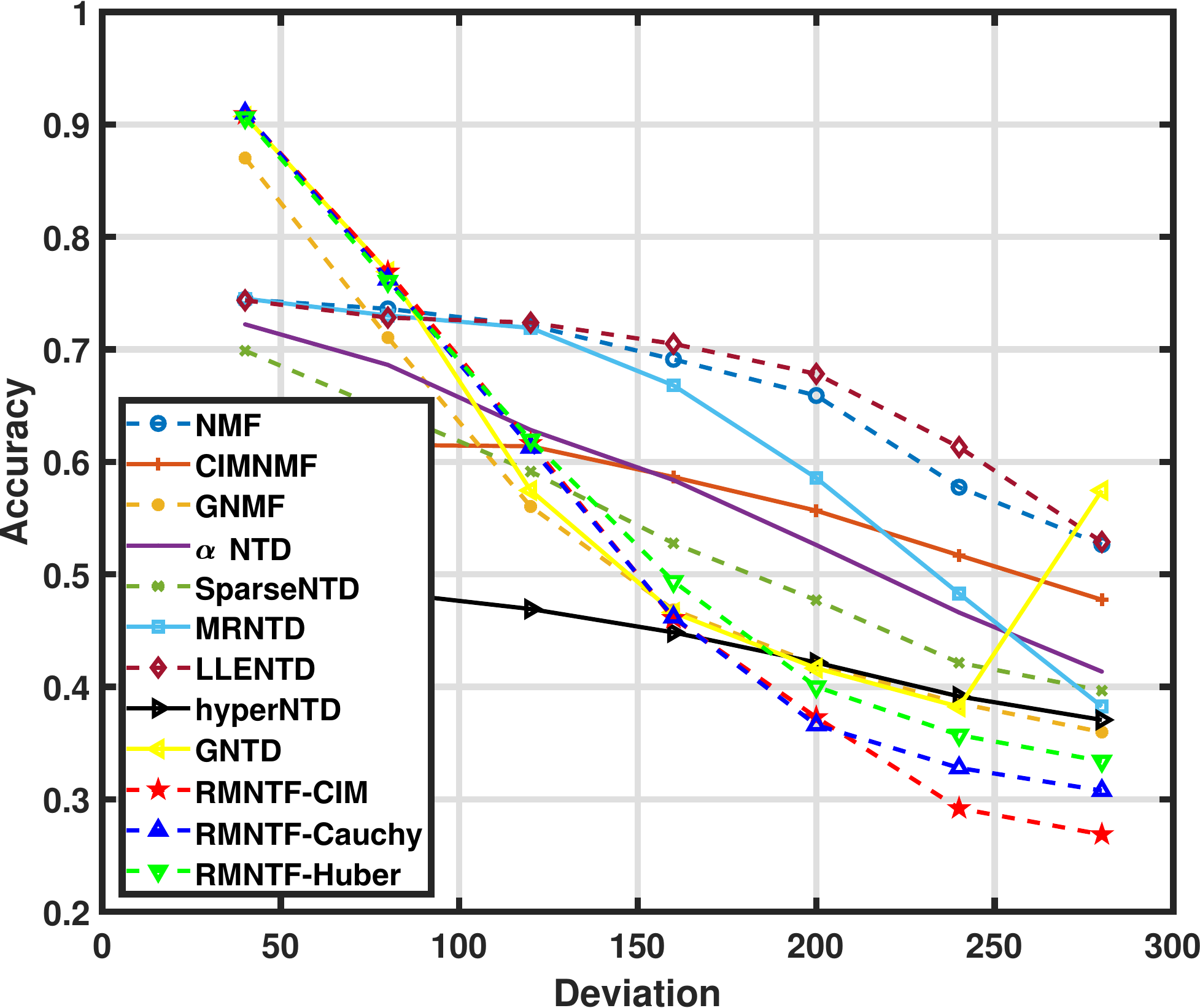}%
\label{Reuters_delta_Coh} \\
\includegraphics[width=1.5in]{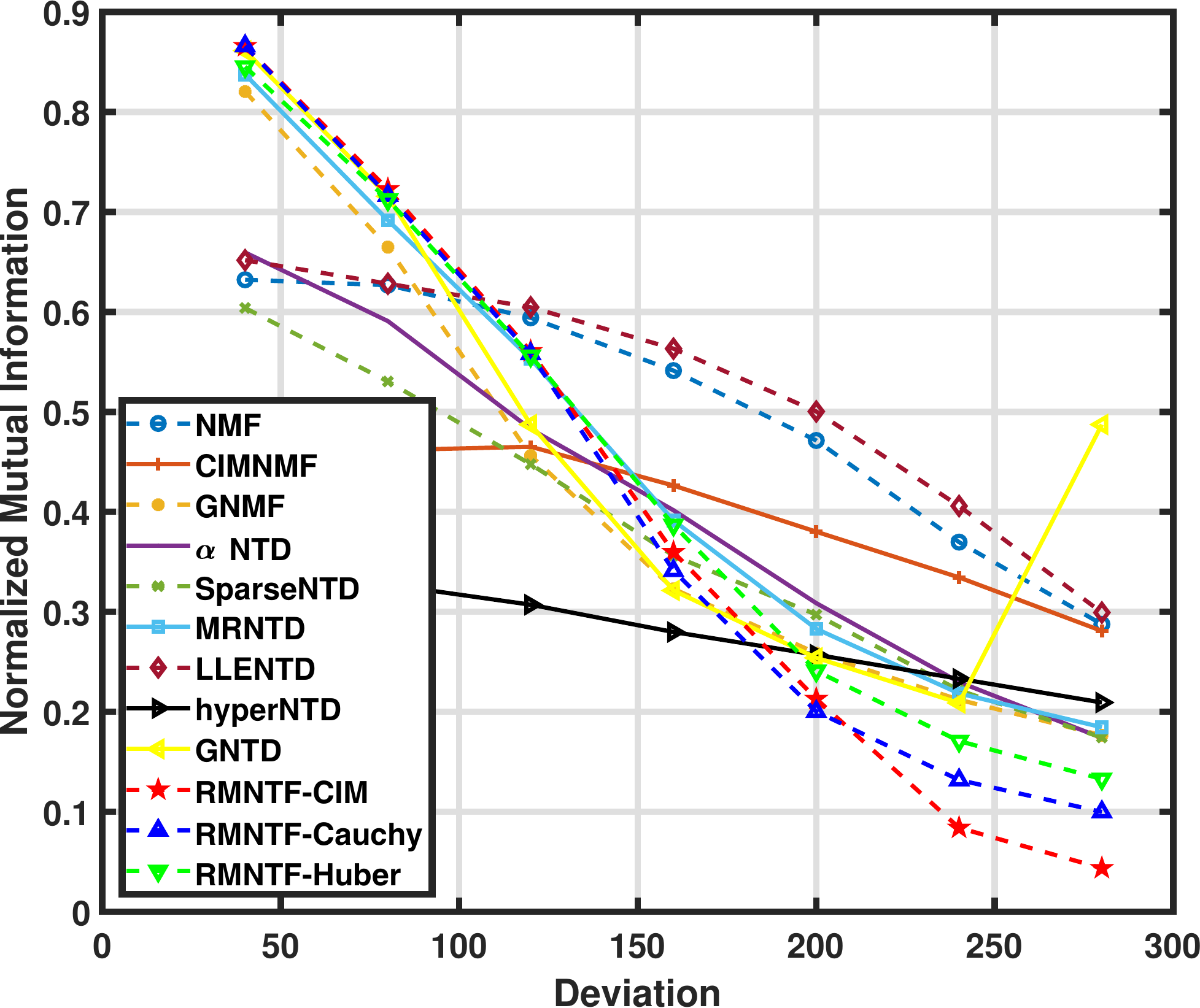}%
\label{Reuters_delta_Coh} \\
\includegraphics[width=1.5in]{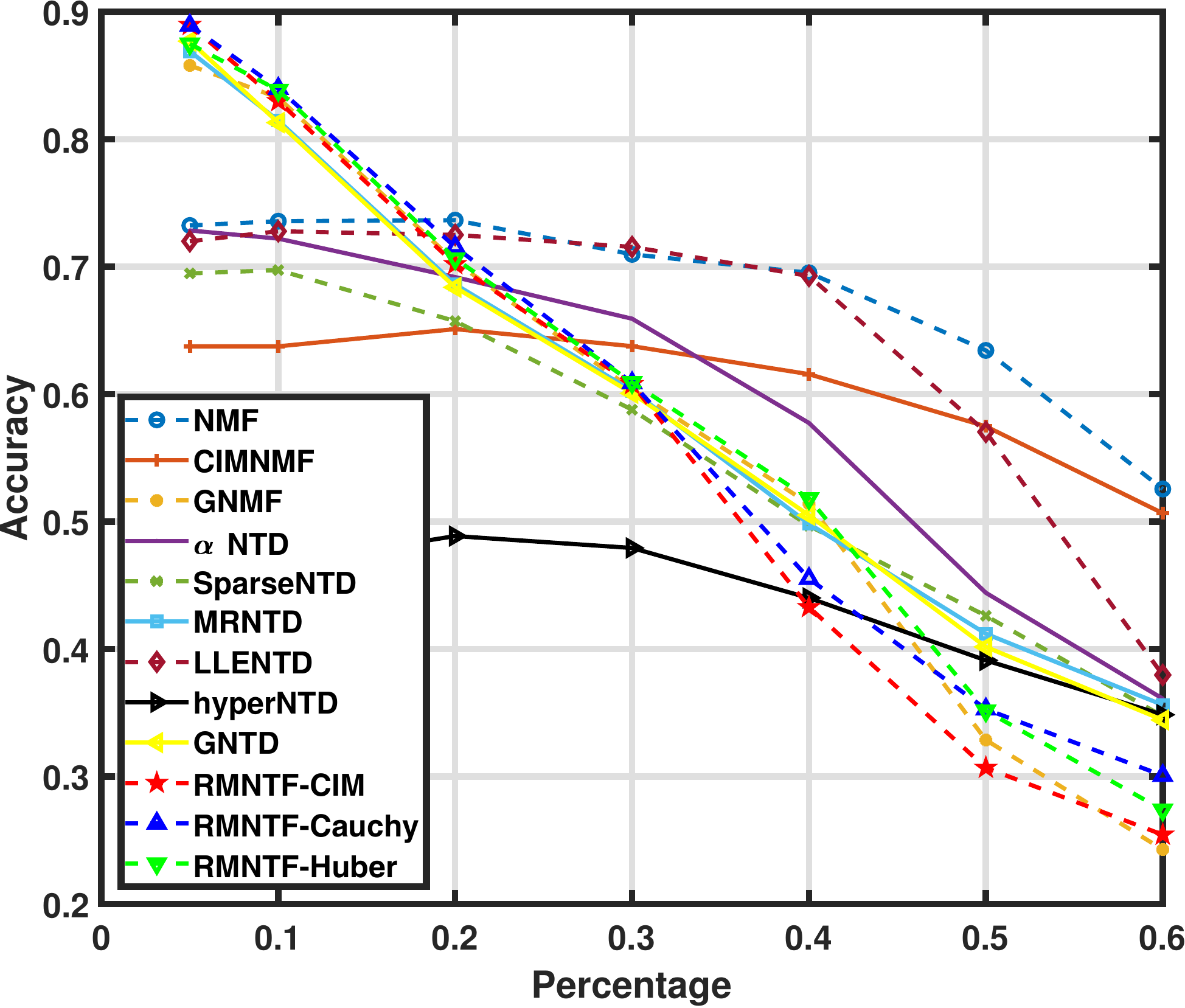}%
\label{Reuters_delta_Coh} \\
\includegraphics[width=1.5in]{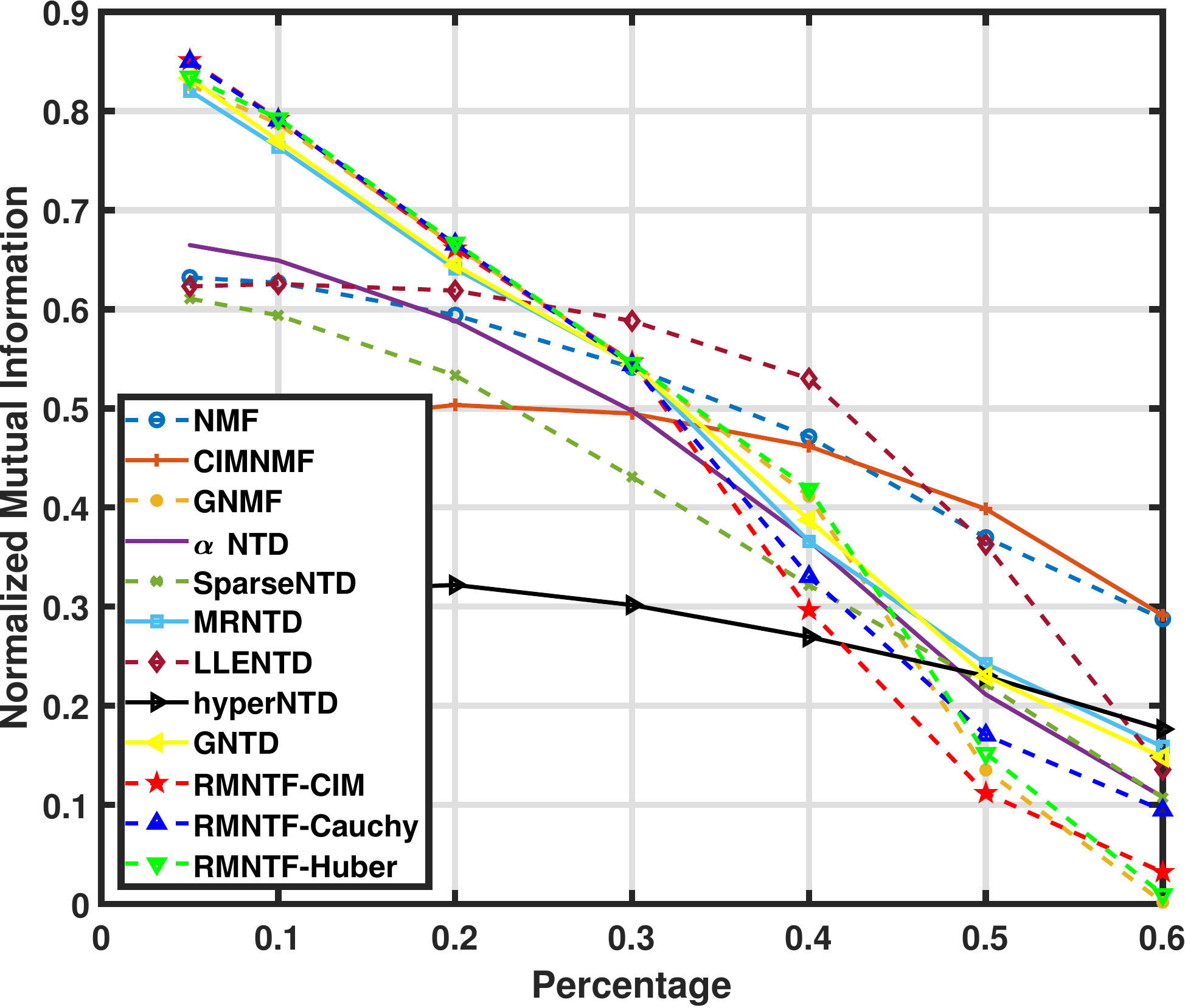}%
\end{minipage}
}
\subfloat[]{
\begin{minipage}[b]{0.22\textwidth}
\includegraphics[width=1.5in]{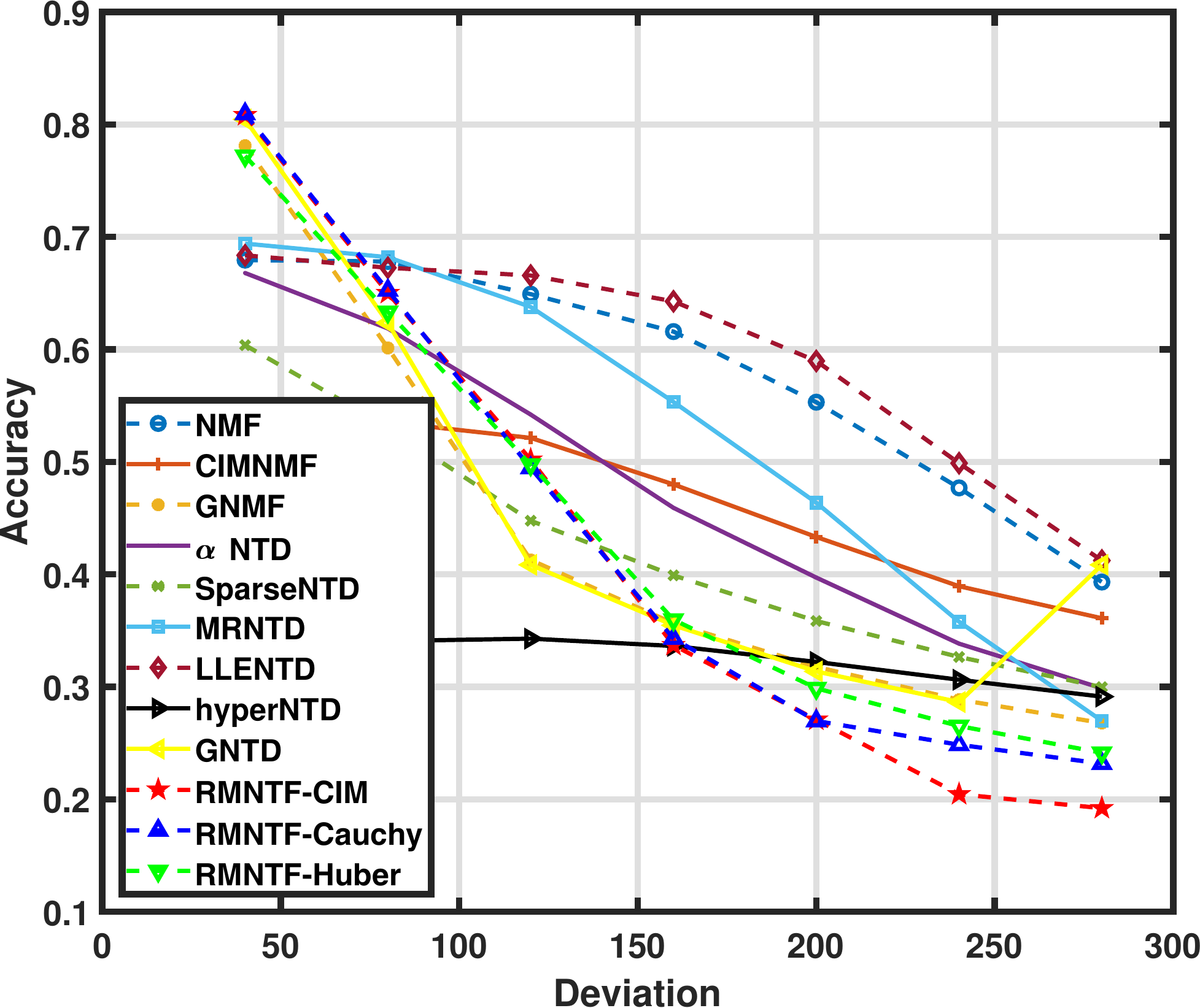}%
\label{Reuters_delta_Coh} \\
\includegraphics[width=1.5in]{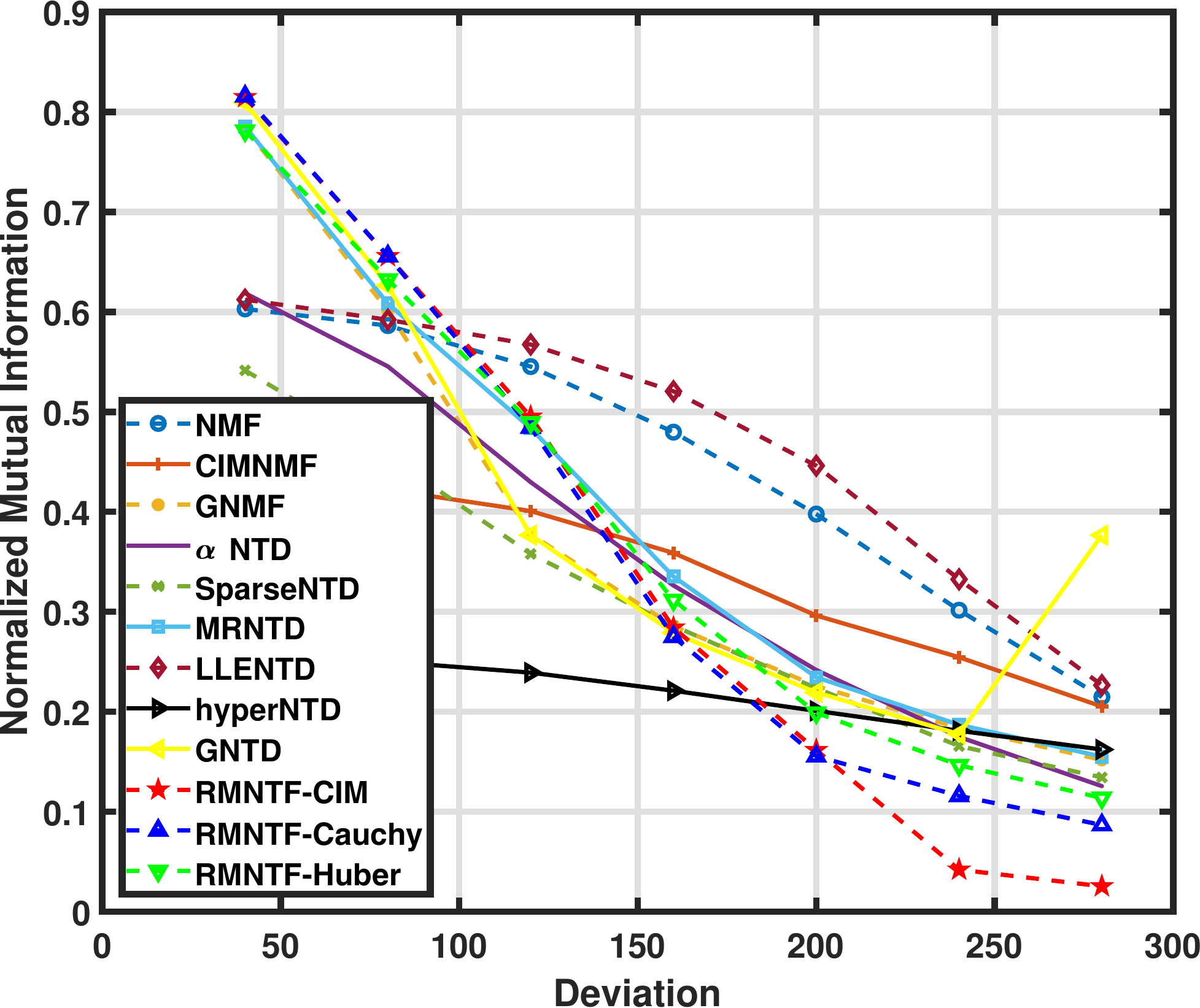}%
\label{Reuters_delta_Coh} \\
\includegraphics[width=1.5in]{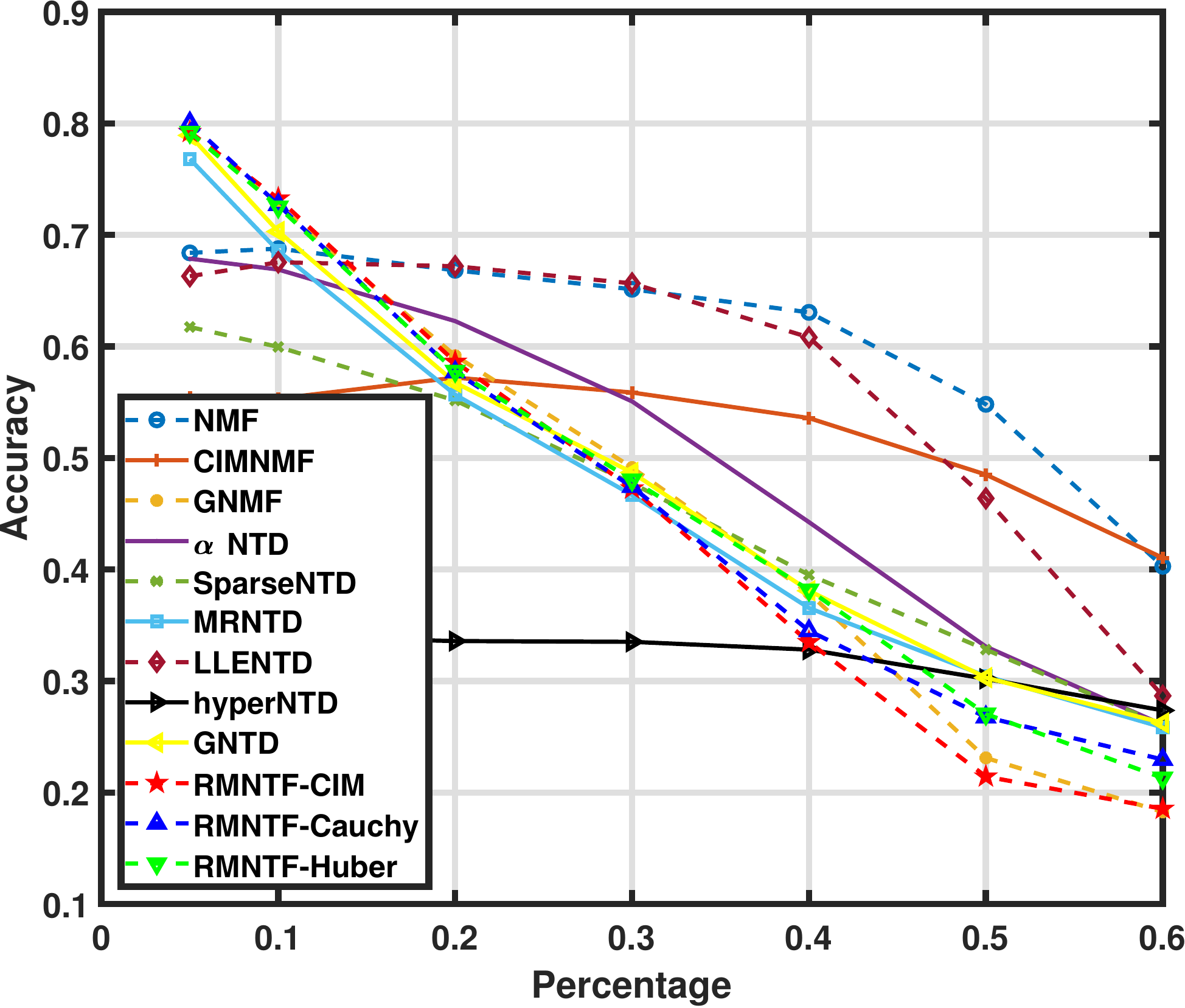}%
\label{Reuters_delta_Coh} \\
\includegraphics[width=1.5in]{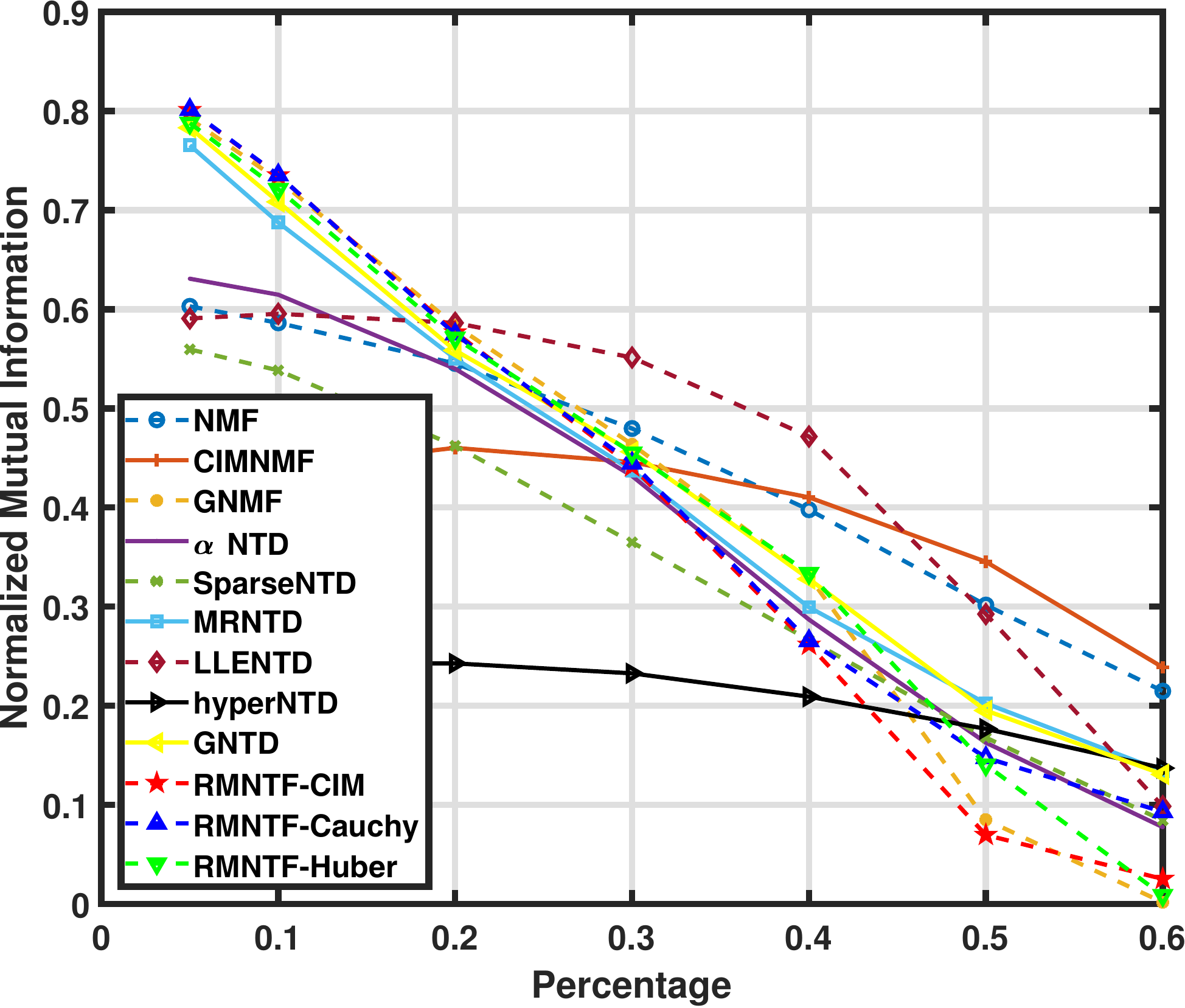}%
\end{minipage}
}
\subfloat[]{
\begin{minipage}[b]{0.22\textwidth}
\includegraphics[width=1.5in]{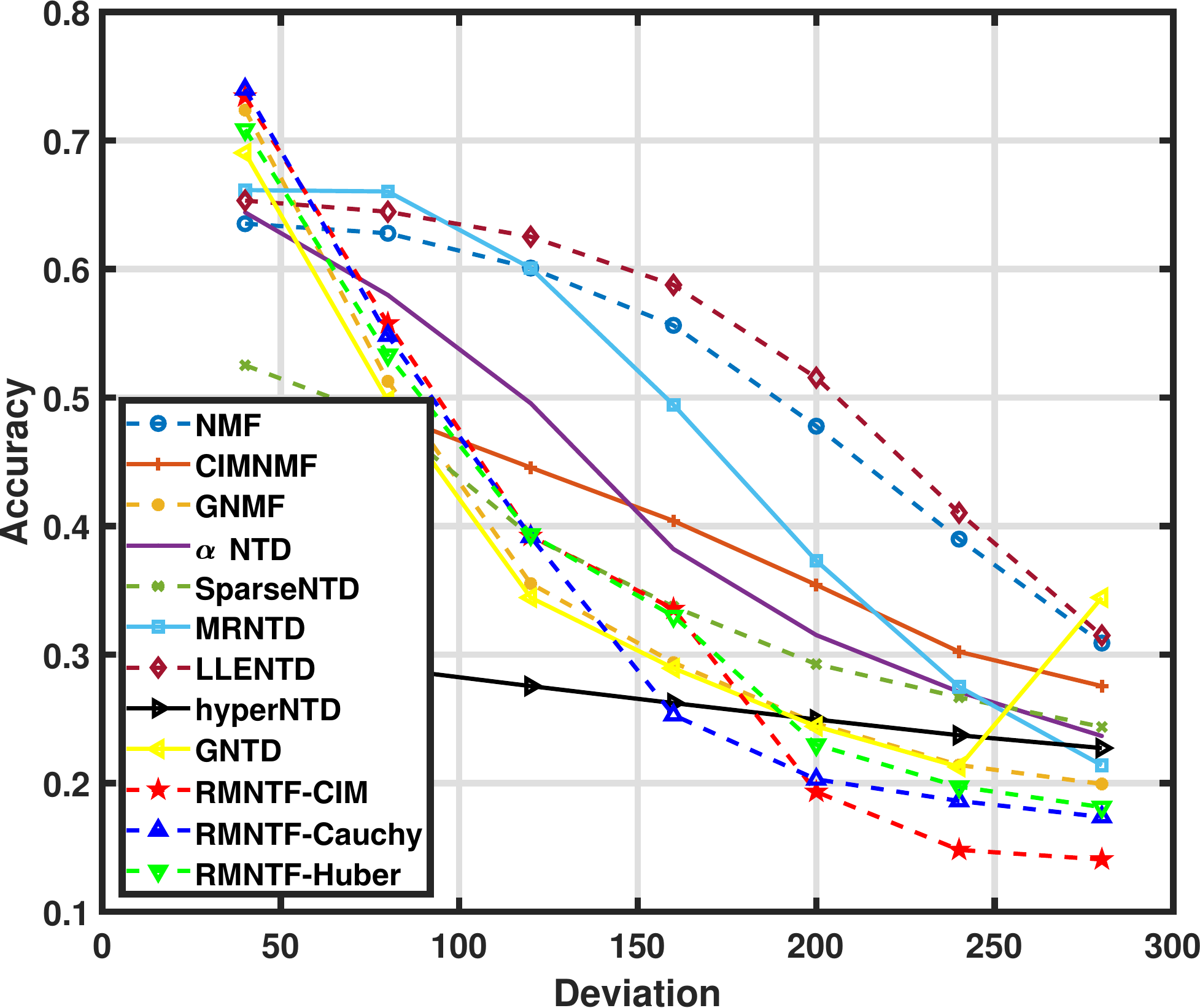}%
\label{Reuters_delta_Coh} \\
\includegraphics[width=1.5in]{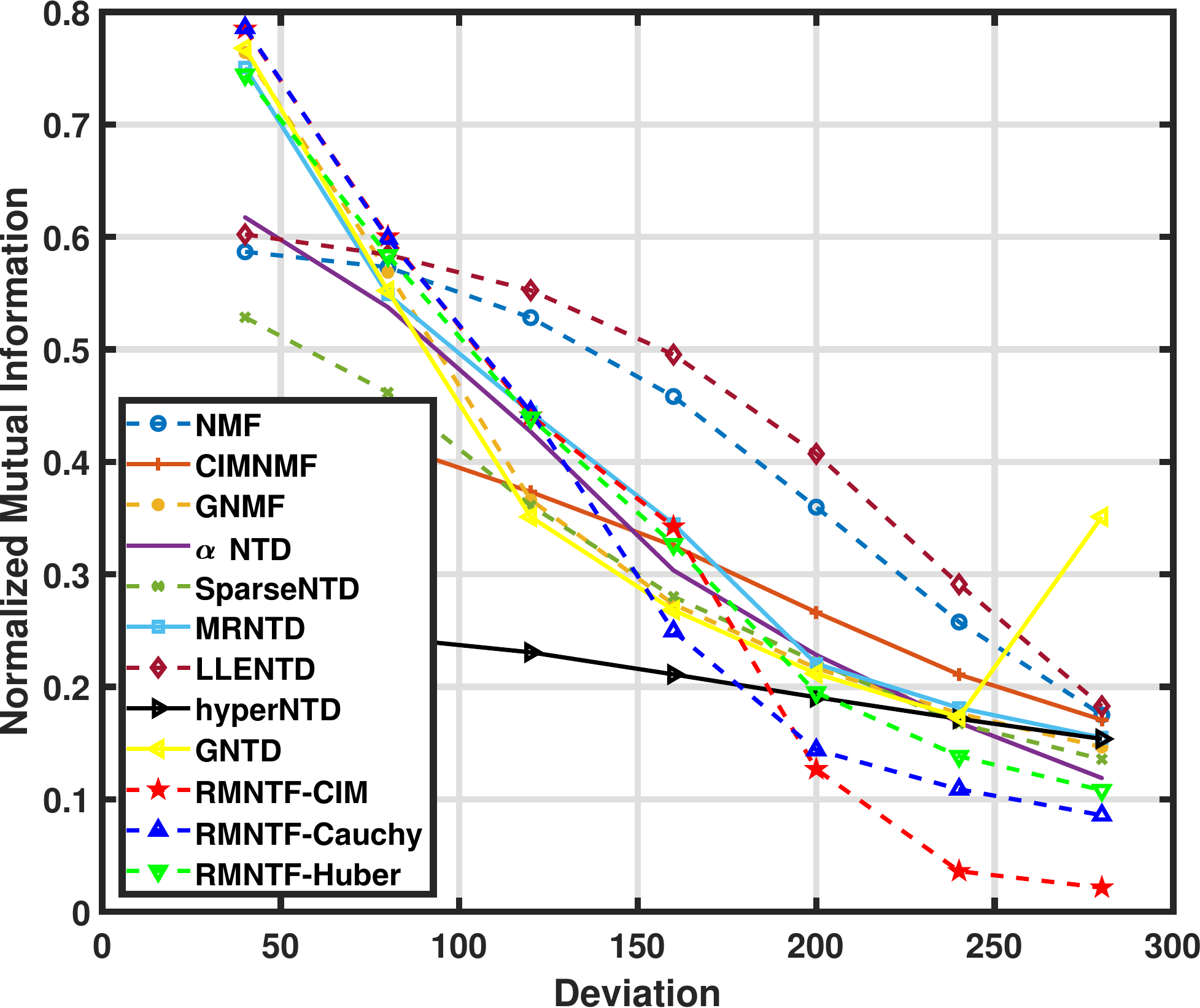}%
\label{Reuters_delta_Coh} \\
\includegraphics[width=1.5in]{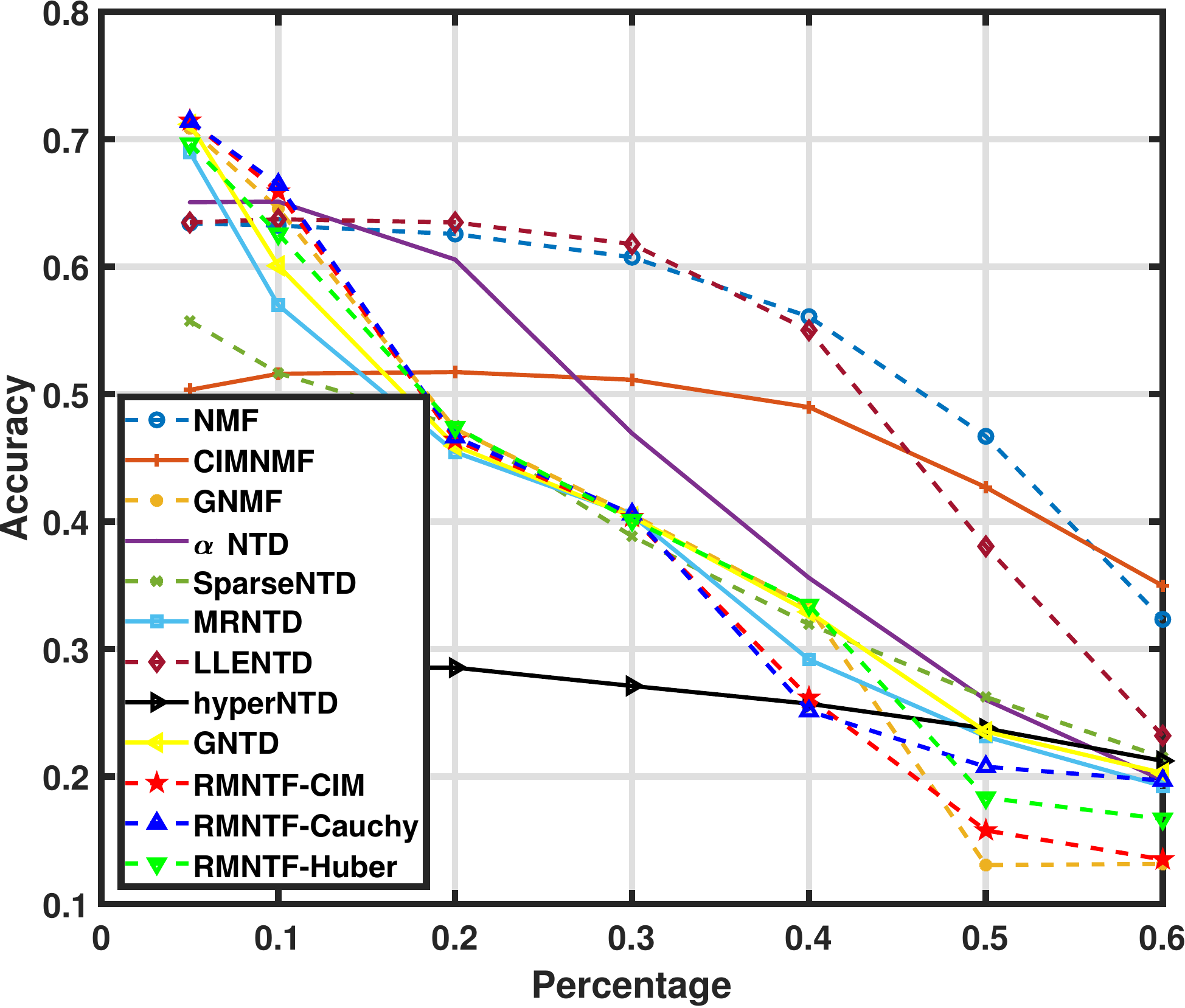}%
\label{Reuters_delta_Coh} \\
\includegraphics[width=1.5in]{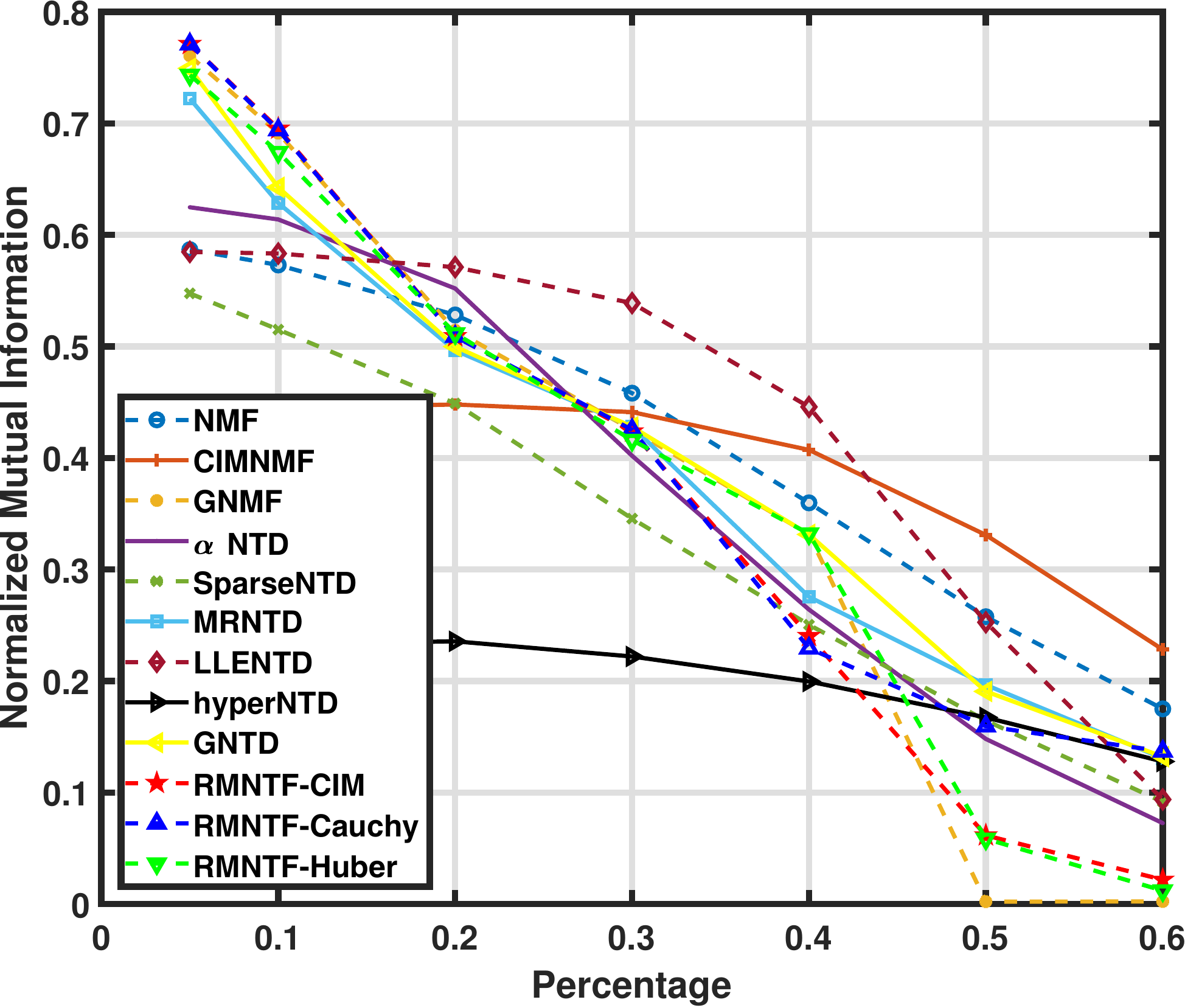}%
\end{minipage}
}

\vspace{2mm}
\caption{Evaluation of proposed methods on USPS database contaminated by Laplace noise and salt \& pepper noise, respectively. (a) Average accuracy and NMI on the subset of USPS which contains $3$ categories, the first two figures show the results contaminated by Laplace noise and the remains show the results contaminated by salt \& pepper noise. (b) Average evaluation on the subset of USPS which contains $5$ categories. (c) Average evaluation on the subset of USPS which contains $7$ categories. (d) Average evaluation on the subset of USPS which contains $10$ categories.}
\label{Reuters_delta}
\end{figure*}

\subsection{Discussion}

\subsubsection{Generalization error bound and convergence analysis}
\label{Convergence}
Before we prove Theorem 1, we first give the definition of an upper bound auxiliary function.

\begin{myDef}
$\mathcal{F}(x,x^\prime)$ is an upper bound auxiliary function for $f(x)$ if the following conditions are satisfied:
\begin{equation}\label{AuxiliaryF}
\begin{split}
 \mathcal{F}(x,x^\prime) \geq f(x), \mathcal{F}(x,x) = f(x).
\end{split}
\end{equation}
\end{myDef}

\begin{myCor}
If $\mathcal{F}(\cdot,\cdot)$ is an upper bound auxiliary function for $f(x)$, then $f(x)$ is non-increasing under the update rule
\begin{equation}\label{CorF}
\begin{split}
 x^{t+1} = \arg \min_x \mathcal{F}(x,x^t),
\end{split}
\end{equation}
\end{myCor}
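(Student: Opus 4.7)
The plan is to exploit the two defining properties of the upper bound auxiliary function $\mathcal{F}$ together with the optimality of the update rule in a short three-link chain of inequalities. What needs to be shown is simply $f(x^{t+1}) \leq f(x^t)$ for every $t$, and the natural starting point is the majorization inequality $\mathcal{F}(x,x') \geq f(x)$ evaluated at the new iterate.

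Concretely, I would chain three elementary estimates. First, apply the majorization property with $x = x^{t+1}$ and $x' = x^t$ to obtain $f(x^{t+1}) \leq \mathcal{F}(x^{t+1},x^t)$. Second, invoke the defining property of the update rule, namely that $x^{t+1}$ minimizes $\mathcal{F}(\cdot,x^t)$, and use $x^t$ itself as a competing point in the argmin to conclude $\mathcal{F}(x^{t+1},x^t) \leq \mathcal{F}(x^t,x^t)$. Third, apply the tangency condition $\mathcal{F}(x^t,x^t) = f(x^t)$. Concatenating these three links yields $f(x^{t+1}) \leq \mathcal{F}(x^{t+1},x^t) \leq \mathcal{F}(x^t,x^t) = f(x^t)$, which is the claimed monotone descent.

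Since each step follows directly from the definition of an upper bound auxiliary function (Definition above) and from the definition of the update rule, there is essentially no technical obstacle. The only mild subtlety is the assumption that the $\arg\min$ in the update is attained so that $x^{t+1}$ is well-defined; if multiple minimizers exist, selecting any one of them is sufficient, and if the infimum is only approached, one obtains the weaker but still useful statement $f(x^{t+1}) \leq f(x^t) + \varepsilon$ for arbitrarily small $\varepsilon > 0$.

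I would also note, in passing, that this corollary only establishes that the sequence $\{f(x^t)\}$ is non-increasing; combining it with the fact that $f$ is bounded below (since $\mathcal{J}$ in the main NTF problem is nonnegative) yields convergence of the function values, which is exactly what is needed when applying this auxiliary function machinery to the multiplicative updates \eqref{updateAn}, \eqref{updateAN}, \eqref{updateCoreTensor}, and \eqref{updateWeightedT} in the proof of Theorem \ref{Theorem1}.
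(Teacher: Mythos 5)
Your proof is correct and is exactly the chain of inequalities the paper itself uses, namely $f(x^{t+1}) \leq \mathcal{F}(x^{t+1},x^{t}) \leq \mathcal{F}(x^t,x^t) = f(x^t)$, obtained from the majorization property, the optimality of $x^{t+1}$ against the competitor $x^t$, and the tangency condition. The additional remarks on attainment of the $\arg\min$ and boundedness below are sensible but not needed for the statement as posed.
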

\begin{proof}
\begin{equation}\label{ProCorF}
\begin{split}
 f(x^{t+1}) \leq \mathcal{F}(x^{t+1},x^{t}) \leq \mathcal{F}(x^t,x^t) = f(x^t).
\end{split}
\end{equation}
\end{proof}

\begin{myDef}\label{Taylor}
A function can be represented as an infinite sum of terms that are calculated from the values of the function's derivatives at a single point, which can be formulated as follows:
\begin{equation}\label{SumFunc}
\begin{split}
 f(x) = \sum_{i=0}^\infty \frac{f^{(i)}(a)}{i!}(x-a)^i,
\end{split}
\end{equation}
where $a$ is the point and $i$ is the order of partial derivatives.
\end{myDef}

Given the above definitions, the objective function of RMNTF-CIM with respect to the three univariate functions are obtained as (19), (24) and (28).
Then, we have the following four lemmas.

\begin{lem}\label{FactorMAnLem}
The auxiliary function for (19) is as follows:
\begin{equation}\label{AuxAn}
\begin{split}
 & \mathcal{F}([\mathbf{A}^{(n)}]_{ij},[\mathbf{A}^{(n)\star}]_{ij}) = f_A([\mathbf{A}^{(n)}]_{ij}) \\
 & + [\mathbf{A}^{(n)}]_{i\cdot} \mathbf{B}^{(n)T} \mathbf{T}_i [\mathbf{B}^{(n)}]_{\cdot j} \left( [\mathbf{A}^{(n)}]_{ij} - [\mathbf{A}^{(n)\star}]_{ij} \right) \\
 & + \frac{[\mathbf{A}^{(n)} \mathbf{B}^{(n)T} \mathbf{T}_i \mathbf{B}^{(n)}]_{ij} }{[\mathbf{A}^{(n)}]_{ij}} \left( [\mathbf{A}^{(n)}]_{ij} - [\mathbf{A}^{(n)\star}]_{ij}\right)^2,
\end{split}
\end{equation}
\end{lem}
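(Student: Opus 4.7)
My plan is to verify the two defining conditions of an auxiliary function, $\mathcal{F}(a,a)=f_A(a)$ and $\mathcal{F}(a,a')\ge f_A(a)$, by combining a second-order Taylor expansion with a Lee--Seung style majorization of the curvature term. I would treat $a=[\mathbf{A}^{(n)}]_{ij}$ as the single free variable, freezing every other entry of $\mathbf{A}^{(n)}$ at its current iterate $\mathbf{A}^{(n)\star}$ so that $f_A$ collapses to a univariate quadratic in $a$, whose leading coefficient is exactly $[\mathbf{B}^{(n)T}\mathbf{T}_i\mathbf{B}^{(n)}]_{jj}\ge 0$; nonnegativity of this coefficient holds because $\mathbf{T}_i=\mathrm{Diag}([\mathcal{W}_{(n)}]_{i\cdot})$ is elementwise nonnegative by the closed-form weight updates in (18), (33) or (35).

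The diagonal identity $\mathcal{F}(a,a)=f_A(a)$ is immediate, since both correction terms carry a factor $(a-a')$ that vanishes at $a'=a$. For the upper bound, I would Taylor-expand $f_A$ around $a'=[\mathbf{A}^{(n)\star}]_{ij}$; because the expansion terminates at second order, matching $\mathcal{F}(a,a')-f_A(a)$ term by term reduces the claim to a single scalar inequality on the quadratic coefficient. The linear piece is automatic once the stated coefficient $[\mathbf{A}^{(n)\star}]_{i\cdot}\mathbf{B}^{(n)T}\mathbf{T}_i[\mathbf{B}^{(n)}]_{\cdot j}$ is identified with the $(i,j)$ component of the partial derivative computed in (21), noting that the linear Lagrangian term $\mathrm{Tr}(\boldsymbol{\Omega}_n\circledast\mathbf{A}^{(n)})$ only shifts this slope and contributes nothing to the quadratic majorization.

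The main obstacle, and the only nontrivial step, is the Lee--Seung style inequality
\begin{equation*}
 \frac{[\mathbf{A}^{(n)\star}\mathbf{B}^{(n)T}\mathbf{T}_i\mathbf{B}^{(n)}]_{ij}}{[\mathbf{A}^{(n)\star}]_{ij}} \;\ge\; [\mathbf{B}^{(n)T}\mathbf{T}_i\mathbf{B}^{(n)}]_{jj}.
\end{equation*}
I would establish it by expanding the numerator as $\sum_k [\mathbf{A}^{(n)\star}]_{ik}[\mathbf{B}^{(n)T}\mathbf{T}_i\mathbf{B}^{(n)}]_{kj}$, retaining only the $k=j$ summand, and invoking nonnegativity of every factor: $\mathbf{A}^{(n)\star}\!\ge\!0$ is a modeling constraint, $\mathbf{B}^{(n)}=\bigl(\otimes_{i\neq n}\mathbf{A}^{(i)T}\bigr)^{T}\mathcal{S}_{(n)}^{T}$ is a product of nonnegative Kronecker factors with the nonnegative core, and $\mathbf{T}_i$ is nonnegative diagonal, so the cross-product $\mathbf{B}^{(n)T}\mathbf{T}_i\mathbf{B}^{(n)}$ is entrywise nonnegative. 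This majorization is the same template that underlies the classical Lee--Seung multiplicative-update proof, specialized here to the weighted inner product induced by the outlier-mask tensor $\mathcal{W}$.
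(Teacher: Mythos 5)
Your proposal is correct and follows essentially the same route as the paper: fix the weight tensor $\mathcal{W}$ (so $f_A$ becomes an exact quadratic in the single entry $[\mathbf{A}^{(n)}]_{ij}$), Taylor-expand about $[\mathbf{A}^{(n)\star}]_{ij}$, and reduce the majorization to the Lee--Seung inequality $\frac{[\mathbf{A}^{(n)}\mathbf{B}^{(n)T}\mathbf{T}_i\mathbf{B}^{(n)}]_{ij}}{[\mathbf{A}^{(n)}]_{ij}}\geq[\mathbf{B}^{(n)T}\mathbf{T}_i\mathbf{B}^{(n)}]_{jj}$, proved by dropping all but the $k=j$ summand of the nonnegative expansion --- exactly the paper's argument in (A.13)--(A.14). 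The only cosmetic difference is that you implicitly read the lemma's leading term as $f_A([\mathbf{A}^{(n)\star}]_{ij})$ (the value at the expansion point), which is what the paper's own displayed auxiliary function (A.12) uses and is clearly the intended statement.
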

where $\mathbf{B}^{(n)T} = \mathcal{S}_{(n)}\left( \otimes_{i\neq n}\mathbf{A}^{(i)T} \right) \in \mathbb{R}_{\geq 0}^{r_n \times I_1\dots I_{n-1}I_{n+1} \dots I_N}$.

\begin{proof}
For RMNTF-CIM, the objective function can be re written as
\begin{equation}\label{RMNTF_CIMObj}
\begin{split}
 &\mathcal{J}(\mathcal{X}, \mathcal{S}, \{\mathbf{A}^{(n)}\}_{n=1}^N) = 1- \\
 &\frac{1}{I_1\dots I_N}\sum_{i_1=1}^{I_N}\dots\sum_{i_n=1}^{I_N} \exp\left( -\frac{ \left( \mathcal{X}_{i_1\dots i_N} - \mathcal{S}\times_{n=1}^N \mathbf{A}^{(n)} \right)^2 }{2 \sigma^2} \right).
\end{split}
\end{equation}

It is obvious that $\mathcal{F}([\mathcal{W}_{(n)}]_{ij}, [\mathcal{W}_{(n)}]_{ij}) = [f_w(\mathcal{W}_{(n)})]_{ij}$, we only need to prove that $\mathcal{F}([\mathcal{W}_{(n)}]_{ij}) \geq [f_w(\mathcal{W}_{(n)})]_{ij}$.
The first-order partial derivative of (17) in element-wise is
\begin{equation}\label{AnPartial}
\begin{split}
  & \left[ \frac{\partial \mathcal{J}(\mathcal{X},\mathcal{S},\{\mathbf{A}^{(n)}\}_{n=1}^N)}{\partial \mathbf{A}^{(n)}} \right]_{i_1\dots i_N} = \frac{1}{\sigma^2}\frac{1}{\sqrt{2\pi}\sigma}\frac{1}{I_1\dots I_N} \cdot \\
  & \left[ \exp\left( -\frac{\left(\mathcal{X}-\mathcal{S}\times_{n=1}^N\mathbf{A}^{(n)}\right)^2}{2\sigma^2} \right) \right]_{i_1\dots i_N} \cdot \\
  &  \left[\mathcal{X}-\mathcal{S}\times_{n=1}^N\mathbf{A}^{(n)}\right]_{i_1\dots i_N} \cdot \left[ \frac{\partial \left(\mathcal{S}\times_{n=1}^N \mathbf{A}^{(n)}\right)}{\partial  \mathbf{A}^{(n)} } \right]_{i_1\dots i_N}.
\end{split}
\end{equation}
By the equivalent relationship $\mathcal{S} \times_{n=1}^N \mathbf{A}^{(n)} = \rm{fold}_n(\mathbf{A}^{(n)} \mathbf{B}^{(n)T})$, we have
\begin{equation}\label{AnPartial1}
\begin{split}
  \left[ \frac{\partial \left(\mathcal{S}\times_{n=1}^N \mathbf{A}^{(n)}\right)}{\partial  \mathbf{A}^{(n)} } \right]_{i_1\dots i_N} = \left[ \frac{\partial \mathbf{A}^{(n)} \mathbf{B}^{(n)T}}{\partial \mathbf{A}^{(n)}} \right]_{r_n m} = \left[ \mathbf{B}^{(n)} \right]_{r_n m},
\end{split}
\end{equation}
where $m = 1,\dots,I_1\dots I_{n-1} I_{n+1} \dots I_N$.

Hence, the first-order partial derivative of RMNTF-CIM is
\begin{equation}\label{AnPartia2}
\begin{split}
  &  \frac{\partial \mathcal{J}(\mathcal{X},\mathcal{S},\{\mathbf{A}^{(n)}\}_{n=1}^N)}{\partial \mathbf{A}^{(n)}} \propto \Bigg[ \exp\left( -\frac{\left(\mathcal{X}-\mathcal{S}\times_{n=1}^N\mathbf{A}^{(n)}\right)^2}{2\sigma^2} \right)\\
  & \circledast \left(\mathcal{S}\times_{n=1}^N\mathbf{A}^{(n)} - \mathcal{X}\right) \Bigg]_{(n)} \mathbf{B}^{(n)}  \\
  & = \left[ \mathcal{W} \circledast  \left(\mathcal{S}\times_{n=1}^N\mathbf{A}^{(n)} - \mathcal{X}\right)\right]_{(n)} \mathbf{B}^{(n)}.
\end{split}
\end{equation}
We assume that $\mathcal{W}$ is a new variable independent of $\mathbf{A}^{(n)}$.
Let $\mathbf{T}_i = \mathrm{Diag}([\mathcal{W}_{(n)}]_{i\cdot})$, then \eqref{AnPartia2} can be represented by
\begin{equation}\label{AnPartia3}
\begin{split}
 \frac{\partial \mathcal{J}(\mathcal{X},\mathcal{S},\{\mathbf{A}^{(n)}\}_{n=1}^N)}{\partial \mathbf{A}^{(n)}} \propto & \{[\mathbf{A}^{(n)}]_{i\cdot} \mathbf{B}^{(n)T} \mathbf{T}_i \}_{i=1}^{I_n} \mathbf{B}^{(n)} \\
 & - \{\mathcal{X}_{(n)} \mathbf{T}_i \}_{i=1}^{I_n} \mathbf{B}^{(n)},
\end{split}
\end{equation}
where $ \{[\mathbf{A}^{(n)}]_{i\cdot} \mathbf{B}^{(n)T} \mathbf{T}_i \}_{i=1}^{I_n} = \bigg[[\mathbf{A}^{(n)}]_{1\cdot} \mathbf{B}^{(n)T} \mathbf{T}_1; \dots; $ $[\mathbf{A}^{(n)}]_{I_n\cdot} \mathbf{B}^{(n)T} \mathbf{T}_{I_n}\bigg]$ $\in \mathbb{R}_{\geq 0}^{I_n\times I_1\dots I_p \dots I_N(p\neq n)}$.
The second-order derivative of RMNTF-CIM can be represented as:
\begin{equation}\label{AnPartia4}
\begin{split}
  \frac{\partial^2 \mathcal{J}(\mathcal{X},\mathcal{S},\{\mathbf{A}^{(n)}\}_{n=1}^N)}{\partial \mathbf{A}^{(n)} \partial \mathbf{A}^{(n)}} = \{ \mathbf{B}^{(n)T} \mathbf{T}_i \mathbf{B}^{(n)} \}_{i=1}^{I_n}.
\end{split}
\end{equation}

According to the Taylor expansion in Definition \ref{Taylor}, we can rewrite $\mathcal{J}(\mathcal{X},\mathcal{S},\{\mathbf{A}^{(n)}\}_{n=1}^N)$ to its Taylor expansion form with respect to $\mathbf{A}^{(n)}$:
\begin{equation}\label{AnPartia5}
\begin{split}
 & f([\mathbf{A}^{(n)}]_{ij}) = f_A([\mathbf{A}^{(n)}]_{ij}) \\
 & + [\mathbf{A}^{(n)}]_{i\cdot} \mathbf{B}^{(n)T} \mathbf{T}_i [\mathbf{B}^{(n)}]_{\cdot j} \left( [\mathbf{A}^{(n)}]_{ij} - [\mathbf{A}^{(n)\star}]_{ij} \right) \\
 & - [\mathcal{X}_{(n)}]_{i\cdot} \mathbf{T}_i [\mathbf{B}^{(n)}]_{\cdot j} \left( [\mathbf{A}^{(n)}]_{ij} - [\mathbf{A}^{(n)\star}]_{ij} \right)\\
 & + \frac{1}{2}[\mathbf{B}^{(n)}]^T_{\cdot j} \mathbf{T}_i [\mathbf{B}^{(n)}]_{\cdot j} \left( [\mathbf{A}^{(n)}]_{ij} - [\mathbf{A}^{(n)\star}]_{ij}\right)^2.
\end{split}
\end{equation}

The upper bound auxiliary function for (19) is defined as:
\begin{equation}\label{AnPartia6}
\begin{split}
 & \mathcal{F}([\mathbf{A}^{(n)}]_{ij},[\mathbf{A}^{(n)\star}]_{ij}) = f([\mathbf{A}^{(n)\star}]_{ij}) \\
 & + [\mathbf{A}^{(n)}]_{i\cdot} \mathbf{B}^{(n)T} \mathbf{T}_i [\mathbf{B}^{(n)}]_{\cdot j} \left( [\mathbf{A}^{(n)}]_{ij} - [\mathbf{A}^{(n)\star}]_{ij} \right) \\
 & - [\mathcal{X}_{(n)}]_{i\cdot} \mathbf{T}_i [\mathbf{B}^{(n)}]_{\cdot j} \left( [\mathbf{A}^{(n)}]_{ij} - [\mathbf{A}^{(n)\star}]_{ij} \right)\\
 & + \frac{[\mathbf{A}^{(n)} \mathbf{B}^{(n)T} \mathbf{T}_i \mathbf{B}^{(n)}]_{ij} }{[\mathbf{A}^{(n)}]_{ij}} \left( [\mathbf{A}^{(n)}]_{ij} - [\mathbf{A}^{(n)\star}]_{ij}\right)^2.
\end{split}
\end{equation}

Substituting \eqref{AnPartia5} into \eqref{AnPartia6},we find that $\mathcal{F}([\mathbf{A}^{(n)}]_{ij},[\mathbf{A}^{(n)\star}]_{ij})\geq f_A([\mathbf{A}^{(n)}]_{ij})$ is equivalent to
\begin{equation}\label{AnPartia7}
\begin{split}
 & \frac{1}{2}\frac{[\mathbf{A}^{(n)} \mathbf{B}^{(n)T} \mathbf{T}_i \mathbf{B}^{(n)}]_{ij} }{[\mathbf{A}^{(n)}]_{ij}} \left( [\mathbf{A}^{(n)}]_{ij} - [\mathbf{A}^{(n)\star}]_{ij}\right)^2 \\
 & \geq \frac{1}{2}[\mathbf{B}^{(n)}]^T_{\cdot j} \mathbf{T}_i [\mathbf{B}^{(n)}]_{\cdot j} \left( [\mathbf{A}^{(n)}]_{ij} - [\mathbf{A}^{(n)\star}]_{ij}\right)^2.
\end{split}
\end{equation}

Because we have
\begin{equation}\label{AnPartia8}
\begin{split}
 & \frac{[\mathbf{A}^{(n)} \mathbf{B}^{(n)T} \mathbf{T}_i \mathbf{B}^{(n)}]_{ij} }{[\mathbf{A}^{(n)}]_{ij}} = \frac{\sum_j \left( [\mathbf{A}^{(n)}]_{ij} \times [\mathbf{B}^{(n)T} \mathbf{T}_i \mathbf{B}^{(n)}]_{jj} \right)}{[\mathbf{A}^{(n)}]_{ij}} \\
 & \geq \frac{ [\mathbf{A}^{(n)}]_{ij} \times [\mathbf{B}^{(n)T} \mathbf{T}_i \mathbf{B}^{(n)}]_{jj} }{[\mathbf{A}^{(n)}]_{ij}} = [\mathbf{B}^{(n)T} \mathbf{T}_i \mathbf{B}^{(n)}]_{jj}.
\end{split}
\end{equation}

Now, we can demonstrate that \eqref{AnPartia7} holds, and \eqref{AnPartia6} is the upper bound auxiliary function for (19), the updates of $\mathbf{A}^{(n)}$ lead to a non-increasing of the objective function $f_A(\mathbf{A}^{(n)})$.
Because the elements of factor matrices $\mathbf{A}^{(n)}$ are nonnegative, and \eqref{AnPartia6} is a convex function, its minimum value can be achieved at
\begin{equation}\label{AnPartia9}
\begin{split}
  & [\mathbf{A}^{(n)\star}]_{ij}\\
  & = [\mathbf{A}^{(n)}]_{ij} - \frac{[\mathbf{A}^{(n)}]_{i\cdot} \mathbf{B}^{(n)T} \mathbf{T}_i [\mathbf{B}^{(n)}]_{\cdot j} - [\mathcal{X}_{(n)}]_{i\cdot} \mathbf{T}_i [\mathbf{B}^{(n)}]_{\cdot j}}{\frac{[\mathbf{A}^{(n)} \mathbf{B}^{(n)T} \mathbf{T}_i \mathbf{B}^{(n)}]_{ij} }{[\mathbf{A}^{(n)}]_{ij}}} \\
  & = [\mathbf{A}^{(n)}]_{ij} \times \frac{[\mathcal{X}_{(n)}]_{i\cdot} \mathbf{T}_i [\mathbf{B}^{(n)}]_{\cdot j}}{[\mathbf{A}^{(n)} \mathbf{B}^{(n)T} \mathbf{T}_i \mathbf{B}^{(n)}]_{ij}}.
\end{split}
\end{equation}
Lemma \ref{FactorMAnLem} is proved.
\end{proof}

\begin{lem}\label{FactorMANLem}
The auxiliary function for (24) is as follows:
\begin{equation}\label{AuxAN}
\begin{split}
\! & \mathcal{F}([\mathbf{A}^{(N)}]_{ij},[\mathbf{A}^{(N)\star}]_{ij})\! = \! f([\mathbf{A}^{(N)}]_{ij}) \! + \!  \left( [\mathbf{A}^{(N)}]_{ij} \! - \! [\mathbf{A}^{(N)\star}]_{ij} \right) \cdot \\
\! & \! \left( \! [\mathbf{A}^{(N)}]_{i\cdot} \mathbf{B}^{(N)T} \mathbf{T}_i [\mathbf{B}^{(N)}]_{\cdot j} \! - \! [\mathcal{X}_{(N)}]_{i\cdot} \mathbf{T}_i [\mathbf{B}^{(N)}]_{\cdot j} \! + \! \lambda\left[ \mathbf{L}\mathbf{A}^{(N)} \right]_{ij} \! \right) \\
 & \! + \! \frac{[\mathbf{A}^{(N)} \mathbf{B}^{(N)T} \mathbf{T}_i \mathbf{B}^{(N)}]_{ij} \! + \! \left[ \lambda \mathbf{D}\mathbf{A}^{(N)} \right]_{ij} }{[\mathbf{A}^{(N)}]_{ij}} \left( [\mathbf{A}^{(N)}]_{ij} \! - \! [\mathbf{A}^{(N)\star}]_{ij}\right)^2.
\end{split}
\end{equation}
\end{lem}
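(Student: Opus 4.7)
The plan is to prove Lemma \ref{FactorMANLem} by mirroring the argument of Lemma \ref{FactorMAnLem} (reconstruction part) and then extending it to absorb the manifold regularization $\frac{\lambda}{2}\mathrm{Tr}(\mathbf{A}^{(N)T}\mathbf{L}\mathbf{A}^{(N)})$. The two auxiliary-function conditions $\mathcal{F}([\mathbf{A}^{(N)}]_{ij},[\mathbf{A}^{(N)}]_{ij})=f([\mathbf{A}^{(N)}]_{ij})$ and $\mathcal{F}\ge f$ are handled separately: the first is immediate since both the linear and quadratic correction terms in the stated $\mathcal{F}$ vanish when the two arguments coincide, so the proof burden is entirely in the upper-bound inequality.

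For the upper bound, I would fix every entry of $\mathbf{A}^{(N)}$ except $[\mathbf{A}^{(N)}]_{ij}$ and view $f_A$ as a one-variable quadratic in that single coordinate, so Taylor's theorem gives an exact identity. The reconstruction contribution reproduces precisely what Lemma \ref{FactorMAnLem} handles: a linear term with coefficient $[\mathbf{A}^{(N)}]_{i\cdot}\mathbf{B}^{(N)T}\mathbf{T}_i[\mathbf{B}^{(N)}]_{\cdot j}-[\mathcal{X}_{(N)}]_{i\cdot}\mathbf{T}_i[\mathbf{B}^{(N)}]_{\cdot j}$ and a quadratic coefficient $\tfrac{1}{2}[\mathbf{B}^{(N)T}\mathbf{T}_i\mathbf{B}^{(N)}]_{jj}$. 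Differentiating the Laplacian penalty once and twice in the same coordinate then adds the gradient contribution $\lambda[\mathbf{L}\mathbf{A}^{(N)}]_{ij}$ (matching the extra linear term in the stated $\mathcal{F}$) and the element-wise Hessian contribution $\lambda L_{ii}$ to the quadratic coefficient.

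With the linear coefficients matching by construction, the remaining inequality reduces to showing
$$\frac{[\mathbf{A}^{(N)}\mathbf{B}^{(N)T}\mathbf{T}_i\mathbf{B}^{(N)}]_{ij}+\lambda[\mathbf{D}\mathbf{A}^{(N)}]_{ij}}{[\mathbf{A}^{(N)}]_{ij}}\ \ge\ \tfrac{1}{2}[\mathbf{B}^{(N)T}\mathbf{T}_i\mathbf{B}^{(N)}]_{jj}+\tfrac{\lambda}{2}L_{ii}.$$
I would split this additively. For the reconstruction piece I would directly reuse the bound established in the proof of Lemma \ref{FactorMAnLem}, namely $\tfrac{[\mathbf{A}^{(N)}\mathbf{B}^{(N)T}\mathbf{T}_i\mathbf{B}^{(N)}]_{ij}}{[\mathbf{A}^{(N)}]_{ij}}\ge[\mathbf{B}^{(N)T}\mathbf{T}_i\mathbf{B}^{(N)}]_{jj}$, which a fortiori dominates half of the RHS. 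For the Laplacian piece the key observation is that $\mathbf{D}$ is diagonal, so $\tfrac{[\mathbf{D}\mathbf{A}^{(N)}]_{ij}}{[\mathbf{A}^{(N)}]_{ij}}=D_{ii}$ exactly; combined with $L_{ii}=D_{ii}-V_{ii}\le D_{ii}$ (using $V_{ii}\ge 0$, which holds for the heat-kernel affinities of \eqref{RNTFMR2}), one obtains $\lambda D_{ii}\ge \tfrac{\lambda}{2}L_{ii}$. Summing gives $\mathcal{F}\ge f_A$.

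The main obstacle is the Laplacian term, because $\mathbf{L}$ couples different rows of $\mathbf{A}^{(N)}$ through its negative off-diagonal entries. A naive element-wise auxiliary coefficient built from $\lambda[\mathbf{L}\mathbf{A}^{(N)}]_{ij}/[\mathbf{A}^{(N)}]_{ij}$ is not even guaranteed to be nonnegative (so minimization in closed form in the nonnegative orthant would fail), and it depends on the off-diagonal pattern of $\mathbf{V}$, which spoils the clean separation of rows. The design choice of replacing $\mathbf{L}$ by $\mathbf{D}$ in the quadratic part, while retaining $\mathbf{L}$ in the linear gradient, is what resolves this: it makes the quadratic coefficient strictly positive and row-decoupled, and the inequality $D_{ii}\ge L_{ii}$ is exactly what is needed to dominate the Taylor Hessian. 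Once this majorization is in place, the closed-form minimizer gives precisely the multiplicative update \eqref{updateAN}, closing the loop with the algorithmic rule derived via KKT.
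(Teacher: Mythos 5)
Your proposal is correct and follows essentially the same route as the paper's proof of this lemma: an element-wise second-order Taylor expansion of the (half-quadratic, fixed-$\mathcal{W}$) objective, followed by verifying the majorization inequality
$\frac{[\mathbf{A}^{(N)}\mathbf{B}^{(N)T}\mathbf{T}_i\mathbf{B}^{(N)}]_{ij}+\lambda[\mathbf{D}\mathbf{A}^{(N)}]_{ij}}{[\mathbf{A}^{(N)}]_{ij}}\geq \tfrac{1}{2}\left([\mathbf{B}^{(N)}]_{\cdot j}^{T}\mathbf{T}_i[\mathbf{B}^{(N)}]_{\cdot j}+\lambda[\mathbf{L}]_{ii}\right)$
via the same two bounds the paper uses (the Lemma~\ref{FactorMAnLem} estimate for the reconstruction part and $[\mathbf{D}]_{ii}\geq[\mathbf{L}]_{ii}$ for the graph part), and concluding with the closed-form minimizer that reproduces the multiplicative update \eqref{updateAN}. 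Your observation that $[\mathbf{D}\mathbf{A}^{(N)}]_{ij}/[\mathbf{A}^{(N)}]_{ij}=[\mathbf{D}]_{ii}$ holds with equality is a slight sharpening of the paper's chain of inequalities, but the argument is otherwise the same.
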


\begin{proof}
For RMNTF-CIM, the objective function with respect to $\mathbf{A}^{(N)}$ can be represented as
\begin{equation}\label{AuxAN1}
\begin{split}
 & \hat{\mathcal{J}}(\mathcal{X},\mathcal{S},\{\mathcal{A}^{(n)}\}_{n=1}^N) = \mathcal{J}(\mathcal{X},\mathcal{S},\{\mathbf{A}^{(n)}\}_{n=1}^{N}) \\
 & + \frac{\lambda}{2}\mathrm{Tr}\left( \mathbf{A}^{(N)T}\mathbf{L}\mathbf{A}^{(N)} \right) + \mathrm{Tr}\left( \boldsymbol{\Omega}_N \mathbf{A}^{(N)} \right).
\end{split}
\end{equation}

The first-order partial derivative of \eqref{AuxAN1} in element-wise is
\begin{equation}\label{AuxAN2}
\begin{split}
 & \frac{\partial \hat{\mathcal{J}}(\mathcal{X},\mathcal{S},\{\mathcal{A}^{(n)}\}_{n=1}^N)}{\partial \left[ \mathbf{A}^{(N)} \right]_{ij}} = \frac{1}{\sigma^2}\frac{1}{\sqrt{2\pi}\sigma}\frac{1}{I_1\dots I_N} \cdot \\
 & \left[ \exp\left( - \frac{\left( \mathcal{X} - \mathcal{S}\times_{n=1}^N \mathbf{A}^{(n)} \right)^2}{2\sigma^2} \right) \right]_{i_1\dots i_N} \cdot \\
 & \left[ \mathcal{X} - \mathcal{S}\times_{n=1}^N \mathbf{A}^{(n)} \right]_{i_1\dots i_N} \cdot \left[ \frac{\partial\left( \mathcal{S}\times_{n=1}^N\mathbf{A}^{(n)} \right)}{\partial \mathbf{A}^{(N)}} \right]_{i_1\dots i_N}  \\
 & + \frac{\lambda}{2} \left[ \left(\mathbf{L} + \mathbf{L}^T\right) \mathbf{A}^{(N)}\right]_{ij} + \left[\boldsymbol{\Omega}_N\right]_{ij}.
\end{split}
\end{equation}
Because of \eqref{AnPartial1}, we rewrite the first-order derivative of RMNTF-CIM:
\begin{equation}\label{AuxAN3}
\begin{split}
 & \frac{\partial \hat{\mathcal{J}}(\mathcal{X},\mathcal{S},\{\mathcal{A}^{(n)}\}_{n=1}^N)}{\partial \mathbf{A}^{(N)}} \propto \frac{\lambda}{2} \left(\mathbf{L} + \mathbf{L}^T\right) \mathbf{A}^{(N)} \\
 & + \left[ \mathcal{W} \circledast \left( \mathcal{S}\times_{n=1}^N \mathbf{A}^{(n)} - \mathcal{X} \right) \right]_{(N)}\mathbf{B}^{(N)}  + \boldsymbol{\Omega}_N \\
 = & \{[\mathbf{A}^{(N)}]_{i\cdot} \mathbf{B}^{(N)T} \mathbf{T}_i \}_{i=1}^{I_N} \mathbf{B}^{(N)} - \{\mathcal{X}_{(N)} \mathbf{T}_i \}_{i=1}^{I_N} \mathbf{B}^{(N)} \\
 & + \frac{\lambda}{2} \left(\mathbf{L} + \mathbf{L}^T\right) \mathbf{A}^{(N)} + \boldsymbol{\Omega}_N.
\end{split}
\end{equation}
The tensor variable $\mathcal{W}$ is independent of $\mathbf{A}^{(N)}$.
The second-order derivative of RMNTF-CIM with respect to $\mathbf{A}^{(N)}$ is represented as
\begin{equation}\label{AuxAN4}
\begin{split}
 & \frac{\partial^2 \hat{\mathcal{J}}(\mathcal{X},\mathcal{S},\{\mathcal{A}^{(n)}\}_{n=1}^N)}{\partial \mathbf{A}^{(N)} \partial \mathbf{A}^{(N)}} =  \{ \mathbf{B}^{(N)T} \mathbf{T}_i \mathbf{B}^{(N)} \}_{i=1}^{I_N}  + \lambda\mathbf{L}.
\end{split}
\end{equation}
According to the Taylor expansion, we rewrite $\hat{\mathcal{J}}$ with respect to $\mathbf{A}^{(N)}$:
\begin{equation}\label{AuxAN5}
\begin{split}
 & f \! (\left[ \mathbf{A}^{(N)} \right]_{ij}\! ) \! = \! f_A \! (\left[ \mathbf{A}^{(N)} \right]_{ij}) \! + \! \left( \! [\mathbf{A}^{(n)}]_{ij} \! - \! [\! \mathbf{A}^{(n)\star}\!]_{ij} \right) \! \cdot \\
 & \Bigg( [\mathbf{A}^{(N)}]_{i\cdot} \mathbf{B}^{(N)T} \mathbf{T}_i [\mathbf{B}^{(N)}]_{\cdot j} \! - \! [\mathcal{X}_{(N)}]_{i\cdot} \mathbf{T}_i [\mathbf{B}^{(N)}]_{\cdot j} \\
 & \! + \! \lambda\!\left[\! \mathbf{L}\mathbf{A}^{(n)}\! \right]_{ij}\! \Bigg) \! + \! \frac{1}{2} \left( [\mathbf{B}^{(N)}]^T_{\cdot j} \mathbf{T}_i [\mathbf{B}^{(N)}]_{\cdot j} \! + \! \lambda\!\left[\mathbf{L}\right]_{ii}\! \right) \\
 & \cdot \left( [\mathbf{A}^{(N)}]_{ij} - [\mathbf{A}^{(N)\star}]_{ij}\right)^2.
\end{split}
\end{equation}

The upper bound auxiliary function for (24) is denoted as:
\begin{equation}\label{AuxAN6}
\begin{split}
 & \mathcal{F}(\![\mathbf{A}\!^{(N)}]_{ij} \!, \![\mathbf{A}\!^{(N)\star}]_{ij}\!)\! = \! f(\![\mathbf{A}\!^{(N)\star}\!]_{ij}\!)\! +\! \left(\! [\mathbf{A}\!^{(N)}\!]_{ij}\! - \! [\mathbf{A}\!^{(N)\star}\!]_{ij}\! \right)\! \cdot\! \\
 & \! \Bigg(  \! [\mathbf{A}^{(N)}]_{i\cdot} \mathbf{B}^{(N)T} \mathbf{T}_i [\mathbf{B}^{(N)}]_{\cdot j}\! - \![\mathcal{X}_{(N)}]_{i\cdot} \mathbf{T}_i [\mathbf{B}^{(N)}]_{\cdot j} \\
 &\! + \!\lambda\!\left[ \mathbf{L}\mathbf{A}^{(N)} \right]_{ij}\! \Bigg) \! + \! \frac{[\mathbf{A}^{(N)} \mathbf{B}^{(N)T} \mathbf{T}_i \mathbf{B}^{(N)}]_{ij} \!+ \!\left[ \lambda \mathbf{D}\mathbf{A}^{(N)} \right]_{ij} }{[\mathbf{A}^{(N)}]_{ij}} \\
 & \cdot \left( [\mathbf{A}^{(N)}]_{ij} - [\mathbf{A}^{(N)\star}]_{ij}\right)^2.
\end{split}
\end{equation}

Substituting \eqref{AuxAN5} into \eqref{AuxAN6},we find that $\mathcal{F}([\mathbf{A}^{(N)}]_{ij},[\mathbf{A}^{(N)\star}]_{ij})\geq f_A([\mathbf{A}^{(N)}]_{ij})$ is equivalent to
\begin{equation}\label{AuxAN7}
\begin{split}
 & \frac{[\mathbf{A}^{(N)} \mathbf{B}^{(N)T} \mathbf{T}_i \mathbf{B}^{(N)}]_{ij} + \left[ \lambda \mathbf{D}\mathbf{A}^{(N)} \right]_{ij} }{[\mathbf{A}^{(N)}]_{ij}} \\
 & \geq \frac{1}{2} \left( [\mathbf{B}^{(N)}]^T_{\cdot j} \mathbf{T}_i [\mathbf{B}^{(N)}]_{\cdot j} + \lambda\left[\mathbf{L}\right]_{ii} \right).
\end{split}
\end{equation}

Because we have
\begin{equation}\label{AuxAN8}
\begin{split}
 & \frac{[\mathbf{A}^{(n)} \mathbf{B}^{(n)T} \mathbf{T}_i \mathbf{B}^{(n)}]_{ij} }{[\mathbf{A}^{(n)}]_{ij}} = \frac{\sum_j \left( [\mathbf{A}^{(n)}]_{ij} \times [\mathbf{B}^{(n)T} \mathbf{T}_i \mathbf{B}^{(n)}]_{jj} \right)}{[\mathbf{A}^{(n)}]_{ij}} \\
 & \geq \frac{ [\mathbf{A}^{(n)}]_{ij} \times [\mathbf{B}^{(n)T} \mathbf{T}_i \mathbf{B}^{(n)}]_{jj} }{[\mathbf{A}^{(n)}]_{ij}} = [\mathbf{B}^{(n)T} \mathbf{T}_i \mathbf{B}^{(n)}]_{jj},
\end{split}
\end{equation}
and
\begin{equation}\label{AuxAN9}
\begin{split}
 &  \left[ \lambda \mathbf{D}\mathbf{A}^{(N)} \right]_{ij} = \lambda \sum_{k}[\mathbf{D}]_{ik} [\mathbf{A}^{(N)}]_{kj} \geq \lambda [\mathbf{D}]_{ii} [\mathbf{A}^{(N)}]_{ij} \\
 & \geq \lambda \left[ \mathbf{D} - \mathbf{W} \right]_{ii}[\mathbf{A}^{(N)}]_{ij} = \lambda [\mathbf{L}]_{ii} [\mathbf{A}^{(N)}]_{ij}.
\end{split}
\end{equation}

Now, we demonstrate that \eqref{AuxAN7} holds, and \eqref{AuxAN6} is the upper bound auxiliary function for $f_A(\mathbf{A}^{(N)})$. Because the elements of $\mathbf{A}^{(N)}$ are nonnegative, and \eqref{AuxAN6} is convex, its minimum value can be achieved at
\begin{equation}\label{AuxAN10}
\begin{split}
\!  & [\mathbf{A}^{(N)\star}]_{ij} = [\mathbf{A}^{(N)}]_{ij} - \\
  & \frac{[\mathbf{A}^{(N)}]_{i\cdot} \mathbf{B}^{(N)T} \mathbf{T}_i [\mathbf{B}^{(N)}]_{\cdot j} \! - \! [\mathcal{X}_{(N)}]_{i\cdot} \mathbf{T}_i [\mathbf{B}^{(N)}]_{\cdot j} \! + \! \lambda\left[ \mathbf{L}\mathbf{A}^{(N)} \right]_{ij}}{\frac{[\mathbf{A}^{(N)} \mathbf{B}^{(N)T} \mathbf{T}_i \mathbf{B}^{(N)}]_{ij} }{[\mathbf{A}^{(N)}]_{ij}} + \left[ \lambda \mathbf{D}\mathbf{A}^{(N)} \right]_{ij}} \\
  & = [\mathbf{A}^{(N)}]_{ij} \times \frac{[\mathcal{X}_{(N)}]_{i\cdot} \mathbf{T}_i [\mathbf{B}^{(N)}]_{\cdot j} + \lambda \left[ \mathbf{W}\mathbf{A}^{(N)} \right]_{ij}}{[\mathbf{A}^{(N)} \mathbf{B}^{(N)T} \mathbf{T}_i \mathbf{B}^{(N)}]_{ij} + \left[ \lambda \mathbf{D}\mathbf{A}^{(N)} \right]_{ij}}.
\end{split}
\end{equation}
Lemma \ref{FactorMANLem} is proved.
\end{proof}

\begin{lem}\label{CoreTensorLem}
The auxiliary function for (28) is as follows:
\begin{equation}\label{AuxAN}
\begin{split}
 & \mathcal{F}([\mathrm{vec}(\mathcal{S})]_i,[\mathrm{vec}(\mathcal{S}^\star)]_i) = f(\left[ \mathrm{vec}(\mathcal{S}^\star) \right]_i) \\
 & + \left( [\mathbf{F}^T\mathbf{F}\mathrm{vec}(\mathcal{S})]_i - [\mathbf{F}^T\mathrm{vec}(\mathcal{X})]_i \right) \cdot [\mathrm{vec}(\mathcal{W})]_i \cdot \\
 & \left( [\mathrm{vec}(\mathcal{S})]_i - [\mathrm{vec}(\mathcal{S}^\star)]_i \right) \\
 & + \frac{1}{2} \frac{[\mathbf{F}^T\mathbf{F}\mathrm{vec}(\mathcal{S})]_i \cdot [\mathrm{vec}(\mathcal{W})]_i}{[\mathrm{vec}(\mathcal{S})]_i}   \cdot \left( [\mathrm{vec}(\mathcal{S})]_i - [\mathrm{vec}(\mathcal{S}^\star)]_i \right)^2.
\end{split}
\end{equation}
\end{lem}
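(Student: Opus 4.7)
The plan is to mirror the strategy used for Lemma \ref{FactorMAnLem} and Lemma \ref{FactorMANLem}, instantiated for the vectorized core tensor. First I would compute the partial derivatives of $f_{\mathcal{S}}(\mathcal{S})$ from \eqref{CoreTensor1} with respect to the scalar entry $[\mathrm{vec}(\mathcal{S})]_i$. Using \eqref{PartialCoreTensor}, the first-order partial is the $i$th component of $2\mathbf{F}^T \mathbf{T}_{\mathcal{S}} \mathbf{F}\,\mathrm{vec}(\mathcal{S}) - 2\mathbf{F}^T \mathbf{T}_{\mathcal{S}}\,\mathrm{vec}(\mathcal{X})$, and differentiating once more yields a Hessian diagonal $2[\mathbf{F}^T \mathbf{T}_{\mathcal{S}} \mathbf{F}]_{ii}$. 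Because $\mathbf{F}$ is a Kronecker product of the factor matrices $\mathbf{A}^{(n)}$ and $\mathbf{T}_{\mathcal{S}}$ is a nonnegative diagonal matrix, all these quantities are nonnegative, which is exactly the regime where the Lee–Seung style auxiliary construction applies.

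Next I would write down the degree-two Taylor expansion of $f_{\mathcal{S}}$ about $[\mathrm{vec}(\mathcal{S}^\star)]_i$, exactly analogous to \eqref{AnPartia5} and \eqref{AuxAN5}, obtaining a linear term equal to the first-order partial and a quadratic term with coefficient $\tfrac{1}{2}[\mathbf{F}^T \mathbf{T}_{\mathcal{S}} \mathbf{F}]_{ii}$. I would then define the candidate auxiliary function as stated in the lemma and verify both auxiliary conditions from Definition \ref{AuxiliaryF}: equality at $[\mathrm{vec}(\mathcal{S})]_i = [\mathrm{vec}(\mathcal{S}^\star)]_i$ is immediate, and the majorization $\mathcal{F} \ge f$ reduces, after cancelling the identical linear terms, to the single scalar inequality
\begin{equation*}
\frac{[\mathbf{F}^T \mathbf{T}_{\mathcal{S}} \mathbf{F}\,\mathrm{vec}(\mathcal{S})]_i}{[\mathrm{vec}(\mathcal{S})]_i} \;\ge\; [\mathbf{F}^T \mathbf{T}_{\mathcal{S}} \mathbf{F}]_{ii}.
\end{equation*}

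This inequality is established by exactly the argument used in \eqref{AnPartia8} and \eqref{AuxAN8}: since every entry of $\mathbf{F}^T \mathbf{T}_{\mathcal{S}} \mathbf{F}$ and of $\mathrm{vec}(\mathcal{S})$ is nonnegative,
\begin{equation*}
[\mathbf{F}^T \mathbf{T}_{\mathcal{S}} \mathbf{F}\,\mathrm{vec}(\mathcal{S})]_i \;=\; \sum_{j}[\mathbf{F}^T \mathbf{T}_{\mathcal{S}} \mathbf{F}]_{ij}[\mathrm{vec}(\mathcal{S})]_j \;\ge\; [\mathbf{F}^T \mathbf{T}_{\mathcal{S}} \mathbf{F}]_{ii}[\mathrm{vec}(\mathcal{S})]_i,
\end{equation*}
and dividing through by the positive quantity $[\mathrm{vec}(\mathcal{S})]_i$ gives the claim. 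Finally, since the auxiliary function is quadratic and strictly convex in $[\mathrm{vec}(\mathcal{S})]_i$ with positive leading coefficient, setting its derivative to zero yields a closed-form minimizer that, after collecting terms, reproduces the multiplicative update rule \eqref{updateCoreTensor}, which together with Corollary 1 certifies nonincrease of $f_{\mathcal{S}}$.

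The main obstacle I anticipate is notational rather than mathematical: the lemma as stated writes the quadratic coefficient using $\mathbf{F}^T\mathbf{F}$ scaled by the scalar $[\mathrm{vec}(\mathcal{W})]_i$, whereas the natural object coming from the Hessian is $\mathbf{F}^T \mathbf{T}_{\mathcal{S}}\mathbf{F}$ with $\mathbf{T}_{\mathcal{S}}=\mathrm{Diag}(\mathrm{vec}(\mathcal{W}))$. I would reconcile these by expanding the $i$th diagonal entry $[\mathbf{F}^T \mathbf{T}_{\mathcal{S}}\mathbf{F}]_{ii}=\sum_k [\mathbf{F}]_{ki}^2[\mathrm{vec}(\mathcal{W})]_k$ and writing the quadratic coefficient consistently, so that the dominating inequality above is actually the one required; once this bookkeeping is settled, the rest of the argument is a direct transcription of the proofs of the two previous lemmas.
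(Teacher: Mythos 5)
Your proposal follows the same route as the paper's own proof: a second-order Taylor expansion of $f_{\mathcal{S}}$ about the current iterate, an auxiliary function whose quadratic coefficient dominates the true Hessian via the nonnegativity inequality $[\mathbf{M}\,\mathrm{vec}(\mathcal{S})]_i \ge [\mathbf{M}]_{ii}[\mathrm{vec}(\mathcal{S})]_i$, and minimization of the resulting convex quadratic to recover the multiplicative update \eqref{updateCoreTensor}. Your closing remark about the placement of the weights is well taken: the paper's proof writes the derivative as $\left( [\mathbf{F}^T\mathbf{F}\mathrm{vec}(\mathcal{S})]_i - [\mathbf{F}^T\mathrm{vec}(\mathcal{X})]_i \right)\cdot[\mathrm{vec}(\mathcal{W})]_i$ rather than keeping $\mathbf{T}_{\mathcal{S}}$ sandwiched inside the products as \eqref{PartialCoreTensor} dictates, so your version with $\mathbf{F}^T\mathbf{T}_{\mathcal{S}}\mathbf{F}$ and the expansion $[\mathbf{F}^T\mathbf{T}_{\mathcal{S}}\mathbf{F}]_{ii}=\sum_k[\mathbf{F}]_{ki}^2[\mathrm{vec}(\mathcal{W})]_k$ is the more careful transcription of the same argument.
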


\begin{proof}
First, we vectorize the objective function $\mathcal{J}(\mathcal{X},\mathcal{S},\{\mathbf{A}^{(n)}\}_{n=1}^N)$ with respect to the core tensor $\mathcal{S}$:
\begin{equation}\label{AuxCoretensor}
\begin{split}
 & \mathrm{vec}(\mathcal{J}(\mathcal{X},\mathcal{S},\{\mathbf{A}\}_{n=1}^N)) = 1- \\
 & \frac{1}{I_1\dots I_N} \sum_{i=1}^{I_1\dots I_N} \exp\left( - \frac{\left( [\mathrm{vec}(\mathcal{X})]_i - [\mathbf{F}\mathrm{vec}(\mathcal{S})]_i \right)^2}{2\sigma^2} \right).
\end{split}
\end{equation}

And then, we obtain the first-order partial derivative of $\mathrm{vec}(\mathcal{J}(\mathcal{X},\mathcal{S},\{\mathbf{A}\}_{n=1}^N))$:
\begin{equation}\label{AuxCoretensor1}
\begin{split}
 & \frac{\partial \mathrm{vec}(\mathcal{J}(\mathcal{X},\mathcal{S},\{\mathbf{A}\}_{n=1}^N))}{\partial [\mathrm{vec}(\mathcal{S})]_i} =  \\
 & \left( [\mathbf{F}^T\mathbf{F}\mathrm{vec}(\mathcal{S})]_i - [\mathbf{F}^T\mathrm{vec}(\mathcal{X})]_i \right) \cdot [\mathrm{vec}(\mathcal{W})]_i.
\end{split}
\end{equation}

The second-order partial derivative of $\mathrm{vec}(\mathcal{J}(\mathcal{X},\mathcal{S},\{\mathbf{A}\}_{n=1}^N))$:
\begin{equation}\label{AuxCoretensor2}
\begin{split}
 & \frac{\partial^2 \mathrm{vec}(\mathcal{J}(\mathcal{X},\mathcal{S},\{\mathbf{A}\}_{n=1}^N))}{\partial [\mathrm{vec}(\mathcal{S})]_i \partial [\mathrm{vec}(\mathcal{S})]_i} = [\mathbf{F}^T\mathbf{F}]_{ii} \cdot [\mathrm{vec}(\mathcal{W})]_i.
\end{split}
\end{equation}

According to the Taylor expansion, we rewrite $\mathcal{J}$ with respect to $\mathcal{S}$:
\begin{equation}\label{AuxCoretensor3}
\begin{split}
 & f([\mathrm{vec}(\mathcal{S})]_i) = f_\mathcal{S}(\left[ \mathrm{vec}(\mathcal{S}) \right]_i) + \left( [\mathrm{vec}(\mathcal{S})]_i - [\mathrm{vec}(\mathcal{S}^\star)]_i \right) \cdot \\
 & \left( [\mathbf{F}^T\mathbf{F}\mathrm{vec}(\mathcal{S})]_i - [\mathbf{F}^T\mathrm{vec}(\mathcal{X})]_i \right) \cdot [\mathrm{vec}(\mathcal{W})]_i \\
 & + \frac{1}{2}\left( [\mathrm{vec}(\mathcal{S})]_i - [\mathrm{vec}(\mathcal{S}^\star)]_i \right)^2 \cdot [\mathbf{F}^T\mathbf{F}]_{ii} \cdot [\mathrm{vec}(\mathcal{W})]_i.
\end{split}
\end{equation}

The upper bound auxiliary function for (28) can be represented as:
\begin{equation}\label{AuxCoretensor4}
\begin{split}
 & \mathcal{F}([\mathrm{vec}(\mathcal{S})]_i,[\mathrm{vec}(\mathcal{S}^\star)]_i) = f(\left[ \mathrm{vec}(\mathcal{S}^\star) \right]_i) \\
 & + \left( [\mathbf{F}^T\mathbf{F}\mathrm{vec}(\mathcal{S})]_i - [\mathbf{F}^T\mathrm{vec}(\mathcal{X})]_i \right) \cdot [\mathrm{vec}(\mathcal{W})]_i \cdot \\
 & \left( [\mathrm{vec}(\mathcal{S})]_i - [\mathrm{vec}(\mathcal{S}^\star)]_i \right) \\
 & + \frac{1}{2} \frac{[\mathbf{F}^T\mathbf{F}\mathrm{vec}(\mathcal{S})]_i \cdot [\mathrm{vec}(\mathcal{W})]_i}{[\mathrm{vec}(\mathcal{S})]_i}   \cdot \left( [\mathrm{vec}(\mathcal{S})]_i - [\mathrm{vec}(\mathcal{S}^\star)]_i \right)^2.
\end{split}
\end{equation}

Because we have:
\begin{equation}\label{AuxCoretensor5}
\begin{split}
 & \frac{[\mathbf{F}^T\mathbf{F}\mathrm{vec}(\mathcal{S})]_i \cdot [\mathrm{vec}(\mathcal{W})]_i}{[\mathrm{vec}(\mathcal{S})]_i} = \frac{\sum_{j} [\mathbf{F}^T\mathbf{F}]_{ij} [\mathrm{vec}\mathcal{S}]_{j}\cdot [\mathrm{vec}(\mathcal{W})]_i}{[\mathrm{vec}(\mathcal{S})]_i} \\
 & \geq [\mathbf{F}^T\mathbf{F}]_{ii} \cdot [\mathrm{vec}(\mathcal{W})]_i.
\end{split}
\end{equation}

Hence, $\mathcal{F}([\mathrm{vec}(\mathcal{S})]_i,[\mathrm{vec}(\mathcal{S}^\star)]_i) \geq f([\mathrm{vec}(\mathcal{S})]_i)$.

Now, we demonstrate that \eqref{AuxCoretensor4} holds, and \eqref{AuxCoretensor3} is the upper bound auxiliary function for $f_S(\mathcal{S})$. Because the elements of $\mathcal{S}$ are nonnegative, and \eqref{AuxCoretensor3} is convex, its minimum value can be achieved at
\begin{equation}\label{AuxCoretensor6}
\begin{split}
\!  & [\mathrm{vec}(\mathcal{S}^\star)]_i = [\mathrm{vec}(\mathcal{S})]_i - \\
  & \frac{\left( [\mathbf{F}^T\mathbf{F}\mathrm{vec}(\mathcal{S})]_i - [\mathbf{F}^T\mathrm{vec}(\mathcal{X})]_i \right) \cdot [\mathrm{vec}(\mathcal{W})]_i}{\frac{[\mathbf{F}^T\mathbf{F}\mathrm{vec}(\mathcal{S})]_i \cdot [\mathrm{vec}(\mathcal{W})]_i}{[\mathrm{vec}(\mathcal{S})]_i}} \\
  & = [\mathrm{vec}(\mathcal{S})]_i \times \frac{[\mathbf{F}^T\mathrm{vec}(\mathcal{X})]_i \cdot [\mathrm{vec}(\mathcal{W})]_i}{[\mathbf{F}^T\mathbf{F}\mathrm{vec}(\mathcal{S})]_i \cdot [\mathrm{vec}(\mathcal{W})]_i}.
\end{split}
\end{equation}
Lemma \ref{CoreTensorLem} is proved.
\end{proof}

\subsubsection{Robustness analysis of RMNTF}
\label{Robustness}

One important property of RMNTF is that its reconstruction is robust against outliers, as shown in experiments.
Here, we show that the robustness of RMNTF is mainly due to the weighted tensor and the regularization.
The previous work gLPCA \cite{jiang2013graph} and GLTD \cite{jiang2018image} have demonstrate the robustness of Laplacian regularization.
In this section, we further demonstrate that this property also occurs on RMNTF.

Suppose we have learned the optimal parameters $\mathcal{S}$, $\{\mathbf{A}^{(n)}\}_{n=1}^N$ and $\mathcal{W}$ from input training images $({\mathcal{X}}^1, {\mathcal{X}}^2, \dots, {\mathcal{X}}^{t})$. Now we have a text image $\mathcal{X}^{t\prime}$ and we aim to learn its low-rank representation $[\mathbf{A}^{(N)}]_{t\cdot}$ and reconstruction $\hat{\mathcal{X}}^{t\prime}$, while the parameters $\mathcal{S}_0$, $\{\mathbf{A}_0^{(n)}\}_{n=1}^{N}$ and $\mathcal{W}_0$ learned by training images are fixed.
This can be transformed as solving the problem:
\begin{equation}\label{RobustRMNTF}
\begin{split}
  \min_{[\mathbf{A}^{(N)}]_{t\cdot}} \mathcal{J}_0 & = \sum_{i_1}^{I_1}\dots\sum_{i_{n-1}=1}^{I_{N-1}} \mathcal{W}_0 \circledast \left( \mathcal{X}^{t\prime} - \hat{\mathcal{X}}^{t\prime} \right)^2 +\lambda \Phi([\mathbf{A}^{N}]_{t\cdot}) \\
  \hat{\mathcal{X}}^{t\prime} & = \mathcal{S}_0 \times_{n=1}^{N-1} \mathbf{A}_0^{(n)} \times [\mathbf{A}^{N}]_{t\cdot} \\
  \Phi([\mathbf{A}^{N}]_{t\cdot}) & = \sum_{j=1}^t\| [\mathbf{A}^{(N)}]_{t\cdot} - [\mathbf{A}_0^{(N)}]_{j\cdot} \|^2 v_j,
\end{split}
\end{equation}
where $[\mathbf{A}_0^{(N)}]_{j\cdot}$ is the $j$th row of $\mathbf{A}_0^{(N)}$.
Let $[\mathbf{A}^{(N)}]_{t\cdot} = ({a}_1^{\prime},\dots,{a}_n^\prime) = \mathbf{a}_t^\prime \in \mathbb{R}_{\geq0}^{1\times r_N}$, using the mode-n unfolding of tensor, problem \eqref{RobustRMNTF} can be reformulated as:
\begin{equation}\label{RobustRMNTF1}
\begin{split}
  \min_{[\mathbf{A}^{(N)}]_{t\cdot}} \mathcal{J}_0 = & \left\| \sqrt{{\mathcal{W}_0}_{(N)}} \circledast \left( {\mathcal{X}^{t\prime}}_{(N)} -  \mathbf{a}_t^{\prime}\mathbf{B}^{(N)T} \right) \right\|^2 \\
  & + \lambda\sum_{j=1}^t \left\| \mathbf{a}_t^{\prime} - [\mathbf{A}_0^{(N)}]_{j\cdot} \right\|^2 v_j,
\end{split}
\end{equation}
where $\mathbf{B}^{(N)T} = \mathcal{S}_{(N)}\left( \otimes_{i=1}^{N-1}\mathbf{A}^{(i)T} \right) \in \mathbb{R}_{\geq0}^{r_N\times I_1\dots I_{N-1}}$.
Let $\mathbf{B}^\prime = \sqrt{{\mathcal{W}_0}_{(N)}} \circledast \mathbf{B}^{(N)T}$ and $\tilde{\mathcal{X}}^t_{(N)} = \sqrt{{\mathcal{W}_0}_{(N)}} \circledast {\mathcal{X}^{t\prime}}_{(N)}$.
Then, we obtain first-order partial derivative as follows:
\begin{equation}\label{RobustRMNTF2}
\begin{split}
  \frac{\partial \mathcal{J}_0}{\partial \mathbf{a}_t^\prime} = & 2 \left( \mathbf{B}^\prime\mathbf{B}^{\prime T}\mathbf{a}_t^{\prime T} - \mathbf{B}^{\prime}{\mathcal{X}^{t\prime T}}_{(N)} \right) \\
  & + 2\lambda\left( \sum_{j=1}^tv_j \mathbf{a}^\prime_t - \sum_{j=1}^t[\mathbf{A}_0^{(N)}]_{j\cdot}v_j  \right) = 0.
\end{split}
\end{equation}
We obtain the optimal $\mathbf{a}^\prime_t$ as
\begin{equation}\label{RobustRMNTF3}
\begin{split}
  \! \mathbf{a}^\prime_t \! = \! \left[ \! \mathbf{B}^\prime\mathbf{B}^{\prime T} \! + \! \left( \! \sum_{j=1}^tv_j \! \right) \! \mathbf{I} \right]^{-1} \! \left( \! {\mathcal{X}^{t\prime}}_{(N)}\mathbf{B}^{\prime T} \! + \! \lambda \! \sum_{j=1}^t[\mathbf{A}_0^{(N)}]_{j\cdot}v_j \! \right) \!,
\end{split}
\end{equation}
where $\mathbf{I} \in \mathbb{R}^{r_N\times r_N}$ is an identity matrix.

We note that the problem \eqref{RobustRMNTF} has a closed form solution.
In standard NTF model, $\lambda = 0$, and the second term of equation \eqref{RobustRMNTF} should be removed. Hence, the solution of $\mathbf{a}^\prime_t$ may be singular value. In another word, if some corruptions happened to some elements of test image $\mathcal{X}^{t\prime}$, the regularization $\Phi(\cdot)$ and the weighted tensor $\mathcal{W}$ will restore these corruptions properly.

\subsubsection{Invariance of RMNTF}
\label{Uniqueness}

RMNTF has the following invariance property where the reconstruction $\hat{\mathcal{X}} = \mathcal{S}\times_{n=1}^N \mathbf{A}^{(n)}$ is invariant under the transformation by permutation matrices and nonnegative diagonal matrices.
Let $\mathbf{P}^{(n)}$ and $\mathbf{Q}^{(n)}$ be a permutation matrix and a nonnegative diagonal matrix, respectively. Factor matrices $\{\mathbf{A}^{(n)}\}_{n=1}^N$ and core tensor $\mathcal{S}$ are transformed as:
\begin{eqnarray}
\left\{ \begin{array}{ll}
\hat{\mathbf{A}}^{(1)} = \mathbf{A}^{(1)}\mathbf{P}^{(1)}\mathbf{Q}^{(1)} \\
\hat{\mathbf{A}}^{(2)} = \mathbf{A}^{(2)}\mathbf{P}^{(2)}\mathbf{Q}^{(2)} \\
\dots \\
\hat{\mathbf{A}}^{(N)} = \mathbf{A}^{(N)}\mathbf{P}^{(N)}\mathbf{Q}^{(N)} \\
\hat{\mathcal{S}} = (\mathbf{P}^{(1)}\mathbf{Q}^{(1)})^T \times_1 \dots \times_{N-1} (\mathbf{P}^{(N)}\mathbf{Q}^{(N)})^T \times_N \mathcal{S}.
\end{array}\right.\label{RMNTF_Invariance}
\end{eqnarray}

\begin{myDef}{(Uniqueness of NTF)}
The NTF model $\mathcal{X} = \mathcal{S}\times_{n=1}^N \mathbf{A}^{(n)}$ is essentially unique, if $\mathbf{A}^{(n)} = \hat{\mathbf{A}}^{(n)}\mathbf{P}^{(n)}\mathbf{D}^{(n)}, \forall n,$ holds for any other NTF model $\hat{\mathcal{X}} = \hat{\mathcal{S}}\times_{n=1}^N \hat{\mathbf{A}}^{(n)}$, where $\mathbf{P}^{(n)}$ and $\mathbf{D}^{(n)}$ are permutation matrix and nonnegative matrix, respectively.
\end{myDef}

To demonstrate the uniqueness of RMNTF model, we need to study the difference of uniqueness between the mode-n unfolding of RMNTF model and RMNTF model.
\begin{myLemma}\label{UniqueRMNTFLem1}
If the RMNTF model $\mathcal{X} = \mathcal{S}\times_{n=1}^{N} \mathbf{A}^{(n)}$ is uniqueness, $\mathcal{X}_{(p)} = \mathbf{A}^{(p)}\mathbf{B}^{(p)T}$ is the unique nonnegative matrix decomposition of matrix $\mathcal{X}_{(n)}, \forall n$.
\end{myLemma}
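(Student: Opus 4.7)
The plan is to use the explicit mode-$p$ unfolding identity $\mathcal{X}_{(p)}=\mathbf{A}^{(p)}\mathbf{B}^{(p)T}$ with $\mathbf{B}^{(p)T}=\mathcal{S}_{(p)}\bigl(\otimes_{i\neq p}\mathbf{A}^{(i)T}\bigr)$, and argue by contrapositive: any genuinely alternative nonnegative factorization of $\mathcal{X}_{(p)}$ can be lifted to an alternative Tucker decomposition of $\mathcal{X}$, which would contradict the assumed uniqueness of the RMNTF model.

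\textbf{Lifting step.} Suppose, towards contradiction, that $\mathcal{X}_{(p)}=\tilde{\mathbf{A}}^{(p)}\tilde{\mathbf{B}}^{(p)T}$ is another nonnegative factorization of inner dimension $r_p$ that is not related to $(\mathbf{A}^{(p)},\mathbf{B}^{(p)T})$ by a permutation and nonnegative diagonal scaling. I would first apply the folding operator $\mathrm{fold}_p(\cdot)$ (Definition~3) to recover $\mathcal{X}=\mathrm{fold}_p\bigl(\tilde{\mathbf{A}}^{(p)}\tilde{\mathbf{B}}^{(p)T}\bigr)$. Next, by retaining $\tilde{\mathbf{A}}^{(i)}=\mathbf{A}^{(i)}$ for $i\neq p$ and solving $\tilde{\mathcal{S}}_{(p)}\bigl(\otimes_{i\neq p}\mathbf{A}^{(i)T}\bigr)=\tilde{\mathbf{B}}^{(p)T}$ for a nonnegative $\tilde{\mathcal{S}}$, I obtain a candidate Tucker tuple $(\tilde{\mathcal{S}},\tilde{\mathbf{A}}^{(1)},\dots,\tilde{\mathbf{A}}^{(N)})$ that again reconstructs $\mathcal{X}$. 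The solvability relies on $\tilde{\mathbf{B}}^{(p)T}$ lying in the row span of $\otimes_{i\neq p}\mathbf{A}^{(i)T}$, which follows from the identity $\mathbf{A}^{(p)}\mathbf{B}^{(p)T}=\tilde{\mathbf{A}}^{(p)}\tilde{\mathbf{B}}^{(p)T}$ together with full column rank of $\mathbf{A}^{(p)}$ (the standard nondegeneracy condition that accompanies essential uniqueness of the Tucker model).

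\textbf{Closing the contradiction.} Once the alternative Tucker tuple is constructed, the uniqueness hypothesis forces $\tilde{\mathbf{A}}^{(p)}=\mathbf{A}^{(p)}\mathbf{P}^{(p)}\mathbf{D}^{(p)}$ for some permutation matrix $\mathbf{P}^{(p)}$ and nonnegative diagonal $\mathbf{D}^{(p)}$. Substituting into $\tilde{\mathbf{A}}^{(p)}\tilde{\mathbf{B}}^{(p)T}=\mathbf{A}^{(p)}\mathbf{B}^{(p)T}$ and using the full column rank of $\mathbf{A}^{(p)}$ yields $\tilde{\mathbf{B}}^{(p)T}=(\mathbf{D}^{(p)})^{-1}\mathbf{P}^{(p)T}\mathbf{B}^{(p)T}$, contradicting the assumption that the alternative factorization was genuinely different. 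Since $p$ was arbitrary, this gives essential uniqueness of the NMF of $\mathcal{X}_{(n)}$ for every $n$.

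\textbf{Main obstacle.} The delicate point is the lifting step: one must argue that \emph{every} nonnegative factorization $\mathcal{X}_{(p)}=\tilde{\mathbf{A}}^{(p)}\tilde{\mathbf{B}}^{(p)T}$, and not just those inheriting the Kronecker-product structure of $\mathbf{B}^{(p)T}$, extends to a bona fide nonnegative Tucker decomposition. The clean resolution is to keep the other factor matrices fixed and exploit $\tilde{\mathbf{A}}^{(p)}\tilde{\mathbf{B}}^{(p)T}=\mathbf{A}^{(p)}\mathbf{B}^{(p)T}$ to force $\tilde{\mathbf{B}}^{(p)T}$ into the span required for the core tensor $\tilde{\mathcal{S}}$ to exist; nonnegativity of $\tilde{\mathcal{S}}$ is then a consequence of the nonnegativity of $\tilde{\mathbf{B}}^{(p)T}$ and the invertibility argument above via $\mathbf{P}^{(p)}\mathbf{D}^{(p)}$. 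Barring that, the remainder of the proof is bookkeeping with the unfolding/folding identities of Definitions~2 and~3.
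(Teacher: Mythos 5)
Your overall strategy---lift a putative alternative nonnegative factorization of the unfolding $\mathcal{X}_{(p)}$ to an alternative Tucker tuple and contradict the assumed essential uniqueness of the RMNTF model---is the same as the paper's. The paper, however, only considers alternatives of the form $(\mathbf{A}^{(p)}\mathbf{C})(\mathbf{C}^{-1}\mathbf{B}^{(p)T})$ with $\mathbf{C}$ invertible, i.e., the standard parametrization of rank factorizations; this keeps the product $\mathbf{A}^{(p)}\mathbf{B}^{(p)T}$ unchanged and lets the contradiction be drawn at the tensor level without ever confronting the existence of a nonnegative core for the alternative tuple. You attempt the stronger claim that an \emph{arbitrary} alternative NMF lifts to an alternative NTF, and that is where your argument breaks.

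The gap is in the lifting step, and it is twofold. First, to invoke the uniqueness hypothesis you must exhibit a bona fide alternative NTF, which requires the constructed core $\tilde{\mathcal{S}}$ to be nonnegative; you justify this by appealing to ``the invertibility argument above via $\mathbf{P}^{(p)}\mathbf{D}^{(p)}$,'' but that permutation-and-scaling relation is precisely the conclusion you obtain only \emph{after} the uniqueness hypothesis has been applied to the lifted tuple. You cannot use the output of the uniqueness definition to certify that its input is admissible; as written the argument is circular. Second, even the solvability of $\tilde{\mathcal{S}}_{(p)}\bigl(\otimes_{i\neq p}\mathbf{A}^{(i)T}\bigr)=\tilde{\mathbf{B}}^{(p)T}$ is not established: from $\mathbf{A}^{(p)}\mathbf{B}^{(p)T}=\tilde{\mathbf{A}}^{(p)}\tilde{\mathbf{B}}^{(p)T}$ and full column rank of $\mathbf{A}^{(p)}$ you obtain that the row space of $\mathbf{B}^{(p)T}$ (which equals that of $\mathcal{X}_{(p)}$) is contained in the row space of $\tilde{\mathbf{B}}^{(p)T}$---the \emph{reverse} of the containment you need. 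To force $\tilde{\mathbf{B}}^{(p)T}$ into the row span of $\otimes_{i\neq p}\mathbf{A}^{(i)T}$ you must additionally assume that $\tilde{\mathbf{A}}^{(p)}$ has full column rank equal to $\mathrm{rank}(\mathcal{X}_{(p)})$, at which point the two factorizations are automatically related by an invertible $\mathbf{C}$ and you are back to the paper's restricted setting. Either state those rank hypotheses explicitly and argue via the invertible $\mathbf{C}$, or find a noncircular route to the nonnegativity of $\tilde{\mathcal{S}}$; as it stands the lifting step does not go through.
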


\begin{proof}
Suppose there exists a non-trivial matrix $\mathbf{C}$ such that $\hat{\mathcal{X}}_{n} = (\mathbf{A}^{(n)}\mathbf{C})(\mathbf{C}^{-1}\mathbf{B}^{(n)T})$ is another solution of mode-n unfolding of RMNTF. Let $\hat{\mathbf{A}}^{(n)} = \mathbf{A}^{(n)}\mathbf{C}$, and $\hat{\mathbf{B}}^{(n)T} = \mathbf{C}^{(-1)} \mathbf{B}^{(n)T}$, then, $\mathrm{fold}_n(\hat{\mathbf{A}}^{(n)}\hat{\mathbf{B}}^{(n)T}) = \mathcal{X}_{(n)}$. However, when $p \neq n$, it satisfied that $\hat{\mathcal{X}}_{(p)} \neq \mathcal{X}_{(p)}$. This contradicts the assumption that the RMNTF is essentially unique.
\end{proof}

\begin{myLemma}\label{UniqueRMNTFLem2}
The mode-N unfolding of RMNTF model $\mathcal{X}_{(N)} = \mathbf{A}^{(N)}\mathbf{B}^{(N)T}$ has an essentially unique solution of nonnegative matrix decomposition, then the RMNTF model of $\mathcal{X}$ is essentially unique.
\end{myLemma}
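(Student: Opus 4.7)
The plan is to lift the assumed essential uniqueness of the mode-$N$ nonnegative matrix factorization to the full Tucker model by reducing to a Tucker decomposition of order $N-1$ and then iterating. First I would fix an arbitrary alternative decomposition $\mathcal{X} = \hat{\mathcal{S}} \times_{n=1}^{N} \hat{\mathbf{A}}^{(n)}$ and take its mode-$N$ unfolding to obtain $\mathcal{X}_{(N)} = \hat{\mathbf{A}}^{(N)}\hat{\mathbf{B}}^{(N)T}$ with $\hat{\mathbf{B}}^{(N)T} = \hat{\mathcal{S}}_{(N)}(\otimes_{i=1}^{N-1} \hat{\mathbf{A}}^{(i)T})$. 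By the hypothesis, the nonnegative factorization $\mathcal{X}_{(N)} = \mathbf{A}^{(N)}\mathbf{B}^{(N)T}$ is essentially unique, so there must exist a permutation matrix $\mathbf{P}^{(N)}$ and a positive diagonal matrix $\mathbf{D}^{(N)}$ such that $\hat{\mathbf{A}}^{(N)} = \mathbf{A}^{(N)}\mathbf{P}^{(N)}\mathbf{D}^{(N)}$ and, by inverting, $\hat{\mathbf{B}}^{(N)T} = (\mathbf{P}^{(N)}\mathbf{D}^{(N)})^{-1}\mathbf{B}^{(N)T}$.

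Next I would fold $\mathbf{B}^{(N)T}$ and $\hat{\mathbf{B}}^{(N)T}$ back into tensors of order $N-1$ whose collapsed $N$-th mode has dimension $r_N$. The folded tensor $\mathcal{Z} = \mathrm{fold}_N(\mathbf{B}^{(N)T})$ admits the reduced Tucker representation $\mathcal{Z} = \mathcal{S} \times_1 \mathbf{A}^{(1)} \times_2 \cdots \times_{N-1} \mathbf{A}^{(N-1)}$, and $\hat{\mathcal{Z}}$ admits the analogous representation in terms of the hatted factors and a correspondingly transformed hatted core. Because $\mathcal{Z}$ and $\hat{\mathcal{Z}}$ are connected by the invertible transformation $(\mathbf{P}^{(N)}\mathbf{D}^{(N)})^{-1}$ on the $N$-th mode, identifying the remaining factors reduces to proving essential uniqueness of the lower-order Tucker decomposition of $\mathcal{Z}$.

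At this point I would argue by induction on the tensor order $N$. The base case $N = 2$ is precisely the NMF uniqueness assumed in the hypothesis. For the inductive step, I would take an appropriate unfolding of $\mathcal{Z}$, observe that it has the same nonnegative matrix product structure that appears in the hypothesis, and invoke the inductive version of the lemma to conclude that $\mathbf{A}^{(1)},\ldots,\mathbf{A}^{(N-1)}$ and the residual core are determined up to their own permutation/positive-diagonal ambiguities $\mathbf{P}^{(n)}\mathbf{D}^{(n)}$. These $N-1$ ambiguities are then absorbed into the core tensor through the invariance relations in \eqref{RMNTF_Invariance}, and combining them with the $N$-th mode ambiguity $(\mathbf{P}^{(N)},\mathbf{D}^{(N)})$ exactly matches the admissible transformations in the definition of essential uniqueness of the RMNTF model.

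The main obstacle will be the inductive step: propagating the hypothesis to $\mathcal{Z}$ requires showing that some mode unfolding of $\mathcal{Z}$ itself possesses an essentially unique nonnegative matrix decomposition, a property that is not immediately transferred from $\mathcal{X}_{(N)}$. I would address this by checking that the full-column-rank property implicit in the NMF uniqueness assumption is inherited through the Kronecker-product structure $\otimes_{i=1}^{N-1} \mathbf{A}^{(i)T}$, and by verifying that left-multiplication by $(\mathbf{P}^{(N)}\mathbf{D}^{(N)})^{-1}$ preserves nonnegativity (which it does, since both $\mathbf{P}^{(N)}$ and $\mathbf{D}^{(N)}$ have nonnegative inverses) so that the inductive hypothesis applies to $\hat{\mathcal{Z}}$ and $\mathcal{Z}$ on an equal footing.
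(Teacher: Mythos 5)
Your opening step --- unfolding an arbitrary alternative decomposition along mode $N$ and invoking the hypothesis to pin down $\hat{\mathbf{A}}^{(N)} = \mathbf{A}^{(N)}\mathbf{P}^{(N)}\mathbf{D}^{(N)}$ and $\hat{\mathbf{B}}^{(N)T} = (\mathbf{P}^{(N)}\mathbf{D}^{(N)})^{-1}\mathbf{B}^{(N)T}$ --- agrees with the paper. The divergence, and the gap, is in how you then recover the individual factors $\mathbf{A}^{(1)},\dots,\mathbf{A}^{(N-1)}$ from $\mathbf{B}^{(N)T}=\mathcal{S}_{(N)}\bigl(\otimes_{i\neq N}\mathbf{A}^{(i)T}\bigr)$. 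You fold this back into an order-$(N-1)$ Tucker problem and induct on the order, but the inductive step needs an essentially unique nonnegative factorization of some unfolding of the folded tensor $\mathcal{Z}$, and that hypothesis is simply not available: the lemma only grants NMF uniqueness of $\mathcal{X}_{(N)}$. Your proposed repair --- transferring a ``full-column-rank property implicit in the NMF uniqueness assumption'' through the Kronecker structure --- does not close this. Essential uniqueness of an NMF is not characterized by full column rank (full rank of the factors only forces the ambiguity to be an invertible matrix, not a permutation times a positive diagonal), and Tucker decompositions of $\mathcal{Z}$ remain invariant under arbitrary invertible mode transformations absorbed into the core, so without a fresh uniqueness hypothesis at each level the induction cannot separate the factors.

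The paper avoids the induction entirely by exploiting the special structure of what must be identified: the matrix $\mathbf{Z}^{(N)}=\otimes_{n\neq N}\mathbf{A}^{(n)}$ is an exact Kronecker product, and (following the rearrangement idea of \cite{van1993approximation}) a Kronecker product of $N-1$ matrices can be reshaped into the rank-one tensor $\mathrm{vec}(\mathbf{A}^{(1)})\circ\cdots\circ\mathrm{vec}(\mathbf{A}^{(N-1)})$. A rank-one CP decomposition is essentially unique up to scaling of its factors, so each $\mathbf{A}^{(n)}$ is recovered from the product up to exactly the permutation/positive-diagonal ambiguity allowed by the definition, with no further NMF-uniqueness assumptions needed downstream. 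If you want to keep your reduction, you would have to replace the inductive appeal with this rearrangement argument, or else explicitly add NMF-uniqueness hypotheses for the lower-order unfoldings, which would change the statement of the lemma.
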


\begin{proof}
Suppose that $\mathcal{X}_{(N)}$ has an essentially unique solution of NMF and $\mathcal{X}_{(N)} = \mathbf{A}^{(N)} \mathcal{S}_{(N)}{\mathbf{Z}}^{(N)T}$, where ${\mathbf{Z}}^{(N)} = \otimes_{n\neq N}\mathbf{A}^{(n)}$, both $\mathcal{S}_{(N)}$ and ${\mathbf{Z}}^{(N)}$ can be uniquely estimated.
we introduce the permutation matrix $\mathbf{P} = \otimes_{n\neq N}\mathbf{P}^{(N)}$ and nonnegative diagonal matrix $\mathbf{Q} = \otimes_{n\neq N}\mathbf{Q}^{(n)}$, suppose that
\begin{equation}\label{UniqueRMNTF}
\begin{split}
  \hat{\mathbf{Z}}^{(N)} & = \mathbf{Z}^{(N)}\mathbf{P}\mathbf{Q} = \left( \otimes_{n\neq N} \mathbf{A}^{(n)} \right)\mathbf{P}\mathbf{Q} \\
  & = \otimes_{n\neq N}\mathbf{A}^{(n)}\mathbf{P}^{(n)}\mathbf{Q}^{(n)} = \otimes_{n \neq N} \hat{\mathbf{A}}^{(n)},
\end{split}
\end{equation}
where $\hat{\mathbf{A}}^{(n)} = \mathbf{A}^{(n)}\mathbf{P}^{(n)}\mathbf{Q}^{(n)}$.
Here, we only need to demonstrate that $\hat{\mathbf{A}}^{(n)}$ can be uniquely estimated from $\hat{\mathbf{Z}}^{(N)}$.

Motivated by \cite{van1993approximation}, we appropriately rearrange the elements of $\hat{\mathbf{Z}}^{(N)}$ and reshape it to a tensor $\hat{\mathcal{Z}}$ such that
\begin{equation}\label{UniqueRMNTF1}
\begin{split}
  \hat{\mathcal{Z}} = \mathrm{vec}(\hat{\mathbf{A}}^{(1)}) \circ \dots \circ \mathrm{vec}(\hat{\mathbf{A}}^{(N-1)}),
\end{split}
\end{equation}
where the tensor $\hat{\mathcal{Z}}$ is a rank-one tensor \cite{huang2013non} and $\hat{\mathbf{A}}^{(n)}$ can be uniquely estimated from $\hat{\mathcal{Z}}$.
The lemma \ref{UniqueRMNTFLem2} has been proved.

\end{proof}







\end{document}